\title{
    \hrule
    \vspace{1em}
    \huge \textbf{Seeing Things in Random-Dot Videos}\\
    \vspace{1.5em}
    \hrule
}
\author{
    %\LARGE by\\
    %\vspace{10}\\
    %\LARGE Thomas Dagès\\
    %\vspace{10}
    \vspace{50pt}
    \LARGE Thomas Dagès, Michael Lindenbaum, Alfred M. Bruckstein \\
    \vspace{10pt}
    \Large Centre for Intelligent Systems \\
    \vspace{10pt}
    \Large Computer Science Department \\
    \vspace{10pt}
    \Large Technion, Israel Institute of Technology \\
    \vspace{10pt}
    \Large 3200003 Haifa, ISRAEL
}
\date{\LARGE July 30, 2019}
\theoremstyle{plain}
\newtheorem{theorem}{Theorem}
\begin{document}

%\selectlanguage{english}
\renewcommand{\abstractname}{}
\renewcommand{\absnamepos}{empty}

\maketitle

%\vspace{10}

%\begin{flalign*}
%          \text{\LARGE Co-Advisors:} \; &\text{\LARGE Prof. Alfred Bruckstein}&\\
%           &\text{\LARGE Prof. Michael Lindenbaum}&
%\end{flalign*}

\thispagestyle{empty}

\newpage

\begin{abstract}
\noindent
\paragraph{Abstract}
Humans possess an intricate and powerful visual system in order to perceive and understand the environing world. Human perception can effortlessly detect and correctly group features in visual data and can even interpret random-dot videos induced by imaging natural dynamic scenes with highly noisy sensors such as ultrasound imaging. Remarkably, this happens even if perception completely fails when the same information is presented frame by frame rather than in a video sequence. We study this property of surprising dynamic perception with the first goal of proposing a new detection and spatio-temporal grouping algorithm for such signals when, per frame, the information on objects is both random and sparse and embedded in random noise. The algorithm is based on the succession of temporal integration and spatial statistical tests of unlikeliness, the a contrario framework. The algorithm not only manages to handle such signals but the striking similarity in its performance to the perception by human observers, as witnessed by a series of psychophysical experiments on image and video data, leads us to see in it a simple computational Gestalt model of human perception with only two parameters: the time integration and the visual angle for candidate shapes to be detected.

\end{abstract}

\addvspace{30pt}

%Catchy name: Seeing things in Random Dot Videos
%0) Abstract
%1) Introduction
%   Discussion of topic
%       - The effect with examples
%       - The formal problem definition
%       - Literature survey of related topic
%2) The approach to address the topic
%   The A contrario Principle
%   Detailed description of the theory
%3) Some initial experiments
%   - Seeing single geometric events (lines, circles, disks)
%   - Experiments vs Prediction (for static scenes)
%   - Dynamic experiments
%   - Theory vs Practice here too
%4) Seeing complex objects in video
%   a) Combination of single object
%   b) Real Videos (with probability of points on an edge)
%   Same simple experiments with probability: Visible vs Probability
%5) Concluding remark about the Remarkable Vision System

\section{Introduction}

\subsection{Motivation}

Thanks to the wonders of technological advances for camera design and signal processing software we are now able to take impressively high quality pictures and videos of the world, such as capturing a breathtaking sunset on a well deserved holiday. While the performance of common cameras is ever so amazing, there are other fields using imaging devices that have not yet reached such maturity. Fundamentally, cameras take a snap of the incoming light that passes through the lens and hits the sensors. However, light is not only made of the traditional visible spectrum. Other wavelengths exist to which the human eye receptors are insensitive. Yet we can design cameras with sensors tuned to these wavelengths. Doing this allows us to take a snap of the \say{invisible} visual field, which can provide crucial hidden information. In fact, we need not limit ourselves to light waves and can apply a similar reasoning to design sensors for other waves that the natural human body cannot perceive. An important example is the use of ultrasound imaging for medical applications, which, for example, allows us to see a fetus even though we cannot directly see it with our traditional sensors, or eyes, since it is hidden behind human tissue. Unfortunately, these sensors, while being a true technological prowess, do not always provide very clean high quality images. Sometimes, only an expert can understand the images shown by these sensors: a doctor is often needed to describe an ultrasound to new parents. In this report, we shall consider images created by some special sensors that produce very noisy images. We will work with images that will that consist of binary black and white pixels that look like random unstructured noisy dots. However, at the position where the object is, or often where the boundaries are in the image, the density of points is slightly higher than in the random background: a few extra random points appear at these locations. For an illustration, see Figure \ref{fig: np pts see frame}. If the extra number of points were to be very high, then we would be able to easily see what the object is even if the image is very noisy. However if the extra number of points is low, we will not be able to see the object, or in fact that there is a change of density along some structured shape. However in this context a magical phenomenon appears. If instead of taking a single image, if we take several of them and present them in a video, then we, as humans, without any effort, will \say{see}: we will very well perceive the objects in the video that we were not at all capable of seeing if they were presented as images one at a time! Even though each frame has independent random noise with just a few extra random dots on the object we are imaging, we are suddenly able to perceive the object in the \say{random-dot video} generated by the successive imaging results. While one might quickly say that this is explained by integration and averaging, the phenomenon becomes even more astonishing when the object moves between frames. The phenomenon persists for complex shapes undergoing complex deformation and displacement, and despite the fact that the points do not persist in time between frames. Thus, the problem is not explained by simple integration and averaging. As such, the human visual system is naturally capable of handling highly degraded video data without any conscious effort. On the other hand, one might ask how to replicate this amazing performance on a machine. Solving this technological problem is highly difficult yet easily done by our brain. Therefore analysing the properties of the human visual system can give us insight to conceive novel automatic algorithms for highly noisy video data processing where traditional denoising approaches fail. To the best of our knowledge, this phenomenon has never been thoroughly studied and may provide a valuable tool for analysing images provided by highly noisy sensors.

\subsection{Overview of the Report}

In this report we will generate our noisy random-dot video data and design an algorithm to handle it using the a contrario framework, which is based on statistical tests of unlikeliness. We will study its performance mathematically and empirically. We then define a simple computational model of the human perceptual system and compare it with the algorithm by performing a series of psychophysical experiments. It is based on time integration followed by a spatial search using a contrario algorithm. Our model is based on two parameters: the time integration and the visual angle for the candidate regions that we will search through in the a contrario algorithm. We perform four psychophysical experiments. We will limit our study to straight lines for mathematical simplicity, and they will either be static or move with a fixed speed orthogonally to themselves in the context of the experiments. The first two experiments are on static images, that can be considered either as a single degraded frame with high degradation parameters or as an integrated image of several frames of a video generated from lower degradation values. This will allow us to retrieve the visual angle parameter for our model. The next two experiments are on videos directly and will allow us to retrieve the time integration parameter of the model. In general, similar performances between the algorithm with the chosen parameters and humans give credit to such a model. Our contributions include the modelling of the input data, the adaptation of the a contrario framework to spatio-temporal data with our novel algorithm, the mathematical analysis and prediction a priori of the performance of the algorithm on our data, the psychophysical experiments and the computational model for human perception based on integration and a contrario.

We model the signals and present some simulations in Section \ref{sec: signal modelling}. In Section \ref{sec: literature survey}, we perform a short overview of the psychophysical and computational literature. We then present the a contrario framework in Section \ref{sec: a contrario}. We present the algorithm for handling our videos based on the a contrario framework in Section \ref{sec: initial experiments}. We present our model for human perception and perform a short overview of the psychophysical experiments in Section \ref{sec: humans vs AC overview}. In Section \ref{sec: analysis static edge} we study, mathematically and empirically, from the algorithm's and human's perspective, static images considered as integrations of frames of a video of a static straight line. In Section \ref{sec: analysis dynamic edge}, we perform the same but on frames considered as integrations of frames of a video of a dynamic straight line. We then present the context of the experiments directly on video data and the performance of the algorithm on it in Section \ref{sec: video overview and AC}. We then present the human performance on videos of a static line in Section \ref{sec: exp3.1} and on the dynamic line in Section \ref{sec: exp3.2}. Discussions on future work are done in Section \ref{sec: Future work} and we conclude this report in Section \ref{sec: Conclusion}.

\begin{figure}
    \centering
    \begin{subfigure}[t]{0.45\textwidth}
      \centering
         \includegraphics[width=\textwidth]{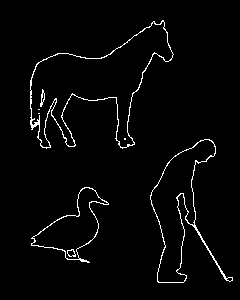}
         \caption{Groundtruth}
    \end{subfigure}%
    \hfill
    \begin{subfigure}[t]{0.45\textwidth}
      \centering
         \includegraphics[width=\textwidth]{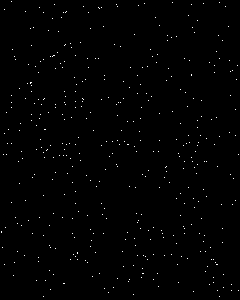}
         \caption{$p_b = 0.005, p_f = 0.05$}
    \end{subfigure}%
    \hfill
    \begin{subfigure}[t]{0.45\textwidth}
      \centering
         \includegraphics[width=\textwidth]{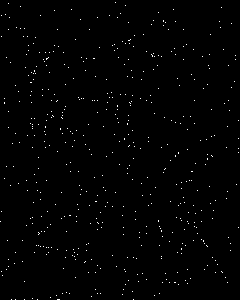}
         \caption{$p_b = 0.005, p_f = 0.1$}
    \end{subfigure}%
    \hfill
    \begin{subfigure}[t]{0.45\textwidth}
      \centering
         \includegraphics[width=\textwidth]{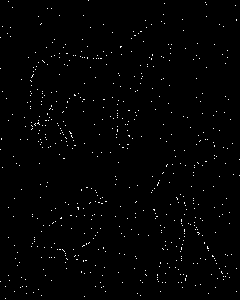}
         \caption{$p_b = 0.005, p_f = 0.2$}
    \end{subfigure}%
    \hfill
\caption{Examples of possible images captured by our sensors. The exact same scene is being imaged in each picture: the same outlines of a horse, a duck and a golfer. The background density is identical. The density of the foreground is lowest on the first image and highest on the last image. Since the number of extra dots on the interesting objects is very high in the last image, we can easily ``see'' just on this image. On the top right image, the number of extra points is too low in order to make us ``see'' the objects. On the bottom left image, one might eventually be able to perceive some parts of the outlines. However, if the sensor provides several such images and shows them in a video then we are easily capable of perceiving the outlines in all three cases. Find video demos at \cite{dagesHorseDemo}.}
\label{fig: np pts see frame}
\end{figure}

%The human visual system is a well functioning apparatus to process image and video data. Analysing its  properties can be done to give valuable insight to experts to conceive automatic algorithms for visual processing tasks. We uncovered a new aspect that could be relevant for the analysis of highly noisy video data. Suppose we are working on videos of a foreground moving object against some background. Imagine that each frame of the video is degraded independently. On the one hand suppose background noise corrupts the scene by making dots appear at random. On the other hand suppose also that only some random dots appear from the foreground. When looking just at a single frame, grouping foreground points and understanding what they form can become impossible for the human eye if the degradation is severe. However, if we run the degraded frames in a video sequence and if the object moves, then we can almost immediately group the foreground points and understand the scene. While this is a simple phenomenon, it has never been studied before to the best of our knowledge and could therefore help us to design a novel computer vision algorithm to handle highly degraded video data for which traditional denoising approach would fail.

\clearpage
\section{Signal Modelling}
\label{sec: signal modelling}

Our signals consist of a noisy degraded version of a black and white video of a moving foreground object on a stationary background. We will call foreground the edges of the object and the background the rest of the image, inside or outside the objects. See Figure \ref{fig: model data acquisition}. Assume for example that the sensor computes the gradient magnitudes of the image and thresholds local maxima of the gradient. The result of this thresholding operation is the output of the sensor. If the sensor were ideal, then the edges of the object, which is defined as the foreground, would be perfectly produced and the stationary uniform background, i.e. the regions both inside and outside the objects, would be simply black. Unfortunately, the sensors are very noisy. This translates to very noisy gradients: for instance, corrupt the input with noise (e.g. iid Gaussian noise). The gradient and thresholding operations will be very noisy, producing many random white points outside the foreground and not producing all white points on the foreground. Yet on the foreground there are more points than in the background. Therefore the sensor produces an image $I$ with two densities of points. A first density corresponding to the background only and a second density on the edges of the objects. See Figure \ref{fig: data acquisition sensor thresh grad} for a one dimensional example. Mathematically, if $i$ is a pixel and $B$ and $F$ the notations for the pixel locations corresponding to the background and the foreground, we have that: $\mathbbm{P}(I_i=1\mid i\in B) = p_0$ and $\mathbbm{P}(I_i=1\mid i\in F) = p_1$, with $p_1>p_0$. Ideally we want $p_1=1$ and $p_0=0$ but unfortunately $p_0>0$ and $p_1$ is only slightly larger than $p_0$. However in some cases the gap between $p_0$ and $p_1$ is not sufficiently large, so that in a single image we can \say{see} the foreground object, but large enough so that we can easily \say{see} the object when presented with a video with frames that display the successive random outputs of these noisy sensors.

In our study we did not work with real sensors. All the data we worked with was simulated. The simulation process we used does not correspond physically to the \say{real degradation process} but is mathematically equivalent to the previous description and this new formulation will help us throughout this report. The process is the following (see Figure \ref{fig: data generation sensor merge} for an illustration). Assume you have a series of clean black and white frames of edges (for instance consider simulated images of edges or outputs of an edge detector on frames of a video). The degradation process $\phi_{vid}$ consists in degradation processes $\phi_{im}$ on each frame independently between frames. For each frame in the original video $I$, we generate two independent degraded images: one consisting only in background white noise in the whole image and one consisting in white noise on the foreground only. We then merge these images to get the degraded frame $I^D$ which will substitute $I$ in the degraded video. Formally, the degraded frame is the union of two random independent signals: $I^{D} = I^B \lor I^F$, where $I^B$ and $I^F$ are the random background and foreground images respectively and are independent from one another, and $\lor$ is the symbol for the logical \say{or} operation. The background image is a 2D array of iid Bernoulli pixels: $(I^B)_{i}\sim B(p_b)$, where $i$ is the pixel index. On the other hand, the random foreground image follows a modified Bernoulli, conditionally to the position of the foreground: if $i$ is a foreground pixel, $(I^F)_{i} \sim B(p_f)$, otherwise $(I^F)_{i} = 0$. The Bernoulli parameters $p_b$ and $p_f$ are the degradation parameters of the process: ${I^D = \phi_{im}(I,p_b,p_f)}$. Note that although $p_b$ does correspond to $p_0$ in the previous formulation, $p_f$ does not correspond to $p_1$. This is because here $p_f$ can be understood as a marginal probability: an increase in probability of appearance on the foreground. We will sometimes call $p_f$ the \say{extra foreground} probability in order to avoid confusion with the probability on the foreground $p_1$. Beware that $p_f\neq p_1-p_0$. This is due to the fact that for a foreground pixel to appear in the degraded image, it can happen either due to the foreground image, or the background image, and both events can simultaneously occur and we must therefore remove the intersection in the computation of the probability. Hence we have:
$$p_1=\mathbbm{P}(I_i=1\mid i\in F) = \mathbbm{P}(I^B_i=1\cup I^F_i=1 \mid i\in F) = p_b+p_f-p_bp_f$$

\begin{figure}
    \centering
    \includegraphics[width=0.7\textwidth]{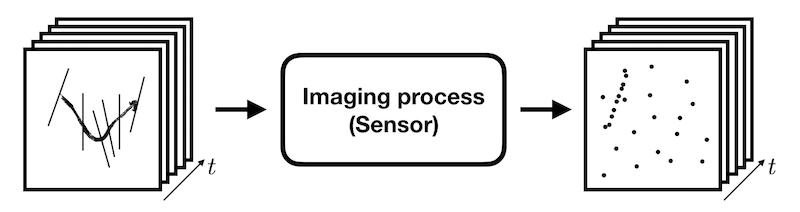}
    \caption[Short version]{Data acquisition model. An input image signal of a foreground, the edges of an object, lying in some background enters the sensor and the sensor outputs a degraded version of the signal, with many random white dots in the background with density $p_0$ and also a few white points on the foreground with slightly bigger density $p_1$. The sensor can process several successive input image signals to produce a degraded sequence of images. For example, the sensor could work on an input video of a dynamic scene. An input video consists in successive input frames, and the sensor would then output a degraded video consisting in successive degraded frames.}
    \hfill
    \label{fig: model data acquisition}
\end{figure}

\begin{figure}
    \centering
    \begin{subfigure}[t]{0.5\textwidth}
      \centering
         \includegraphics[width=\textwidth]{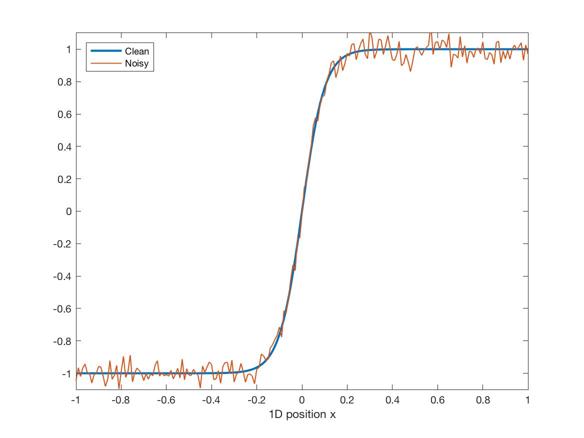}
         \caption{Input signals to sensor}
    \end{subfigure}%
    \hfill
    \begin{subfigure}[t]{0.5\textwidth}
      \centering
         \includegraphics[width=\textwidth]{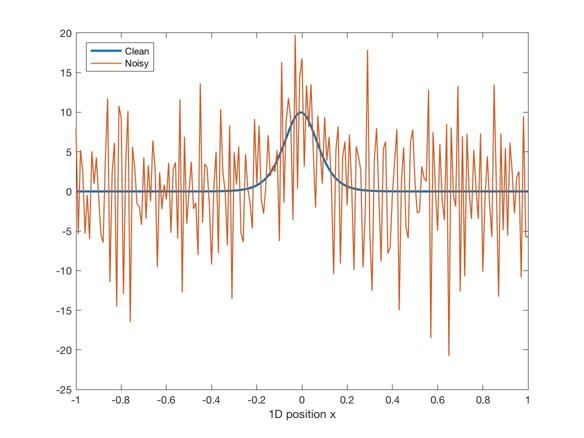}
         \caption{Gradients}
    \end{subfigure}%
    \hfill
    \begin{subfigure}[t]{0.5\textwidth}
      \centering
         \includegraphics[width=\textwidth]{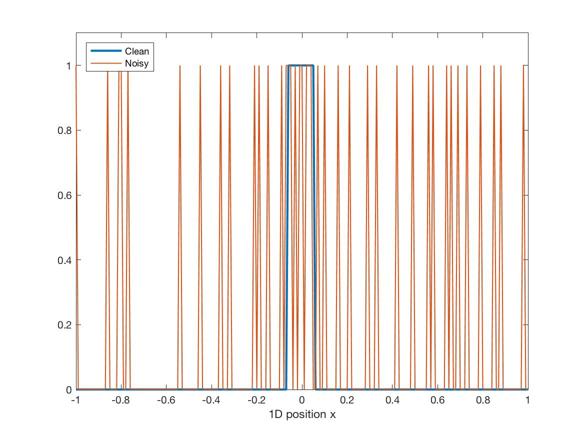}
         \caption{Sensor outputs: thresholding the gradients}
    \end{subfigure}%
    \hfill
    \begin{subfigure}[t]{0.5\textwidth}
      \centering
         \includegraphics[width=\textwidth]{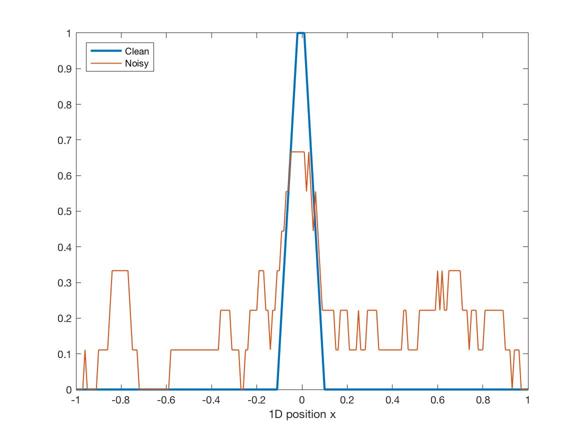}
         \caption{Local output density}
    \end{subfigure}%
\caption{Example of possible sensor acquisition producing our kind of data in a one dimensional setting. Consider one dimensional edge data. Assume the sensor computes the gradient of the input signal and outputs the result of the thresholding operation on the gradient signal. Unfortunately, the input to the sensor is noisy (and eventually the computation of the gradients could be noisy too) which implies high frequency in the gradients. Thresholding the noisy gradients will provide many new random dots (and a few dots less on the edge). The output points lie on two separate densities: outside the edge, i.e. in the background, the density of points is approximately $p_0$ whereas on the line the density is slightly larger $p_1$. For a very noisy sensor such as the one provided, recovering information on just one output of the sensor is difficult if not impossible for larger noise. However using the resampling of randomness, by looking at several independent outputs of the sensor, provide us with more information and would help to recover the clean signal. In this example the signal is a vector of length 201 uniformly sampling $[-1,1]$. The clean signal is $y=tanh(10x)$. We added Gaussian noise with standard deviation $0.05$ to the signal. Gradient computation was done without a smoothing operator to remove the high frequencies. The gradient threshold was chosen to be $7$. The window width for computing local densities was chosen to be $9$.}
\label{fig: data acquisition sensor thresh grad}
\end{figure}

\begin{figure}
    \centering
    \begin{subfigure}[t]{0.45\textwidth}
      \centering
         \includegraphics[width=\textwidth]{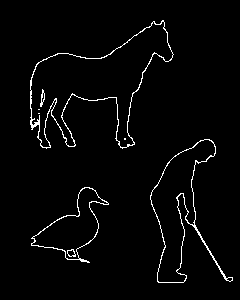}
         \caption{Groundtruth image $I$}
    \end{subfigure}%
    \hfill
    \begin{subfigure}[t]{0.45\textwidth}
      \centering
         \includegraphics[width=\textwidth]{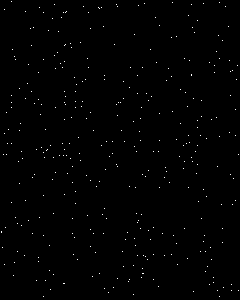}
         \caption{Background image $I^B$}
    \end{subfigure}%
    \hfill
    \begin{subfigure}[t]{0.45\textwidth}
      \centering
         \includegraphics[width=\textwidth]{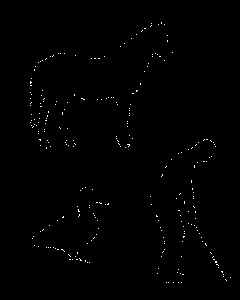}
         \caption{Foreground image $I^F$}
    \end{subfigure}%
    \hfill
    \begin{subfigure}[t]{0.45\textwidth}
      \centering
         \includegraphics[width=\textwidth]{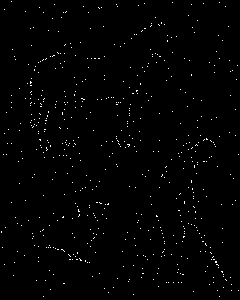}
         \caption{Final degraded image as a merge of both degraded images $I^D = I^B \lor I^F$}
    \end{subfigure}%
    \hfill
\caption{Model of the degradation process done in practice on synthetic data. The degradation procedure in our lab data is done by taking the input groundtruth edge image and generating from it two degraded images. The first is called the background image and is simply white noisy in the entire image with pixel probability of being white of $p_b$. The second is called the foreground image and is also a white noise image but only on the locations where the groundtruth was white (on the edges), in the rest of the image it is simply black. The probability of appearance in the foreground image for a pixel on an edge is $p_f$ which can be arbitrarily small. The final degraded image and output of our degradation procedure is the logical merge of each image: a pixel in the final image is white if and only if it is white in the background image or the foreground image (or both). This procedure is equivalent to the previous description of the degradation procedure with: $p_0 = p_b$ and $p_1 = p_b+p_f-p_b p_f$. In this example, we have $p_b = 0.005$ and $p_f = 0.2$ on an image of resolution $240\times 300$.}
\label{fig: data generation sensor merge}
\end{figure}

\subsection{Initial Simulations}
 
We first simulated the phenomenon in order to experience it first hand. In order to do so, we filmed a scene of a person moving in an environment and ran a Canny edge detector with thresholds hand-tuned. We then applied the degradation procedure described above on the edge video with some hand-tuned parameters that show the effect. 
We used three versions of the parameters for demo purposes (see Figure \ref{fig: me moving demo frames}). The first two choices depend on what is considered the important information to retrieve in the video. The third choice shows the phenomenon in a different range. The first choice is to consider the human moving as the information to be retrieved. For this we chose parameters so that the moving person is not easily seen on each degraded frame but it is if we look at it in a video. In this case, static edges are persistent and can sometimes be seen on frames. However they too are to be considered as noise. The second choice for the degradation parameters is to consider both static edges as well as objects in motion as information hidden in each frame. We had to slightly lower the difference between parameters $p_b$ and $p_f$ in order to mask the straight edges in each frame but have them seen when considering the video. In this case, seeing the moving human can be much harder. In this report, we work in the \say{sparse} domain, which consists in a low background density, $p_b<<\frac{1}{2}$. Nevertheless, for our third demo, we chose a pair of parameters consisting in a much denser image to show that the phenomenon also occurs outside the sparse assumption for the signal. In all cases, for the purpose of the demo, we chose to not lie within the critical thresholds for perception.

An interesting note can already be made. For the first choice of parameters, centred on the dynamic human rather than the edges, we pointed out that the straight edges are slightly more easily seen on each frame than the outline of the human. Although the density of the points along the curves, respectively the long straight edges and the human contour, are identical, perception seems to be easier for the straight edges. This corroborates the hypothesis that an increase in the complexity of the shape leads to a decrease in the perception performance.

\begin{figure}
    \centering
    \begin{subfigure}[t]{0.45\textwidth}
      \centering
         \includegraphics[width=\textwidth]{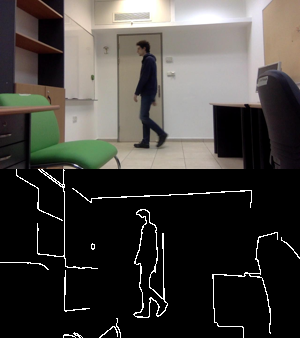}
         \caption{Original}
    \end{subfigure}%
    \hfill
    \begin{subfigure}[t]{0.45\textwidth}
      \centering
         \includegraphics[width=\textwidth]{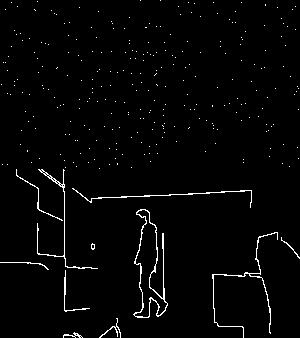}
         \caption{$(p_b = 0.005,p_f = 0.04)$}
    \end{subfigure}%
    \hfill
    \begin{subfigure}[t]{0.45\textwidth}
      \centering
         \includegraphics[width=\textwidth]{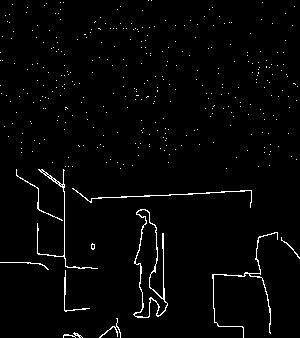}
         \caption{$(p_b = 0.005,p_f = 0.06)$}
    \end{subfigure}%
    \hfill
    \begin{subfigure}[t]{0.45\textwidth}
      \centering
         \includegraphics[width=\textwidth]{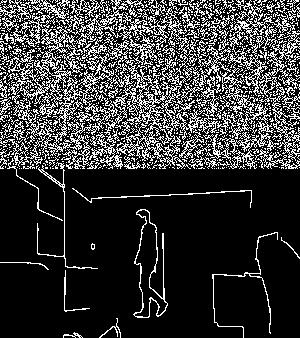}
         \caption{$(p_b = 0.4,p_f = 0.45)$}
    \end{subfigure}%
    \hfill
\caption{Top left: Original frame of a video of a natural scene and the output of a Canny edge detector with parameters $(0.2,0.5)$ and blur standard deviation $3$ for the gradient computation. The output of the Canny edge detector is considered as the clean image in this work and our goal is to recover it. The other pairs of images consist in the same frame and its degradation according to the chosen degradation parameters. Top right: In a video, the straight lines are seen but it can be tricky to clearly see the human moving. Bottom left: In a video, both the straight edges and human can be seen very clearly. One might be able to see part of the straight alignments in the frame, but this is not very clear and is biased by having underneath the groundtruth. Bottom right: A denser version of the phenomenon. In a video, the straight edges and the moving person can be seen clearly. For each of these degradations, viewing simply the degraded frame does not seem to give us much information and it looks like a configuration of random noisy dots. Video demos can be found at \cite{dagesMeDemo}.}
\label{fig: me moving demo frames}
\end{figure}

The phenomenon occurs whether we look at simple shapes such as lines or circles \cite{dagesEdgeCircleConchDemo} or complex shapes such as a human dancer \cite{dagesBalletDemo}. We can still \say{see} these objects even if they move a highly non rigid way. In the second part of demo \cite{dagesBalletDemo}, we range through various values of $p_f$, from when $p_f$ is so high that the objects are easily seen in each frame, to values of $p_f$ so low that we cannot see any structure in the video.

\section{Literature Survey}
\label{sec: literature survey}

\paragraph{Early perception theory}
While psychologists have long studied human perception, it was in the late 19th century that Weber and his student Fechner introduced quantitative experimentation on humans \cite{fechner1860elemente}. This led to the foundation of Psychophysics. Many different theories of vision have appeared since then \cite{palmer1999vision}. The main first two contrasting approaches are Structuralism and Gestaltism. The former claims that basic sensory primitives are indivisible and associated, via experience, to form a more complex concept. The latter claims that the whole is in itself the basic property and it cannot be broken down into sub-parts: \say{the whole is something else than the sum of its parts} \cite{Koffka1935-KOFPOG-3}. Interestingly enough, structuralist principles closely resemble those of cutting-edge signal processing algorithms such as convolutional neural networks, which learn to extract local key features and link them together based on previous experience to obtain higher level features, and concatenate them into a feature vector, but do not necessarily manage to process structures (e.g. shapes) as a whole \cite{baker2018abstract, baker2018deep}. Gibson gave more importance to the surrounding world rather than the organismic processes responsible for perception. This lead to the third main theory of vision called Ecologism. It claims that we can retrieve directly all the information of the world directly from its projection on the retina, setting the ground for modern computer vision. However, it is impossible to fully recover unambiguously the 3D world from 2D data even when including time. Helmholtz claimed that an extra process, innate or learned, happens during perception in order to fully solve the inherent ambiguity of the inverse problem. This lead to the Constructivism theory. This can formulated in a likelihood theory where we have some extra heuristic assumptions and then perceive the most likely explanation to the current stimulus which naturally echoes the Bayesian mathematical approach. It can be also linked to the Prägnanz principle of Gestaltism. Constructivism claims that perception of global phenomena is constructed from local information but also gives importance to perceptions as a whole such as in the Gestalt approach. As such, it is a compromise between all three previous theories. Constructivism is now considered as the main classical approach to perception theory. Nevertheless, the role of Gestalts in visual processing tasks should not be underestimated.

%Presentation of Gestaltism: Wertheimer (laws/rules), Palmer (all else being equal...), 
\paragraph{Gestaltism}
The founder of Gestaltism, Wertheimer, designed a series of laws in order to explain human perception of Gestalts \cite{wertheimer1938laws}. These principles predict how humans will group together objects depending on various properties. Human perception tends to group similar objects or objects that tend to make a simple whole. Wertheimer's laws are proximity, similarity, closure, symmetry, common fate, continuity, \say{good Gestalt} also called Prägnanz, and past experience. Many other laws have been introduced such as common region \cite{palmer1992common} or connectedness \cite{palmer1994rethinking}. These laws should be first understood in the following fashion: \say{all else being equal} \cite{palmer1999vision}, the law applies. However, they can reinforce one another or create conflicts and impede the perception, depending on multiple cues that are present. Furthermore, they are recursive in the sense that a first grouping can be done according to a law (e.g. good continuation for detecting a curve) then a sub grouping can be done according to an other (e.g. similarity, if the curve consists in two alternating shapes).

%Dot alignment noise masking (psychophysics)
\paragraph{Perception of dot alignment embedded in noise}
In our videos, the clean frames will mostly consist of one dimensional objects or lines. Therefore the Gestalt we are most interested in is the perception of alignment among random dots. Alignment can be understood as rigid alignment where the alignment should be done along a straight line, eventually with some jitter. Or it can also be understood in a more flexible way as an alignment along a smooth curve. It is a well known fact that humans are not able to perceive structure or alignment in a random-dot image \cite{attneave1954some}: the image is primarily understood as a texture. Therefore, in order to perceive an alignment of dots lying in a random-dot image, it is necessary that the statistics along the alignment be different to those in the rest of the image. These different statistics can be of many kinds. Uttal \cite{uttal1975autocorrelation} proposed an autocorrelation theory of form detection, mainly for straight alignments. In this model, due to the definition of autocorrelation, regular spacing is key. However, Kiryati et al. \cite{kiryati1992perception} showed that autocorrelation is insufficient for modelling our perception of alignment. While the regular spacing may help perception, irregular alignments can still be perceived while being strongly masked by regular patterns, something unexplained by the autocorrelation model. As such, the authors propose that human perception should be a combination of autocorrelation and another process independent of spacing such as for instance the Hough transform \cite{duda1971use}. Furthermore, Tripathy et al. \cite{tripathy1999detecting} have shown that the perception of a regular alignment of dots masked by a random-dot background can be improved by breaking the regularity of the alignment by having random dots from the background lie closely to the alignment. Other approaches have based their model on proximity. For instance van Oefellen et al. \cite{van1983algorithm} and the later refinements \cite{smits1985perception,smits1986model} start their model algorithm by convolving the image with Gaussians. This yields a landscape image with peaks and valleys, from which we can then find clusters. This is adaptable to non straight smooth curves. Other approaches for modelling the perception of good continuation of curves exist, such as considering a local associative field which defines a set of rules along which neighbouring points can be considered on the same curve depending mainly on their consistent orientation \cite{field1993contour} or a pyramidal top-down and bottom-up approach that takes into account local and global information \cite{pizlo1997curve}. Other models have been studied such as saliency \cite{ullman1988structural} or curvature based approaches, but they are considered as inappropriate models for human perception. Indeed, the saliency approach from \cite{ullman1988structural} requires a linear time processing with respect to the length of the curve, when it has been shown that our perception is independent of it \cite{pizlo1995exponential}. For the curvature, this measure is scale dependent which would imply that perception performance would strongly depend on the size of the curve. However humans perceive curves with some scale invariance. For instance, the perception of a circle in everyday life does not seem to be affected by our distance to it and this invalidates a curvature based model for perception \cite{pizlo1997curve}. More recently, Preiss \cite{preiss2006theoretical} studies a wide range of methods for the perception of structures, using tools from nearest neighbours to Delaunay triangulations and reflective symmetry and analysing their statistics and implementing algorithms using them.

%JND + Density difference
\paragraph{Just noticeable difference}
Gestalt is not the only way at looking at our problem. Indeed, our signals essentially consist in a dynamic probabilistic density field. In particular our signals' probability field only take two constant values but their location varies continuously through time. We struggle or fail to perceive the underlying structure when the difference in probabilities is very low but when it is very large we easily see. This is typical of the Just Noticeable Difference (JND) concept. This theory is applicable to general signal detection. It consists in finding what is the perception threshold for a type of signal: what is the minimal difference needed in order to perceive the signal. Indeed, our sensitivity is limited and we are not necessarily able to perceive very small differences or variations within some stimulus. The thresholds depend on the task at hand, the nature of the signals and usually varies depending on values such as the intensity of the signal. The earliest and most famous dependency law of the threshold is the Weber law \cite{fechner1860elemente} which states that the minimal difference for perception is proportional to the intensity of the stimulus. While the Weber law is an almost universally applicable classic in visual perception, there are fields where it does not apply \cite{smeets2008grasping} and it has been shown that it is tends to be a bad approximation when the intensity of the stimulus goes into the very high or very low range \cite{stout1915manual}. Many extensions or more refined models exist \cite{wu2019survey}. The thorough study of the JND also leads to computational methods that adapt better to human vision \cite{wu2019survey}. An interesting example is the study of the just noticeable difference in dot densities. Barlow \cite{barlow1978efficiency} measured the human ability to detect regions based on different dot densities for static and dynamic displays by analysing the statistical efficiency\footnote{This value is defined as the squared ratio of the signal-to-noise ratios of a subject and an ideal observer. An ideal observer, from a signal processing point view, is a best performing algorithm under the oracle assumptions: it is the best performance we can expect. From a psychophysical perspective, the signal to noise ration is an empirical estimation of the $d'$ of the subject or observer. This quantity is called the sensitivity index and is commonly used in psychophysics. It evaluates the difference between the means of the signal and noise distributions compared against the standard deviations of these distributions.}. However, the density is studied solely as density per unit of area and there is no study of density per unit of length for one dimensional structures of points (with some density point density per unit length) embedded in a two dimensional background with another density per area. Furthermore, in the dynamic approach, all points persist in time. This diminishes the relevance of this work to our problem. To the best of our knowledge, there is no JND study adapted to our kind of signal: a dynamic probability or density field, where dots do not persist in time, masked by unstructured noise almost identical in behaviour to the statistics of the structures of interest. However JND is not the only alternative to approach the analysis of such problems.

%Human perception of motion perception: general theory, orders of motion
\paragraph{Generalities on motion perception}
Motion perception has been extensively studied from biologists to psychophysicists \cite{johansson1980event,palmer1999vision}. Motion is a complex phenomenon. We naturally perceive motion in every day life, and some phenomena suggest that motion perception is in fact one of the primitives of human perception, and is not a result of some post-processing of the perceived visual field. Two interesting examples are the autokinetic effect, where one tends to perceive a motion while looking at a fixed point in total darkness after a long amount of time, and the waterfall illusion, where one looks at noise moving uniformly in some direction for a long amount of time and then suddenly looks at a static scene but perceives motion in the opposite direction. In both these cases, we perceive motion on static scenes which suggests that motion perception cannot only be a result of post-processing of the visual input. Generally, the concept of motion perception is coined in the terms of real and apparent motion for which there needs not to be real motion in the visual field. Apparent motion is used very often when motion perception is induced by a succession of static frames with dots or features that jump between the frames: nothing moves continuously yet we can perceive a continuous motion. This is traditionally called \say{beta} motion. Depending on the frame-rate change, the perceived motion can be significantly different. Indeed, too high or too low a frame-rate will cause us to perceive just two positions flickering. Moderately slow flickering will create the illusion of continuous motion known as the \say{beta} phenomenon mentioned above. On the other hand, faster frame-rate will create a different kind of motion called the \say{phi} phenomenon where we perceive a discontinuous motion. In other flickering cases, during the \say{phi} phenomenon, we might perceive for instance the background moving. This motion has traditionally been considered pure by psychophysicists. Several models exist to explain motion perception such as Braddick's dual theory consisting in short-range and long-range processes \cite{braddick1974short}: where the maximum displacement to perceive movement in dense random-dot stereograms is much smaller than the one for classically isolated apparent motion. Nowadays, the three orders of perception theory is commonly accepted\footnote{Interestingly enough, the first two orders are linked to Braddick's short-range process whereas the third order would be his long-range process.} \cite{lu1995functional,nishida2018motion}, although other models exist such as zero-crossing or gradient detectors \cite{lu2001three}. The first order consists of the \say{phi} phenomenon. The difference between the first two orders of perception is that the first consists in some flickering or alternation or some other quality that results in a change of the luminance energy in the Fourier domain. As an example, a flickering at the appropriate rate of two separate dots would result in a first order motion whereas a flickering of random textures that does not change the luminance energy would consist in a second order motion. The third order is considered as feature tracking and is considered a higher level process as it requires attention and not only bottom-up processing. Computational models have been proposed especially the Reichardt detector \cite{reichardt1956systemtheoretische,vansanten1984temporal}. The Reichardt detector is fundamentally a delay-and-compare network that looks at sums or products of some signal (or its Fourier transform) and its delayed version and that gets the displacements by finding peaks in its output. It is a linear detector capable of recovering first order motions and requires an extra non linear rectification in order to recover second order motions. Recently, less attention is given to second order motion and its mechanisms remain quite obscure \cite{nishida2018motion}.

%Human perception of dot motions: grouping, masking
\paragraph{Perception and grouping of dynamic dots embedded in noise}
We are particularly interested in the motion perception or grouping literature on random dots with a masking effect. A lot of psychological research has been done on such video data. If one is allowed to make the primary stimulus flicker and allow some other secondary stimulus to appear during the interval between the displays of the primary stimulus, Braddick showed that one could break the perception of apparent motion in random-dot video by adding a bright field \cite{braddick1973masking}. However our data does not flicker in this sense, it is simply resampled per frame. One of the simplest way to group dots together is through the common fate principle \cite{palmer1999vision}. Dots that move together in a euclidean or rigid fashion tend to be easily grouped even if they are immersed in a dense noisy mask. This mask can consist of randomly moving dots or dots moving in a consistent fashion (e.g. static no movement). In fact, Watamaniuck et al. \cite{watamaniuk1995detecting} showed that we can actually perceive the trajectory of a single dot if its movement is somewhat consistent over time (the angular displacement is not too high between frames), even if immersed among randomly moving dots. In this case, the information for two consecutive frames is insufficient in order to perceive consistency. We can even perceive motion in static non flickering random-dots display. Indeed, in Erlikhman et al. \cite{erlikhman2016modeling}, the authors study and model the perception of emerging spatio-temporal boundaries. The videos consist in static dots that all have a same colour (white) except those inside a virtual region that is moving, those in that region have a different colour (black). Most successive frames are identical and there is only a change in a single point when the region passes over it (either during entrance or exit), yet this is enough to clearly perceive the apparent motion of the virtual region: the colour change of three successive points is enough to compute the boundary of the virtual region. Another important aspect that can help perception of moving dots is averaging. There is evidence suggesting humans perform some averaging in order to help them estimate motion and group points together. According to \cite{watamaniuk1992temporal}, we are capable to perceive a global direction of motion of random dots moving according to a random walk where the directions sampled at each frame for each point follow a Gaussian distribution and the direction we perceive is the mean of this distribution. In fact, averaging, whether it be temporal or spatial, is a basic and necessary aspect of the visual memory and it has the greatest influence when subjects are uncertain about the stimulus conditions \cite{dube2015obligatory}, as it happens in our confusing stimuli of almost white noise. However, all studies that we have encountered make a fundamental assumption on the points: they persist in time. To the best of our knowledge, no psychological or psychophysical study so far assumes that each dot disappears after being displayed in a frame and in the next frame an entirely new set of dots is generated to create the next input image to display. In some studies some points disappear, some points appear, or for instance noisy points are randomly resampled each frame, but in no study have we found a totally renewed generation for each frame in time. In our data, not only do we have that the points do not persist in time, even their number may change as well, as each pixel in the foreground and background image is randomly regenerated.

%Human perception of biological motion of dots
\paragraph{Biological motion perception}
Another interesting topic in motion perception is our ability to perceive amazingly well biological motions and human motions in particular \cite{johansson1980event}. Indeed, grouping is facilitated if the points to be grouped behave consistently with a biological pattern as opposed to arbitrary artificial lab-created patterns. This raises the question: Does biological motion help in tuning the perception in our video data? Johansson demonstrated amazingly the power of biological motion \cite{johansson1973visual}. In his work, Johansson put luminous dots on the joints of human and filmed the scene in the dark. When the person was not moving (i.e. the dots were static), grouping and understanding of the dots could be tricky or even impossible and closely resembling a random configuration (such as when a person was sitting down). But as soon as the person moves we very easily and almost instantly see and recognise a moving human being\footnote{A video demonstrating this amazing yet difficult-to-explain phenomenon is available at \url{https://www.youtube.com/watch?v=1F5ICP9SYLU}.}: grouping and understanding are immediate as soon as there is motion, and that motion is very familiar to us. Johansson gives a mathematical framework for the explanation of this phenomenon by using a vector analysis and decomposing dot movement vectors into a global and relative vectors, and this process can be iterated. For instance, a movement of an arm described as three dots, one on the shoulder, one on the elbow and one on the wrist, is described by a dual pendulum like movement where the elbow relatively rotates around the shoulder dot and the wrist relatively rotates around the elbow dot. Johansson measured the time necessary for us to be able to group and understand the scene between $100$ and $200$ milliseconds \cite{johansson1976spatio}. Following his pioneering work, the perception of biological motion, and in particular human motion, has been extensively studied \cite{blake2007perception}. It emerges that our perception of such motions is robust even in the presence of masks, whether they are simply random dots or even structured random dots moving in some biological fashion \cite{cutting1988masking,bertenthal1994global}. As stated previously, the perception is very fast, with no improvement above $0.8$ seconds of display \cite{cutting1988masking}, however the perception might be holistic rather than in a bottom-up fashion \cite{bertenthal1994global}. We note that although the phenomenon persists for some non human biological motions, we are less tuned to it \cite{pinto2009visual}.

%Quantitative/Computational: ?
%1) Quantitative Gestalt
%2) Perception of alignment
%3) Perception of motion

%Filters
\paragraph{Filtering}
Psychologists and psychophysicists are not the only researchers to have studied perception and recovering information in noisy signals. Due to numerous Computer Vision applications, computational approaches to motion analysis are abundant in the literature. Recall that our data essentially consists of special video sequences contaminated with noise. A first possibility for denoising videos is to use filtering methods. The classical filtering approach for video is the Kalman filter \cite{kalman1960new}. The Kalman filter requires many assumptions, including that all latent and observed variables have a Gaussian distribution and that the state and transition models are linear. The filter can be extended to non linear models essentially by linearising the non linearities. A generalisation of the Kalman filter to general distributions is through the use of particle filters \cite{doucet2000sequential}. Its assumes a \say{hidden markov} model and tries to recover the posterior distribution using Monte Carlo methods since most often explicit computation is infeasible. Unfortunately, filtering methods aim mainly to undo the degradation processes and their models are not sensitive to Gestalts.

%Quantitative gestalt: Lowe (attempt of statistical analysis of unlikelihood) + good review (is even on wiki !!)
\paragraph{Quantitative Gestalt}
While the ideas of Gestaltism are explained by hand-waving and showing simple examples, many attempts of quantitative analysis have been made \cite{sarkar1993perceptual,jakel2016overview}. The empirical and theoretical methods used to study Gestalts vary and the field lacks of a unified framework. While some use an \say{ideal observer} model to analyse the Gestalts, others use statistical approaches, from statistical tests to Bayesian inference. Furthermore the quantity that is being studied or optimised may be very different, such as false alarm rates or posterior probabilities, in attempts to explain and recover the same Gestalt effect. Witkin and Tenenbaum \cite{witkin1983role} along with Lowe \cite{lowe1985perceptual} are pioneers for using image structure to recover causal structures of a scene. Lowe studied the perception of colinear and parallel lines. For this, he used a \say{non-accidental} alignment idea. Under a null hypothesis, such as a random background hypothesis, is the perceived organisation or pattern in the image likely or not? If it very unlikely, then it goes against this hypothesis and should be due to some structured cause, and this leads to a detection. Lowe essentially used a mathematical formulation from the Bayesian theory. However, his formulation was not done in a systematic manner \cite{desolneux2003grouping}, an important requirement for a mathematical framework for image analysis.

%Bayesian
\paragraph{The Bayesian approach}
Bayesian theory is one of the most common approach for quantitatively modelling Gestalts \cite{geman1986markov,jakel2016overview}. Its core idea is the use of the Bayes formula to compute the posterior probability for patterns perceived. If $y$ is an observation, such as some grouping, and $x$ a structured cause for such an observation, then we want to estimate what is the probability that the cause is true given the observation of the event. The mathematical formulation then becomes:

\begin{equation*}
    \mathbbm{P}(x \mid y) = \frac{\mathbbm{P}(y \mid x)\mathbbm{P}(x)}{\mathbbm{P}(y)}
\end{equation*}

We intend to find the cause $x$ that maximises the posterior probability and this estimator is called the Maximum A Posteriori (MAP). Since the observation is a given, the denominator is considered a constant and it suffices to maximise the numerator in the previous formula. The probability distribution $\mathbbm{P}(x)$ is called the prior and the conditional distribution $\mathbbm{P}(y \mid x)$ is the likelihood. The likelihood is the probability of the observation given the model and as such it can often be easily formulated or approximated. On the other hand, the prior requires a strong assumption on what objects we might have in the world and their relative frequency of occurrence. The prior is in practice similar to a regularisation term in a variational formulation. The prior is usually hard to define which one of the main weaknesses of the Bayesian approach \cite{desolneux2000meaningful,desolneux2001edge}. Another issue with this approach is that the maximisation of the posterior probability will always yield an explanation $x$, even in the case of noise only data. As such, it will tend to find some structure in data without any prior structure.

%Variational
\paragraph{Variational methods}
Another common approach is the use of a variational model \cite{mumford1985boundary}. Similarly to the Bayesian approach, we can model the problem and introduce a functional that we wish to minimise. We choose as best the structure that fits to the image data and minimises the designed functional. Traditionally, the functional will be expressed as a sum of a fidelity term of the observation with the structure considered and a weighted regularisation term on the structure that measures how likely it is. Variational approaches have several drawbacks \cite{desolneux2000meaningful}. A first important issue is the choice of regularisation parameters: the results are usually very sensitive to them. While it can be possible to estimate them, for instance by using Lagrange multipliers \cite{rudin1992nonlinear}, this estimation is usually very rough if not inaccurate. Secondly, similarly to the Bayesian approach, minimising a functional will always yield a result even when fed with purely noisy data. Finally, these methods usually tend to be local methods as the functional often depends on local structure. This is a major drawback for a Gestalt oriented approach using a variational formulation.

%Global grouping
\paragraph{Global grouping}
In order to recover Gestalts, global computational methods have also been explored. One of the most significant global approach is the Hough transform \cite{hough1962method,duda1971use,maitre1985hough}. Given a parametric model of Gestalt or structure to recover, such a straight line, we can map the image into the dual parameter space. We can recover Gestalts as peaks in the Hough space. Naturally, it is necessary to choose some threshold in order to recover only the most significant peaks. The choice of threshold can be hand tuned, statistically derived or learned through examples. While being an extremely simple idea, the Hough transform is very powerful. In fact, Kiryati et al. \cite{kiryati1991probabilistic} showed that using only a random subset of all points and computing the Hough transform one could recover the line without harming too much the performance, measured in terms of false alarms. Another very famous and relevant approach for detecting line alignment, and more generally parametric Gestalts, is the Random Sample Consensus commonly called RANSAC \cite{fischler1981ransac}. It consists in randomly sampling two points as candidates for a line and count the number of inliers and define that as the score of the sample. We keep the sample that yields the highest score (exactly one output) or those above some threshold (none to several outputs) and recompute more precisely the position of the line based on this sample and it's set of inliers using least squares. Statistically, RANSAC will recover with high probability the line with only a low number of random samples, which can drastically reduce the computation time. Another notorious global method can use saliency, which consists in trying to define and compute a score measure that gives an indication of how salient a position in the image is and to then return some continuous curve through highly salient positions. The curve can be intrinsically linked to the score as in Ullman et al. \cite{ullman1988structural} or explicitly reconstructed as in the extension field approach from Guy et al. \cite{guy1992perceptual,guy1993inferring}. Many other approaches exist from Voronoi diagrams \cite{ahuja1989extraction} to fuzzy logic clustering \cite{dave1990use}. The field for global grouping algorithms is rich, although both two previous methods do not consider masking random dots. Traditional global methods usually fall short of giving a satisfactory justification of the presence or not of the Gestalt structure that they are looking for in the image that they are presented with.

\paragraph{A contrario}
In order to remedy the shortcomings of previously mentioned methods , Desolneux et al. \cite{desolneux2000meaningful} developed the \say{a contrario} framework. A contrario is a global detection method for Gestalt recovery. It is a statistical test approach where the test is designed to minimise the expected number of false alarms. It is based on the \say{Helmhotz principle}: a geometric structure in an image is perceptually significant (and therefore pops out and becomes visible) if its number of occurrences by chance is very low. The overview of the method is the following. We define the space of the Gestalt structures we are looking for and then define a finite sampling of this space. We must estimate the size of this sampling. Given a sample, we look at a score, computed as a count of dots appearing near the sample in our visual input data, and look at how unlikely the score is under a noisy background only assumption $H_0$. This means we look at the tail of the distribution under $H_0$. The trick of the a contrario method is to look at the expected number of false alarms rather than simply the probability of a false alarm. As such, the test criterion is the estimated size of the sampling of space times the tail of the probability distribution under $H_0$ to get a score as extreme as the observed one. This entity is called the number of false alarms and it is then tested against a user defined threshold $\varepsilon$. However, the threshold is not that important due to a log dependency and in practice choices of $\varepsilon\le 1$ give good results. A choice of $\varepsilon = 1$ is a common default \say{good enough} choice for most applications of a contrario. As such, a contrario becomes a parameterless theory for global Gestalt detection and gives a criterion based on the expected number of false alarms for detections. A contrario is inspired from the MINPRAN approach \cite{stewart1995minpran} and can be viewed as a systemisation of the approach initially designed for recovering planar alignments of 3D points which also alleviates crucial unresolved parameters: avoiding the choice of parameters for sampling the candidates by considering the sampling as an implicit parameter deriving by our choice of Gestalt and how we sample, and by overcoming the need for an independence assumption between samples, which is clearly false, by looking at expectations rather than crude probabilities. A contrario has also the power of being a simple framework for most global detection tasks \cite{desolneux2003grouping} from point alignment detection \cite{desolneux2000meaningful}, edge detection \cite{desolneux2001edge}, curve detection \cite{lezama2015grouping,lezama2016good,blusseau2015salience}, vanishing point detection \cite{almansa2003vanishing}, histogram modes \cite{desolneux2003grouping}, clustering \cite{desolneux2003grouping}, face detection \cite{lisani2017contrario} or even forgery detection by performing a contrario on the heatmaps provided by deep neural networks \cite{flenner2018resampling}. In particular, a contrario straight line alignment detection has been extensively studied \cite{desolneux2000meaningful,von2014contrario,lezama2DptAlignDet,lezama2015grouping}. Furthermore, it has also been done on oriented points, where the points can be replaced for instance by small segments \cite{desolneux2003maximal}, Gabor filters \cite{blusseau2015salience} or a 2D vector field \cite{von2012LSD}. The advantage of using oriented points is that for natural images we directly have access to an oriented 2D vector field by computing the gradient of the image. This lead to the very nice Line Segment Detector (LSD) algorithm \cite{von2012LSD} by seeing that using gradient alignment allows us to get edges without having to consider the difference in grey level intensities. When computing straight alignment, having oriented points also helps in the detection performance and fewer points are required to pass the detection threshold. Unfortunately, our data is unoriented. Non straight alignment has also been recently studied through a contrario approaches for curve detection by Lezama \cite{lezama2015grouping,lezama2016good} and Blusseau \cite{blusseau2015salience}. Lezama's approach is based on a local symmetry assumption: along a smooth curve, a triplet of consecutive points should be close to symmetric along the middle point. This assumption implies that the curve should be sampled close to uniformly which is the not the case in our data. Blusseau's approach uses oriented points as Gabor filters and as such cannot be used for our data. Finally, a contrario has been shown to empiricall perform similarly to humans on some tasks \cite{desolneux2003computational,blusseau2015salience} which makes it a good candidate for a simple computational model of the human perception system. Interestingly, Desolneux et al. \cite{desolneux2003computational} use a contrario to estimate a theoretical performance that fits well human performance based on the length and density of the line of the alignment and is somewhat similar to what we will develop in this report.

%Motion - general
\paragraph{Computational motion perception}
Motion perception has also been extensively studied from a computational approach \cite{kaushal2018soft} and not just from the psychological one. The main historical approaches to perceive or track motion in natural series of images was to use optical flow methods that try to recover the motion field (projection of the true 3D displacement field onto the 2D image plane) \cite{aggarwal1988computation}. These methods can be structured into dense approaches, that try to recover the motion at every point of the image, or sparse methods, that rely only the motion of a few number of feature points. Optical flow methods are naturally ill-posed due to the fact that the problem is intrinsically underconstrained, even if one can avoid the issue in some cases \cite{uras1988computational}. The art of optical flow methods is in choosing an appropriate additional constraint to overcome the intrinsic edge flow ambiguity. Stochastic approaches using markov random fields models have also been used in order to compensate for the lack of traditional constraints' capability to handle issues such as occlusions \cite{konrad1988estimation,konrad1992bayesian,heitz1993multimodal}. While historically dominant, optical flow methods are not the only methods for computational motion perception. Machine learning approaches including neural networks, fuzzy logic or evolutionary algorithms or hybrid versions have been developed \cite{kaushal2018soft}, even greedy algorithms exist \cite{shafique2005noniterative}. An interesting approach is from Zhou et al. \cite{zhou2012coherent} who use a \say{Coherent Neighbour Invariance} assumption in order to track points moving locally consistently together, even if some points appear or disappear and if masked by random dots. The method was not only tested on natural data but also interestingly on simulated random-dot kinetograms. The \say{Coherent Neighbour Invariance} consists in two important assumptions. First, the neighbourhoods of coherent points tend to remain invariant over time and secondly that the correlation of the velocity vectors of neighbouring coherent points remains high when averaged over time. Unfortunately, since in our data points to do not persist in time, i.e. all points appear and disappear at each frame, these assumptions do not hold for us. To the best of our knowledge, we do not know of any method that does not assume persistence over time of points.

%Motion - biological
\paragraph{Computational biological motion perception}
Biological motion perception has also received interest from the computational literature. Johansson's \say{vector analysis} \cite{johansson1973visual} was not only a pioneering work for psychologists but also for computational methods. Indeed, this has led to approaches based on the perception and analysis of the motion of the human skeleton \cite{kale2016study}. Gershman et al. \cite{gershman2016discovering} even showed that a Bayesian formulation of a hierarchical tree structure could computationally fit well Johansson's \say{vector analysis} theory. Unfortunately, to the best of our knowledge, these methods do not apply to our kind of data with only random dots that do not survive a frame.

\clearpage
\newpage
\section{The ``A Contrario'' Framework}
\label{sec: a contrario}

A recent and effective way to quantify Gestalt rules into a mathematical framework was proposed by Desolneux, Moisan and Morel \cite{desolneux2000meaningful,desolneux2003grouping}. The framework adapts to any detection task. The idea is to assume that most of the observed data was generated by some unstructured random distribution $H_0$ and under this distribution to look at unlikely patterns to be detected as objects of interest. However, simply looking at the probability of a data configuration will most often not encapsulate the meaningfulness of the outlier configuration. To explain the a contrario framework, we shall look at a simple example.

\paragraph{Example}
\label{par: simple example presentation nfa}
In this Section, we will consider a simple example, following one from von Gioi \cite{von2014contrario}, that gives a good idea of how the a contrario framework works. Consider strings of binary data where $H_0$ defines that each bit independently takes the value 1 with probability $\frac{1}{2}$. In this setting, for any detection task and any candidate sequence, the probability of observing this sequence under $H_0$ only depends on the length $L$ of the sequence and is given by $\frac{1}{2^L}$. This does not provide any information on the structure of the observed sequence. For instance, the sequences $1111111111$ and $1001101001$ have the same probability under $H_0$ yet we perceive that the first pattern is clearly highly structured and we feel that it is quite unlikely under $H_0$. Therefore rather than looking at the probability of specific sequences, the idea is to associate some other measure to an observation and then look at how likely this measure is. This measure should clearly depend on the detection task at hand and on the nature of the data. Most often, the measure we consider will be the result of a counting procedure. For instance, if the task is to detect a subsequence with many 1s in very long binary sequences, then a good value to associate to an observation is the number of ones in the observed sequence. Let us fix the length of subsequence strings of interest to some $L$. Let $c_i$ be a candidate, i.e. subsequence of this length, within the observed long sample $x$. Denote the counted value by $k(c_i,x)$. The problem now boils down to defining some threshold $k_i$ for each candidate subsequence giving us a decision level between being acceptable under $H_0$ (i.e. no detection) and rejecting $H_0$ (i.e. detecting something of interest). Note that $k_i$ could explicitly depend or not on the candidate $c_i$. Rather than choosing $k_i$ to be a decision level with an $\alpha$-confidence value that we will not make a false detection with probability greater than $\alpha$, a powerful key idea of the a contrario framework is to look at the expected number of false alarms. The idea is that under the assumption on unstructured data $H_0$, we want to control the expected total number of false alarms on a single piece of data, which in our case is a long sequence $x$ of bits. Under $H_0$, each sequence $x$ is a realisation of a random variable $X$, following $H_0$. A \say{false alarm} will be considered as a detection by the a contrario algorithm. We shall denote the number of false alarms as $d$, which is a random variable depending on $X$. The number of false alarms is given by:

\begin{equation*}
    d(x) = \sum\limits_{i=1}^{N_T}\mathbbm{1}_{k(c_i,x)\ge k_i}
\end{equation*}

where $N_T$ is the number of candidate positions, also called the number of tests. The expected number of false alarms is on the other hand:

\begin{align*}
    \mathbb{E}(d(X)) & = \mathbb{E}\Bigg( \sum\limits_{i=1}^{N_T} \mathbbm{1}_{k(c_i,X) \ge k_i} \Bigg) \\
    & = \sum\limits_{i=1}^{N_T}\mathbb{P}(k(c_i,X) \ge k_i)
\end{align*}

We choose the values $k_i$ so that $\mathbb{E}(d(X)) \le \varepsilon$, where $\varepsilon$ is the confidence level. While there are many possibilities to choose from, most often a uniform splitting of $\varepsilon$ is done: ${k_i = \min\limits_{\widetilde{k}}\mathbb{P}(k(c_i,X) \ge \widetilde{k}) \le \frac{\varepsilon}{N_T}}$. Equality can be reached in this definition for sufficient continuity assumptions \cite{von2014contrario}. Note that, in our simple example, all the $\mathbbm{P}(k(c_i,X))$ are identical, due to the fixed length $L$ and iid generation for the bits in the string $x$ under $H_0$. This implies that all thresholds could be equal, say to $k_{th}$, and the expected number of false alarms is just a sum of identical terms: $\mathbbm{E}(d(X)) = N_T\mathbbm{P}(k(c_i,X)\ge k_{th})$. Here, as previously discussed, we can choose $k_{th}$ to correspond to the level $\frac{\varepsilon}{N_T}$, i.e. ${k_{th} = \min\limits_{\widetilde{k}}\mathbb{P}(k(c_i,X) \ge \widetilde{k}) \le \frac{\varepsilon}{N_T}}$, and this will enforce the expected number of false alarms to be at most $\varepsilon$.

The a contrario test for a candidate region $c_i$ (i.e. a possible detection at location $c_i$) is:

\begin{equation*}
    k(c_i,x) \ge k_i
\end{equation*}

which can be rewritten thanks to the definition of $k_i$ as:

\begin{equation*}
    N_T \mathbbm{P}\Big(k(c_i,X)\ge k(c_i,x)\Big) \le \varepsilon
\end{equation*}

However, in the second formulation, we do not have to explicitly compute $k_i$ but only the probability of having at least as many points as the number of points observed. Explicitly computing $k_i$ can be difficult whereas computing the probability can be simple as we will see later. See Figure \ref{fig: statistical test} for an illustration on statistical tests and the link between thresholds and tail of distributions. The quantity $\mathcal{N}_{FA}(c_i,x) = N_T \mathbbm{P}\Big(k(c_i,X)\ge k(c_i,x)\Big)$ is traditionally called the \say{number of false alarms} in the a contrario literature. Note that it is in fact the contribution of candidate $c_i$ to the expected total number of false alarms.

\begin{figure}
    \centering
    \includegraphics[width=0.7\textwidth]{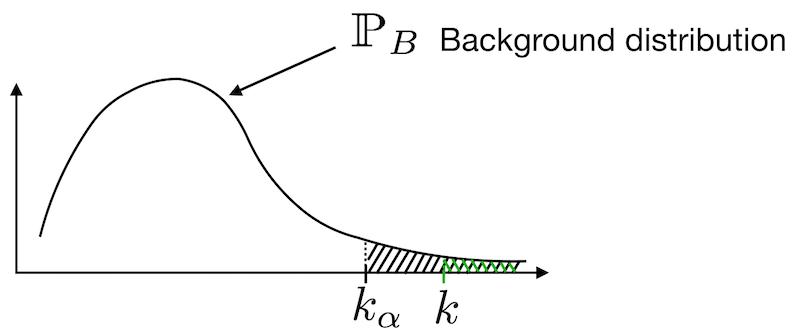}
    \caption[Short version]{Classical right tailed statistical test. Under $H_0$ the distribution assumption for our random variable $K$ (for us $H_0=B$ the random background only distribution assumption), we want to have a statistical test that guarantees that false alarms (also called type 1 error) occur with probability lower than the user-defined confidence level $\alpha$. For this we choose a critical region of weight $\alpha$, typically the tails of the distribution, or in the case of a right sided tail distribution such as the binomial distribution we choose the rightmost confidence region of complementary cumulative distribution (also called right tail distribution) closest to $\alpha$ from below. This defines a threshold value $k_\alpha$, such that if our statistical test is $k\ge k_\alpha$ where $k$ a random realisation of $K\sim H_0$, then we wrongfully reject $H_0$ when it was true (i.e. a false alarm) with probability smaller than $\alpha$. Rather than computing thresholds which requires computing the inverse of the cumulative distribution function, we can simply look at the value taken by the right tail distribution. The test $k\ge k_\alpha$ is equivalent to testing if the right tail distribution from $k$ is smaller than $\alpha$ by definition of $k_\alpha$.}
    \hfill
    \label{fig: statistical test}
\end{figure}

Note that in this formulation, $H_0$ implicitly appears as the distribution of $X$. We can also write this slightly differently. If $k(c_i,X)$ corresponds to a counting variable under the assumption that $X$ follows $H_0$, we can then view $k(c_i,\cdot)$ as a random variable $K_{c_i}$ that counts the number of occurrences (i.e. ones in our example) in the candidate area $c_i$ and that can be conditioned, for instance on $X$ distributed according to $H_0$. Its random realisation on the sequence $x$ can be denoted $k_{c_i}$. Then, the a contrario test can be rewritten by conditioning the distribution of $K_{c_i}$ on the $H_0$ assumption:

\begin{equation*}
    N_T \mathbbm{P}_{H_0}(K_{c_i} \ge k_{c_i}) \le \varepsilon
\end{equation*}

We will use both notations in the rest of this report.

\begin{figure}
    \centering
    \begin{subfigure}[t]{0.5\textwidth}
      \centering
         \includegraphics[width=\textwidth]{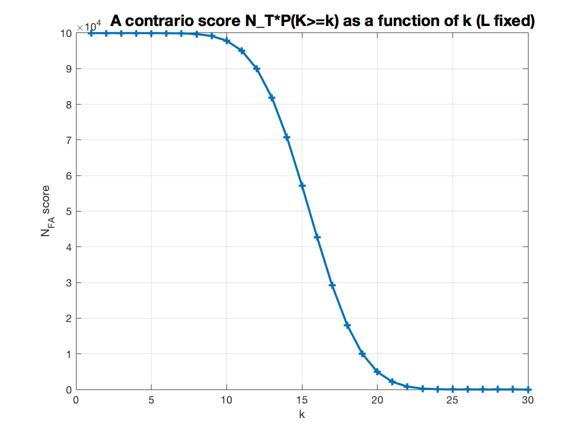}
    \end{subfigure}%
    \hfill
    \begin{subfigure}[t]{0.5\textwidth}
      \centering
         \includegraphics[width=\textwidth]{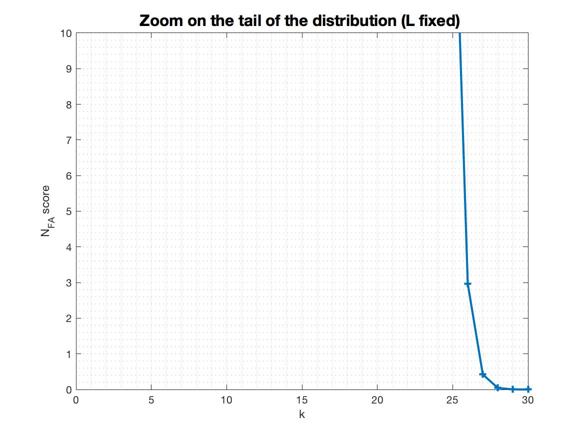}
    \end{subfigure}%
    \hfill
\caption{Behaviour of $k_{th}$ with respect to $\varepsilon$ in our simple sequences of bits example. We fix the length of candidate sequences to be $L$. We assume that the single piece of data $x$, a very long string of bits, is of length $n_x=100000$. Sequences of length $L=30$, with an abnormally large proportion of $1$s, are chosen to be the interesting structures we are looking for. This means that there are $N_T = n_x-(L-1) = 99971$ candidate positions, i.e. candidate positional tests to perform. The measure we will study is the count of $1$s in the candidate sequences of length $L$. Under the background only assumption that each bit independently takes the value $1$ with probability $p=\frac{1}{2}$ (and so with probability $1-p=\frac{1}{2}$ take the value 0), the count measure, a random variable denoted $K$, follows a binomial distribution $K\sim B(L,p)$. Left: the a contrario number of false alarm score $\mathcal{N}_{FA} = N_T\mathbbm{P}_{B}(K\ge k)$ as a function of $k$. Right: the same score but zoomed in around the traditional values of interest. Note that we can here see that for $\varepsilon=1$ the associated threshold level is $k_{th}(\varepsilon=1) = 27$. This means that in order to enforce that the a contrario algorithm makes at most $\varepsilon$ false alarms on average when ranging through the $N_T\approx 100000$ candidate positions, we must only decide to detect windows of length $L=30$ that have at least $27$ bits taking the value 1!}
\label{fig: threshold simple example nfa plot}
\end{figure}

\begin{figure}
    \centering
    \begin{subfigure}[t]{0.5\textwidth}
      \centering
         \includegraphics[width=\textwidth]{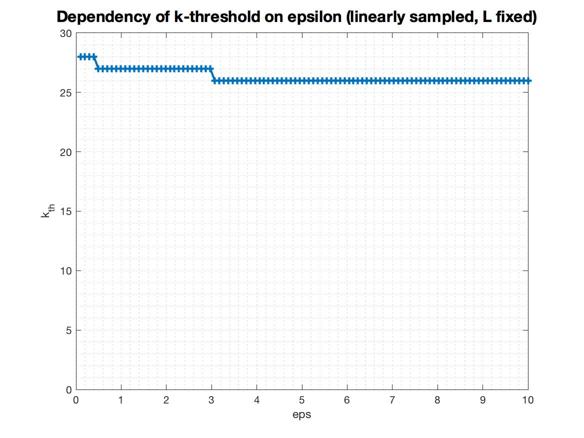}
    \end{subfigure}%
    \hfill
    \begin{subfigure}[t]{0.5\textwidth}
      \centering
         \includegraphics[width=\textwidth]{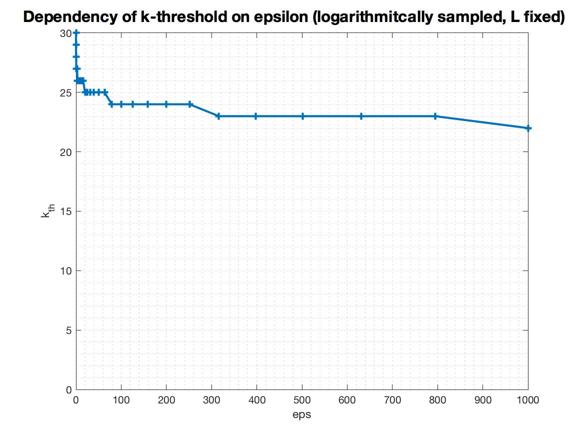}
    \end{subfigure}%
    \hfill
    \begin{subfigure}[t]{0.5\textwidth}
      \centering
         \includegraphics[width=\textwidth]{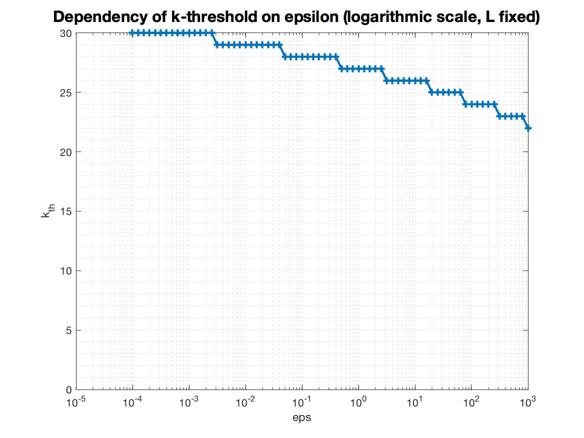}
    \end{subfigure}%
    \hfill
\caption{We are working on the same simple problem as in the previous Figure \ref{fig: threshold simple example nfa plot}. Top left: plot of the threshold $k_{th}$ for a linearly sampled $\varepsilon$ in $[0.1,10]$. Note that you only require two more points for $\varepsilon=0.1$ than for $\varepsilon=10$ which is not much and thus does not significantly increase the number of false alarms and the performance of the algorithm. Arbitrary choices of $\varepsilon$ within this range will generate similar performances. It is also empirical evidence of the ``logarithmic dependency'' of the threshold with respect to $\varepsilon$. Top right: plot of the threshold $k_{th}$ for a logarithmically  sampled $\varepsilon$ in $[10^{-4},10^3]$ but displayed in linear scale. Bottom: plot of the threshold $k_{th}$ for a logarithmically  sampled $\varepsilon$ in $[10^{-4},10^3]$ but displayed in logarithmic scale. Note that for $\varepsilon < 10^{-4}$ that the false alarm score for $k=30$ is not below $\varepsilon$, therefore any detection for these tiny $\varepsilon$ cannot guarantee that the expected number of false alarms will be below the tolerated number $\varepsilon$. Therefore for such small values the only decision possible is to never reject the background hypothesis, leading to $0$ false alarms. Note how the last three plots reveal the logarithmic dependency of $k_{th}$ on $\varepsilon$.}
\label{fig: threshold simple example plot thresh}
\end{figure}

The power of the a contrario formulation is that it guarantees on average an expected number of false alarms lower than $\varepsilon$, regardless of all other parameters, such as the nature of the model we are looking for, the size or resolution of the data. It gives a theory with only a single parameter that adapts naturally to several types of data and detection problems. Desolneux et al.  showed in \cite{desolneux2000meaningful} that the choice of $\varepsilon$ does not have a significant impact on the detection results. In fact, due to a logarithmic dependency of the decision thresholds on $\varepsilon$, a choice of expected number of allowable false alarms, between for example 0.1 and 10, will lead to similar decisions and hence to similar detection performance. Furthermore, depending on the task at hand and the model we consider, estimating the number of candidate positions $N_T$ can become particularly tricky and so in practice rough estimates of orders of magnitude often suffice. Usually, we select $\varepsilon\le 1$ in order to limit the number of false alarms of the detector to less than one, on average. If we are required to make much fewer false alarms then naturally we can take $\varepsilon \ll 1$. This being said, empirically, a choice of $\varepsilon = 1$ gives good and coherent results for traditional detection problems \cite{desolneux2000meaningful,desolneux2007gestalt,blusseau2015salience} and so in practice this is the choice we make for $\varepsilon$. This means that we guarantee that the expected number of false alarms for each piece of data will be at most 1 ($\varepsilon=1$) which is a powerful result. Using the expectation also results in dealing with a mathematical quantity that is easier to manipulate than the crude probabilities, since candidate positions for detection  are not independent, and in fact can be significantly correlated (due for instance to large overlap) \cite{desolneux2000meaningful}.

An other advantage of dealing with expectations is to introduce $N_T$ in the decision. This can be easily seen with a bit of heuristic argumentation. The more complex is the structure we are trying to detect, the larger is the discretised space that this structure can live in i.e. the larger is the number of candidate positions $N_T$. If $N_T$ increases, then $k_i$ decreases and in order to get a detection in a contrario we need a larger observed number of points. And so for more complex models we need more points which seems logical, although some contradictory evidence also exists (e.g. Pizlo et al. \cite{pizlo1997curve}). See Figure \ref{fig: nb points complexity shape} for an illustration.

\begin{figure}
    \centering
    \begin{subfigure}[t]{0.3\textwidth}
      \centering
         \includegraphics[width=\textwidth]{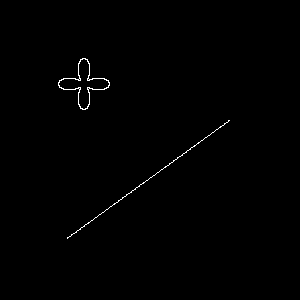}
         \caption{Groundtruth}
    \end{subfigure}%
    \hfill
    \begin{subfigure}[t]{0.3\textwidth}
      \centering
         \includegraphics[width=\textwidth]{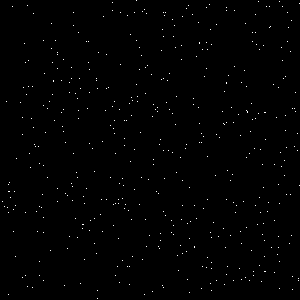}
         \caption{10 points}
    \end{subfigure}%
    \hfill
    \begin{subfigure}[t]{0.3\textwidth}
      \centering
         \includegraphics[width=\textwidth]{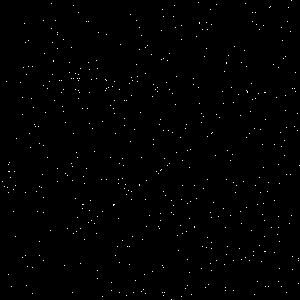}
         \caption{20 points}
    \end{subfigure}%
    \hfill
    \begin{subfigure}[t]{0.3\textwidth}
      \centering
         \includegraphics[width=\textwidth]{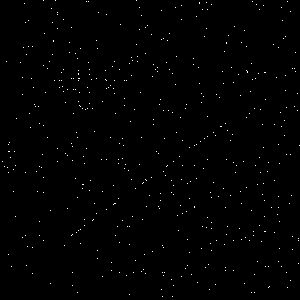}
         \caption{30 points}
    \end{subfigure}%
    \hfill
    \begin{subfigure}[t]{0.3\textwidth}
      \centering
         \includegraphics[width=\textwidth]{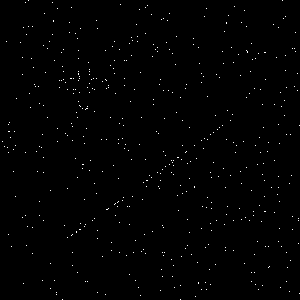}
         \caption{40 points}
    \end{subfigure}%
    \hfill
    \begin{subfigure}[t]{0.3\textwidth}
      \centering
         \includegraphics[width=\textwidth]{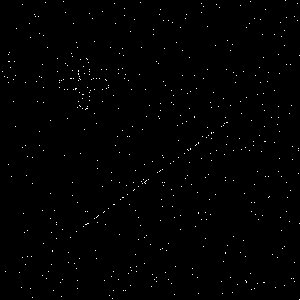}
         \caption{50 points}
    \end{subfigure}%
    \hfill
\caption{Degraded versions of the same image (left) where the number of points on each structure, the line and the conchoide, contain 10 to 50 points. The background degradation parameter is $p_b = 0.005$. For 10 points, both structures are invisible. For 20 points, the line is visible but the conchoide is invisible. For 30 points, a structure can be seen where the conchoide is but there can be ambiguity on its shape. For 40 points the ambiguity is lessened. For 50 points the ambiguity is removed. Both structures have the same arc-length. This example shows that the complexity interferes with the needed number of points in order to perceive a structure. Increased geometric complexity of the structure seems to require more points. This phenomenon occurs in a contrario as well as higher complexity of shapes leads to a higher number of candidates $N_T$ which implies a higher threshold in order to have a number of false alarm score to remain below level $\varepsilon$}
\label{fig: nb points complexity shape}
\end{figure}

The a contrario test can be linked to classical statistics. See Figure \ref{fig: statistical test}. In classical statistical test theory, we perform some test under a hypothesis $H_0$ and test whether to reject it or not. In a contrario, $H_0$ is the background only hypothesis. We decide on a significance level $\alpha$, such as $\alpha = 5\%$. This value is the probability we are ready to accept for making false alarms, i.e. rejecting $H_0$ when it is true. We then partition the realisation space of our random variable into two parts: one with probability measure $1-\alpha$ and one with measure $\alpha$. The latter one is called the critical region and values for which the random measurement falls within this region will imply that we decide to reject the background hypothesis. While the decision on how to partition the space can be done in various or sometimes even in an infinite number of ways, we traditionally prefer to centre the non critical region around the mean of the random variable and consider the critical regions as the tails of the distribution. Depending on the problem and the distribution, one either splits the critical region to cover both tails or one can just consider one of the tails. In classical a contrario, we consider $\mathbbm{P}(k(c_i,X)\ge k(c_i,x))$ which corresponds to looking at the one sided right tail for the critical region. Denote now $Y$ the random variable we are looking at and $y$ its random realisation. Assume $Y$ is one sided such as in our contrario. The critical region is defined as $[y_\alpha,+\infty[$ were $y_\alpha$ is such that $\mathbbm{P}_{H_0}(Y< y_\alpha) = 1-\alpha$ i.e. $\mathbbm{P}_{H_0}(Y\ge y_\alpha) = \alpha$. However, explicitly computing the value $y_\alpha$ can be difficult to determine, and then we can resort to an equivalent approach by considering the $p$-value. The $p$-value, is the probability to have a random realisation even more extreme that the random realisation we are considering. In our example, the $p$-value is simply $\mathbbm{P}_{H_0}(Y\ge y)$. We then observe that $y$ is in the critical region if and only if its $p$-value is smaller than $\alpha$. The trick here is that we need only compute the tail probability once which can be easier than computing the entire distribution necessary for estimating $y_\alpha$ in the first approach. This being said, a contrario is simply a $p$-value test $\mathbbm{P}(k(c_i,X)\ge k(c_i,x)) \le \frac{\varepsilon}{N_T} ?$ The $\alpha$ value is $\alpha = \frac{\varepsilon}{N_T}$. Nevertheless, there are differences in the philosophy of the a contrario approach with respect to the classical statistical test. First, we do not decide on a value of $\alpha$ for the confidence level but on a value of $\varepsilon = \alpha N_T$. This is because in a contrario we control the expected number of false alarms rather than the probability. Secondly, we cannot control easily the exact number of false alarms directly without passing through its expected number. This is due to the fact that the candidate regions are not independent, and so a meaningful alignment, i.e. detection, can be done while being included in an other region and detected there also \cite{desolneux2000meaningful}. Finally, fixing $\varepsilon$ rather than $\alpha$ allows us to not depend on the image size. Choosing $\varepsilon$ fixes the expected number of false alarms regardless of the image resolution whereas choosing $\alpha$ does not: with an increase in resolution we increase the number of false alarms.

Another crucial aspect of a contrario that we have not mentioned yet is how to define precisely $H_0$. Traditionally $H_0$ consists in an assumption of lack of orderly structure. However, it is not precise enough to say this. Consider for instance our images: we have random unstructured background dots and some structured dots on some alignment we are trying to find. Just saying that the background is unstructured does not give enough information. We must define the statistics of the unstructuredness in order to be able to compute the probabilities in the a contrario test. We assume that an unstructured background means white noise: each white background pixel is independent and appears with a probability $p_b$. However, depending on how the image was acquired, $p_b$ is not necessarily known. Several options are possible for estimating this $p_b$. The first option is to have an oracle: someone to tell us what $p_b$ is without having to work for it. in practice, this could happen if for instance the manufacturer of the imaging sensor providing our data would give it to us. When an oracle is not available, we must estimate the background density. Following Lezama et al. \cite{lezama2DptAlignDet}, we have several possibilities. The first option is to estimate $p_b$ globally by counting the total number of points in the image. This estimate works well if the background probability is in fact uniform in the image and does not change locally. We could imagine having white background dots densely appearing in one region and less densely in an other. The less densely appearing white dots will contribute to lowering the estimate of the global $p_b$ and therefore we will artificially have a high number of dots in the high density random background which would lead to many detections of random noise in the high density noise. Therefore the authors of \cite{lezama2DptAlignDet} propose a local estimation of the background density. Their first approach is to consider wider rectangles around the candidate alignments and count the points there. When computing the number of false alarm score, we would compute the probabilities conditionally to the background density being observed: given the fact that a specific number of points appears in the large local region that we have, and under the assumption that the points uniformly and independently appear in the large region (and so do not have a particular preference for the smaller central region of interest), what is the probability that we get the observed number of points in the small candidate alignment region. The probability that one of the appearing white point to be in the alignment region is simply given by the quotient between the areas of alignment and local estimation of the density. This leads to a tail of a new binomial, similar to the traditional oracle a contrario with similar properties. The authors admit that this version is sensitive to background texture boundaries and therefore ultimately refine it by estimating the densities left and right of the candidate alignment and keeping as estimate the maximum one. In our types of data, the background noise is uniform in the image and we will assume that an oracle gives us the background density value. Replacing the oracle by a global estimation would lead us almost to the same results up to minor differences.

There can be a theoretical further complexity depending on the type of data you are working with for performing a contrario. We theoretically require that the points appear independently to one another in each candidate. Naturally, due to the possible overlapping of candidates, the number of white points appearing in each candidate region can be highly correlated and this issue is overcome by looking at the expected number of false alarms rather than the probability of false alarms. However inside each candidate, the points should behave independently. Depending on the data, this is not always rigorously the case. Consider for example an edge detector in natural images such as in \cite{von2012LSD}. We compute the gradient of each image and look at alignment of gradients. However the computed gradients are not independent between neighbouring points. For instance if the computation of the gradient is done using a $3\times 3$ filter, then the gradient of a pixel is correlated with the gradient of any of its 8 neighbouring pixels, but is independent with the gradients of all other pixels. Therefore, the values of the gradients are almost independent in the candidate. Theoretically, this is unsatisfactory and we should find a subsampling strategy to ensure full independence. However, empirically, von Gioi et al. \cite{von2008straight} found that when working on white noise, omitting a sub sampling and assuming that the gradients were independent even if it was not the case did not significantly increase the number of false alarms compared to working on a synthetic truly independent 2D random field. This means that even if we work with the dependent points, we preserve the \say{Helmholtz principle} of not detecting structure in white noise. Thus in practice we need not worry much about a strict independence between points within a candidate region if the correlations happen within a tiny neighbourhood. We do not encounter this issue in our data since we simulated the data in a way that each pixel behaves independently to its neighbours. The issue could arise in a real application when looking at a real sensor that would threshold noisy gradients depending on how the gradients are computed.

It is also necessary to specify what output should our process return. By default, a contrario rejects the background hypothesis, i.e. decides there is a structure and so a detection, whenever the score is below the level $\varepsilon$. For this reason, many detections can be made, including redundant detections. While it is possible to keep all detections, a post-processing step is often desired. Several strategies exist to do post-processing \cite{von2014contrario}. A simple strategy from Desolneux et al. \cite{desolneux2007gestalt} is to consider the exclusion principle. It states that a point should not belong to two different groups obtained with the same Gestalt law (such as the same a contrario algorithm). This principle is linked to the concept of maximal meaningful events \cite{desolneux2003maximal}. The authors provide a simple yet time expensive algorithm following this principle. The idea is the following: each pixel can only vote for a single detected structure. The power of structures is ranked by sorting in increasing order the number of false alarm score. The algorithm is simple. Take the most unlikely candidate (with the smallest score) and consider all the pixels covered by the candidate structure as unavailable. Remove from all the remaining candidate structures those that pass through at least one pixel of the previously made unavailable structure. Recompute the expected number of false alarm score for the remaining candidate structure without counting the unavailable pixels and resort the structures according to this updated score. Repeat the process until the minimum number of false alarm score is larger than the confidence level $\varepsilon$. The crude exclusion principle is not always satisfying as it can prefer long alignments to two separate aligned segments. This leads to the multisegment detector \cite{von2014contrario}. in practice, the computation of the multisegment detector is expensive since it is an exhaustive search and so if possible heuristic approaches similar to the one used in the LSD \cite{von2012LSD} are preferred for real time scenarios \cite{von2014contrario}.

\clearpage
\newpage
\section{Some Initial Experiments: A Contrario on Random-Dot Videos}
\label{sec: initial experiments}

We first tried an a contario-like approach on our video data. Here, we will assume that we know what we are looking for and have a precise model for it. In practice, we worked on videos, of size $300\times300$ pixels ($10.4\times10.4$ cm), where the clean foreground was an edge of known length $L$ and width $w_e$, both in pixel size, in smooth Euclidean movement (rotation and translation) with a small displacement between frames. When we mention pixel size, we mean in pixel side unit size, and we will use this meaning for the rest of the report. We also fix the frame-rate to $FPS = 30$ frames per second. For further simplicity we assume we also know the parameters of the degradation process $(p_b,p_f)$ although it is possible to estimate them locally \cite{lezama2DptAlignDet}.

A summary of the model of the algorithm we will use is shown in Figure \ref{fig: model a contrario merge}.

\subsection{A Merging Strategy for Temporal Integration}

In this paragraph we detail our approach on how to handle such data. Consider a frame $I_t$ at time step $t$ of the video. Should we try and run the a contrario just on $I_t$ then we would lose the time information and we would struggle due to our choice of degradation parameters: recall that we assume that $p_b$ and $p_f$ are such that it is extremely hard, nearly impossible to detect objects on each frame, independently. Instead, a simple way of using temporal information is to consider several frames for the time step $t$. Consider using $n_{f}$ frames for predicting information at time step $t$. For instance we could take only past information $(I_{t-n_{f}+1},\hdots,I_t)$ or only future frames $(I_t,\hdots,I_{t+n_{f}-1})$ or combining past and future information. A 
simple way of combining frames at different times is to simply merge them. Several options are available for merging. For instance one could consider a pixel-wise average. Averaging would change our data from the boolean (white on black) to a continuous space. Furthermore, the movement of the edge, while still being slow and smooth, can be fast enough such that the object might move by one pixel per frame. Then, averaging pixel values will not yield much information due to the fact that the foreground is not on the same pixels between frames. To bypass this, we use a boolean union over frames. If a pixel is activated, i.e. becomes white on the black background, in any of the considered $n_{f}$ frames, then it will be considered as activated in the merge. We take $t$ to be the middle of the time neighbourhood considered. If the movement is nearly uniform in time, all the pixels activated at $t$ on the object will be centred in a cloud of activated pixels on the object in neighbouring times.

Our algorithm will then run on sliding windows of $n_f$ frames through the video. At each time step, it merges the current frame with its temporal neighbours and then runs the a contrario algorithm on the merged data. In order to run the a contrario algorithm, it is necessary to define a model of the structures we are looking for and sample the space in which the structure lives in. We then run the object detection sampling process: each sample being a candidate position. For each candidate position, we take a spatial window and count the number of points in the candidate window and see if their amount is unlikely large under the background only assumption. In order to incorporate scale, for each candidate position we can test with several window sizes. Note that for different sizes the threshold for the number of points to reject the null hypothesis are not the same, therefore only the number of false alarms can be compared between different frames rather than simply the number of points.

\begin{figure}
    \centering
    \begin{subfigure}[t]{0.7\textwidth}
      \centering
         \includegraphics[width=\textwidth]{Chapters/Pictures/model_data_acquisition.jpg}
         \caption{Data acquisition model}
    \end{subfigure}%
    \hfill
    \begin{subfigure}[t]{\textwidth}
      \centering
         \includegraphics[width=\textwidth]{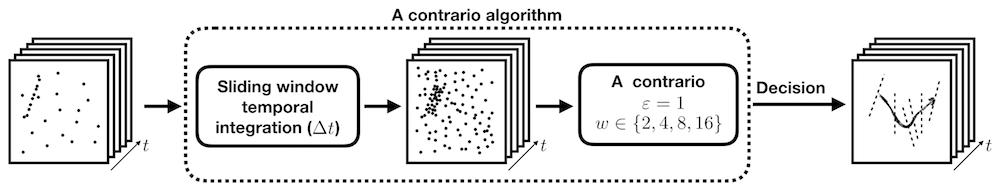}
         \caption{A contrario algorithm model on random-dot videos}
    \end{subfigure}%
    \hfill
\caption{Model of the a contrario algorithm for random-dot videos. The input to the algorithm is the noisy output of the sensors as previously described. The algorithm fundamentally consists in two stages. First a temporal integration which converts 3D time data into a 2D image by merging video frames within a time window, followed by a a contrario spatial search through the merged image for finding structures of interest, in our case straight lines of length $L$ and width $w_e$.}
\label{fig: model a contrario merge}
\end{figure}

\subsection{Sampling the Space of Candidate Objects}

A crucial choice in the design of the a contrario algorithm is to define a spatial structure to be detected and the sampling space for the search process. There are four important steps concerning the spatial search sampling process. First we must choose the structure we are looking for. Then we must define a discretised spatial sample space for this model. Third we must count the total number of spatial positions in the discrete sampling and define it as $N_T$ (also called the number of tests), which will be used for computing the expectation in the a contrario decision. Finally we must define a way to go through the sampling space, in practice, when we run the algorithm.

\paragraph{Model} We already defined the model of the structure we are looking for, as straight edges of fixed length $L$ and width $w_e$. We can assume for simplicity that the sought edge lies entirely within the image and does not leave the image domain, entirely or partially. We can naturally modify this algorithm to more complex shapes. Note that this model of structure applies for each separate frame. However, for several merged frames, due to movement, the area covered by the dynamic edge could increase in length or width or not even ressemble a straight edge anymore. Nevertheless, due to small displacement in the small number of frames considered, this modification will only affect a few number of pixels. The change in length should be minimal and not drastically change the performance of the a contrario, on the other hand the change in width is more important. However, considering several widths in the a contrario algorithm allows to account for this effect. Therefore on the merge of frames, we decide that we are still trying to detect edges of length $L$ and width $w_e$. When defining our candidate windows, we will enforce the length constraint $L$ as it does not significantly change in merges of frames, but we will not enforce a width of $w_e$ to the candidate windows since the width of region traversed by the edge of width $w_e$ in the merge of frames could be significantly higher.

\paragraph{Discretised sampling without length constraint} First, let us forget about the length constraint of the edge. In this case, Desolneux et al. \cite{desolneux2003grouping} provide two possibilities for doing so. The first and most common way is to consider all pairs of pixels as defining an edge. This is called the dense sampling strategy. An other strategy they propose is to only consider pairs of pixels that are activated in the image. This means that candidate lines necessarily have a support already appearing in the image. This strategy is defined as the sparse strategy. The main difference in practice for the a contrario between these strategies is the size of the sampling space. If the image $I$ is of size $N\times N$ then the dense strategy's sample space is of size $\frac{N^2(N^2-1)}{2}$, whereas the sparse strategy's sample space is of size only $\frac{M(M-1)}{2}$ where $M$ is the empirical number of white pixels on the black background, i.e. $M = \# \{i\in\{1,\hdots,N^2\}\,; \, I_i = 1\}$. Naturally, $M<N^2$, which implies $\frac{M(M-1)}{2}<\frac{N^2(N^2-1)}{2}$. However, if the image is very sparse meaning $M<<N^2$, then $\frac{M(M-1)}{2}<<\frac{N^2(N^2-1)}{2}$ which induces a very different threshold $k_i$ for the level $\varepsilon$ in the a contrario test. Desolneux et al. show that in practice, in a sparse image where humans can group together a few dots in very sparse images, the sparse approach allows to correctly recover this edge whereas the dense approach will reject it since its threshold level is very high compared to the one of the dense one due to a much larger number of tests. They also show that on denser images, where humans cannot perceive a line, both methods fail. The authors argue that human vision is sensitive to densities and therefore it is legal to choose the sampling strategy best adapted to our input data. We chose to work with the sparse strategy. An additional reason for choosing this strategy is the following. If we imagine a human as a living a contrario machine, then if we display two images of the same physical size (unit in meters) but with different resolutions, should the human perform a different number of tests on each image solely because of that? If the image is understood as a sampling of the $[0,1]^2$ unit square, and the model of the interesting structure is edges that lie within the unit square, then discretising the sampling space solely on the resolution seems absurd and arbitrary. However, the sparse sampling strategy overcomes this issue since edges in the sample space must have as a support two appearing white pixels. If we look at it from a human perspective, this means that humans will try to imagine lines passing through two points and see if there is any meaningful structure. For this reason, we decide to use the sparse sampling strategy, when the model is simple enough to allow us to do so as our line model clearly is! For a more complex model, it might be impossible or very hard to find a similar sparse strategy and therefore we would have to use another approach with less tight bounds, and eventually very approximate estimates on the number of tests.

\paragraph{Discretised sampling with length constraint} Although we have chosen a way of sampling our image model in the previous paragraph, the sampling strategy does not exactly fit the model we are considering here, since we are looking for edges of a given length $L$. Therefore, we adapt the sampling strategy to incorporate this information. There are mainly two possibilities for doing so. The first possibility is to explicitly consider the subset of the samples that respect the length condition. For the sparse strategy, the appearing white pixels that will support the edge may be constrained to be at a distance of at most $L$: $||\mathrm{pix}_i-\mathrm{pix}_j||_2 \le L$. However, proceeding like this has two issues. The first is that estimating the size of the sampling becomes much harder since $M$ the number of white pixels in an image is a realisation of a random variable with statistics depending on $(p_b,p_f)$, and estimating that number under a length constraint is even harder. The other, more important reason, is that although we know what the underlying structure should be, we can't force humans from just looking for this structure. Humans might effortlessly see an alignment of points that is longer than the desired length $L$, or might see other structures that are not covered by our simple model (see Figure 2.2 of \cite{blusseau2015salience}). Therefore it is not desirable to restrict the sampling to some dimensions constraint. This is crucial as it has a major impact on the value of $N_T$. Sampling under constraints might considerably lower $N_T$, to a level where the thresholds $k_i$ for the confidence levels $\varepsilon$ become too low: a contrario will then detect edges where humans do not see any structure. Therefore, we decided to sample without the length constraint. However, in the algorithm, when we run through samples, we will skip any candidate position that does not respect the a priori constraint.

\paragraph{Estimating the size of the discretised sampling} With the previous sampling descriptions, the size of sampling space is taken to be $\frac{M(M-1)}{2}$ where $M$ the number of white pixels, which for each image is a realisation of a random variable depending on the model, its motion in time if we consider merging of frames, and on the degradation parameters $(p_b,p_f)$. If we are working with several window widths $N_w$ for the candidate windows, then the number of tests is linearly increased by $N_w$: $N_T = N_w\frac{M(M-1)}{2}$.

\paragraph{Run-time through the samples} During run time in the a contrario, we traverse the spatial sample space and test each candidate position. While ranging through the whole space is possible, the number of candidate positions can be huge and the operations in each can be quite slow. If working with static image data (non video), then this would not be a problem. However, since we are working with videos, a single piece of data consists in a video, which translates to many frames depending on the length of the video. Therefore in practice, our algorithm should not take excessively long to process each time step, i.e. our a contrario on a merge of frames should be quite fast. This means that in practice it might not be possible to run through the entire sample space in reasonable time. Therefore we have to devise a way of choosing some subset of samples and run through them. We chose the sparse sampling rather than the dense sampling and we can use this to our advantage to solve this issue. Consider our problem of seeing the edge in a merge of frames. For this, we should probably have $p_f$ large compared to $p_b$ which increases the density of points on the edge compared to the density of points on the background in order to see the edge or a cloud corresponding to the edge in the merge of frames. Since our sampling strategy enforces that candidate edges pass through two appearing white points, due to the difference in density, a significant number of pair of points should exist on the true edge compared to the number of pair of points that exist in the background in a comparable area. Furthermore, the sparse sampling strategy implies that a candidate pair of supporting points of distance at most $L$ are not necessarily at a distance of exactly $L$. Therefore for pairs of points respecting the length constraint, we consider extensions of the edge beyond the segment defined by two supporting points in both directions such that the sum of this segment and of the two extensions is exactly $L$. The a contrario will give a score to each moving window and we can for instance only keep the window that yields the best score: the most unlikely window under the background assumption. A choice of any two points on the edge's mid range will necessarily generate a candidate window exactly fitting to the edge. For this reason, a RANSAC-like approach is compatible with the sparse approach. Inspired by the RANSAC algorithm \cite{fischler1981ransac} and the probabilistic Hough transform \cite{kiryati1991probabilistic}, we choose to randomly sample a subset of the total candidate positions. The higher the number of random samples, the higher the probability that at least one candidate sample corresponds to a pair of points on the edge's mid range and will therefore generate a candidate window for the true edge. Therefore for appropriate sampling definitions of the structure space, the RANSAC-like approach allows to speed-up the a contrario algorithm without harming the performance too much.

\subsection{Output}

Since we are looking for only a single edge, we decide to return only the candidate position that has the smallest \say{expected-number-of-false-alarms} score and that has this score smaller than the confidence level $\varepsilon$. Note that it is possible that all candidate positions have a score larger than $\varepsilon$ and therefore the a contrario never rejects the background hypothesis hence it does not detect any line.

We provide a pseudo-code of the a contrario algorithm on random-dot video data that we have just presented in Algorithm \ref{alg: AC algorithm on video}. Find a demo at \cite{dagesACResDemo} of an output of an a contrario algorithm, as described in this section, for recovering a moving edge.

\newpage
\begin{algorithm}[H]
\label{alg: AC algorithm on video}
    \SetAlgoLined
    \KwIn{Random-dot video $I$ with $N$ frames, expected tolerated number of false alarms $\varepsilon$, length of the target line $L_e$, set of widths for candidate rectangles $W$, number of frames for integration $n_f$, number of iterations in RANSAC $N_{iter}$, background degraded parameter provided by an oracle $p_b$}
    \KwResult{Output video of detected edge locations $O$}
    $N_w = \#W$  \tcp*{Number of candidate widths} %\\ 
    \For{time step $t \in [\frac{n_f}{2},N-\frac{n_f}{2}]$}{
        $C = \emptyset$ \tcp*{Set of candidates with a contrario score lower than $\varepsilon$}
        Get the neighbouring frames in the sliding window around time step $t$: $I_1,\hdots,I_{n_f}$ \;
        $I^M = \bigvee\limits_{i\in\{1,\hdots,n_f\}}I_i$ \tcp*{Merge frames}
        Count the number of white pixels $M$ in $I^M$ \;
        $N_T = \frac{M(M-1)}{2}$ \tcp*{Number of tests in a contrario}
        \For{$step \in [1,N_{iter}]$}{
            Take a random pair of white pixels in $I^M$ that are distant by less than $L_e$\;
            \For(\tcp*[h]{Width of the a contrario window}){$w\in W$}{
                $c_{min} = \emptyset$ \tcp*{The position of the sliding window giving the smallest $\mathcal{N}_{FA}$ score}
                $(\mathcal{N}_{FA})_{min} = \infty$ \tcp*{Smallest $\mathcal{N}_{FA}$ score of sliding windows with dimensions $L_e\times w$ supported by both pixels}
                \For{all sliding windows $c$ of length $L_e$ and width $w$ supported by the chosen pixels}{
                    Count the number of white pixels $k$ in the considered rectangle of $I^M$ \;
                    $\mathcal{N}_{FA} = N_T \mathbbm{P}_{B}(K\ge k)$ \tcp*{Under $B$, $K$ follows a known binomial distribution with parameter $p_b$}
                    \If{$\mathcal{N}_{FA} < (\mathcal{N}_{FA})_{min}$}{
                        $c_{min} = c$ \tcp*{Update}
                        $(\mathcal{N}_{FA})_{min} = \mathcal{N}_{FA}$ \tcp*{Update}
                    }
                }
                \If(\tcp*[h]{Detection}){$(\mathcal{N}_{FA})_{min} \le \varepsilon$}{ 
                    $C = C \cup (c_{min},(\mathcal{N}_{FA})_{min})$ \tcp*{Add candidate location to the detections}
                }
            }
        }
        Generate an entirely black image $O_t$ of same size as those of $I$\;
        \If(\tcp*[h]{At least one candidate is $\varepsilon$-meaningful}){$C\neq \emptyset$}{
            Get $(c,\mathcal{N}_{FA})\in C$ with minimum $\mathcal{N}_{FA}$ \tcp*{Detection in the whole image with lowest $\mathcal{N}_{FA}$ score}
            Colour in white a straight edge of length $L_e$ and width $w_e$ at the position corresponding to $c$ in $O_t$\;
        }
        Add image $O_t$ as a frame at the end of $O$\;
    }
    \caption{A contrario algorithm for detecting a line in a random-dot video}
\end{algorithm}

%\subsection{Human vision: a model}
\clearpage
\newpage
\section{Human Vision Versus A Contrario: Overview}
\label{sec: humans vs AC overview}

Our aim is to understand the process of extracting moving boundaries in random-dot videos and compare the performance of automated detection processes with the performance of human observers. Taking cues from statistical detection theory, we shall propose to model the human visual grouping and understanding processes in sparse random video data as generated in the assumed imaging process. We wish to compare human visual grouping with the a contrario model in order to estimate how our mental grouping processes are linked to the decision making involved in detecting and grouping unlikely configurations under some naturally occurring random assumptions. We summarise our model for the human perceptual system in Figure \ref{fig: perception model}.

In the previous Section, a crucial idea was the accumulation of spatio-temporal data by merging $n_{f}$ consecutive frames: temporal evolution of the data is transformed into 2D frames over sliding windows by collecting votes on active pixels over a set of consecutive frames. In an analogy with the human visual system, taking information over a time span is modelling short memory visual persistence. A natural question would then be if this \say{sliding-window merging} model of $n_{f}$ frames is consistent with our human visual system. Note that, naturally, the parameter $n_{f}$ depends on the frame rate of the video, while human short memory will primarily be linked to some absolute time duration rather than the technological frame-rate of the video.

Consider the $(p_b,p_f)$ plane, where $p_b$ is the probability of the background pixel activation while $p_f$ can be understood as the additional activation probability at boundaries, as defined in subsection \ref{sec: signal modelling}. Humans should perform well for high values of $p_f$ combined with low values of $p_b$. Inversely, they should perform poorly if $p_f$ is very low and $p_b$ is significantly high. In the degradation parameters plane, there should be some transition zone for the human performance. Similarly, there is a transition zone for an a contrario algorithm. If these transition zones are similar, then there is reason to assert that the algorithm models well the performance of human vision on this kind of data. Our videos are generated so that it is impossible to see the objects in a single frame but are nevertheless perceived in the video. This means that in the $(p_b,p_f)$ space, for $p_b$ and $p_f$ corresponding to the degradation parameters of each single frame, we are below the transition zone for humans. However, the temporal integration performed by the visual system on consecutive frames increases the density of the background and of the foreground in a particular way. Recall the discussion we had about $p_b$ and $p_f$, the degradation parameters, versus $p_0$ and $p_1$ the densities in the background and on the object edges. Merging frames increases $p_0$ and $p_1$, which then implies an increase in $p_b$ and $p_f$. Thereby, by merging frames, we move from an initial position below the transition area to a position above the transition area, where it is possible to \say{see} the object due to the sliding-window merge of frames. We can then compare if this idea is coherent with some choice of $n_{f}$ frames for the a contrario algorithm.

An other important choice for the model is the choice of widths for the candidate window used in the a contrario framework. Considering larger widths allows us to analyse larger spatial scales. Should we want to model human perception by a contrario, a natural issue that must be addressed is whether we base our decision on a single width or on multiple widths.

\begin{figure}
    \centering
    \begin{subfigure}[t]{0.7\textwidth}
      \centering
         \includegraphics[width=\textwidth]{Chapters/Pictures/model_data_acquisition.jpg}
         \caption{Data acquisition model}
    \end{subfigure}%
    \hfill
    \begin{subfigure}[t]{\textwidth}
      \centering
         \includegraphics[width=\textwidth]{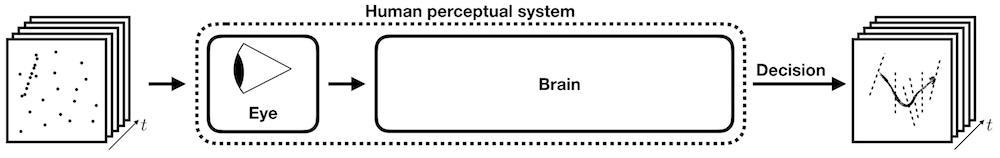}
         \caption{Black box model of the human perceptual system}
    \end{subfigure}%
    \hfill
    \begin{subfigure}[t]{\textwidth}
      \centering
         \includegraphics[width=\textwidth]{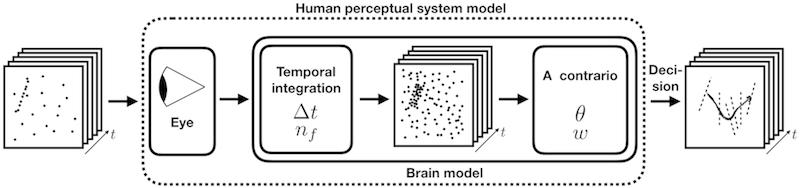}
         \caption{Algorithmic model for the human perceptual system}
    \end{subfigure}%
    \hfill
\caption{Our model of human perception. Visual information is captured by our sensors. The brain then processes the raw data in order to perceive some structure. First, we integrate time information to generate a single image by combining the images in the duration $\Delta t$. Then, we process this single combined image by using a statistical test of unlikeliness such as a contrario, with single candidate width corresponding to the visual angle $\theta$. For a numerical signal such as a video, time integration is equivalent to merging a certain number of frames $n_{f}$. Furthermore, if we assume that we position ourselves always in the same way with respect to the screen, then the visual angle becomes equivalent to a pixel width $w$.}
\label{fig: perception model}
\end{figure}

It is a well-known fact that humans perceive the world at different scales \cite{gibson1950perception}. This would lead to the idea of using a multiple-width a contrario algorithm. However, this is unwarranted for several reasons. First, the a contrario paradigm consists in sampling some candidate shapes where we count the number of white points and test if this is unlikely under a background only assumption. This test heavily relies on the way we sample since we use expectations rather than probabilities. Unfortunately, it is impossible to know how a human observer performs such samplings. Even if we encourage samplings of, say, straight edges of a fixed, known, length with some fixed width, a human brain will most probably most probably consider many other candidate shapes. For instance, in Figure \ref{fig: various shapes and widths}, no matter how much we shall beg humans to try and see a straight edge, they will readily see the more complex shapes. Therefore we do not have full control on what shapes humans try. Even if we had control on, say, the length of a candidate linear region, we do not have full control on the range of widths that humans will take into consideration, see Figure \ref{fig: various shapes and widths}! Secondly, in the a contrario framework, given a choice of position, the probabilities, and hence the decisions, for candidates at the same location and direction but with different widths are not independent, nevertheless during the decision process, each decision is done independently. As such, a multiple-width a contrario will not perform a merging process between different widths in order to detect an edge at various candidate positions. The combination of information is minimal, only manifested in the linear increase in the number of samples considered. However, for a small number of different widths (in practice, between 2 and 4 different widths), the increase in $N_T$ by a small factor does not have a big impact on the threshold of the number of points to appear necessary in order to have a detection at level $\varepsilon$. For these reasons, we prefer to consider a model based on single-width a contrario algorithms.

\begin{figure}
    \centering
    \begin{subfigure}[t]{0.45\textwidth}
      \centering
         \includegraphics[width=\textwidth]{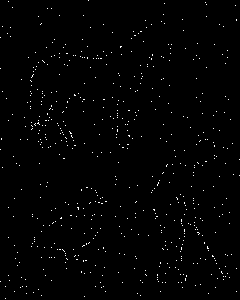}
    \end{subfigure}%
    \hfill
    \begin{subfigure}[t]{0.45\textwidth}
      \centering
         \includegraphics[width=\textwidth]{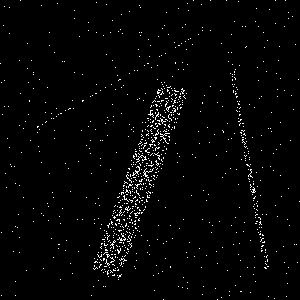}
    \end{subfigure}%
    \hfill
\caption{Left: Natural shapes (horse, golfer, duck). Right: Edges of same length with various widths. What is the good shape for a candidate region? What is a good width? We cannot enforce humans to not test complex shapes and various widths.}
\label{fig: various shapes and widths}
\end{figure}

It is crucial to point out that, should we try to match the human vision performance by an a contrario process on a single width, we do not know in advance what this width should be. Hence, we should test several single-width a contrario algorithms each with a different width and see which width gives an a contrario algorithm that best corresponds to human performance. This will be the task of the first set of experiments, as described below. This width should be to some extent related to the just noticeable jitter for line alignment that has been studied in psychophysics. In practice we shall consider widths between 2 and 16 pixels, which yield reasonable displacements given the configuration for our image display system (pixel size, viewing distance...). These considerations and the results of the human experiments are presented and discussed in Sections \ref{exp1}, \ref{exp2}, \ref{sec: exp3.1} and \ref{sec: exp3.2} below.

Once we estimated a width for a possible a contrario model of the human visual system, we shall estimate the number of frames we use for integration in a video. Note that this corresponds to an integration time, for a given frame-rate. This was done in the last two experiments, to be described later.

The first two experiments were done on static images (considered as sliding window mergers of frames of a video of a stationary and of a moving straight edge). They yield the width parameter $w$ for the a contrario black box modelling human perception, or equivalently the visual angle $\theta$ if the subjects view the data from the same position. In the subsequent experiments we will evaluate human performance on video data of stationary and moving straight edges. This will yield the time integration parameter $\Delta t$ for the human perception model of random-dot videos, or equivalently the number of frames to integrate on $n_f$ given the video's frame-rate. See the Figure \ref{fig: flowchart experiments} for a flowchart of the experiments.

\begin{figure}
    \centering
    \includegraphics[width=\textwidth]{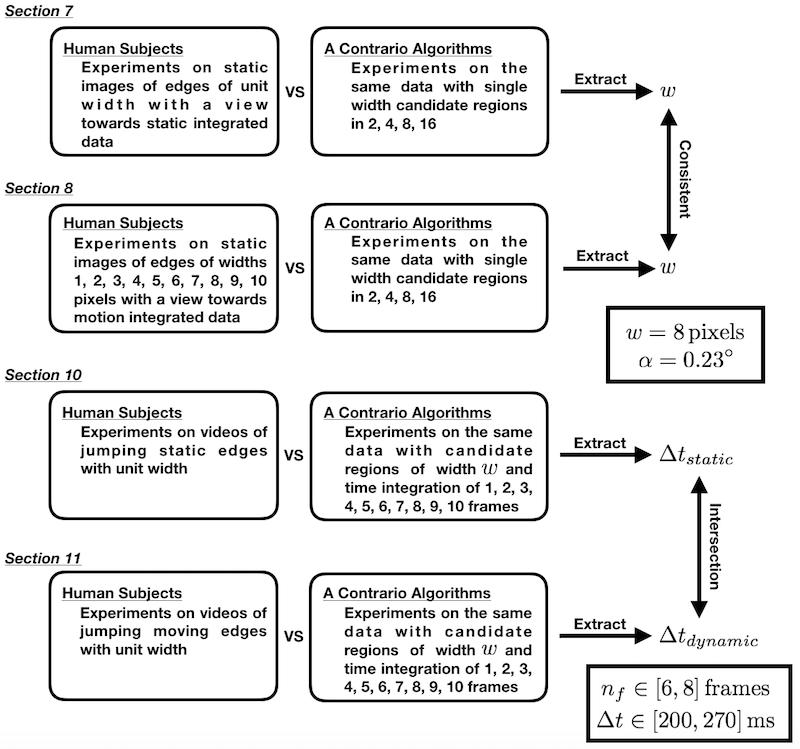}
    \caption[Short version]{Flowchart for the human experiments versus the a contrario tests. Humans will be tested on random-dot data and several a contrario algorithms will be compared with the subjects in order to test our model of the human perceptual system and recover its parameters. In the first two experiments, we work on static data, i.e. images of edges. In parallel several a contrario algorithms will run on the same data, but each working on a different width for its candidate windows. These experiments allow us to find the parameter corresponding to the width of the candidate windows $w$ (or equivalently the visual angle $\theta$). This data will be later understood as merges of consecutive frames of a video of static and of a dynamic edges. In the next experiments we work on videos of static and dynamic edges. In order to avoid undesired effects, we make the edges jump regularly in these videos. In parallel, we run several a contrario algorithms on the same data, each with the same previously found $w$ for the candidate window width, but each using a different number of frames $n_f$ as time integration. This allows to recover the number of frames $n_f$ for integration in our model of the human perceptual system (or equivalently the time integration $\Delta t$).}
    \hfill
    \label{fig: flowchart experiments}
\end{figure}

%\subsection{Simple case: static edge}
\clearpage
\newpage
\section{Analysis of the Static Edge Case}
\label{sec: analysis static edge}

In this Section we study the performances of humans and of a contrario algorithms on static images, that may also be considered as the result of merging degraded frames taken from a video of a static scene. This will allow us to challenge our model and recover a visual angle $\theta$ or equivalently a width $w$ for candidate regions in our model of human perception.

We will actually mean hit performance when mentioning performance, since we already have a low bound on the false alarms thanks to the design of a contrario which ensure on average $\varepsilon$ false alarm, which in practice will be $\varepsilon=1$, and since humans will be allowed to make only one detection, thus at most one false alarm. Unless explicitly mentioned otherwise, performance will mean hit performance in the rest of this report.

Consider the simplest occurrence when the foreground consists of a static edge of known length $L_e$ and width $w_{e}$. Since the edge is static, all non-degraded frames are identical and denoted $I$. Consider superpositions of $n_{f}$ frames from the degraded video. In this case, each merged frame is $I^{D,M} \sim \bigvee\limits_{1\le j \le n_{f} }I_{j}^{D}$ where $I_{1}^D,\hdots,I_{n_{f}}^D$ are generated identically and independently as $\phi_{im}(I,p_b,p_f)$. The notation $\bigvee$ is the pixelwise logical \say{OR} operator: if a pixel is white in any of the degraded frames then it will be white in the merged frame. In this case we have the following theorem:

\bigskip
\begin{theorem}
    For $t\ge 1$ and $I$ a static deterministic boolean image, if $I_1^D,\hdots,I_t^D$ are iid and $I_j^D\sim\phi_{im}(I,p_b,p_f)$ for all $j\in\{1,\hdots,t\}$, denote $I^{D,M} = \bigvee\limits_{1\le j \le t}I_{j}^{D}$, then $I^{D,M} \sim \phi_{im}(I,(p_b^M)_{t},(p_f^M)_t)$ where $(p_b^M)_t = 1-(1-p_b)^t$ and $(p_f^M)_t = 1-(1-p_f)^t$.
\end{theorem}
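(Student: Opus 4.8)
The plan is to verify that the merged image $I^{D,M}$ satisfies exactly the defining distributional properties of $\phi_{im}(I, (p_b^M)_t, (p_f^M)_t)$, namely: (i) pixels are mutually independent, (ii) each background pixel is Bernoulli with the claimed parameter, and (iii) each foreground pixel is Bernoulli with the claimed parameter, and vanishing off the foreground. I would work pixel by pixel, since both the degradation process $\phi_{im}$ and the logical OR act coordinatewise.

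First I would fix a pixel index $i$ and compute the probability that $(I^{D,M})_i = 0$. Since $(I^{D,M})_i = \bigvee_{j=1}^t (I_j^D)_i$, this pixel is $0$ exactly when $(I_j^D)_i = 0$ for every $j$. Using independence of the frames $I_1^D, \dots, I_t^D$, this probability factors as $\prod_{j=1}^t \mathbb{P}\big((I_j^D)_i = 0\big) = \big(\mathbb{P}((I_1^D)_i = 0)\big)^t$, because the frames are identically distributed. Now I would split into cases. If $i \in B$, then $\mathbb{P}((I_1^D)_i = 0) = 1 - p_b$ (this is $1 - p_0$ from the setup, using $I_i^D = I_i^B \lor I_i^F$ with $I_i^F = 0$ off the foreground), so $\mathbb{P}((I^{D,M})_i = 0) = (1-p_b)^t$, i.e. $(I^{D,M})_i \sim B(1 - (1-p_b)^t) = B((p_b^M)_t)$. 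If $i \in F$, then $\mathbb{P}((I_1^D)_i = 0) = (1-p_b)(1-p_f) = 1 - p_1$, so I would instead want to recover the $\phi_{im}$ structure: I need $\mathbb{P}((I^{D,M})_i = 0) = (1 - (p_b^M)_t)(1 - (p_f^M)_t) = (1-p_b)^t (1-p_f)^t$, which indeed equals $\big((1-p_b)(1-p_f)\big)^t$. This matches, confirming that on the foreground $(I^{D,M})_i$ has the same law as a pixel of $\phi_{im}(I, (p_b^M)_t, (p_f^M)_t)$ at a foreground location.

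For the independence claim (iii), I would note that a single degraded frame $I_j^D$ has independent pixels (this is how $\phi_{im}$ is defined: $I^B$ has iid Bernoulli pixels and $I^F$ has conditionally independent pixels, and they are independent of each other), and the $t$ frames are independent of one another; hence the collection $\{(I_j^D)_i : 1 \le j \le t,\ i \text{ a pixel}\}$ is a mutually independent family. Since each merged pixel $(I^{D,M})_i$ is a function of only the $t$ variables $\{(I_j^D)_i : j\}$ with a fixed $i$, and these blocks are disjoint across different $i$, the merged pixels $\{(I^{D,M})_i\}_i$ are mutually independent. Combining this with the marginal computations above, and observing that the decomposition $I^{D,M} = I^{B,M} \lor I^{F,M}$ with $I^{B,M} = \bigvee_j I_j^B$ and $I^{F,M} = \bigvee_j I_j^F$ exhibits $I^{D,M}$ in the exact form required by $\phi_{im}$, gives the result.

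I do not anticipate a serious obstacle here; the argument is essentially the observation that a logical OR of $t$ iid Bernoulli variables is Bernoulli with failure probability raised to the $t$-th power, applied separately to the background and foreground layers, together with the fact that the two-layer product structure of $\phi_{im}$ is preserved by taking ORs layerwise. The only point requiring a little care is to phrase the foreground case in terms of the layer parameters $p_b, p_f$ rather than $p_1$, so that the output is genuinely of the form $\phi_{im}(I, \cdot, \cdot)$ and not merely a description of the correct foreground marginal; the identity $\big((1-p_b)(1-p_f)\big)^t = (1-p_b)^t(1-p_f)^t$ is what makes this bookkeeping consistent.
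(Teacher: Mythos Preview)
Your proof is correct and follows essentially the same route as the paper: compute the pixel-wise probabilities for $i\in B$ and $i\in F$ using independence of the frames, and recognise the resulting parameters as those of $\phi_{im}(I,(p_b^M)_t,(p_f^M)_t)$. You are in fact slightly more careful than the paper, which checks only the marginal probabilities at each pixel, whereas you also argue explicitly for mutual independence of the merged pixels and exhibit the layerwise decomposition $I^{D,M}=\big(\bigvee_j I_j^B\big)\lor\big(\bigvee_j I_j^F\big)$; the paper, by contrast, recovers $(p_f^M)_t$ algebraically from $(p_1^M)_t$ and $(p_b^M)_t$ rather than via your direct factorisation $\big((1-p_b)(1-p_f)\big)^t=(1-p_b)^t(1-p_f)^t$.
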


\begin{proof}
    Denote $F = \{i; (I)_i = 1\}$ and $B = \{i; (I)_i = 0\}$ the foreground and background of the image. Since the image is static, $F$ and $B$ are the foreground and background sets for each frame and for the merge. Let $i$ be a pixel. We have that, recalling independence and identical distribution between frames:
    \begin{align*}
        (p_0^M)_t \triangleq \mathbb{P}((I^{D,M})_i = 1 \mid i\in B,I,p_b,p_f) &= 1-\mathbb{P}(\bigwedge\limits_{j=1}^{t}(I_{j}^D)_i = 0\mid i\in B,I,p_b,p_f) \\
        &= 1-\prod\limits_{j=1}^{t}\mathbb{P}((I_{j}^D)_i = 0\mid i\in B,I,p_b,p_f) \\
        &= 1-(1-p_b)^{t}
    \end{align*}
    and similarly, for the same independence reasons and since the foreground and background signals are independent:
    \begin{align*}
        (p_1^M)_t \triangleq \mathbb{P}((I^{D,M})_i = 1 \mid i\in F,I,p_b,p_f) &= 1-\mathbb{P}(\bigwedge\limits_{j=1}^{t}(I_{j}^D)_i = 0\mid i\in F,I,p_b,p_f) \\
        &= 1-\prod\limits_{j=1}^{t}\mathbb{P}((I_{j}^D)_i = 0\mid i\in F,I,p_b,p_f) \\
        &= 1-\Big( (1-p_b)(1-p_f) \Big) ^{t}
    \end{align*}
    Hence we have $(p_b^M)_t = (p_0^M)_t = 1-(1-p_b)^{t}$ and $(p_1^M)_t$, the probabilities of respectively a background pixel to appear white in the merged image and of a foreground pixel to appear white in the merged image. We also want to find $(p_f^M)_t$ implicitly defined by:
    \begin{align*}
        (p_1^M)_t \triangleq 1-(1-(p_b^M)_t)(1-(p_f^M)_t) &\iff (p_f^M)_t = \frac{(p_1^M)_t - (p_b^M)_t}{1-(p_b^M)_t} \\
        &\iff (p_f^M)_t = \frac{(1-p_b)^t(1-(1-p_f)^t)}{(1-p_b)^t} = 1-(1-p_f)^t
    \end{align*}
    Since $(p_b^M)_t$ and $(p_f^M)_t$ are independent of the location in the image, $i$, they are valid for the entire image and so we can conclude that $I^{D,M}\sim \phi_{im}(I,(p_b^M)_t,(p_f^M)_t)$.

\end{proof}

Note that the result is almost trivial for the change for $p_b$ in the merge of frames, it is not as straightforward for the change for $p_f$, although they seem to change in a similar fashion.

Therefore increasing $n_{f}$ and looking at $I^{D,M}$ is equivalent to degrading $I$ with some specifically increasing $p_b$ and $p_f$, given by the formula above with $t=n_f$.

%\subsubsection{Humans and a contrario on merge}
\subsection{A Theory for Performance Levels of A Contrario}

Our first question is how do humans manage to temporally integrate and spatially group sparse random data and whether the the detection of is linked to grouping unlikely configurations under a uniform random assumption, similarly to a contrario.

In the $(p_b,p_f)$ space, schematically, we can expect a contrario to detect correctly an edge when $p_f$ is significantly high compared to $p_b$ and to reject it when $p_f$ is not large enough compared to $p_b$. We can in fact give a mathematical estimate of the prediction power of the a contrario algorithm on images generated from $\phi_{im}(I,p_b,p_f)$ in the $(p_b,p_f)$ space. We can randomly select the position of the true edge, since a contrario does not explicitly depend on the position of the true edge due to it's strategy for sampling the space of potential object locations. Likewise, we can estimate human performance on the same input data and compare the resulting detection performances. If humans perform in a similar fashion to a contrario, this would suggest that human visual grouping is indeed a \say{Gestalt} of grouping unlikely configurations under a uniform randomness assumption.

The data we imagine to feed a priori to the a contrario algorithms consist in degraded images of a groundtruth image that has exactly one edge of length $L$ and width $w_e$. Performance of the algorithm is then measured as follows: if the algorithm correctly detects an edge at the true location of the edge, then it is a success, otherwise it is a failure. This corresponds to looking only at the hit score of the algorithm. The reason why we do not consider the false alarm score in the performance is that we already have a bound on the false alarm rate since on average for each image it should be $\epsilon$ which is very small (around 1 in practice) and consistent between all algorithms. Furthermore, having a score solely based on the region located at the true position allows to generalise the estimated performance to input images that could have many edges of length $L$ and width $w_e$.

\subsubsection{Predicting A Contrario Performance on a Simple One Dimensional Example}

Before diving into the mathematical complexities of estimating a priori the performance of the a contrario algorithm on our image data, let us return to the study of our very simple one dimensional example from Section \ref{par: simple example presentation nfa} in order to get a good understanding of how we will proceed. Later we will go back to our more complex problem on 2D images and reuse a similar reasoning. Recall that in our example, the data $x$ is a one dimensional long string of $N$ bits, where $N$ very big. We are looking for contiguous subsequences of bits of fixed length $L$ that contain an abnormally high number of $1$s. Assume now that we impose that each piece of data $x$ is the degraded version of a clean long string $x_{GT}$, which consists in $0$s everywhere except on one contiguous sequence of length $L$ where all bits take the value 1. The pixels that take the 0 value lie in the background $B$ whereas the sequence of length $L$ of values only $1$ is the foreground $F$. Due to noise in the acquisition of the signal, $x$ is a degraded version of $x_{GT}$ where each of its bits are drawn independently, conditionally on the location of the subsequence of interest. We will denote $GT$ the conditioning with respect to the position of the subsequence of interest, which can also be seen as a conditioning with respect to the groundtruth $x_{GT}$. If $i$ is the $i$-th bit of $x$, with value $x_i$, then we have: $\mathbbm{P}(x_i=1\mid i\in B \,\mathrm{and}\,GT)= p_b = p_0$, and $\mathbbm{P}(x_i=1\mid i\in F \,\mathrm{and}\,GT)= p_1 = p_b+p_f-p_bp_f$. The reason for having $p_b$ and $p_f$ is that $x$ can be seen as being degraded everywhere with some background degradation parameter $p_b$ on one hand, and on the other hand to be degraded in parallel just on the position of the structure of $1$s with parameter $p_f$, and then merge both degradations into the final degradation.

We now want to predict a priori what the performance of the a contrario algorithm on such data $x$ would be. Eventually, at the end, we should run empirical tests and compare it to our theoretical estimation. We must first define the concept of performance of an algorithm. Here, performance of an algorithm trying to recover a subsequence of length $L$ of $1$s in a background of $0$s will be defined as follows. If the algorithm recovers the exact location of the subsequence of length $L$, i.e. detects that the true underlying sequence of $1$s is a sequence of $1$s, then it is a success. Otherwise, it is a failure. Note that in our definition of performance, we do not care about the decisions made at other candidate positions. The reason for this is because we already have a guarantee that the number of detections at other positions (and so of false alarms), will be on average smaller than $\varepsilon = 1$, and so we can focus only on the hit score of the algorithm. For our analysis, we neglect possible detection overlapping subsequences in the vicinity of the true location, which are theoretically less likely to lead to a detection.

We can now predict a priori the performance of the algorithm. Imagine that we feed such data $x$ into an a contrario machine. Then, at run time, the algorithm will test all candidate contiguous sequences of length $L$ and test whether they are very unlikely under the background only assumption. The candidate locations are simply sliding windows of length $L$. As such, there are $N_T = N-(L-1)$ such sliding windows and therefore there are $N_T$ tests in the a contrario process. This is the size of the candidate locations space. Now assume that we have reached the candidate location $c_i^*$ corresponding to the true underlying sequence of $1$s. The decision on this candidate is the only decision that matters to us as it defines the performance of the algorithm. If a contrario rejects the background hypothesis, i.e. detects a structure of $1$s in $c_i^*$, then we have a success. Otherwise it does not reject the background assumption and does not detect a structure of $1$s and the performance is a failure. The decision, and hence the performance, consists in the test: $N_T\mathbbm{P}_{B}(K\ge K_{GT}^*) \le \varepsilon$, where we will define and explain the difference between $K$ and $K_{GT}^*$ in the following.

We define $K$ as the random variable counting the number of $1$s in the candidate window $c_i^*$ under the assumption of background only. Its distribution is known and corresponds to $\mathbbm{P}_{B}(K=l) = {L \choose l} p_b^l(1-p_b)^{L-l}$. On the other hand, $K_{GT}^*$ is defined as the random variable counting the number of $1$s in the candidate window $c_i^*$ under the groundtruth assumption and assumption that the candidate window corresponds to the window located exactly at the position of the sequences of $1$s we are looking for. Recall that traditionally, when working on a given image, then we have the realisation of $K_{GT}^*$ as $k$, and thus in practice during runtime we plug into the score of the a contrario algorithm the realisation $k$. But here we are working a piori: we wish to predict how the algorithm will perform should we feed it with the data we are interested in, and therefore we do not have a random realisation $k$ yet but only the random variable $K_{GT}^*$. We denote $BF$ the conditioning of the groundtruth and that the candidate sequence is at the true location of sequences of $1$s $c_i^*$. The reason for this choice of notation is that in this assumption we have that the bits become $1$ due to background and foreground signals, thus both $p_b$ and $p_f$ play a role. The distribution of $K_{GT}^*$ is also a given and is, since the entire candidate window $c_i^*$ covers the foreground: $\mathbbm{P}_{BF}(K_{GT}^*=l) = {L \choose l} p_1^l(1-p_1)^{L-l} = {L \choose l} (p_b+p_f-p_bp_f)^l(1-p_b-p_f+p_bp_f)^{L-l}$.

Now that we understand better the definitions of $K$ and $K_{GT}^*$, let us look back to the score that the algorithm will be faced with, at run time, at the ideal candidate $c_i^*$: $N_T\mathbbm{P}_{B}(K\ge k)$. We here have that $N_T$ is a global given constant, and that $\mathbbm{P}_{B}(K\ge l) = \sum\limits_{y=l}^L {L \choose y} p_b^y(1-p_b)^{L-y}$ is a deterministic function. Denote $\mathcal{F}_B(l) = \mathbbm{P}_{B}(K\ge l)$ this function. This is a strictly decreasing function and as such it is \say{pseudo invertible}. It is not rigorously invertible since it is defined at discrete values, nevertheless we can define its \say{pseudo} inverse. If $y\in[0,1]$ is the query for the pseudo inverse, then we can find $l$ the smallest integer such that $\mathcal{F}_{B}(l)\le y$, and then define $\mathcal{F}_B^{-1}(y)=l$. Back to the a contrario score, we now see that this score is, a priori, a random value $N_T\mathcal{F}_{B}(K_{GT}^*)\le \varepsilon$, depending on the random value $K_{GT}^*$, which is conditioned to $BF$. Success of the algorithm is then whether or not $K_{GT}^*$ falls greater than $\mathcal{F}_B^{-1}(\frac{\varepsilon}{N_T})$. If it is greater, then it is a success, otherwise it is a failure. The first way of estimating a priori the performance of the algorithm is to give the probability of success for such data. In this setting, we are looking at
\begin{align*}
    \mathbbm{P}_{BF}(N_T\mathbbm{P}_B(K\ge K_{GT}^*) \le \varepsilon) &= \mathbbm{P}_{BF}(\mathcal{F}_B(K_{GT}^*) \le \frac{\varepsilon}{N_T}) \\
    &= \mathbbm{P}_{BF}(K_{GT}^*\ge \mathcal{F}_B^{-1}(\frac{\varepsilon}{N_T}))
\end{align*}
since $\mathcal{F}_B$ is strictly decreasing. This probability is explicitly known and is: 
\begin{align*}
    \mathbbm{P}_{BF}(N_T\mathbbm{P}_B(K\ge K_{GT}^*) \le \varepsilon) &= \sum\limits_{l=\mathcal{F}_B^{-1}(\frac{\varepsilon}{N_T})}^L {L\choose l} p_1^l (1-p_1)^{L-l} \\
    &= \sum\limits_{l=\mathcal{F}_B^{-1}(\frac{\varepsilon}{N_T})}^L {L\choose l} (p_b+p_f-p_bp_f)^l (1-p_b-p_f+p_bp_f)^{L-l}
\end{align*}
While this gives a good understanding a priori of the algorithm it requires us to be capable of inverting the right cumulative of the distribution of $K$, which can be difficult, especially for more complicated problems. This leads us to the second way of estimating the performance of the algorithm. Instead of giving the probability of success, we can directly estimate what the random value $K_{GT}^*$ will be and test whether or not it falls in the right range: we plug the estimate into the a contrario score and if the score is lower than $\varepsilon$ then we predict a success, otherwise we predict a failure. The issue is to choose a deterministic estimator $\widehat{K_{GT}^*}$ that represents well enough in some sense the random variable $K_{GT}^*$. A possible choice, which we will now do, is to choose the expectation of $K_{GT}^*$: $\widehat{K_{GT}^*} = \mathbbm{E}_{BF}(K_{GT}^*) = Lp_1 = L(p_b+p_f-p_bp_f)$. The estimated score is then: 
$${\widehat{\mathcal{N}_{FA}^{*}}(p_b,p_f,L) = N_T\mathbbm{P_B}(K\ge \widehat{K_{GT}^*}) = N_T\sum\limits_{l=\widehat{K_{GT}^*}}^L {L\choose l} p_b^l (1-p_b)^{L-l} = N_T\sum\limits_{l=Lp_1}^L {L\choose l} p_b^l (1-p_b)^{L-l}}$$
We now look at whether this score is lower or not than $\varepsilon$ to claim a priori if the algorithm will succeed or fail. What is important to point out is that the deterministic estimated value for the a priori score of the algorithm $\widehat{\mathcal{N}_{FA}^{*}}$ is a function that only depends on $p_b$ and $p_f$, if we fix the length of the candidate region $L$. The reason why taking $\widehat{K_{GT}^*}=\mathbbm{E}_{BF}(K_{GT}^*)$ is a good choice for estimating $K_{GT}^*$ in the prediction of a contrario performance is because the random variable $K_{GT}^*$ is very concentrated around its mean. Indeed, the standard deviation of $K_{GT}^*$, denoted $\sigma_{K_{GT}^*}$, is: $\sigma_{K_{GT}^*} = \sqrt{Lp_1(1-p_1)} = \sqrt{L(p_b+p_f-p_bp_f)(1-p_b-p_f+p_bp_f)}$ since $K_{GT}^*\sim_{BF}B(L,p_1)$ (a binomial distribution). For example, if $L=30$, $p_b=0.1$ and $p_f=0.6$, then $\sigma_{K_{GT}^*}\approx 2.7$, thus $\frac{\sigma_{K_{GT}^*}}{L} \approx 9\%$. Thus choosing any value around the mean within a distance of $\sigma_{K_{GT}^*}$ is a good estimator of $K_{GT}^*$ and thus translates to a good estimate of the prediction of the performance of a contrario when plugging the estimated value into the a contrario score. Also, we could do the same reasoning for other a contrario algorithms but this time looking for other lengths $L_c$ of sequences different from $L$, and this will also give us a function of $p_b$ and $p_f$, although we must be careful in devising the distribution of $K_{GT}^*$ under $BF$ as if the candidate sequence window is longer than $L$ then there are exactly $L$ bits that each take the value 1 independently with probability $p_1$ and the others in the window independently take the value 1 with probability $p_0$, whereas before in our example all bits took the value $1$ with probability $p_1$ independently under $BF$. Nevertheless, each of the $L_c$ algorithms will provide a deterministic function $f_{L_c}(p_b,p_f)=\widehat{\mathcal{N}_{FA}^{*}}(p_b,p_f,L_c)$ which depends only on the degradation parameters. The decision curve can be understood as the contour level $\varepsilon$ of this two dimensional function. In the $(p_b,p_f)$ plane, this curve separates the plane into two regions: below the curve is the region of undetectability, i.e. predicted failure of the algorithm, and above the curve is the region of detectability, i.e. predicted success of the algorithm. See the empirical results in Figures \ref{fig: simple example thresh pb static single window} to \ref{fig: simple example comparison estimated prediction vs prob prediction static single window}.

\begin{figure}
    \centering
    \includegraphics[width=0.7\textwidth]{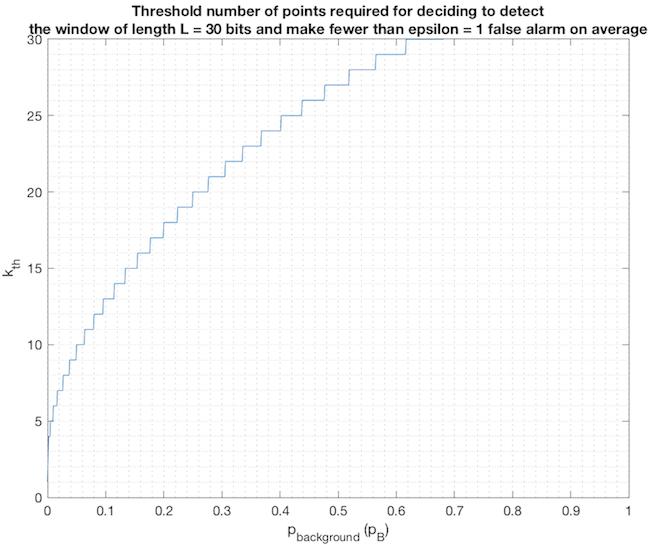}
    \caption[Short version]{Dependency of the threshold number of points, in our simple example, required to reject the background hypothesis for the $BF$ window candidate that is positioned exactly on the sequence of $1$s in the groundtruth, with respect to the background density $p_b$. Unsurprisingly, this function is a non decreasing staircase function, as when there are more 1 values in the background, we need more 1 values in the observed sequence for it to be more unlikely. We are still using $N=100000$ and $L_c=L=30$.}
    \hfill
    \label{fig: simple example thresh pb static single window}
\end{figure}

\begin{figure}
    \centering
    \begin{subfigure}[t]{0.7\textwidth}
      \centering
         \includegraphics[width=\textwidth]{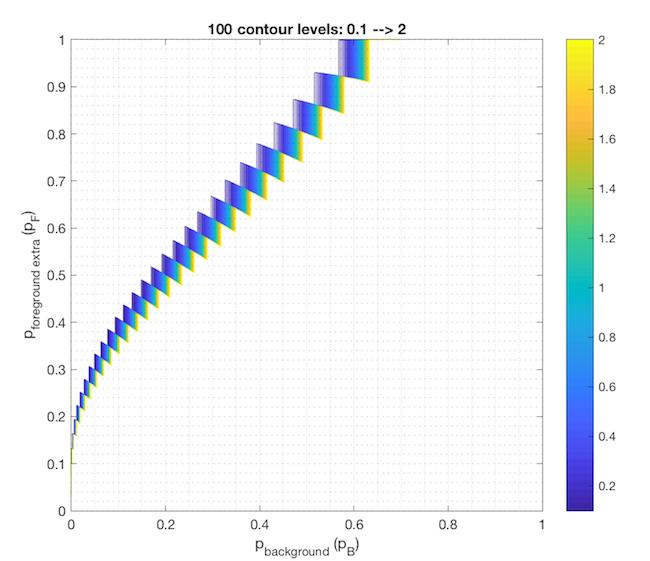}
    \end{subfigure}%
    \hfill
    \begin{subfigure}[t]{0.7\textwidth}
      \centering
         \includegraphics[width=\textwidth]{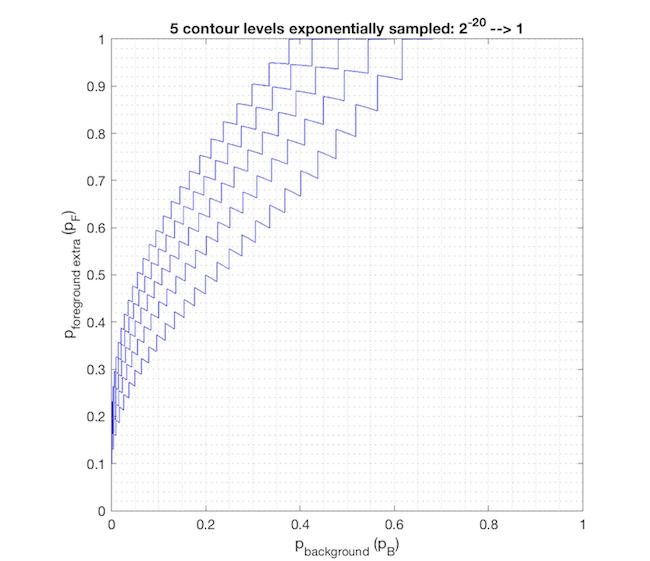}
    \end{subfigure}%
    \hfill
\caption{A priori predictions of the performance of a contrario on our simple example. We use $N=100000$ the length of the long string of bits $x$, and $L=30$ the length of the contiguous sequence of $1$s in the groundtruth signal. The candidate windows of a contrario have length $L_c=L$. Top: 100 contour levels linearly sampled in $[0.1,2]$ of the estimated a priori success of a contrario, i.e. of the function of $(p_b,p_f)$: $N_T\mathbbm{P}_B(K\ge \widehat{K_{GT}^*}) = N_T\mathbbm{P}_B(K\ge \mathbbm{E}_{BF}(K_{GT}^*))$. Each contour level corresponds to a different value of $\varepsilon$. Note how a small change of $\varepsilon$ does not significantly impact the position of the decision level and thus does not significantly impact the performance a priori of the algorithm. Bottom: 5 contour levels of the same estimated a priori performance of a contrario but sampled exponentially between $0.1$ and $2^{-20}$. Note how in order to get a significant change in the decision level one needs to drastically change $\varepsilon$. For instance a value of $\varepsilon = 2^{-20}$ seems extreme and unreasonable: it would mean that for about 100000 tests we accept to make on average up to $2^{-20}$ false alarms! Choosing $\varepsilon\approx 1$ seems like a reasonable order of magnitude, and since the dependency is very slow on $\varepsilon$ the choice of $\varepsilon=1$ seems like a good possible choice.}
\label{fig: simple example estimated prediction static single window}
\end{figure}

\begin{figure}
    \centering
    \begin{subfigure}[t]{0.7\textwidth}
      \centering
         \includegraphics[width=\textwidth]{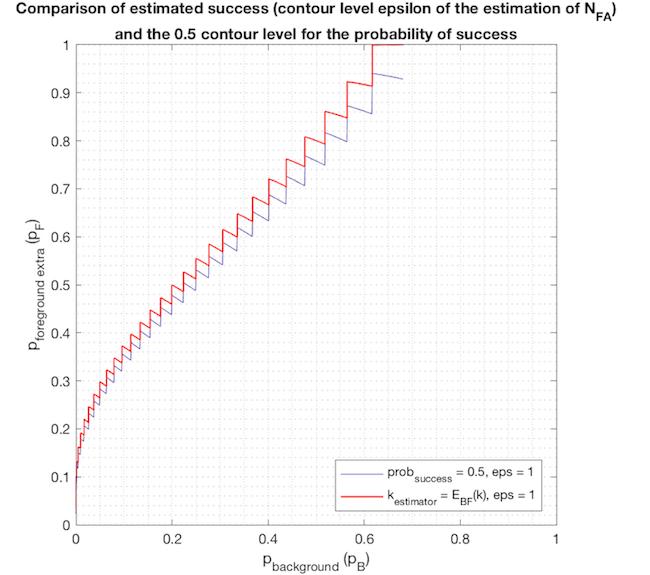}
    \end{subfigure}%
    \hfill
    \begin{subfigure}[t]{0.7\textwidth}
      \centering
         \includegraphics[width=\textwidth]{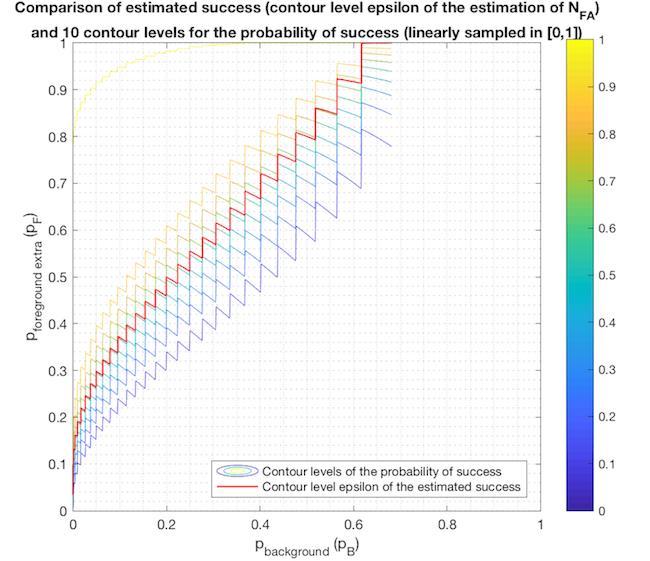}
    \end{subfigure}%
    \hfill
\caption{Top: Comparison between the contour levels, i.e. the decision levels, of the estimated performance of a contrario when plugging in $\widehat{K_{GT}^*} = \mathbbm{E}_{BF}(K_{GT}^*)$ into $\mathcal{N}_{FA}$, versus the 1/2 contour level of the exact probability of success of a contrario, i.e. of the function of $(p_b,p_f)$: $\mathbbm{P}_{BF}(N_T\mathbbm{P}_{B}(K\ge K_{GT}^*)\le \varepsilon)$. Note how close both decision levels are in the $(p_b,p_f)$ space. This means that our choice to plug into the a contrario score the mean of $k$ as its estimate is a good choice and predicts well the behaviour of a contrario. Bottom: Comparison between the contour $\varepsilon=1$ of the estimated a contrario prediction versus $10$ linearly sampled contour levels in $[0,1]$ of the probability of success of a contrario.}
\label{fig: simple example comparison estimated prediction vs prob prediction static single window}
\end{figure}

\clearpage
\subsubsection{Predicting A Contrario Performance on Our Image Data}

Let us come back to our slightly more complex problem of random-dot 2D images. We can mathematically estimate the performance a priori of the a contrario algorithm in the $(p_b,p_f)$ space. Recall that the a contrario algorithm will sample the space of objects or curves, in our case of regions in the image plane defined and edge of length $L_e$ and a certain width $w$, and evaluate the size of the space of samples. Then for each sample region, it will count the number of active (white) pixels $k$ appearing in it, and test whether this observation is $\varepsilon$-meaningful or not with respect to a random variable $K$ with a probability distribution that can be precomputed from the data model, i.e. whether $N_T \mathbb{P}_B(K\ge k) \le \varepsilon$ holds. In an a priori context, we do not have the random realisation $k$, only the random variable counting the number of white pixels in the considered candidate region conditioned to the groundtruth $K_{GT}$. Furthermore, as in the simple example, we will only focus on the candidate region $c_i^*$ located on the true edge, for which the associated count conditioned to the groundtruth is $K_{GT}^*$. In order to predict a priori the a contrario performance, we therefore have to estimate: the number of tests in the a contrario criterion $N_T$ and the probability distribution for $K_{GT}^*$ the number of white pixels to appear in a candidate window of length $L$ and width $w$ conditionally to being on the true edge. Recall that $K$ is the random variable counting the number of white points appearing in the same candidate window under the background only assumption, unlike $K_{GT}^*$.

For the a priori estimation, when looking at the $(p_b,p_f)$ space, we shall consider that the width of the candidate window to be fixed, but remember that this lies in a context where we could want to test for several widths the a contrario hypothesis and so the number of tests should be linear in the number of widths we will in total consider. We denote $N_w$ the number of widths, and by $w$ the considered window sample width.

Unless specified otherwise, we work conditionally to the position of the edge. In the end, we will return to unconditioned results by noticing that the probabilities conditioned to being on the true edge, do not really depend on its position.

As mentioned previously in our discussion on sampling the space of candidate objects, we shall consider all pairs of appearing white points as defining the candidate edges, and all all widths around the line defined by them. Hence we shall have $N_T = \frac{M(M-1)}{2}N_w$ where $M$ is the number of white pixels to appear in the image and $N_w$ is the number of widths to use in the a contrario algorithm. Since $M$ is a random variable so is $N_T$ and we will therefore estimate it by its expectation ${\widehat{N}_t = \mathbb{E}(N_T)}$. Denote $e$ and $\Bar{e}$ the pixels on the edge and respectively off the edge. We have, if we omit the stair-casing effect of straight lines, that ${\#e \approx L_e w_e}$ and ${\#\Bar{e} \approx N^2 - L_e w_e}$, where the image is of size $N\times N$. Let $M_e$ and $M_{\Bar{e}}$ be respectively the number of appearing white pixels on the edge and the number of white pixels appearing off the edge: $M = M_e + M_{\Bar{e}}$. Since all pixels off the edge follow an iid bernoulli $B(p_b)$ (recall that we are here working conditionally on the position of the edge), we have that $M_{\Bar{e}}$ follows a binomial distribution ${M_{\Bar{e}}} \sim Bin(\#\bar{e},p_b)$. Likewise, we get ${M_e\sim Bin(\#e,p_b+p_f-p_bp_f)}$. Recall the well known moments of binomials: if $X\sim Bin(n,p)$, then $\mathbb{E}(X) = np$ and $\mathbb{E}(X^2) = np+n(n-1)p^2$. This trivially leads to:

\begin{align*}
    \mathbb{E}(M_e) &= L_e w_e(p_b+p_f-p_b p_f) \\
    \mathbb{E}(M_{\bar{e}}) &= (N^2-L_e w_e)p_b \\
    \mathbb{E}(M_e^2) &= L_e w_e(p_b+p_f-p_b p_f)+L_e w_e(L_e w_e -1)(p_b+p_f-p_b p_f)^2 \\
    \mathbb{E}(M_{\bar{e}}^2) &= (N^2-L_e w_e)p_b+(N^2-L_e w_e)(N^2-L_e w_e -1)p_b^2
\end{align*}

 From there, we get:

\begin{align*}
    \mathbb{E}(M) &= \mathbb{E}(M_e+M_{\bar{e}}) \\
    &= \mathbb{E}(M_e) + \mathbb{E}(M_{\bar{e}}) \\
    &= L_e w_e(p_b+p_f-p_b p_f) + (N^2-L_e w_e)p_b
\end{align*}

and by independence of $M_e$ and $M_{\bar{e}}$:

\begin{align*}
    \mathbb{E}(M^2) &= \mathbb{E}((M_e+M_{\Bar{e}})^2) \\
    &= \mathbb{E}(M_e^2) + \mathbb{E}(M_{\bar{e}}^2)+2\mathbb{E}(M_e)\mathbb{E}(M_{\bar{e}}) \\
    &= L_e w_e(p_b+p_f-p_b p_f)+L_e w_e(L_e w_e -1)(p_b+p_f-p_b p_f)^2 + (N^2-L_e w_e)p_b\\
    &\quad+(N^2-L_e w_e)(N^2-L_e w_e -1)p_b^2 + 2L_e w_e(p_b+p_f-p_b p_f)(N^2-L_e w_e)p_b
\end{align*}

We can now estimate $N_T$:

\begin{align*}
    \widehat{N}_T &= \mathbb{E}(N_T) = \mathbb{E}(\frac{M(M-1)}{2}N_w) \\
    &= \frac{1}{2}\big(\mathbb{E}(M^2)-\mathbb{E}(M)\big)N_w \\
    &= \frac{1}{2}\Bigg(L_e w_e(L_e w_e -1)(p_b+p_f-p_b p_f)^2 +(N^2-L_e w_e)(N^2-L_e w_e -1)p_b^2 \\
    &\qquad+ 2L_e w_e(p_b+p_f-p_b p_f)(N^2-L_e w_e)p_b \Bigg)N_w
\end{align*}

Note that the conditional expectations do not explicitly depend on the position of the edge in the image and therefore the expectations without the conditional knowledge of the position of the edge take the same value. This is due to the fact that if $X,Y$ two random variables, we have $\mathbb{E}(\mathbb{E}(X\mid Y)) = \mathbb{E}(X)$. In particular, if $\mathbb{E}(X\mid Y)$ is a constant independent of values taken by $Y$, i.e. it is deterministic, then with the previous formula we see that $\mathbb{E}(X)$ takes that same value. 

To predict the performance of the a contrario process of detection, we need to next consider the distribution of $K_{GT}^*$, and ask what is the probability of detection when we test a candidate location that corresponds to an existing edge there. Recall that for our definition of performance, we solely consider the performance of the algorithm at the candidate region that is located at the true position of the edge $c_i^*$. Therefore we must consider the distribution of $K_{GT}^*$ under the assumption that the candidate region is located at the true location of the edge. Evaluating the distribution of $K_{GT}^*$ is thus less simple. We could proceed in two different ways. We could look at the exact\footnote{If we replace $N_T$ by its deterministic estimate in $\mathbbm{P}(N_T\mathbbm{P}_{B}(K\ge K_{GT}^*))$.} probability of a contrario to succeed. Or we can estimate $K_{GT}^*$ by a deterministic value and simply plug it into the a contrario score and see if the score passes the $\varepsilon$ threshold or not. We choose to work with the second option. The reason for this is that $K_{GT}^*$ will have a small variance and so is essentially concentrated around its expectation. Thus any deterministic estimate of $K_{GT}^*$ within this centred region around the mean is a very good estimate of $K_{GT}^*$. Therefore plugging this estimate into the score function also yields a very good estimate of the random score function of a contrario on the candidate region corresponding to the true location.

The estimate for $K_{GT}^*$ will be its expected value (conditionally to the edge being in the middle of the window, which we will omit mentioning unless explicitly otherwise): $\widehat{K_{GT}^*} = \mathbb{E}(K_{GT}^*)$. The candidate window is of dimensions $L_e w$. The number of pixels in the window is $n_w = L_ew$ (for non integer dimensions we round this number $n_w = [L_e w]$). Similarly to the way we computed $M$ previously by counting the number of white pixels appearing conditionally to being on the edge and the number of white pixels to appear conditionally to not being on the edge, we here get, since we are working conditionally to the edge is in the centre of the window that: 

\begin{align*}
    \widehat{K_{GT}^*} &= \mathbb{E}(K_{GT}^*) \\
    &= L_e \widetilde{w}_e(p_b+p_f-p_b p_f)+L_e(w-\widetilde{w}_e)p_b \\
    &=\frac{\widetilde{w}_e}{w}n_w(p_f+p_b-p_f p_b)+(1-\frac{\widetilde{w}_e}{w})n_w p_b
\end{align*}
where $\widetilde{w}_e = \mathrm{min}(w_e,w)$.

The variance of $K_{GT}^*$, denoted $\sigma_{K_{GT}^*}^2$, is given as the sum of the variance of the number of points on the true line and the variance of the number of points outside the true line by independence of the number of white points appearing in areas that do not intersect (conditionally to the areas). The variance of a binomial $X\sim B(n,p)$ is given by $\mathbbm{V}(x) = np(1-p)$. Thus:

\begin{equation*}
    \sigma_{K_{GT}^*} = \sqrt{(\frac{\widetilde{w}_e}{w}n_w(p_f+p_b-p_fp_b)(1-p_f-p_b+p_bp_f))^2+(1-\frac{\widetilde{w}_e}{w}n_wp_b(1-p_b))^2}
\end{equation*}

For example, if $L=200$, $p_b = 0.03 \approx 1-(1-0.005)^6$, $p_f = 0.2 = 1-(1-0.365)^6$, and $w=8$, which are typical values in our later experiments, then $n_w = 1600$ and $\sigma_{K_{GT}^*} \approx 50.3$. Which means that the spread of $K_{GT}^*$ is essentially within a region of size $\frac{2\sigma_{K_{GT}^*}}{n_w} \approx 7\%$ of its possible range. This shows that $K_{GT}^*$ is here too concentrated around its mean and thus its expected value is a very good deterministic estimator.

Under the background only hypothesis, the probability to have more than $\widehat{K_{GT}^*}$ white points appearing (conditionally to the choice of window) is given by the tail of the binomial distribution since pixel value distributions are independent and Bernoulli $B(p_b)$: 
\begin{equation*}
    \mathbb{P}_{B}(K \ge \widehat{K_{GT}^*}) = \sum\limits_{i=\widehat{K_{GT}^*}}^{n_w}{n_w\choose i}p_b^i(1-p_b)^{n_w-i} = 1-\sum\limits_{i=0}^{i=\widehat{K_{GT}^*}-1}{n_w\choose i}p_b^i(1-p_b)^{n_w-i}
\end{equation*}

Therefore the a contrario false alarm score to test $\widehat{\mathcal{N}_{FA}^{*}}(p_b,p_f,w)$, at location $c_i^*$, is: 
\begin{align*}
    \widehat{\mathcal{N}_{FA}^{*}}(p_b,p_f,w) &= \widehat{N}_t \mathbbm{P}_B(K \ge \widehat{K_{GT}^*}) \\
    &= \frac{1}{2}\Bigg(L_e w_e(L_e w_e -1)(p_b+p_f-p_b p_f)^2 +(N^2-L_e w_e)(N^2-L_e w_e -1)p_b^2 \\
    &\qquad+ 2L_e w_e(p_b+p_f-p_b p_f)(N^2-L_e w_e)p_b \Bigg)n_w \\
    &\qquad\times \sum\limits_{i=\frac{\widetilde{w}_e}{w}n_w(p_f+p_b-p_f p_b)+(1-\frac{\widetilde{w}_e}{w})n_w p_b}^{n_w}{n_w\choose i}p_b^i(1-p_b)^{n_w-i}
\end{align*}

Next, due to our sampling strategy of the space of edges, we further slightly modify the previous formula. Indeed, since we enforce that each candidate region is determined by two \say{appearing} (white) pixels that are assumed to lie on the line, we must remove these two pixels from the count. On the other hand, if we had sampled edges in a more naive way such as considering all pairs of locations without the \say{two white pixels appearing} constraint, then the above formula would apply. Our choice influences the definition of $K_{GT}^*$ and therefore slightly affects its value. It will also have a slight impact on the value of the probability of being white under the background assumption and slightly modify the value of $\widehat{\mathcal{N}_{FA}^{*}}(p_b,p_f,w)$. If the window is considered without the two given pixels on the underlying assumed line, this implicitly changes the definition of $k$ and the modification in its value is:

\begin{align*}
    \widehat{K_{GT}^*} &= \mathbb{E}(K_{GT}^*) \\
    &= (L_e \widetilde{w}_e-2)(p_b+p_f-p_b p_f)+L_e(w-\widetilde{w}_e)p_b \\
    &= (\frac{\widetilde{w}_e}{w}n_w-2)(p_f+p_b-p_f p_b)+(1-\frac{\widetilde{w}_e}{w})n_w p_b
\end{align*}

We must update similarly the definition of the random variable $K$ since the target area has lost two pixels:

\begin{equation*}
    \mathbb{P}_{B}(K\ge \widehat{K_{GT}^*}) = \sum\limits_{i=\widehat{K_{GT}^*}}^{n_w-2}{n_w-2\choose i}p_b^i(1-p_b)^{n_w-2-i} = 1-\sum\limits_{i=0}^{i=\widehat{K_{GT}^*}-1}{n_w-2\choose i}p_b^i(1-p_b)^{n_w-2-i}
\end{equation*}

The estimated number of false alarms in the a contrario therefore updates to:

\begin{align*}
    \widehat{\mathcal{N}_{FA}^{*}}(p_b,p_f,w) &= \widehat{N}_t \mathbbm{P}_B(K \ge \widehat{K_{GT}^*}) \\
    &= \frac{1}{2}\Bigg(L_e w_e(L_e w_e -1)(p_b+p_f-p_b p_f)^2 +(N^2-L_e w_e)(N^2-L_e w_e -1)p_b^2 \\
    &\qquad+ 2L_e w_e(p_b+p_f-p_b p_f)(N^2-L_e w_e)p_b \Bigg)n_w \\
    &\qquad\times \sum\limits_{i=(\frac{\widetilde{w}_e}{w}n_w-2)(p_f+p_b-p_f p_b)+(1-\frac{\widetilde{w}_e}{w})n_w p_b}^{n_w-2}{n_w-2\choose i}p_b^i(1-p_b)^{n_w-2-i}
\end{align*}

For a fixed width $w$, the estimated criterion at the position of the true edge is just a function of $p_b$ and $p_f$: $\widehat{\mathcal{N}_{FA}^{*}}_w$. For each considered width we can plot $\widehat{\mathcal{N}_{FA}^{*}}(p_b,p_f,w) = \widehat{\mathcal{N}_{FA}^{*}}_w(p_b,p_f)$ and compute its contours levels.  For the a contrario algorithm, the interesting contour level is $\widehat{\mathcal{N}_{FA}^{*}}_w(p_b,p_f) = \varepsilon$. We would like to know whether the decision boundary for humans lies along one of the contour levels of this function. If this is the case then it would suggest that humans indeed use a Gestalt grouping perception on sparse data that is linked to unlikeliness against a random uniform assumption. Note that for a fixed width, then we implicitly assume a single width $N_w=1$. If we are comparing humans against several widths, then we should take $N_w$ to be larger which very slightly changes the contour levels.

Concluding, we can empirically sample the $(p_b,p_f)$ space and evaluate for a given $w$ the  $\widehat{\mathcal{N}_{FA}^{*}}_w(p_b,p_f)$. We have a parametric surface and compute its contour levels. Each contour level of value $\varepsilon$ corresponds to the critical partition line for the confidence level $\varepsilon$: above the line there should be a correct detection and below no detection. We plot the coutour levels of each of these functions in Figure \ref{fig: various widths contour level eps 1 static} left. We explain how to read these plots in the context of merging frames in Figure \ref{fig: various widths contour level eps 1 static} right.

\begin{figure}
    \centering
    \begin{subfigure}[t]{0.5\textwidth}
      \centering
      \includegraphics[width=\textwidth]{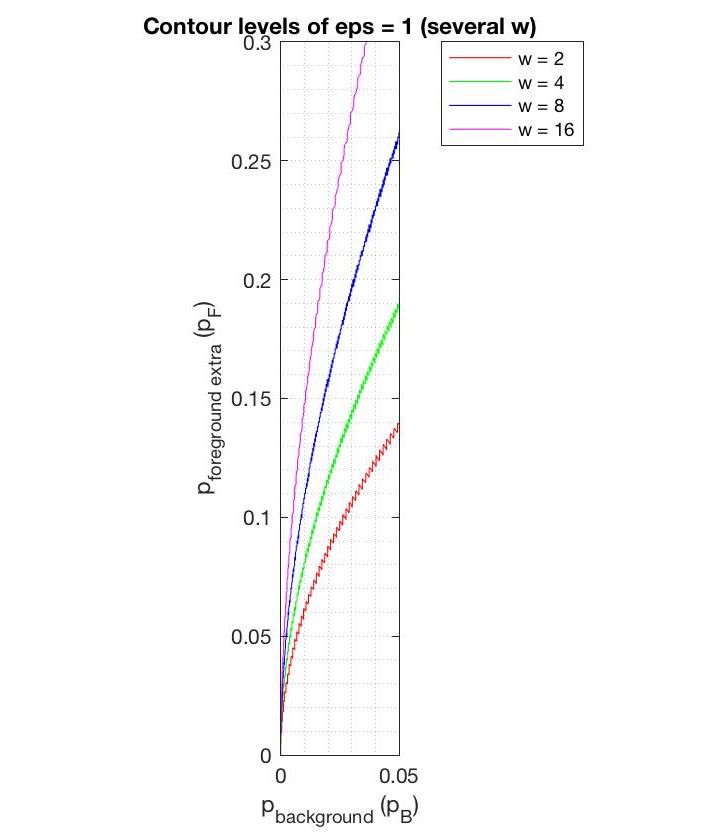}
    \end{subfigure}%
    \hfill
    \begin{subfigure}[t]{0.5\textwidth}
      %\centering
      \qquad\quad
      \includegraphics[width=\textwidth]{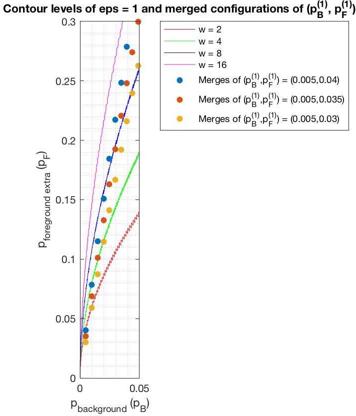}
    \end{subfigure}%
    \hfill
\caption{Left: Predicted a priori contour levels at level $\varepsilon=1$ of $\widehat{\mathcal{N}_{FA}^{*}}_w(p_b,p_f)$ for $w\in\{2,4,8,16\}$ in the static edge case. Right: same contour levels with configurations corresponding to merged frames. Fix a $p_b^{(1)}=0.005$ the background degradation parameter on one frame, and consider several foreground degradation parameters on one frame $p_f^{(1)} \in \{0.03,0.035,0.04\}$. Thus we get three pairs of degradation parameters, corresponding to the three coloured points vertically aligned which we call a column. All these configurations, corresponding to the first column of points on the figure on the right, are below each decision curve and so clearly no matter which $w$ we use, we should not be able to see anything on a per frame basis. A merge of $t$ frames corresponds to looking at the $t$-th column. Should we integrate on $t=7$ frames, and with a width of $w=8$, then we are looking at the 7-th column of dots and the blue threshold curve. Being above the curves means detection and below means miss. In this case, the point corresponding to the video with $p_f^{(1)} = 0.04$ is clearly above the threshold and should be easily detected, the one corresponding to $p_f^{(1)} = 0.03$ is clearly below and should not be detected and the one corresponding to $p_f^{(1)} = 0.035$ is not far from the threshold and should be in the transition or difficult zone to perceive, with some failures and successes.} 
\label{fig: various widths contour level eps 1 static}
\end{figure}

\subsection{Empirical Performances: Data Generation}

Now that we have introduced a theory, we will put it to the test in an empirical way.

Our empirical tests of the a contrario algorithms (as well as the experiments with human subjects later) are done on images generated on a computer. The size of the images is  $300\times300$ pixels, that corresponds in our display to a $10.4\times10.4$ cm viewing window. Each image will correspond to a degradation of a clean image that consists in an image of a single straight edge in a random position of length $L$ and width $w_e$, both measured in pixel size units, over a uniform black background. Then we fix the model parameters and generated a random dataset of images with various degradation parameters. We first chose the edge to have width $w_e = 1$ and length $L=200$. We sampled a grid of configurations in the $(p_b,p_f)$ space. In the context of very sparse videos, we chose to work in the range $(p_b,p_f)\in[0,0.05]\times[0,0.25]$. The grid sampling for this space is the following: $p_b\in\{0.005,0.01,0.02,0.03,0.04,0.05\}$ and $p_f$ uniformly sampled between $0.02$ and $0.25$ with a spacing of $0.01$ between samples. For each degradation configuration we generate $5$ random images. Each random image is generated independently from all other images, including those with the same degradation parameters. For each image, the edge's position is chosen at random. This means that its position and orientation are chosen randomly and uniformly. We enforce that the edge must lie entirely in the image. Therefore we sampled the midpoint of the edge according to a random uniform variable in $[100,200]\times[100,200]$ and the angle according to a uniform random variable in $[0,2\pi]$. Given a random position for an edge described in a ground truth clean image $I$ and degradation parameters $(p_b,p_f)$, the degraded image added to the dataset is $\phi_{im}(I,p_b,p_f)$. We did not generate samples for degradation parameter configurations where $p_f<2p_b$ since in practice in those cases the increase in probability of appearance on the edge due to $p_f$ is so marginal since $p_b$ is very small that for a thin one dimensional object not enough extra points appear in order to see an alignment. The total size of the dataset is $620$ images.

Once the data has been generated and saved, it is not regenerated: all experiments will be done on the same data (although shown in a different random order).

\subsection{Fixing $\varepsilon$}

Our model for the human perceptual system is a prior time integration module modelled as a merger of frames for video input followed by a spatial detector modelled by an a contrario algorithm. The only parameter for the first module is the time or number of frames of integration, and we need not discuss this here since each image can be considered as a single frame or as a merger of frames with lower initial degradation parameters. What we are interested in testing is the a contrario black box. The a contrario algorithm depends on several parameters. Some influence the counting function. Here, these are the parameters describing the region in which we cound the white pixels: rectangles of length $L_e$, the known length of the edge, and of width $w$. Finally, the a contrario needs a decision level $\varepsilon$ as the expected number of false alarms we are ready to accept. Therefore, for these experiments, there are essentially two parameters for the a contrario: $w$ and $\varepsilon$. 

We want to find the appropriate $w$ to model the \say{default} width humans will look at for finding edges when working of this type of noisy images. For this reason we will work with several $\widehat{\mathcal{N}_{FA}^{*}}_w$ with various $w$. Therefore we will sample $w$ in some way and this will yield a set of prior a contrario functions to compare with. On the other hand, we do not know in advance what kind of $\varepsilon$ would be most appropriate for humans. As explained previously, $\varepsilon$ designates the number of false alarms we are ready to accept (on average), and is at the core of the a contrario algorithm. Here, we consider the a contrario as a black box already designed for us. Therefore, $\varepsilon$ is just some confidence level parameter and its original meaning is not as important. We may expect humans to have $\varepsilon$ around $1$. The hand-waiving justification for this is that it seems unlikely that humans are ready to accept many false alarms (which are decisions that they see an edge) while at the same time they can sometimes imagine or convince themselves that they see some structure in some random sets. Also, if told to try to detect one edge, they will never return two. Out of hundreds of thousands of candidate positions, to accept on average one false alarms seems reasonable. Consider for instances how humans see shapes in a cloud, which are essentially random agglomerations of water. We shall assume that $\varepsilon$ stays within the same order of magnitude between different individuals.

The exact $\varepsilon$ value does not matter much compared with the $w$ value, as is common in traditional a contrario studies. We plot the predicted performance contour levels for various $\varepsilon$ in Figures \ref{fig: log dependency static w=2} to \ref{fig: log dependency static w=16}. As seen in these figures empirically, we need a substantial change in $\varepsilon$ by orders  of magnitude in order to get the decision curve to significantly change. As such, the dependence of performance in $\varepsilon$ is in effect very slight (and some analysis provided that it is logarithmic in $\varepsilon$), and therefore to choose the value $\varepsilon=1$ seems reasonable.

\begin{figure}
    \centering
    \begin{subfigure}[t]{0.3525\textwidth}
      \centering
      \includegraphics[width=\textwidth]{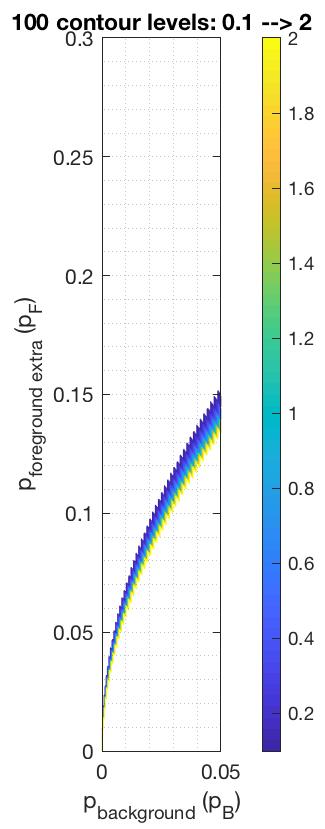}
    \end{subfigure}%
    \hfill
    \begin{subfigure}[t]{0.6\textwidth}
      %\centering
      \qquad\quad
      \includegraphics[width=\textwidth]{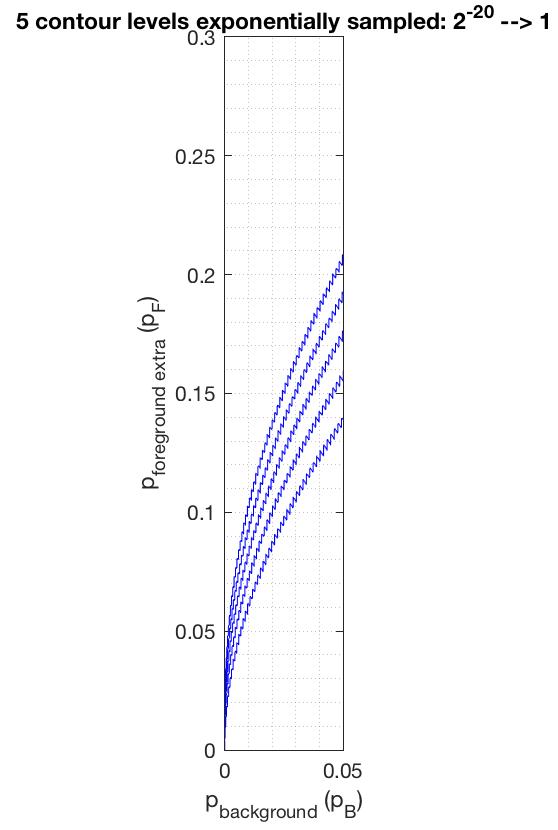}
    \end{subfigure}%
    \hfill
\caption{Left: 100 linearly sampled contour levels of $\widehat{\mathcal{N}_{FA}^{*}}_w(p_b,p_f)$ with $w=2$ in the static edge case, with $\varepsilon\in[0.1,2]$. There is not much change in the position of the decision levels for such a range of $\varepsilon$. Right: 5 exponentially sampled contour levels of $\widehat{\mathcal{N}_{FA}^{*}}_w(p_b,p_f)$ with $w=2$ in the static edge case, with $\varepsilon\in\{2^{-20},2^{-15},2^{-10},2^{-5},2^{0}=1\}$. In order to get a significant change in the decision level we have to drastically change $\varepsilon$. Empirically, to get a linear displacement of the curves we need to linearly change the logarithm of $\varepsilon$ thus an empirical log-dependency to $\varepsilon$. Furthermore, taking $\varepsilon=2^{-5}$ or less means that we expect on average, over a few hundred thousand tests, to make $2^{-5}$ or fewer mistakes, which does not seem reasonable when considering humans looking at signals similar to white noise. The choice of value $\varepsilon=1$ is reasonable and crude enough since small displacements of $\varepsilon$ do not influence much the decision levels.} 
\label{fig: log dependency static w=2}
\end{figure}

\begin{figure}
    \centering
    \begin{subfigure}[t]{0.3625\textwidth}
      \centering
      \includegraphics[width=\textwidth]{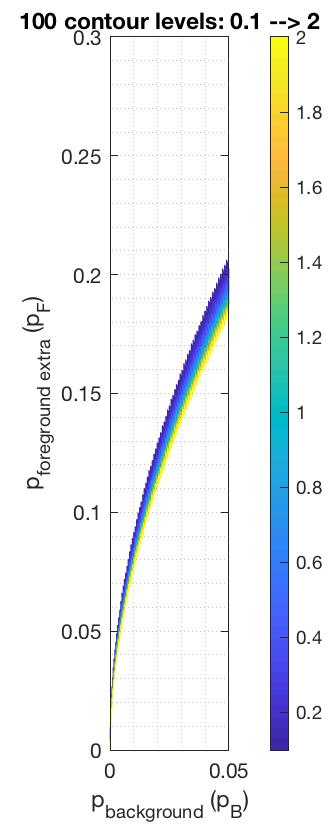}
    \end{subfigure}%
    \hfill
    \begin{subfigure}[t]{0.6\textwidth}
      %\centering
      \qquad\quad
      \includegraphics[width=\textwidth]{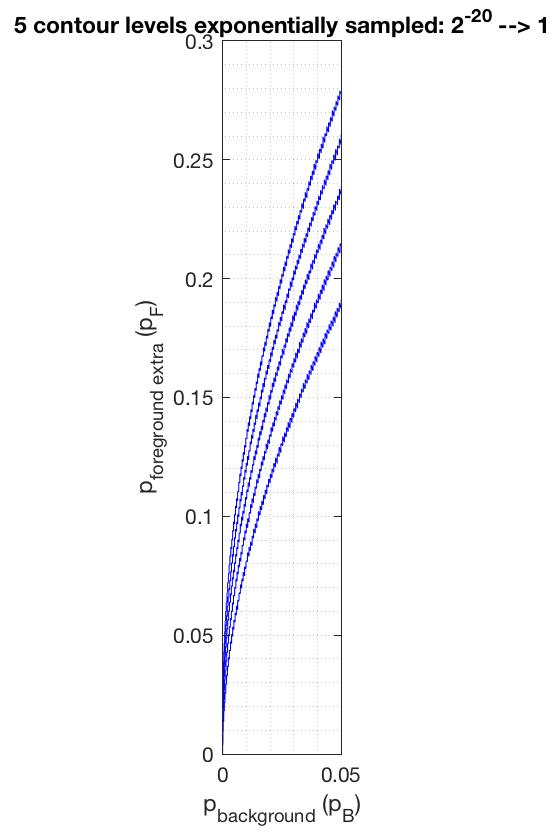}
    \end{subfigure}%
    \hfill
\caption{Left: 100 linearly sampled contour levels of $\widehat{\mathcal{N}_{FA}^{*}}_w(p_b,p_f)$ with $w=4$ in the static edge case, with $\varepsilon\in[0.1,2]$. There is not much change in the position of the decision levels for such a range of $\varepsilon$. Right: 5 exponentially sampled contour levels of $\widehat{\mathcal{N}_{FA}^{*}}_w(p_b,p_f)$ with $w=2$ in the static edge case, with $\varepsilon\in\{2^{-20},2^{-15},2^{-10},2^{-5},2^{0}=1\}$. In order to get a significant change in the decision level we have to drastically change $\varepsilon$. Empirically, to get a linear displacement of the curves we need to linearly change the logarithm of $\varepsilon$ thus an empirical log-dependency to $\varepsilon$. Furthermore, taking $\varepsilon=2^{-5}$ or less means that we expect on average, over a few hundred thousand tests, to make $2^{-5}$ or fewer mistakes, which does not seem reasonable when considering humans looking at signals similar to white noise. The choice of value $\varepsilon=1$ is reasonable and crude enough since small displacements of $\varepsilon$ do not influence much the decision levels.} 
\label{fig: log dependency static w=4}
\end{figure}

\begin{figure}
    \centering
    \begin{subfigure}[t]{0.37\textwidth}
      \centering
      \includegraphics[width=\textwidth]{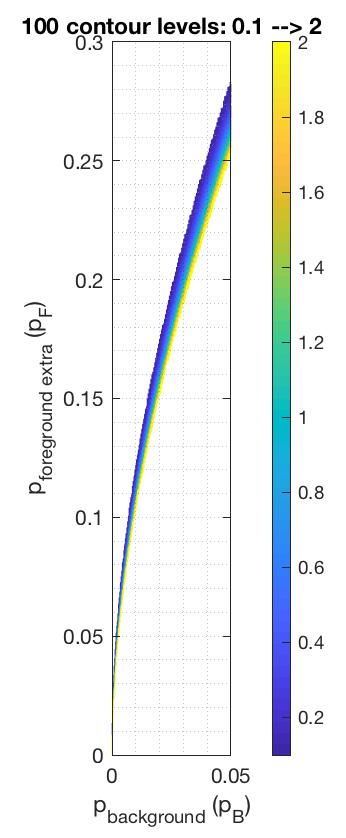}
    \end{subfigure}%
    \hfill
    \begin{subfigure}[t]{0.6\textwidth}
      %\centering
      \qquad\quad
      \includegraphics[width=\textwidth]{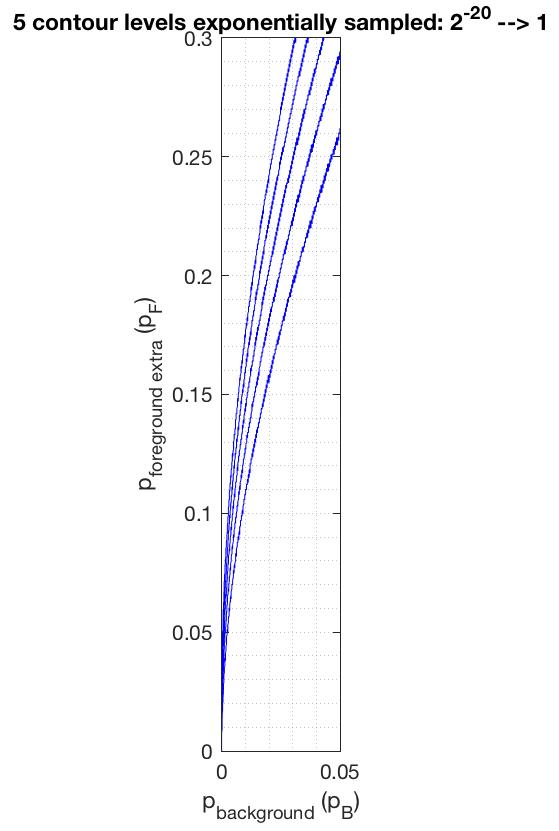}
    \end{subfigure}%
    \hfill
\caption{Left: 100 linearly sampled contour levels of $\widehat{\mathcal{N}_{FA}^{*}}_w(p_b,p_f)$ with $w=8$ in the static edge case, with $\varepsilon\in[0.1,2]$. There is not much change in the position of the decision levels for such a range of $\varepsilon$. Right: 5 exponentially sampled contour levels of $\widehat{\mathcal{N}_{FA}^{*}}_w(p_b,p_f)$ with $w=2$ in the static edge case, with $\varepsilon\in\{2^{-20},2^{-15},2^{-10},2^{-5},2^{0}=1\}$. In order to get a significant change in the decision level we have to drastically change $\varepsilon$. Empirically, to get a linear displacement of the curves we need to linearly change the logarithm of $\varepsilon$ thus an empirical log-dependency to $\varepsilon$. Furthermore, taking $\varepsilon=2^{-5}$ or less means that we expect on average, over a few hundred thousand tests, to make $2^{-5}$ or fewer mistakes, which does not seem reasonable when considering humans looking at signals similar to white noise. The choice of value $\varepsilon=1$ is reasonable and crude enough since small displacements of $\varepsilon$ do not influence much the decision levels.} 
\label{fig: log dependency static w=8}
\end{figure}

\begin{figure}
    \centering
    \begin{subfigure}[t]{0.3575\textwidth}
      \centering
      \includegraphics[width=\textwidth]{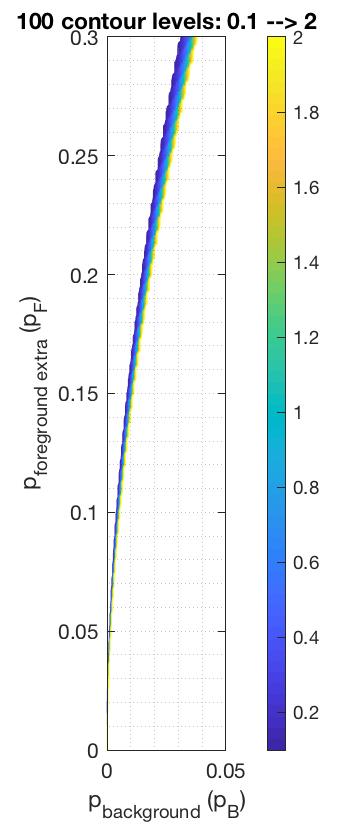}
    \end{subfigure}%
    \hfill
    \begin{subfigure}[t]{0.6\textwidth}
      %\centering
      \qquad\quad
      \includegraphics[width=\textwidth]{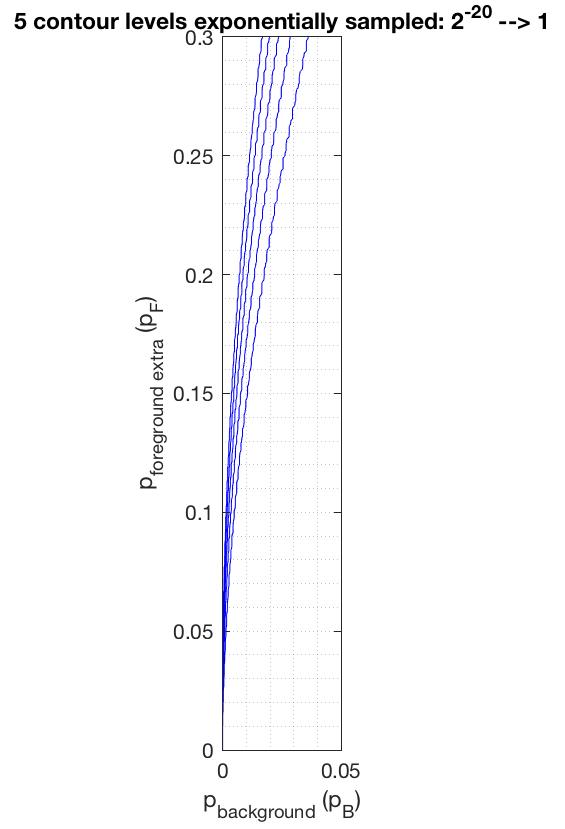}
    \end{subfigure}%
    \hfill
\caption{Left: 100 linearly sampled contour levels of $\widehat{\mathcal{N}_{FA}^{*}}_w(p_b,p_f)$ with $w=16$ in the static edge case, with $\varepsilon\in[0.1,2]$. There is not much change in the position of the decision levels for such a range of $\varepsilon$. Right: 5 exponentially sampled contour levels of $\widehat{\mathcal{N}_{FA}^{*}}_w(p_b,p_f)$ with $w=2$ in the static edge case, with $\varepsilon\in\{2^{-20},2^{-15},2^{-10},2^{-5},2^{0}=1\}$. In order to get a significant change in the decision level we have to drastically change $\varepsilon$. Empirically, to get a linear displacement of the curves we need to linearly change the logarithm of $\varepsilon$ thus an empirical log-dependency to $\varepsilon$. Furthermore, taking $\varepsilon=2^{-5}$ or less means that we expect on average, over a few hundred thousand tests, to make $2^{-5}$ or fewer mistakes, which does not seem reasonable when considering humans looking at signals similar to white noise. The choice of value $\varepsilon=1$ is reasonable and crude enough since small displacements of $\varepsilon$ do not influence much the decision levels.} 
\label{fig: log dependency static w=16}
\end{figure}

\subsection{Empirical Performance Versus the Predicted Performance of A Contrario}

The estimate $\widehat{\mathcal{N}_{FA}^{*}}(p_b,p_f,w)$ was an a priori estimate of the score the a contrario algorithm will get when it is considering a candidate position that is in the right position on the true edge. We can show the relevance of this estimator by comparing its associated performance, for the level $\varepsilon$, with the empirical performance of a real a contrario algorithm working on a single width on real data.

For candidate region widths ranging in ${w\in\{2,4,8, 16\}}$, we run the a contrario algorithm working on a single window width $w$ and confidence level $\varepsilon=1$. The empirical performance (fraction of runs in which the minimal a contrario score indeed fitted the true edge) of the single width a contrario processes are displayed in Figures \ref{fig: single widhts a contrario vs prediction static w 2 4} and \ref{fig: single widhts a contrario vs prediction static w 8 16} as colour coded points measuring the edge detection rates. We also plot the estimated contour levels of the detection performance, measured as in the static case, according to our a priori estimates. The contour levels for $\varepsilon=1$ and window width $w$ fits well the empirical transition area of the a contrario with single width $w$ and confidence level $\varepsilon=1$. This provides us empirical confirmation that the derived a priori formula does indeed apply for the a contrario.

Recall that the performance is the fraction of experiments returning the lowest candidate location with score lower than $\varepsilon$, if any, that are at the true position of the edge, a.k.a hit performance. We found that cases where the output, i.e. the candidate that has lowest a contrario score and below the level $\varepsilon$, would seldom disagree with the true edge.

In the static case, for high $p_f$ and low $p_b$ (many points on the line and few in the background), the performance will saturate close to 1. On the contrary, for low $p_f$ and high $p_b$, the performance saturates close to 0. In between there is a transition area. This corresponds to the change between failure to detect and success of detection. Between these extremes, we have the transition decision area. For every $p_b$, there should be some value $v$ for $p_f$ such that, ideally, for every $p_f>v$, the true line segment would always be detected, and that for every $p_f<v$ the line segment would not. In practice, the algorithms deviate from the idealised behaviour, and sometimes they do not detect the true line segment even for $p_f>v$ or inversely sometimes they detect the true line segment for $p_f<v$. Denote $\mathbbm{P}_h(p_b,p_f)$ the empirical hit probability: the probability to correctly detect the line segment at the configuration $(p_b,p_f)$ (the average performance at this configuration). We specify the empirical decision at $(p_b,v)$ such that there are as many hits of the true edge for $(p_b,p_f)$ with $p_f$ smaller than $v$ than misses of the true edge for $(p_b,p_f)$ with $p_f$ greater than $v$: $\sum\limits_{p_f\ge v}1-\mathbbm{P}_h(p_b,p_f) \approx \sum\limits_{p_f <v}\mathbbm{P}_h(p_b,p_f)$. Note that since we did not fully sample the $[0,1]$ space for $p_f$, there are values for which, in the cases of $w\in\{8,16\}$, we have not yet reached the performance saturation of 1. Therefore in this definition, we will assume that the performance function equals to $1$ for all values of $p_f$ above the sampled space. This was done since we could not work on all $p_f$ values as it would be too time consuming. This means that in some extreme areas when we have not yet reach saturation (the yellow dots in the figures), the empirical decision threshold is taken to be lower than the true empirical decision threshold.

\begin{figure}
    \centering
    \begin{subfigure}[t]{0.49\textwidth}
      \centering
      \includegraphics[width=\textwidth]{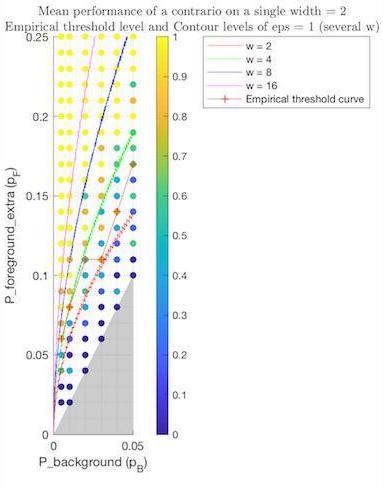}
      \caption{A contrario $w=2$}
    \end{subfigure}%
    \hfill
    \begin{subfigure}[t]{0.49\textwidth}
      \centering
      \includegraphics[width=\textwidth]{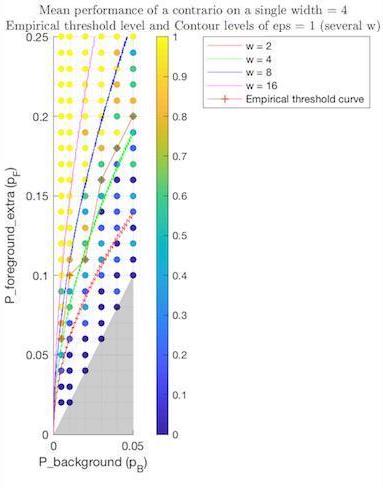}
      \caption{A contrario $w=4$}
    \end{subfigure}%
    \caption{Empirical performance of single width a contrario algorithms with $w=2$ and $w=4$ and their comparison with the predicted a priori mathematical model. The empirical performance fits very well for the width $w=4$ but less so for width $w=2$. This is mainly due to the issue of digitisation. In this case, depending on the orientation of the line, a width of $w=2$ pixels of the line means we look on each side at pixels within distance $1$. For horizontal or vertical edges, this means we are looking at band of width $3$ pixels whereas for a perfectly diagonal edge, only the pixels that are exactly on the line will be considered. This digitisation impact is significant for $w=2$ since in this case it drastically changes the size of the candidate area and its number of pixels. The effect is less important for larger widths and can be forgotten. In our mathematical model, we did not worry about digitisation artefacts and always assumed that each candidate sample had the same area: $n_w = [Lw]-2$.} 
\label{fig: single widhts a contrario vs prediction static w 2 4}
\end{figure}

\begin{figure}
    \centering    
    \begin{subfigure}[t]{0.49\textwidth}
      \centering
      \includegraphics[width=\textwidth]{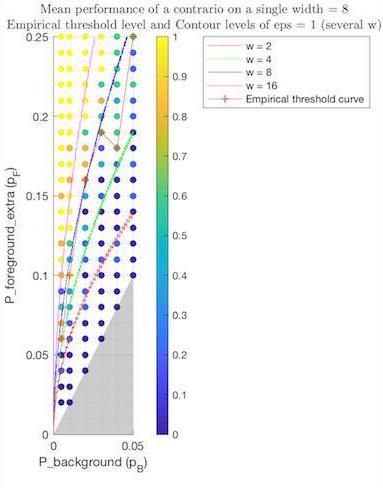}
      \caption{A contrario $w=8$}
    \end{subfigure}%
    \hfill
    \begin{subfigure}[t]{0.49\textwidth}
      \centering
      \includegraphics[width=\textwidth]{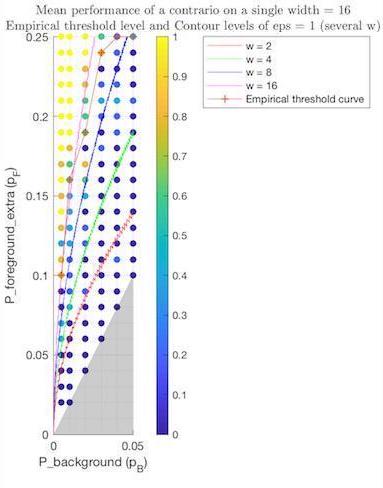}
      \caption{A contrario $w=16$}
    \end{subfigure}%
    \hfill
\caption{Empirical performance of single width a contrario algorithms with $w=8$ and $w=16$ and their comparison with the predicted a priori mathematical model. The empirical performance fit well to the mathematical prediction. } 
\label{fig: single widhts a contrario vs prediction static w 8 16}
\end{figure}

\subsection{Humans and A Contrario}
\label{exp1}

In this Section we present empirical results comparing human performance and the a contrario algorithm, to challenge the a contrario model of the human visual system. We devised a set of experiments as described below. The experiments detailed here were approved by the Ethics committee of the Technion as conforming to the standards for performing psychophysics experiments on volunteering human subjects.

\paragraph{Stimuli} The stimuli is the random-dot images dataset we have randomly generated and previously described. All subjects were presented with the entire dataset (620 images). The order in which they are shown is chosen at random and therefore differs between subjects.

\paragraph{Apparatus} Each subject was tested on exactly the same display. The display is the screen of a MacBook Pro retina 13-inch display, from mid 2014. Each subject was seated directly in front of the screen. The images were displayed in a well-lit room. The average distance between the subjects' eyes and the screen was about $70\,\mathrm{cm}$. The $300\times300$ image is displayed having a size of $10.4 \,\mathrm{cm}\times10.4\,\mathrm{cm}$. This translates to a pixel size of approximately $0.35\,\mathrm{mm}\times0.35\,\mathrm{mm}$. On average the visual angle for observing the image is approximately $8.5°\times8.5°$. Next to the image, we displayed a red box for user decisions (see the procedure paragraph) of size $3.7\,\mathrm{cm}\times3.7\,\mathrm{cm}$. The average visual angle for the red box is approximately $3.0°\times3.0°$. The distance between the border of the image and the border of the red box is $5.1\,\mathrm{cm}$. The average visual angle for the separation between the image and the box is of approximately $4.1°$. See Figure \ref{fig: screenshot exp 1} for a screenshot of the display.

\paragraph{Subjects} The experiments were performed on the first author and on thirteen other subjects. All the other subjects were unfamiliar with psychophysical experiments. All subjects were international students of the university, coming from various countries around the globe including China, Vietnam, India, Germany, Greece, the United States of America and France. Gender parity was almost respected, with eight males and six females. The age range of the subjects was between 20 and 31 years of age. All subjects had normal or corrected to normal vision. To the best of our knowledge, no subject had any mental condition. All other subjects were unaware of the aim of our study.

\paragraph{Procedure} Subjects were told that each image they would be presented with would consist in random points but that there is an alignment of points that should form a straight edge at some random position of length equal to two thirds of the horizontal dimension of the image. They were asked to try and detect it. In the entire experiment, a Matlab window covers the entire screen. In the left we display the random image the subject must work on. On the right we have the fixed red square. If the subjects could not see the edge, they were instructed to click once on the red box and the next image would appear. If they did see and edge, they had to click twice on the edge to roughly define its extremities. Subjects were encouraged to click on locations as far as possible while still on the edge of length two-thirds of the image width. They were told that if they could only see a sub-part of the total edge that was relevant then it was all right to click on what they see as the edge. They were strongly encouraged to not click on very small alignments or to click at the same positions. They were also encouraged to be as precise as possible. They were strongly encouraged to click on the red box in case of doubt and were told that it is all right to click on the red box if they saw no edge. The cursor consisted in an easily visible cross thin enough so as not to harmfully obstruct visibility. For the mouse used for clicking, subjects had the choice between using the built-in touchpad of the MacBook Pro or to use a Asus N6-Mini mouse. If the subject clicked on any area not in the image or the red box, that click was dismissed. If the subject clicked once on the image and then on the red square then the next image was shown. If the subject clicked twice on the image the next image is shown. Subjects had a 10 second limit to answer for each image (to click on the red box or click twice on the image), after which the next image would be automatically shown. They were encouraged to click on the image if the detection was obvious for them. However we did not explain or give a definition of \say{obvious} to subjects. Subjects were not shown a progress bar. Subjects could not take a break once having started the experiment, but were allowed to abandon the experiment at any time, should they desire it. The time for each click, along with the pixel in the image corresponding to a click on the random image were saved.

\paragraph{Discussion} Performance of individuals was measured as previously as a $0-1$ score where $0$ would be a decision to not see, to run out of time or to make an incorrect detection. On the other hand, $1$ is given for a correct decision. We had to allow a high tolerance for the clicking as subjects tended to not be particularly accurate in the position of the click compared to what they saw. A correct decision had to satisfy the three following tolerances. The angle of the line between both clicks had to be within a range of $0.1\,\mathrm{rad}$ of the angle of the line (near parallelism test). The maximum distance between a click and its orthogonal projection onto the true line had to be smaller or equal to $15$ pixels (distance to line test). Each click had to be no further away than the mid point of the true edge than by half the length of the edge plus a tolerance of $20$ pixels. The experimental results are plotted in Figures \ref{fig: static perf 0 and all} and \ref{fig: static perf 1 to 4} to \ref{fig: static perf 13}. First, the predicted (and empirical) threshold curves of single widths a contrario seem particularly relevant to the performance of humans. Second, the performance differs between individuals. Not all decision thresholds are exactly at the same position. Most seem to fit well around the threshold line of $w=8$ for low and moderate $p_b$ but some tend to become closer to the line $w=4$ for high $p_b$. While this can be due to intrinsic differences between each individual, it is also linked to the fact that not all subjects made the same mental effort to detect the lines. In particularly, those who decided faster to detect or reject seemed to fit better to a higher width a contrario than those who took their time. This is unsurprising since those who take their time can focus better and use the multiscale of the brain rather than solely rely on their \say{default vision} when looking at such data. Nevertheless, we can say that the single width a contrario process with $w=8$ seems to be a good candidate for modelling human performance when looking at random-dot static data corresponding to an image of a static edge. From now on we fix this as the choice for $w$. Note that this width corresponds to a visual angle of $\theta=0.23°$.

\begin{figure}
    \centering
    \includegraphics[width=\textwidth]{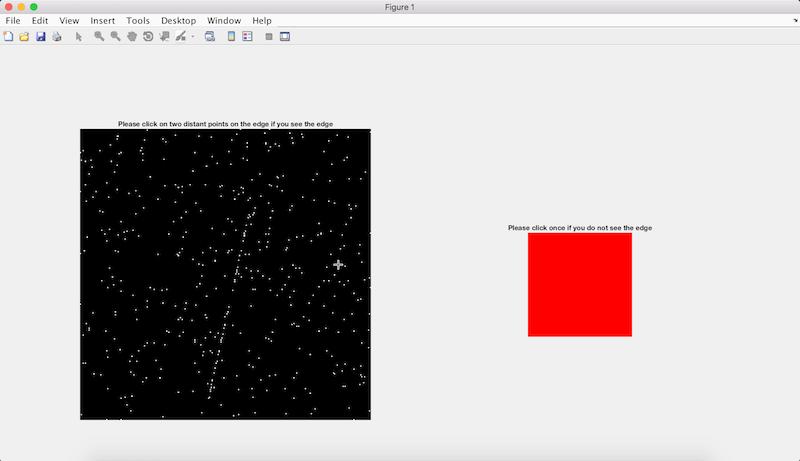}
    \hfill
    \caption[Short version]{Screenshot of the display.}
    \label{fig: screenshot exp 1}
\end{figure}

\begin{figure}
    \centering
    \begin{subfigure}[t]{0.49\textwidth}
      \centering
         \includegraphics[width=\textwidth]{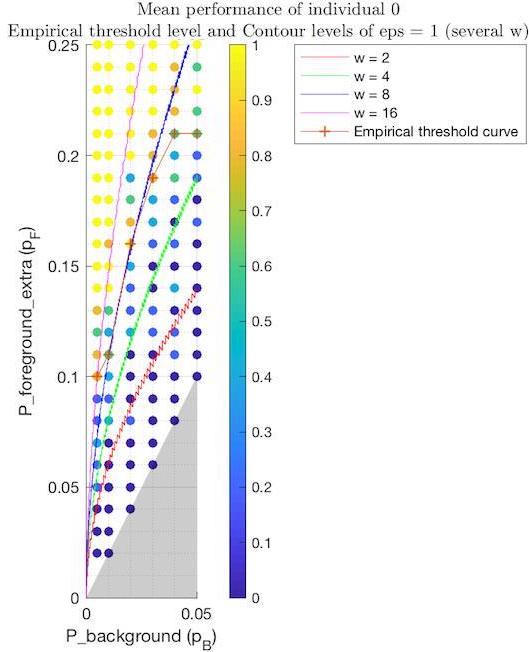}
    \end{subfigure}%
    \hfill
    \begin{subfigure}[t]{0.49\textwidth}
      \centering
         \includegraphics[width=\textwidth]{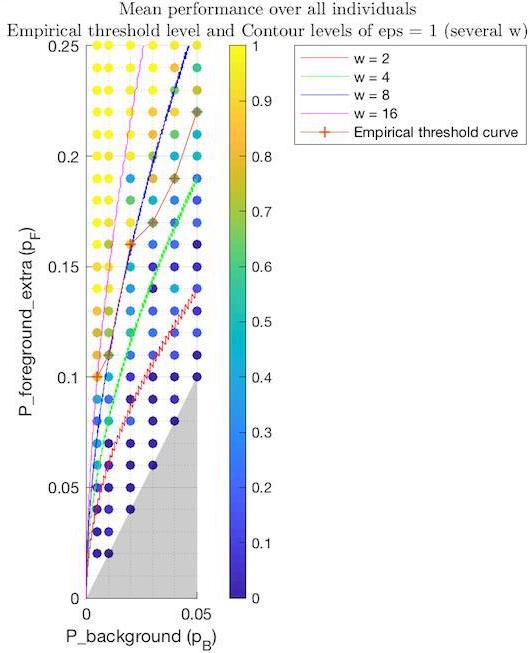}
    \end{subfigure}%
    \hfill
\caption{Empirical performance of humans on static images of a static edge. Left: performance of a single person. Right: average performance over all subjects. Each dot corresponds to a pair of degradation parameters, each corresponding to $5$ samples. The colour corresponds to the averaged score on those samples per degradation parameter. A yellow dot corresponds to perfect success in recovering the line whereas as a dark blue dot corresponds to systematic failure to recover the line. The transition zone seems to fit the transition zones for a contrario. More formally, the empirical decision level fits quite well the predicted performance of a single width a contrario algorithm with candidate widths $w=8$.}
\label{fig: static perf 0 and all}
\end{figure}

\begin{figure}
    \centering
    \begin{subfigure}[t]{0.49\textwidth}
      \centering
         \includegraphics[width=\textwidth]{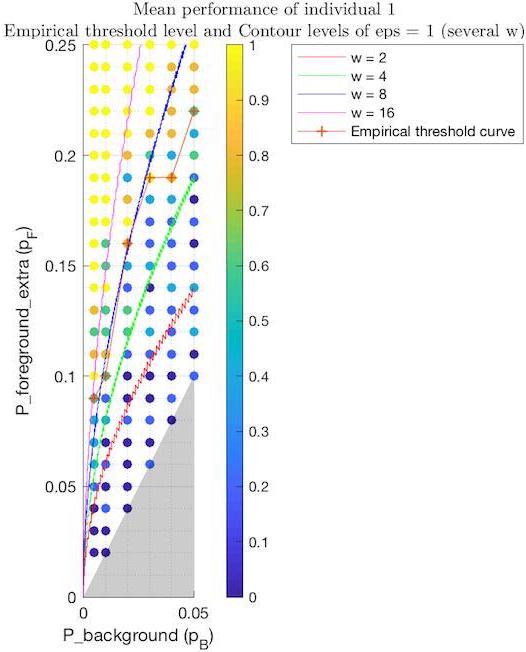}
    \end{subfigure}%
    \hfill
    \begin{subfigure}[t]{0.49\textwidth}
      \centering
         \includegraphics[width=\textwidth]{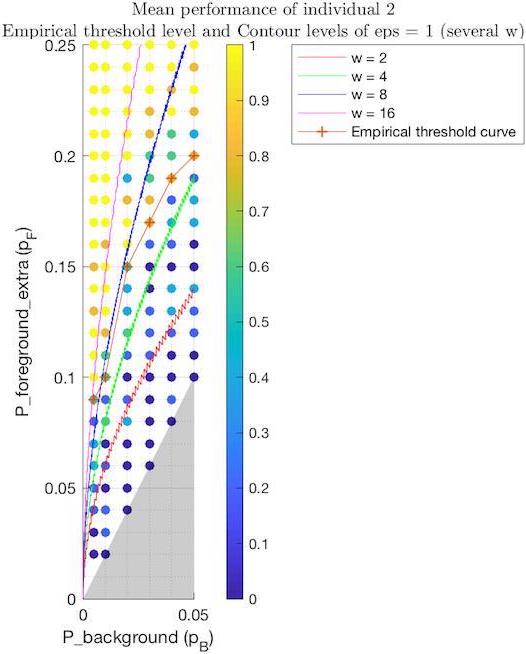}
    \end{subfigure}%
    \hfill
    \centering
    \begin{subfigure}[t]{0.49\textwidth}
      \centering
         \includegraphics[width=\textwidth]{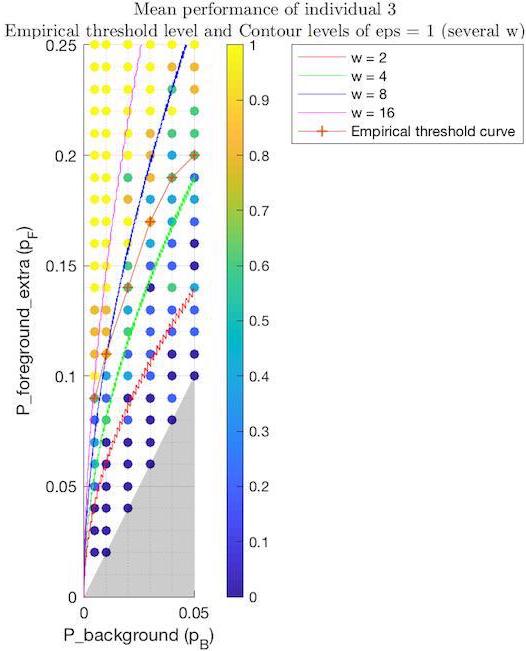}
    \end{subfigure}%
    \hfill
    \begin{subfigure}[t]{0.49\textwidth}
      \centering
         \includegraphics[width=\textwidth]{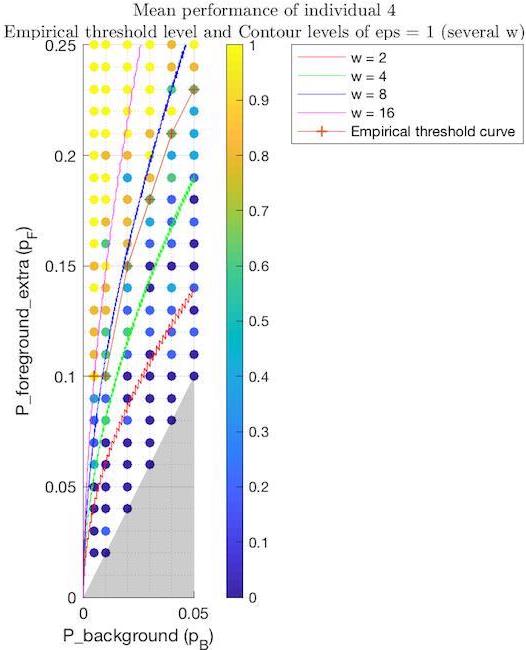}
    \end{subfigure}%
\caption{Empirical performance of humans on static images of a static edge for subjects 1 to 4. The empirical performance of most subjects lies close to the predicted performance of a contrario working on $w=8$. Some humans tend to have a performance that goes down towards the predicted performance of a contrario on $w=4$ for high $p_b$. This could be due to the fact that humans inevitably use to some extent multiscale and also due to the fact that some subjects made more effort to detect the line and by focusing more they could tune their visual angle to a smaller angle.}
\label{fig: static perf 1 to 4}
\end{figure}
\begin{figure}
    \centering    
    \begin{subfigure}[t]{0.49\textwidth}
      \centering
         \includegraphics[width=\textwidth]{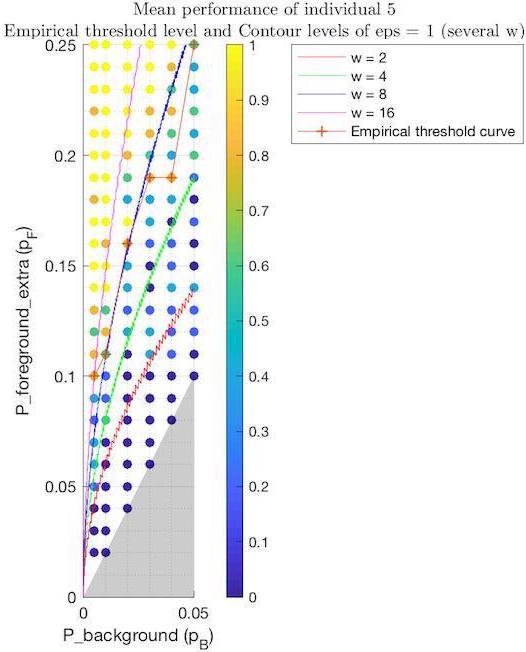}
    \end{subfigure}%
    \hfill
    \begin{subfigure}[t]{0.49\textwidth}
      \centering
         \includegraphics[width=\textwidth]{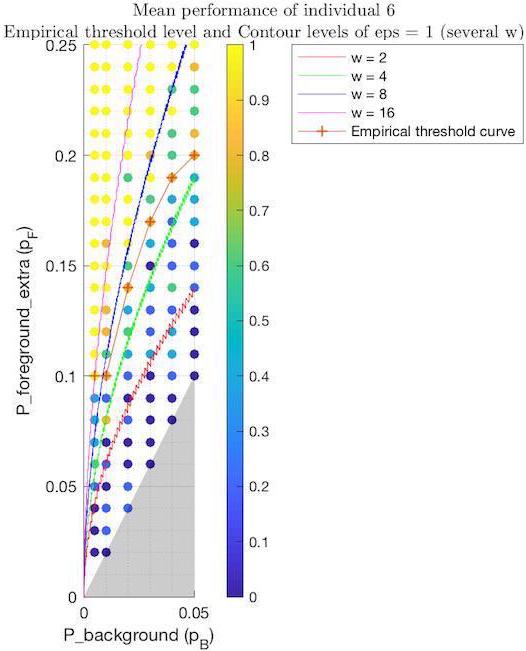}
    \end{subfigure}%
    \hfill
    \centering    
    \begin{subfigure}[t]{0.49\textwidth}
      \centering
         \includegraphics[width=\textwidth]{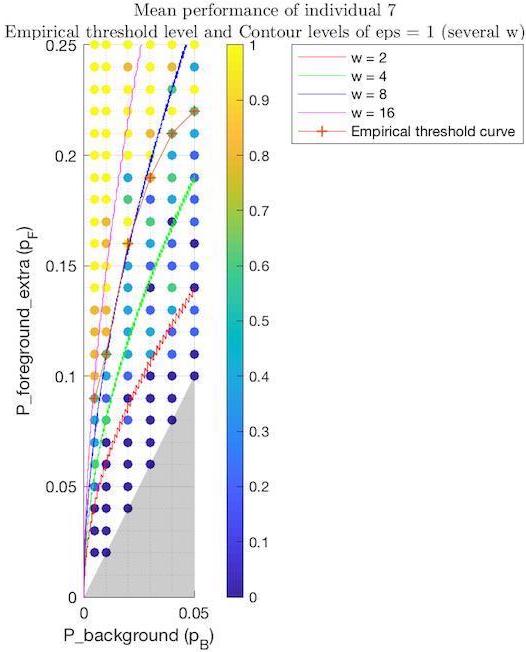}
    \end{subfigure}%
    \hfill
    \begin{subfigure}[t]{0.49\textwidth}
      \centering
         \includegraphics[width=\textwidth]{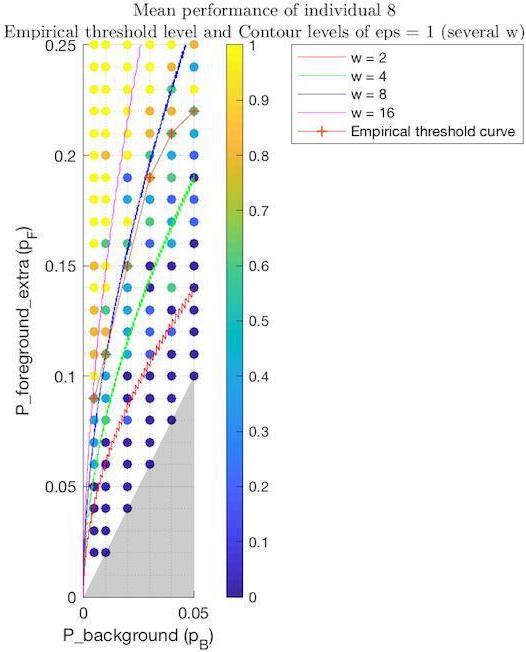}
    \end{subfigure}%
\caption{Empirical performance of humans on static images of a static edge for subjects 5 to 8. The empirical performance of most subjects lies close to the predicted performance of a contrario working on $w=8$. Some humans tend to have a performance that goes down towards the predicted performance of a contrario on $w=4$ for high $p_b$. This could be due to the fact that humans inevitably use to some extent multiscale and also due to the fact that some subjects made more effort to detect the line and by focusing more they could tune their visual angle to a smaller angle.}
\label{fig: static perf 5 to 8}
\end{figure}
\begin{figure}
    \centering    
    \begin{subfigure}[t]{0.49\textwidth}
      \centering
         \includegraphics[width=\textwidth]{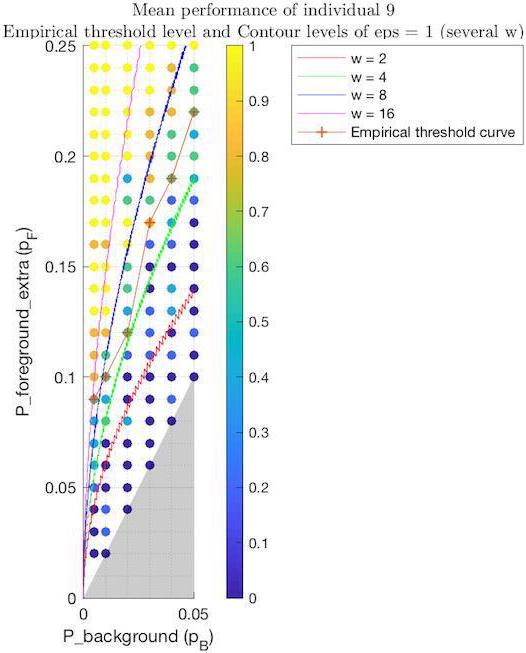}
    \end{subfigure}%
    \hfill
    \begin{subfigure}[t]{0.49\textwidth}
      \centering
         \includegraphics[width=\textwidth]{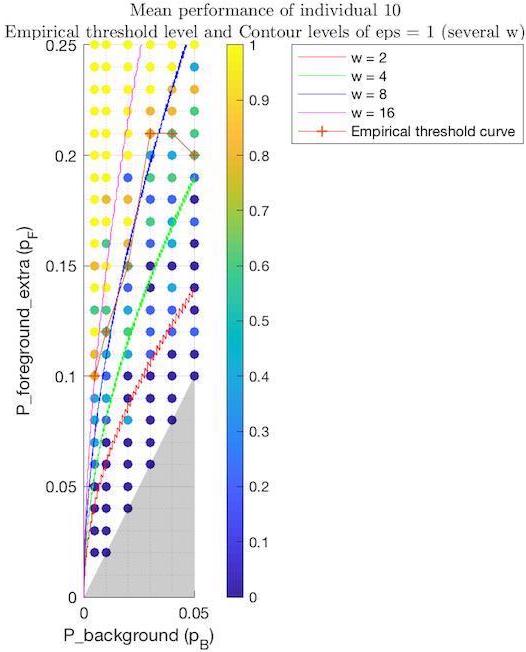}
    \end{subfigure}%
    \hfill
    \centering    
    \begin{subfigure}[t]{0.49\textwidth}
      \centering
         \includegraphics[width=\textwidth]{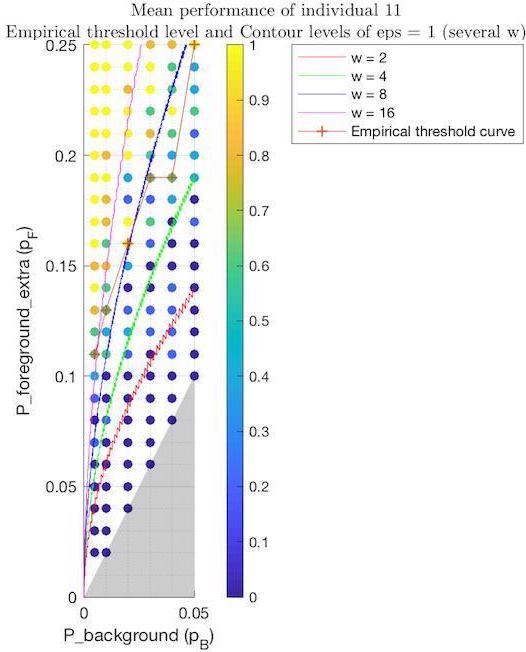}
    \end{subfigure}%
    \hfill
    \begin{subfigure}[t]{0.49\textwidth}
      \centering
         \includegraphics[width=\textwidth]{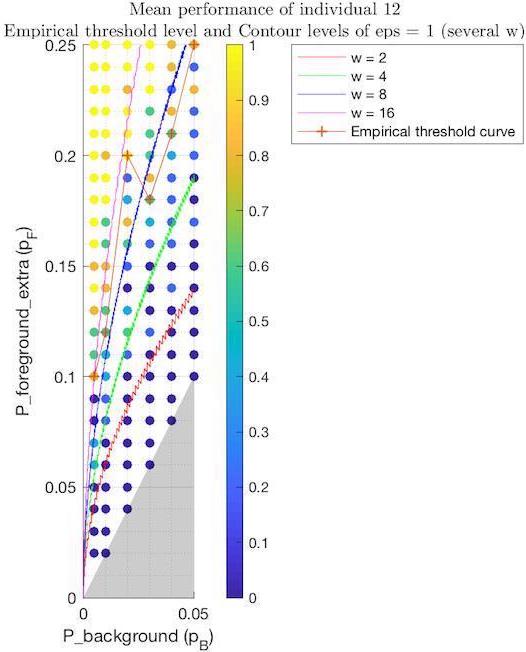}
    \end{subfigure}%
\caption{Empirical performance of humans on static images of a static edge for subjects 9 to 12. The empirical performance of most subjects lies close to the predicted performance of a contrario working on $w=8$. Some humans tend to have a performance that goes down towards the predicted performance of a contrario on $w=4$ for high $p_b$. This could be due to the fact that humans inevitably use to some extent multiscale and also due to the fact that some subjects made more effort to detect the line and by focusing more they could tune their visual angle to a smaller angle.}
\label{fig: static perf 11 12}
\end{figure}
\begin{figure}
    \centering    
    \begin{subfigure}[t]{0.49\textwidth}
      \centering
         \includegraphics[width=\textwidth]{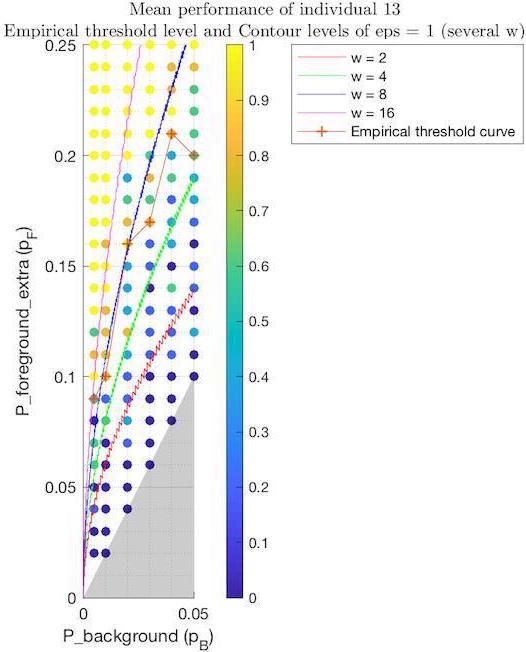}
    \end{subfigure}%
    \hfill
\caption{Empirical performance of humans on static images of a static edge for subject 13. The empirical performance of most subjects lies close to the predicted performance of a contrario working on $w=8$. Some humans tend to have a performance that goes down towards the predicted performance of a contrario on $w=4$ for high $p_b$. This could be due to the fact that humans inevitably use to some extent multiscale and also due to the fact that some subjects made more effort to detect the line and by focusing more they could tune their visual angle to a smaller angle.}
\label{fig: static perf 13}
\end{figure}

\clearpage
\newpage
\section{Analysis of the Dynamic Edge Case}
\label{sec: analysis dynamic edge}

In section \ref{sec: analysis static edge}, we analysed the case of a single image, or a merge of image of a video, of a non moving line. Now consider that the foreground object in the video is a line of known length $L_e$ and width $w_e=1$ that moves in a non static fashion. For mathematical simplicity, we shall consider that the edge undergoes a uniform translatory movement orthogonal to itself with speed ${v_e = 1 \, \mathrm{pixel/frame}}$. Recall that we defined pixel size as the unit side size of a pixel. This means that in a second, under a frame-rate of $30\,\mathrm{frames/second}$, the edge will have moved by 30 pixels, which is fast. While this is simple, it is not unreasonable as shown in demos of natural movements (see \cite{dagesMeDemo} where the torso of the walker moves at about ${v_e \approx 1 \, \mathrm{pixel/frame}}$). Note that it is also the lowest speed for which no matter how many merges of frames we do, a pixel in the image can only be traversed by the edge in at most one frame. We shall analyse this case similarly to the static edge case.

We shall denote $I_{(w)}$ the non degraded image of an edge of length $L_e$ and width $w$. It can be also be seen as the merge of the successive $w$ non degraded frames of a video of a dynamic edge.

Consider a sequence of $n_{f}$ consecutive frames in a video. Denote $I_1,\hdots,I_{n_{f}}$ the corresponding non degraded frames. Denote the degraded frames of the sequence $I_{j}^D = \phi_{im}(I_j,p_b,p_f)$ for $j\in{1,\hdots,n_{f}}$. Denote the merged frame: $I^{D,M} = \bigvee\limits_{1\le j \le n_{f} }I_{j}^{D}$. Recall that $\bigvee$ is the pixelwise logical OR operator. We will also use $\bigwedge$ the pixelwise logical AND operator. Conditionally to the groundtruth (or displacement of the edge), the degraded frames are independent.

For mathematical simplicity, we will here assume that the line is vertical moving in the horizontal direction. In this case we have the following theorem:

\bigskip
\begin{theorem}
    For $t\ge 1$ and $(I_1,\hdots,I_t)$ a sequence of static non random boolean images of a vertical line moving orthogonally to itself (horizontally) of length $L_e$ and at a speed of $v_e = 1\,\mathrm{pixel/frame}$. Denote $I_{(t)} = \bigvee\limits_{1\le j \le t}I_{j}$ the static non random merge of non degraded image. We have that $I_{(t)}$ a clean image of a rectangle of length $L_e$ and width $t$. For some degradation parameters $(p_b,p_f)$, denote $I_j^D = \phi_{im}(I_j,p_b,p_f)$ the degraded frames. They are independent conditionally to the displacement of the line. Denote $I^{D,M} = \bigvee\limits_{1\le j \le t}I_{j}^{D}$. Then we have $I^{D,M} \sim \phi_{im}(I_{(t)},(p_b^M)_{t},(p_f^M)_t)$ where $(p_b^M)_t = 1-(1-p_b)^t$ and $(p_f^M)_t = p_f$. This theorem generalises naturally only conditionally to $I_{(w)}$.
\end{theorem}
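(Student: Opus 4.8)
\emph{Proof plan.} The plan is to mirror the proof of the static-edge theorem, exploiting that $\phi_{im}(J,a,b)$ is, by definition, the law of a binary image whose pixels are independent and whose activation marginal equals $a$ on the background of $J$ and $1-(1-a)(1-b)$ on the foreground of $J$. Hence it suffices to (i) identify the foreground/background of $I^{D,M}$ with those of $I_{(t)}$, (ii) check that the pixels of $I^{D,M}$ are independent, and (iii) compute the two marginal activation probabilities. First I would record the geometry: with the vertical line sitting in column $j$ in frame $j$ and occupying a fixed set $R$ of $L_e$ rows, the clean merge $I_{(t)}=\bigvee_{j}I_j$ is exactly the rectangle $R\times\{1,\dots,t\}$, i.e.\ a clean image of a rectangle of width $t$ and length $L_e$, which is the first assertion of the statement.

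The only genuinely new point, compared with the static case, concerns the foreground. For a pixel $i=(r,c)$ with $r\in R$ and $1\le c\le t$, the key observation is that $i$ lies on the moving line in \emph{exactly one} frame, namely frame $c$ (this is precisely where $v_e=1$ is used), and is a background pixel in the other $t-1$ frames. Using independence of the degraded frames conditionally on the trajectory, together with the background/foreground independence built into $\phi_{im}$,
\begin{align*}
    \mathbb{P}\big((I^{D,M})_i=0 \mid i\in F(I_{(t)})\big)
    &= \mathbb{P}\big((I_c^D)_i=0\big)\prod_{j\neq c}\mathbb{P}\big((I_j^D)_i=0\big) \\
    &= (1-p_1)(1-p_b)^{t-1} = (1-p_b)(1-p_f)(1-p_b)^{t-1} = (1-p_b)^t(1-p_f),
\end{align*}
where $p_1=p_b+p_f-p_bp_f$, so the foreground marginal is $1-(1-p_b)^t(1-p_f)$. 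A pixel lying outside $R\times\{1,\dots,t\}$ — including pixels in columns $1,\dots,t$ but off the rows $R$ — is a background pixel in every frame, so the static-case computation applies verbatim and gives background marginal $1-(1-p_b)^t$.

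It then remains to match parameters and check independence. Each pixel of $I^{D,M}$ is a deterministic function (the ``or'') of the $t$ values of that same pixel across the frames; these $t$-tuples involve pairwise disjoint, mutually independent families of Bernoulli variables, so the pixels of $I^{D,M}$ are independent. Identifying this image with $\phi_{im}(I_{(t)},(p_b^M)_t,(p_f^M)_t)$, the background marginal forces $1-(p_b^M)_t=(1-p_b)^t$, i.e.\ $(p_b^M)_t=1-(1-p_b)^t$; plugging this into the required foreground identity $1-(1-(p_b^M)_t)(1-(p_f^M)_t)=1-(1-p_b)^t(1-p_f)$ and cancelling $(1-p_b)^t$ yields $1-(p_f^M)_t=1-p_f$, hence $(p_f^M)_t=p_f$. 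The statement conditional on $I_{(w)}$ is the same assertion with $t$ replaced by $w$; since every step above is already conditional on the deterministic displacement of the line, nothing changes.

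The computation is routine; there is no deep obstacle. The one step that deserves care — and the reason the foreground parameter collapses to $p_f$ while the background parameter grows with $t$ — is the observation that a pixel of the swept band is a foreground pixel in a \emph{single} frame. This is exactly what $v_e=1\,\mathrm{pixel/frame}$ buys us: for a smaller speed or a thicker edge, interior pixels of the band would lie on the line for several consecutive frames, their activation marginals would differ across the band, and $I^{D,M}$ would no longer be a $\phi_{im}$ image with one foreground parameter. So the ``hard part'' is simply making that book-keeping airtight.
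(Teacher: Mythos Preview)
Your argument is correct and follows essentially the same route as the paper: isolate the crucial observation that, at speed $v_e=1$, every pixel of the swept band lies on the line in exactly one frame, then compute the two marginals and solve for $(p_b^M)_t$ and $(p_f^M)_t$. The only place you diverge is the reading of the final clause: in the paper, ``conditionally to $I_{(t)}$'' means conditioning only on the merged clean rectangle (hence averaging over the two equiprobable left/right sweep directions, which give identical marginals), not merely renaming $t$ to $w$; this is a trivial extra step but you should be aware that is what is meant.
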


\begin{proof}
    The proof is similar to the static case. It consists in looking at the probabilities of white pixels appearing on the merged image conditionally to the positions of the line. First we will work conditionally to $(I_1,\hdots,I_t)$ and then generalise to working conditionally to $I_{(w)}$ and then further generalise to a random initial configuration of the line and its initial displacement direction.

    Denote $F_j = \{i; (I_j)_i = 1\}$ and $B_j = \{i; (I_j)_i = 0\}$ the foreground and background of the images. Denote $F = \bigvee\limits_{1\le j \le t}F_{j}$ and $B = \bigwedge\limits_{1\le j \le t}B_{j}$ the union of foreground pixels and the intersection of background pixels of each frames. We naturally have that $F = \{i; (I_{(t)})_i = 1\}$ and $B = \{i; (I_{(t)})_i = 0\}$ are the foreground and background areas in the merged image. We can define for every pixel $i\in F$ the set $F(i) = \{j\in\{1,\hdots,t\},i\in F_j\}$. This set is deterministic conditionally to the displacement of the line. Since the movement of the line is uniform, we have that $\#F(i)$ is independent on $j\in F$. We denote this constant as $s = \# F(i)$ for any $i\in F$. Note that since here the speed is $1\,\mathrm{pix/frame}$, any pixel traversed by the line in the $t$ frames can only have been traversed at only one of the frames. This means that for any $i\in F$, $F(i)$ is a set reduced to a single element (singleton) which means that $s=1$. Lower speeds would mean greater $s$.
    
    We have that, recalling the independence between frames (conditionally to the displacement of the line):
    \begin{align*}
        (p_0^M)_t &\triangleq \mathbb{P}((I^{D,M})_i = 1 \mid i\in B,(I_1,\hdots,I_t),p_b,p_f) \\
        &= 1-\mathbb{P}(\bigwedge\limits_{j=1}^{t}(I_{j}^D)_i = 0\mid i\in B,(I_1,\hdots,I_t),p_b,p_f) \\
        &= 1-\prod\limits_{j=1}^{t}\mathbb{P}((I_{j}^D)_i = 0\mid i\in B,I_1,\hdots,,I_t,p_b,p_f) \\
        &= 1-(1-p_b)^{t}
    \end{align*}
    and similarly, for the same reasons and since the foreground and background signals are independent:
    \begin{align*}
        (p_1^M)_t &\triangleq \mathbb{P}((I^{D,M})_i = 1 \mid i\in F,(I_1,\hdots,I_t),p_b,p_f) \\
        &= 1-\mathbb{P}(\bigwedge\limits_{j=1}^{t}(I_{j}^D)_i = 0\mid i\in F,(I_1,\hdots,I_t),p_b,p_f) \\
        &= 1-\mathbb{P}(\bigwedge\limits_{\substack{j=1\\j\notin F(i)}}^{t}(I_{j}^D)_i = 0\mid i\in F,(I_1,\hdots,I_t),p_b,p_f) \\
        &\qquad \times\mathbb{P}(\bigwedge\limits_{\substack{j=1\\j\in F(i)}}^{t}(I_{j}^D)_i = 0\mid i\in F,(I_1,\hdots,I_t),p_b,p_f)\\
        &= 1-\prod\limits_{\substack{j=1\\j\notin F(i)}}^{t}\mathbb{P}((I_{j}^D)_i = 0\mid i\in F,I_,p_b,p_f)\\ 
        &\qquad \quad \times\prod\limits_{\substack{j=1\\j\in F(i)}}^{t}\mathbb{P}((I_{j}^D)_i = 0\mid i\in F,I_,p_b,p_f)\\
        &= 1-(1-p_b)^{t-\# F(i)}\Big( (1-p_b)(1-p_f) \Big) ^{\# F(i)} \\
        &= 1-(1-p_b)^{t}(1-p_f)^{\# F(i)} \\
        &= 1-(1-p_b)^{t}(1-p_f)^{s} \\
        &= 1-(1-p_b)^{t}(1-p_f) 
    \end{align*}
    
    We now work only conditionally to $I_{(w)}$. In this case, there are only two, equally probable, scenarios given the assumption on the type of displacement the line is going through. Either the sequence of non degraded images is $(I_1,\hdots,I_t)$, or it is the flipped one $(I_t,\hdots,I_1)$. We shall call arbitrarily for simplicity $(I_1,\hdots,I_t)$ the normal sequence and $(I_t,\hdots,I_1)$ the reverse. We shall use the $N$ and $R$ upper indices as natural notations referring to the normal sequence case and the reverse sequence case. Both sequences have their corresponding sequences of degraded images and the corresponding foreground and background regions. What is important to point out is that regardless of which sequence we are considering, if $i\in F$, then $F^R(i) = \{t-j,j\in F^N(i)\}$ which implies $\#F^N(i) = \#F^R(i) = s$.
    
    We thus have, using the result proven conditionally to one of the sequence:
    \begin{align*}
        &\mathbb{P}((I^{D,M})_i = 1 \mid i\in B,I_{(t)},p_b,p_f) \\
        &\qquad= \mathbb{P}((I^{D,M})_i = 1 \mid i\in B,(I_1,\hdots,I_t),p_b,p_f)\mathbb{P}((I_1,\hdots,I_t)) \\
        &\qquad\qquad + \mathbb{P}((I^{D,M})_i = 1 \mid i\in B,(I_t,\hdots,I_1),p_b,p_f)\mathbb{P}((I_t,\hdots,I_1)) \\
        &\qquad= \frac{1}{2}\mathbb{P}((I^{D,M})_i = 1 \mid i\in B,(I_1,\hdots,I_t),p_b,p_f)\\
        &\qquad\qquad+\frac{1}{2}\mathbb{P}((I^{D,M})_i = 1 \mid i\in B,(I_t,\hdots,I_1),p_b,p_f) \\
        &\qquad= \frac{1}{2}(1-(1-p_b)^{t}) + \frac{1}{2}(1-(1-p_b)^{t}) \\
        &\qquad= 1-(1-p_b)^{t}
    \end{align*}
    and similarly:
    \begin{align*}
        &\mathbb{P}((I^{D,M})_i = 1 \mid i\in F,I_{(t)},p_b,p_f) \\
        &\qquad= \mathbb{P}((I^{D,M})_i = 1 \mid i\in F,(I_1,\hdots,I_t),p_b,p_f)\mathbb{P}((I_1,\hdots,I_t)) \\
        &\qquad\qquad+ \mathbb{P}((I^{D,M})_i = 1 \mid i\in F,(I_t,\hdots,I_1),p_b,p_f)\mathbb{P}((I_t,\hdots,I_1)) \\
        &\qquad= \frac{1}{2} \mathbb{P}((I^{D,M})_i = 1 \mid i\in F,(I_1,\hdots,I_t),p_b,p_f) \\
        &\qquad\qquad+ \frac{1}{2} \mathbb{P}((I^{D,M})_i = 1 \mid i\in F,(I_t,\hdots,I_1),p_b,p_f) \\
        &\qquad= \frac{1}{2}\Big(1-(1-p_b)^{t}(1-p_f)^{\# F^N(i)}\Big) + \frac{1}{2}\Big(1-(1-p_b)^{t}(1-p_f)^{\# F^R(i)}\Big) \\
        &\qquad= 1-(1-p_b)^{t}(1-p_f)^{s} \\
        &\qquad= 1-(1-p_b)^{t}(1-p_f)
    \end{align*}
    
    Hence we have $(p_b^M)_t = (p_0^M)_t = 1-(1-p_b)^{t}$ and $(p_1^M)_t = \mathbb{P}((I^{D,M})_i = 1 \mid i\in F,I_{(t)},p_b,p_f)$ the probabilities of respectively a background pixel to appear in the merge image and of a foreground pixel to appear in the merge image. We want to find $(p_f^M)_t$ such that:
    \begin{align*}
        (p_1^M)_t = 1-(1-(p_b^M)_t)(1-(p_f^M)_t) &\iff (p_f^M)_t \triangleq \frac{(p_1^M)_t - (p_b^M)_t}{1-(p_b^M)_t} \\
        &\iff (p_f^M)_t = \frac{(1-p_b)^t(1-(1-p_f)^{s})}{(1-p_b)^t} \\
        &\iff (p_f^M)_t = 1-(1-p_f)^s = p_f
    \end{align*}
    Since $(p_b^M)_t$ and $(p_f^M)_t$ are independent of $i$, they are valid for the entire image and so we can conclude that $I^{D,M}\sim \phi_{im}(I_{(t)},(p_b^M)_t,(p_f^M)_t)$.

\end{proof}

Note that the result is almost trivial for the change for $p_b$ in the merge of frames, it is not as straightforward for the change for $p_f$. Also note that for other line orientations, due to digitisation, there might be some overlap in two successive frames over some pixels and so the theorem is in those cases not exact. Nevertheless, the impact is minor and we will still use the equivalence in those cases.

Therefore increasing $n_{f}$ and looking at $I^{D,M}$ is equivalent to degrading $I$ with some specifically increasing $p_b$, in the way given by the formula above with $t=n_f$, and with $p_f$ constant.

\subsection{A Theory for Performance Levels of A Contrario}

Similarly to the static case, we can here as well estimate a priori how the a contrario algorithm will perform should we feed it with some image $I^{D,M} \sim \phi_{im}(I_{(t)},p_b,p_f)$.

However a tricky issue must be pointed out. In the static case, when looking at values of $\widehat{\mathcal{N}_{FA}^{*}}_{w}(p_b,p_f)$, it was equivalent to consider that we were looking at a single frame generated as $\phi_{im}(I,p_b,p_f)$ or as a merge of several frames with each frame generated as $I^D \sim \phi_{im}(I,\widetilde{p_b},\widetilde{p_f})$, with in particular that $\widetilde{p_b}<p_b$ such that $1-(1-\widetilde{p_b})^t = p_b$ for some non zero integer $t$. In the dynamic case, when looking at the configuration $(p_b,p_f)$, the performance will not be the same if the image we have was degraded from a frame of the video or from a merge of several frames of the video. The reason for this is that in the merged frames the edge now appears thicker. We thus have in the dynamic case that the performance not only depends on $p_b$ and $p_f$ but also on the apparent width of the edge in the image we are considering, or equivalently to how many frames we have merged. See Figure \ref{fig: edge profile} for an illustration of the difference between the static case and the dynamic case when considering merged frames. We will still want to plot performance curves in the $(p_b,p_f)$ plane. In order to do this, we fix the background degradation parameter for the frames of the videos as $p_b^{(1)}$. We then only look at $p_b$ values that are given from merges of degraded frames with $p_b^{(1)}$. This means that we will only look at values of $p_b$ that are $(p_b^M)_t= p_b^{(t)}$ where $p_b^{(t)} = 1-(1-p_b^{(1)})^t$. Each value of $p_b^{(t)}$ defines a vertical line, or column. Each column is implicitly associated with a number merged frames, and so with a rectangle width for the edge in the merged frame we are degrading with the new parameters $(p_b^{(t)},(p_f^M)_t)$. Since, in the dynamic edge case, $(p_f^M)_t$ is constant regardless of looking at an image as a merge of several frames or generated by a single frame with larger width, we do not have to be careful in sampling $p_f$ with associating it to some width of a line. We therefore discretely fix a sampling of $p_b$ corresponding to merges of frames from an initial fixed $p_b^{(1)}$ and sample continuously $p_f$ in our study of the a priori performance of the a contrario algorithm.

\begin{figure}
    \centering
    \begin{subfigure}[t]{0.49\textwidth}
      \centering
         \includegraphics[width=\textwidth]{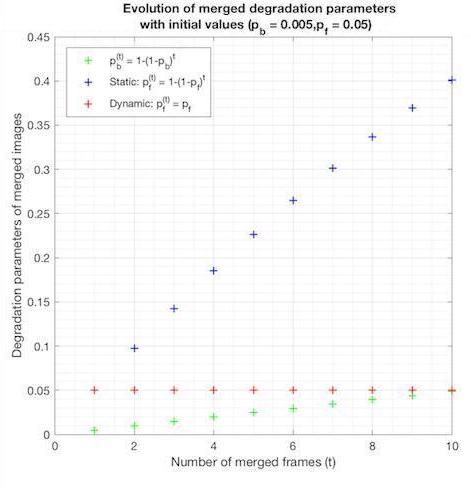}
    \end{subfigure}%
    \hfill
    \begin{subfigure}[t]{0.49\textwidth}
      \centering
         \includegraphics[width=\textwidth]{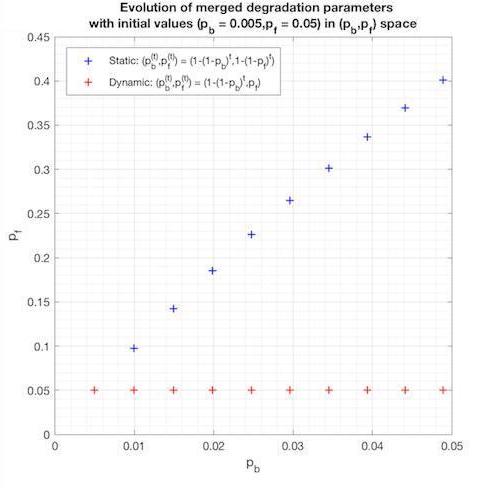}
    \end{subfigure}%
    \hfill
        \begin{subfigure}[t]{0.25\textwidth}
      \centering
         \includegraphics[width=\textwidth]{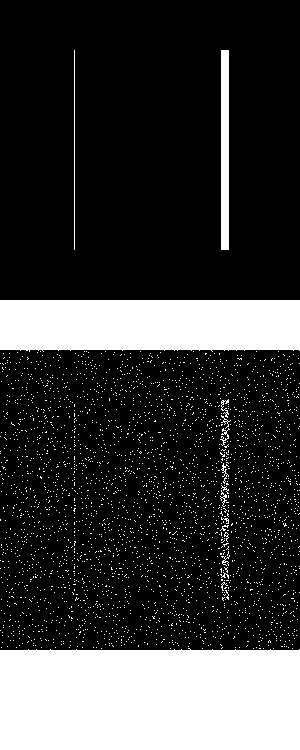}
    \end{subfigure}%
    \hfill
        \begin{subfigure}[t]{0.75\textwidth}
      \centering
         \includegraphics[width=\textwidth]{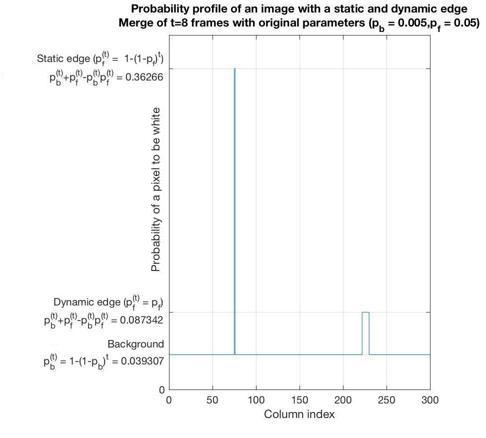}
    \end{subfigure}%
    \hfill
    
\caption{Top left: Evolution of the degradation parameters of the merged image when the number of frames increases. Top right: Same evolution but plotted in the $(p_b,p_f)$ space directly. Left middle: Clean merged image of two edges for $t=8\,\mathrm{frames}$. The one on the left is static and the one on the right is dynamic with a uniform velocity of $1\,\mathrm{pix/frame}$ orthogonal to itself. Left bottom: Degraded version of the merged image. The degradation parameters for each frame are $(p_b=0.005,p_f=0.05)$. Bottom right: Probability profile for pixel appearance conditional to the positions of the edges for a horizontal cross section located at the centre of the vertical axis. Each peak corresponds to an edge ($(p_1^M)_t$), and the rest corresponds to the background ($(p_0^M)_t = (p_b^M)_t$). Note that although the peak is highest for the static edge, it seems easier to see the dynamic edge. This is because although each pixel is less likely to appear, the area for which this is valid is much larger in the dynamic case which allows a lot of points to appear nevertheless.}
\label{fig: edge profile}
\end{figure}

We consider a single width a contrario algorithm working on candidate samples that are rectangles of length $L_e$ and width $w$. For estimating the performance of a contrario a priori, we imagine that we feed the a contrario algorithm with a merge of degraded images of some line $I^{D,M}$ or equivalently a degraded image of a thicker line with higher background density: $I^D \sim \phi_{im}(I_{(t)},p_b^{(t)},p_f)$. At some point, the a contrario algorithm will consider the candidate position that fits the line along its mid-width. This is the best position as it maximise the expected number of points that can appear in a rectangular candidate window of size $L_e\times w$. We want to know whether the algorithm will decide that there is an edge at this location or that there is not. 

As previously, two approaches are possible. The first is to compute the exact\footnote{Not entirely exact as we would use the estimator $\widehat{N_T}$ and plug it into the a contrario score.} probability distribution of the score function $\mathcal{N}_{FA}$. However this can be rather tricky. The other possibility is to take an estimate $\widehat{K_{GT}^*}$ for the number of points to appear in the candidate region placed at the true location of the edge $K_{GT}^*$ and to plug it into the a contrario score and see if it is below or above the $\varepsilon$ level. As in the static case, we will work with the second option: estimating the a contrario score by plugging into it the estimate for $k$ (and an estimate for $N_T$).

Therefore the quantity we wish to estimate is ${\mathcal{N}_{FA}^*}_w(p_b^{(t)},p_f) = N_T\mathbb{P}_B(K\ge k)$, where $K$ is a random variable giving the number of points that appear in this considered candidate position under the background only assumption $p_b^{(t)}$, and $k$ is the realisation of the random number of points $K_{GT}^*$ that actually appear in this candidate position when observing the image $I_{(t)}$ conditionally to the fact that the candidate region is at the true location. By choice of the conditioning to the background assumption, we known the distribution of $K$ and it follows a binomial $K\sim Bin(n_w,p_b)$ where $n_w\approx L_e w$. We also need to estimate $N_T$ and $K_{GT}^*$ in order to estimate the number of false alarms score.  

We sample lines in the same way as in the static case. As such we do not change our estimation of $N_T$.

Similarly, we keep our estimate for $K_{GT}^*$ as $\widehat{K_{GT}^*} = \mathbb{E}(K_{GT}^*)$. As before, we could condition this expectation to the position of the groundtruth rectangle and show that it is independent on its position and orientation up to minor negligible changes due to aliasing as long as the entire rectangle is in the image. The formulae for the statistics of $k$ are identical as in the static case is we replace $w_e$ by $t$ (the width of the line in the merged image, since we took $w_e=1$ here).

Therefore, we can compute the formula for the estimated a contrario score using the formula from the static case and replacing $w_e$ by $t$. We have in the dynamic case, just as in the static case, that $\widehat{\mathcal{N}_{FA}^{*}}(p_b^{(t)},p_f,w) = \widehat{\mathcal{N}_{FA}^{*}}_w(p_b^{(t)},p_f)$ is a function of the degradation parameters $(p_b,p_f)$. We plot the coutour levels of these functions, depending on the width parameter $w$, in Figure \ref{fig: various widths contour level eps 1 dynamic}.

Similarly to the static case, the reason why directly estimating $K_{GT}^*$ as its expectation works so well is because $K_{GT}^*$ is narrowly distributed along its mean. We have that $K_{GT}^*$ is the sum of two independent binomials, the binomials counting the white points on the edge in the merged frame and the one counting the white points in the candidate region that are not on the edge. We can apply the formula we got in the static case with $w_e = t$, and $\widetilde{w}_e = \mathrm{min}(t,w)$:

\begin{equation*}
    \sigma_{K_{GT}^*} = \sqrt{(\frac{\widetilde{w}_e}{w}n_w(p_f+p_b-p_fp_b)(1-p_f-p_b+p_bp_f))^2+(1-\frac{\widetilde{w}_e}{w}n_wp_b(1-p_b))^2}
\end{equation*}

For example, if $L=200$, $p_b^{(1)} = 0.005$, $t = 6$, $p_f = 0.07$ and $w=8$, which are typical values in our later experiments, then $n_w = 1600$, $p_b = p_b^{(6)} = 1-(1-0.005)^6\approx 0.03$, and $\sigma_{K_{GT}^*} \approx 106.3$. Which means that the spread of $k$ is essentially located within a region of size $\frac{2\sigma_{K_{GT}^*}}{n_w} \approx 13\%$ of its possible range. This shows that $k$ is, here too, essentially concentrated around its mean and thus its expectation is a good deterministic estimator.

\begin{figure}
    \centering
    \begin{subfigure}[t]{0.5\textwidth}
      \centering
        \includegraphics[width=\textwidth]{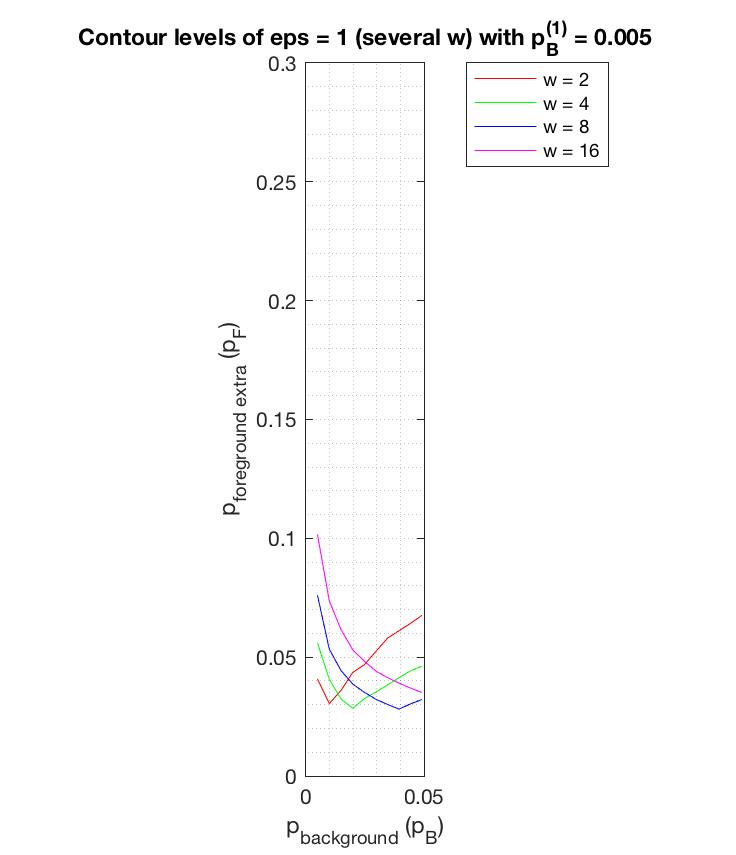}
    \end{subfigure}%
    \hfill
    \begin{subfigure}[t]{0.5\textwidth}
        \centering
        \includegraphics[width=\textwidth]{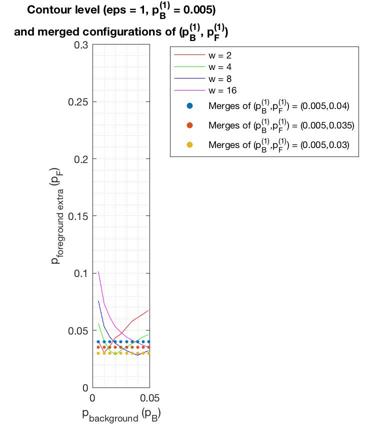}
    \end{subfigure}%
    \hfill
    
    \caption{The background probability parameter on a single frame is chosen to be ${p_b^{(1)} = 0.005}$. Recall that in the dynamic case, higher background density in the plot (or in the merged image), corresponds to an integration over more frames, so the line is thicker in the merged image. Left: Predicted a priori contour levels at level $\varepsilon=1$ of $\widehat{\mathcal{N}_{FA}^{*}}_w(p_b,p_f)$ for $w\in\{2,4,8,16\}$ in the dynamic edge case. Right: same contour levels with configurations corresponding to merged frames of videos. Fix $p_b^{(1)}=0.005$ the background degradation parameter on one frame, and consider several foreground degradation parameters on one frame $p_f^{(1)} \in \{0.03,0.035,0.04\}$. All these configurations, corresponding to the first column of dots on the figure on the right, are below each decision curve and so clearly no matter which $w$ we use, we should not be able to see anything on a per frame basis. A merge of $t$ frames corresponds to looking at the $t$-th column. Should we integrate on $t=6$ frames, and with a width of $w=8$, then we are looking at the 6-th column of dots and the blue threshold curve. Being above the curves means a hit, i.e. detection, and below means a miss. In this case, the point corresponding to the video with $p_f^{(1)} = 0.04$ is clearly above the threshold and should be easily detected, the one corresponding to $p_f^{(1)} = 0.03$ is clearly below and should not be detected and the one corresponding to $p_f^{(1)} = 0.035$ is not far from the threshold and should be in the transition or difficult zone to perceive, with some failures and successes. It is very logical that the decision curves decrease until $t = w$ where the time integration is the width of the candidate a contrario and then increases. The reason is the following. For the first frames, the more we integrate over time, the more area the line covers, and so we fill more and more the candidate rectangle with potential points. Note that unlike in the static case the foreground probability $(p_f^M)_t$ for a pixel to be white due to the foreground signal does not increase then since the pixels are not traversed by several frames, therefore it is much more likely to get many more points in the total area if the area of the foreground covers more area in that rectangle. The line fills in the candidate rectangle for $t=w$. However, for larger time integrations, the background density keeps on increasing but there are no additional points that appear on the foreground in the candidate region: the extra points that appear on the foreground in the merge are outside the candidate region and so invisible during the test of the candidate shape. Thus the decision line increases.} 
    \label{fig: various widths contour level eps 1 dynamic}
\end{figure}

\subsection{Empirical Performances: Data Generation}

We also put this theory to empirical test, by first applying the a contrario detection process by examining the corresponding human detection performance.

We work with an edge of dimensions $w_e=1$ and $L_e=200$ in the single frame video context. We then generate a random dataset of images with various degradation parameters. However, as argued previously, we had to fix an initial degradation parameter $p_b^{(1)}$. We chose to work with $p_b^{(1)} = 0.005$, as this generates sparse images that are nevertheless sufficiently dense in a video context. Our sampling of the background parameter are the merged parameters $p_b^{(t)} = 1-(1-p_b^{(1)})^t$, where the number of frames $t$ is taken to be in the range $t\in\{1,\hdots,10\}$, corresponding to a time integration of up to $0.3$ seconds. Remember that for ${p_b = p_b^{(1)}}$, we also consider that the image corresponds to a merge and so, equivalently, it is a degradation of a rectangle of length $L_e$ and width $t=1$. The foreground parameter was uniformly sampled in $[0.015,0.07]$ with a spacing of $0.005$ between samples. For each configuration of degradation parameters we generate 5 random images. Each random image is generated independently from all other images, including those with the same degradation parameters. For each image, the edge’s position is chosen at random. This means that its position and orientation are chosen randomly and uniformly. We enforce that the edge must lie entirely in the image. The images are of size $300\times300$. Therefore we sampled the midpoint of the edge according to a random uniform variable in $[100,200]\times[100,200]$ and the angle according to a uniform random variable in $[0, 2\pi]$. Given a random position for an edge described in a ground truth clean image $I$, we generate $I_{(t)}$ by translating the edge of $\pm\frac{t}{2}$ orthogonally to itself. Thus given $I$, a number of frames to merge $t$ and a foreground degradation parameter $p_f$, the degraded image added to the dataset is $\varphi_{im}(I_{(t)},p_b^{(t)},p_f)$. The total size of the dataset is 600 images.

Once the data has been generated and saved, it is not regenerated: all experiments will be done on the exact same data (although shown to humans in a different random order).

Unlike in the images of the static case, we do work on samples of $p_f$ that are much smaller, which would be considered in the previous experiment as too small compared with $p_b$ in order to bring enough points to see something. The reason for this is that since the width of the edge increases with the number of merged frames, the number of points that can appear due to the foreground probability $p_f$ linearly increases. As such, the intuition is the following: due to higher areas of coverage, although the number of points on just one line might not be enough for detection, if we multiply this by $t$ then it could now be enough to be detected, and so smaller foreground densities become relevant.

\subsection{Fixing $\varepsilon$}

We have already fixed $\varepsilon=1$ in the static case. The choice for this was the realisation that this value was reasonable and the dependency on $\varepsilon$ is very slow (somewhat logarithmic): a very large change in $\varepsilon$ does not significantly change the decision curve. Recall that $\varepsilon$ is the expected number of false alarms we tolerate. There is no reason to change it between the static case and the dynamic case. Hence, we continue to work with $\varepsilon=1$. Nevertheless, to be confident that $\varepsilon=1$ is indeed here too a good choice, we provide the predicted decision curves for other contour levels, showing that the log-dependency on $\varepsilon$ empirically holds in this case. See Figures \ref{fig: log dependency dynamic w=2} to \ref{fig: log dependency dynamic w=16}.

\begin{figure}
    \centering
    \begin{subfigure}[t]{0.4\textwidth}
      \centering
      \includegraphics[width=\textwidth]{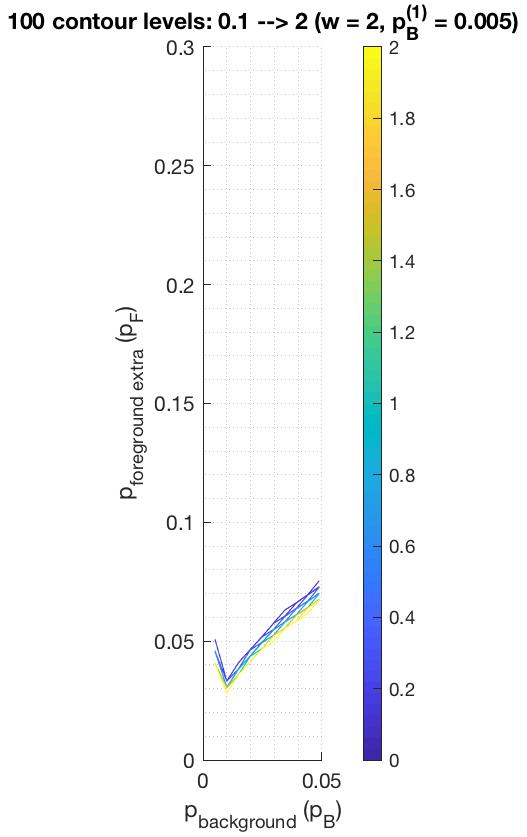}
    \end{subfigure}%
    \hfill
    \begin{subfigure}[t]{0.58\textwidth}
      %\centering
      \qquad\quad
      \includegraphics[width=\textwidth]{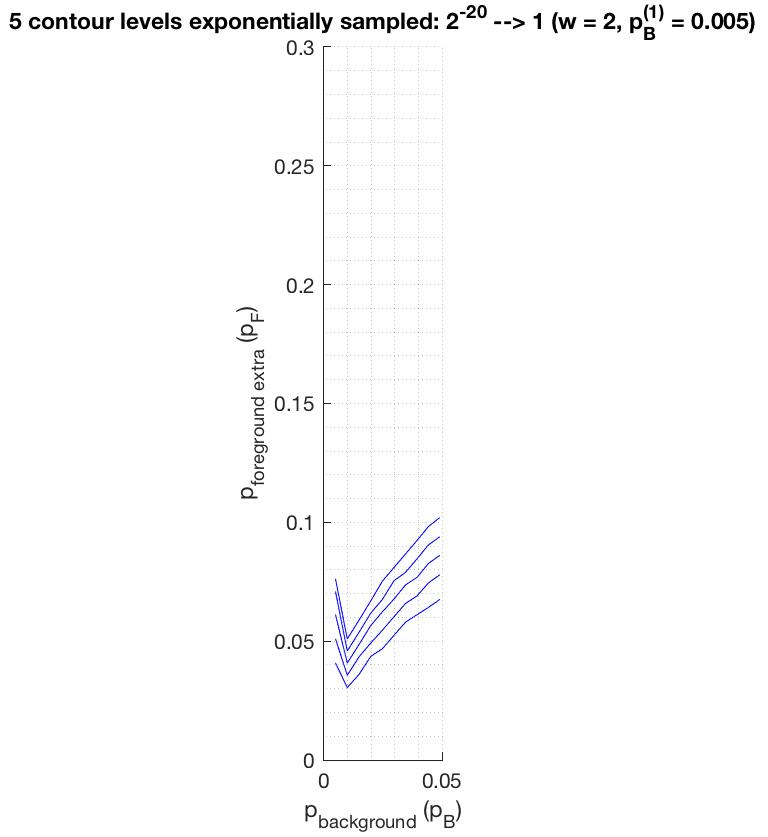}
    \end{subfigure}%
    \hfill
\caption{Left: 100 linearly sampled contour levels of $\widehat{\mathcal{N}_{FA}^{*}}_w(p_b,p_f)$ with $w=2$ in the dynamic edge case, with $\varepsilon\in[0.1,2]$. There is not much change in the position of the decision levels for such a range of $\varepsilon$. Right: 5 exponentially sampled contour levels of $\widehat{\mathcal{N}_{FA}^{*}}_w(p_b,p_f)$ with $w=2$ in the static edge case, with $\varepsilon\in\{2^{-20},2^{-15},2^{-10},2^{-5},2^{0}=1\}$. In order to get a significant change in the decision level we have to drastically change $\varepsilon$. Empirically, to get a linear displacement of the curves we need to linearly change the logarithm of $\varepsilon$ thus an empirical log-dependency to $\varepsilon$. Furthermore, taking $\varepsilon=2^{-5}$ or less means that we expect on average, over a few hundred thousand tests, to make $2^{-5}$ or fewer mistakes, which does not seem reasonable when considering humans looking at signals similar to white noise. The choice of value $\varepsilon=1$ is reasonable and crude enough since small displacements of $\varepsilon$ do not influence much the decision levels.} 
\label{fig: log dependency dynamic w=2}
\end{figure}

\begin{figure}
    \centering
    \begin{subfigure}[t]{0.4\textwidth}
      \centering
      \includegraphics[width=\textwidth]{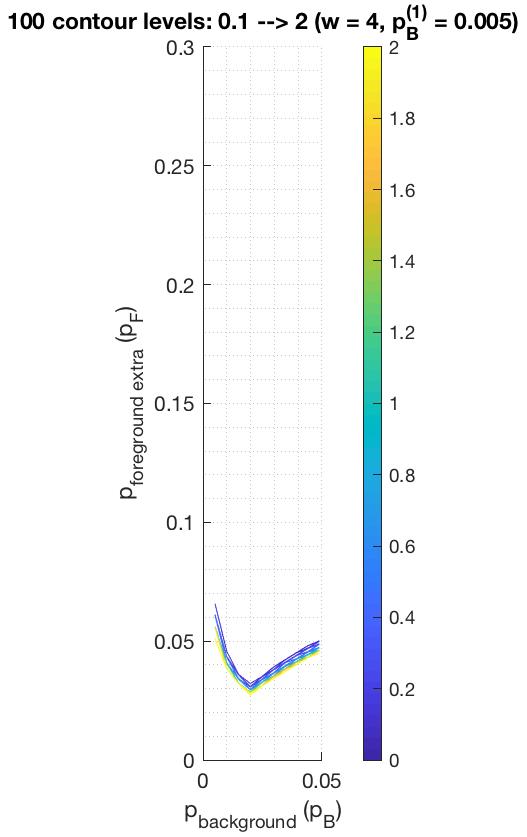}
    \end{subfigure}%
    \hfill
    \begin{subfigure}[t]{0.58\textwidth}
      %\centering
      \qquad\quad
      \includegraphics[width=\textwidth]{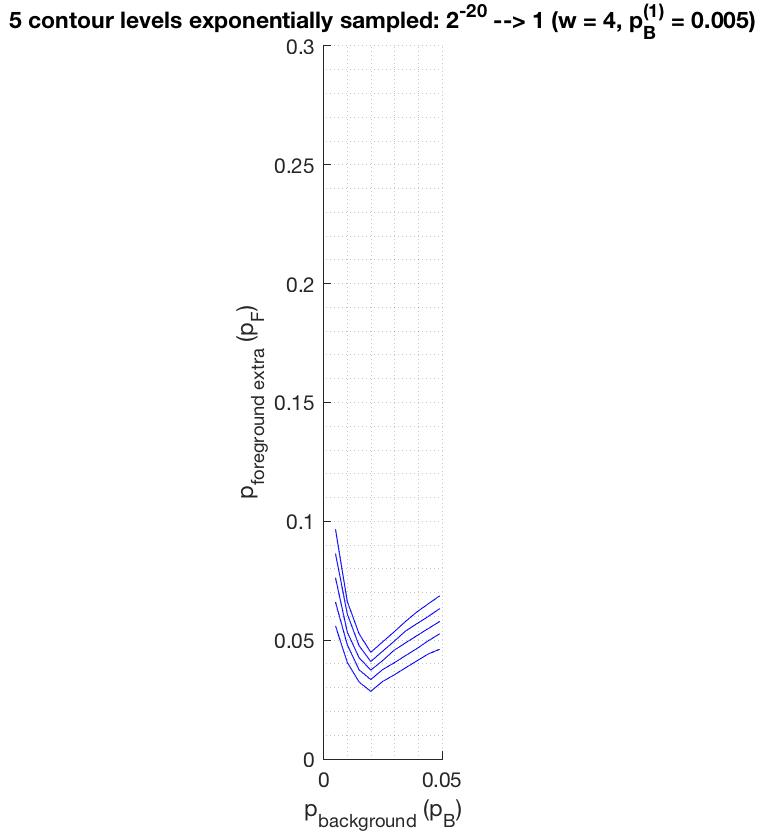}
    \end{subfigure}%
    \hfill
\caption{Left: 100 linearly sampled contour levels of $\widehat{\mathcal{N}_{FA}^{*}}_w(p_b,p_f)$ with $w=4$ in the dynamic edge case, with $\varepsilon\in[0.1,2]$. There is not much change in the position of the decision levels for such a range of $\varepsilon$. Right: 5 exponentially sampled contour levels of $\widehat{\mathcal{N}_{FA}^{*}}_w(p_b,p_f)$ with $w=2$ in the static edge case, with $\varepsilon\in\{2^{-20},2^{-15},2^{-10},2^{-5},2^{0}=1\}$. In order to get a significant change in the decision level we have to drastically change $\varepsilon$. Empirically, to get a linear displacement of the curves we need to linearly change the logarithm of $\varepsilon$ thus an empirical log-dependency to $\varepsilon$. Furthermore, taking $\varepsilon=2^{-5}$ or less means that we expect on average, over a few hundred thousand tests, to make $2^{-5}$ or fewer mistakes, which does not seem reasonable when considering humans looking at signals similar to white noise. The choice of value $\varepsilon=1$ is reasonable and crude enough since small displacements of $\varepsilon$ do not influence much the decision levels.} 
\label{fig: log dependency dynamic w=4}
\end{figure}

\begin{figure}
    \centering
    \begin{subfigure}[t]{0.4\textwidth}
      \centering
      \includegraphics[width=\textwidth]{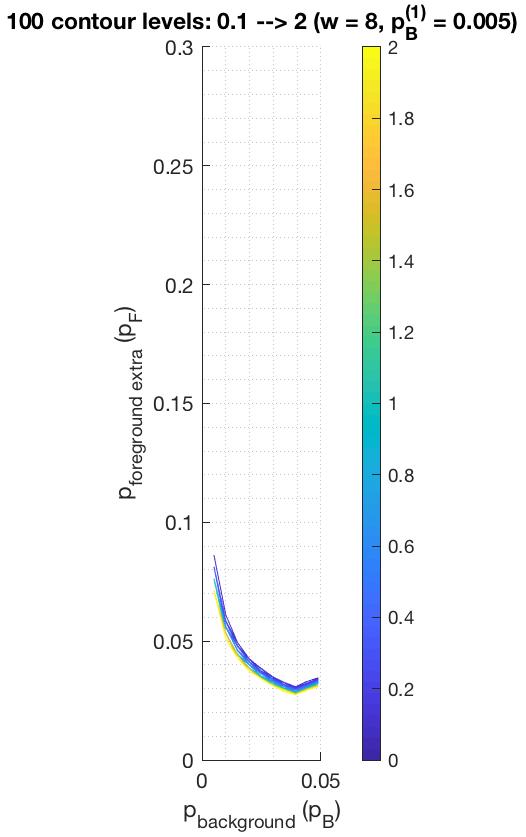}
    \end{subfigure}%
    \hfill
    \begin{subfigure}[t]{0.58\textwidth}
      %\centering
      \qquad\quad
      \includegraphics[width=\textwidth]{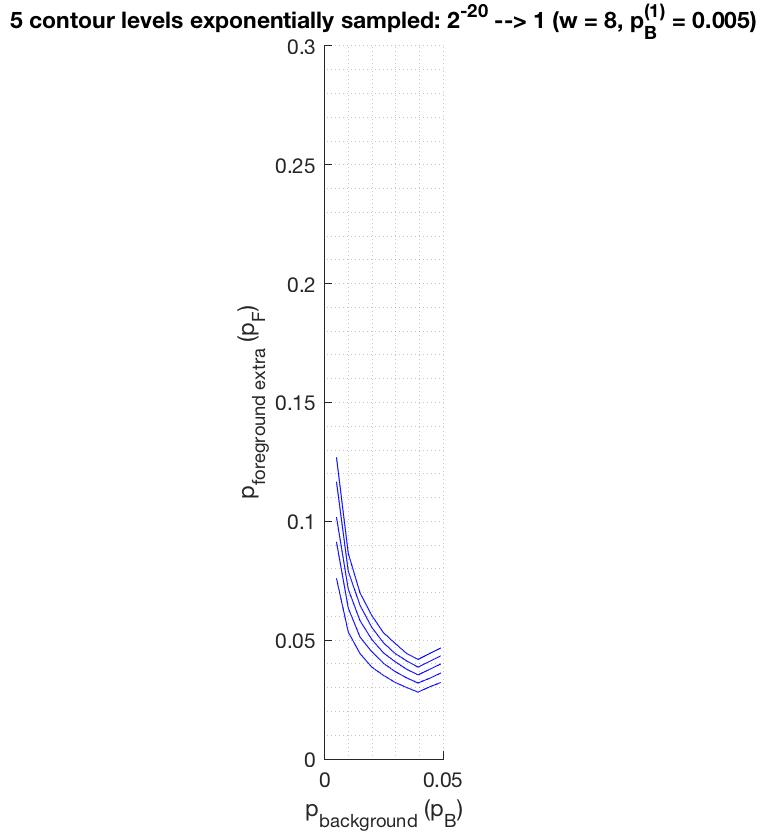}
    \end{subfigure}%
    \hfill
\caption{Left: 100 linearly sampled contour levels of $\widehat{\mathcal{N}_{FA}^{*}}_w(p_b,p_f)$ with $w=8$ in the dynamic edge case, with $\varepsilon\in[0.1,2]$. There is not much change in the position of the decision levels for such a range of $\varepsilon$. Right: 5 exponentially sampled contour levels of $\widehat{\mathcal{N}_{FA}^{*}}_w(p_b,p_f)$ with $w=2$ in the static edge case, with $\varepsilon\in\{2^{-20},2^{-15},2^{-10},2^{-5},2^{0}=1\}$. In order to get a significant change in the decision level we have to drastically change $\varepsilon$. Empirically, to get a linear displacement of the curves we need to linearly change the logarithm of $\varepsilon$ thus an empirical log-dependency to $\varepsilon$. Furthermore, taking $\varepsilon=2^{-5}$ or less means that we expect on average, over a few hundred thousand tests, to make $2^{-5}$ or fewer mistakes, which does not seem reasonable when considering humans looking at signals similar to white noise. The choice of value $\varepsilon=1$ is reasonable and crude enough since small displacements of $\varepsilon$ do not influence much the decision levels.} 
\label{fig: log dependency dynamic w=8}
\end{figure}

\begin{figure}
    \centering
    \begin{subfigure}[t]{0.4\textwidth}
      \centering
      \includegraphics[width=\textwidth]{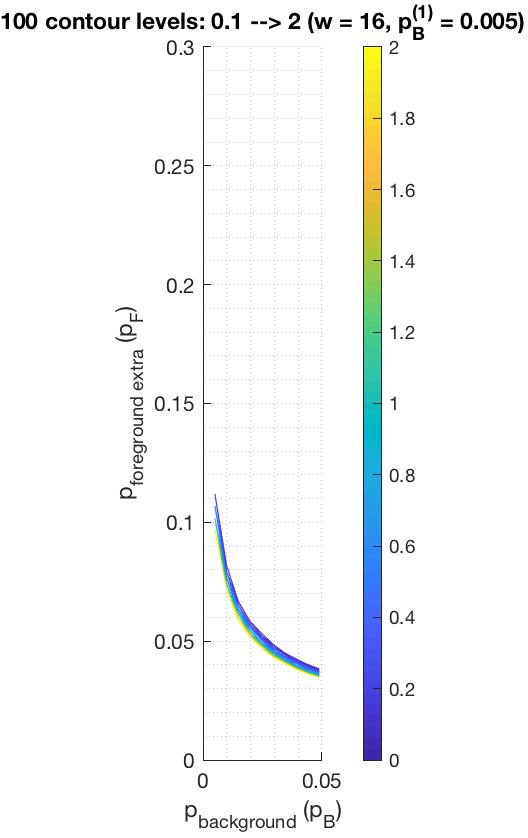}
    \end{subfigure}%
    \hfill
    \begin{subfigure}[t]{0.58\textwidth}
      %\centering
      \qquad\quad
      \includegraphics[width=\textwidth]{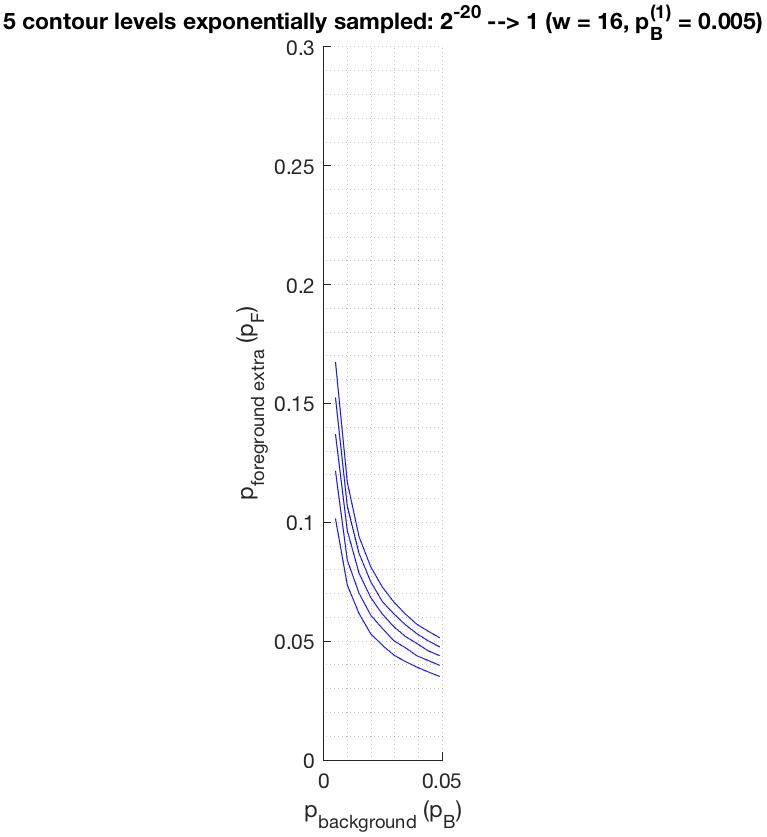}
    \end{subfigure}%
    \hfill
\caption{Left: 100 linearly sampled contour levels of $\widehat{\mathcal{N}_{FA}^{*}}_w(p_b,p_f)$ with $w=16$ in the dynamic edge case, with $\varepsilon\in[0.1,2]$. There is not much change in the position of the decision levels for such a range of $\varepsilon$. Right: 5 exponentially sampled contour levels of $\widehat{\mathcal{N}_{FA}^{*}}_w(p_b,p_f)$ with $w=2$ in the static edge case, with $\varepsilon\in\{2^{-20},2^{-15},2^{-10},2^{-5},2^{0}=1\}$. In order to get a significant change in the decision level we have to drastically change $\varepsilon$. Empirically, to get a linear displacement of the curves we need to linearly change the logarithm of $\varepsilon$ thus an empirical log-dependency to $\varepsilon$. Furthermore, taking $\varepsilon=2^{-5}$ or less means that we expect on average, over a few hundred thousand tests, to make $2^{-5}$ or fewer mistakes, which does not seem reasonable when considering humans looking at signals similar to white noise. The choice of value $\varepsilon=1$ is reasonable and crude enough since small displacements of $\varepsilon$ do not influence much the decision levels.} 
\label{fig: log dependency dynamic w=16}
\end{figure}

\subsection{Empirical Performance Versus the Predicted Performance of the A Contrario Algorithm}

The estimate $\widehat{\mathcal{N}_{FA}^{*}}(p_b^{(t)},p_f,w)$ is an a priori estimate of the score the a contrario algorithm will get when it is considering a candidate position that is in the right position on the true edge. We show the relevance of this estimator by comparing it with the empirical performance of a real a contrario algorithm working on a single width on real data.

For candidate region widths ranging in ${w\in\{2,4,8,16\}}$, we run the a contrario working on a single window width $w$ and confidence level $\varepsilon=1$. The performance of the single widths a contrario are displayed in the Figures \ref{fig: single widhts a contrario vs prediction dynamic w 2 4} and \ref{fig: single widhts a contrario vs prediction dynamic w 8 16}. We also plot the estimated contour levels of the detection performance according to our a priori estimators. The contour levels for $\varepsilon=1$ and window width $w$ fits well the empirical transition area of the a contrario with single width $w$ and confidence level $\varepsilon$. This provides an empirical confirmation that the derived a priori formula does indeed apply for the a contrario decision process.

Performance is measured like in the static case.

\begin{figure}
    \centering
    \begin{subfigure}[t]{0.49\textwidth}
      \centering
      \includegraphics[width=\textwidth]{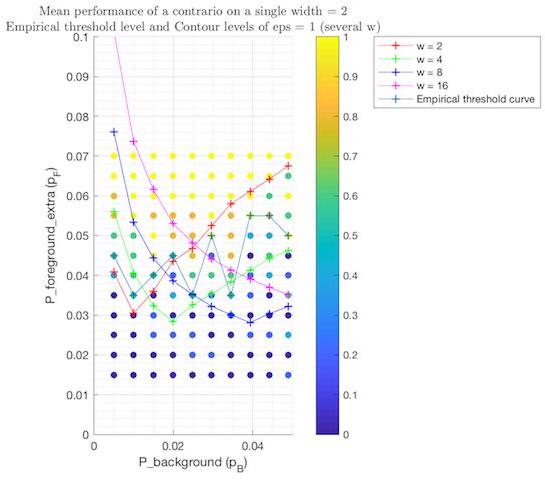}
      \caption{A contrario $w=2$}
    \end{subfigure}%
    \hfill
    \begin{subfigure}[t]{0.49\textwidth}
      \centering
      \includegraphics[width=\textwidth]{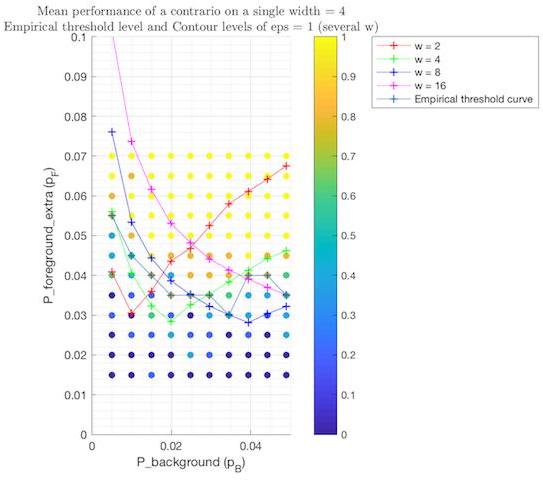}
      \caption{A contrario $w=4$}
    \end{subfigure}%
\caption{Empirical performance of several single width a contrario algorithms and their comparison with the predicted a priori mathematical model. The empirical performance fits well for the width $w=4$ but less so for width $w=2$. This is mainly due to the issue of digitisation. In this case, depending on the orientation of the line, a width of $w=2$ pixels of the line means we look on each side at pixels within distance $1$. For horizontal or vertical edges, this means we are looking at band of width $3$ pixels whereas for a perfectly diagonal edge, only the pixels that are exactly on the line will be considered. This digitisation impact is significant for $w=2$ since in this case it drastically changes the size of the candidate area and its number of pixels. The effect is less important for larger widths and can be forgotten. In our mathematical model, we did not worry about digitisation artefacts and always assumed that each candidate sample had the same area: $n_w = [Lw]-2$.} 
\label{fig: single widhts a contrario vs prediction dynamic w 2 4}
\end{figure}
\begin{figure}
    \centering
    \begin{subfigure}[t]{0.49\textwidth}
      \centering
      \includegraphics[width=\textwidth]{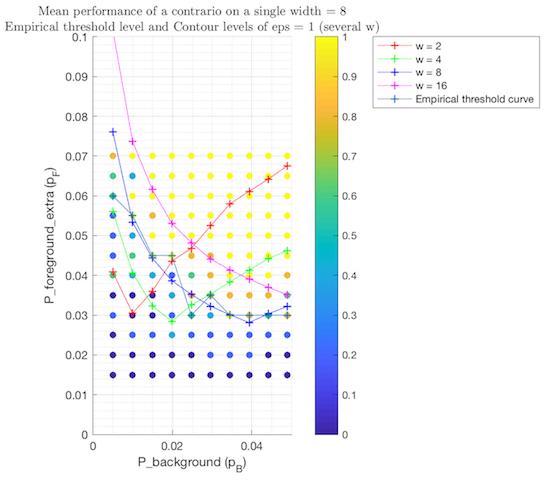}
      \caption{A contrario $w=8$}
    \end{subfigure}%
    \hfill
    \begin{subfigure}[t]{0.49\textwidth}
      \centering
      \includegraphics[width=\textwidth]{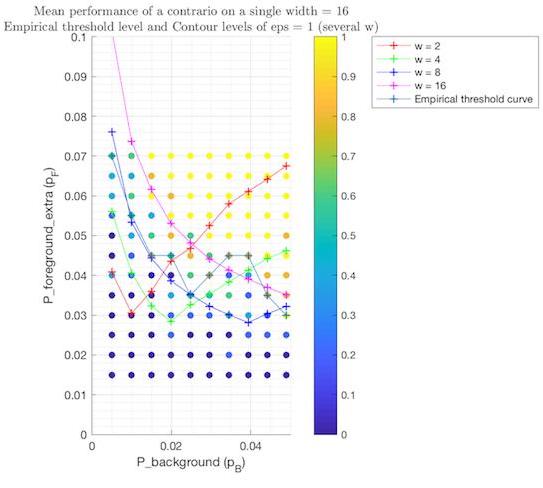}
      \caption{A contrario $w=16$}
    \end{subfigure}%
    \hfill
\caption{Empirical performance of several single width a contrario algorithms and their comparison with the predicted a priori mathematical model. The empirical performances fit well to the mathematical predictions. The performance curve in the first columns for the a contrario working on width $16$ is inaccurate since we have not sampled enough vertically $p_f$ and have not reached the domains of fully correct decisions. Thus the assumption that we have a success in the unsampled region $p_f>0.7$ is wrong and thus falsifies the empirical decision curve in the first columns of $w=16$.} 
\label{fig: single widhts a contrario vs prediction dynamic w 8 16}
\end{figure}

\subsection{Humans and A Contrario}
\label{exp2}

Next we compare human performance and the a contrario algorithm, in order to challenge the a contrario as a reasonable model of the human visual system. We devised a set of experiments as described below. The experiments detailed here were approved by the Ethics committee of the Technion as conforming to the standards for performing psychophysics experiments on volunteering human subjects.

\paragraph{Stimuli} The stimuli is the random-dot image dataset we have randomly generated and previously described. All subjects were presented with the entire dataset (600 images). The order in which they are shown is chosen at random and therefore differs between subjects.

\paragraph{Apparatus} Each subject was tested on exactly the same display. The display is the screen of a MacBook Pro retina 13-inch display, from mid 2014. Each subject was seated directly in front of the screen. The images were displayed in a well-lit room. The average distance between the subjects' eyes and the screen was about $70\,\mathrm{cm}$. The $300\times 300$ pixels image is displayed having a size of $10.4 \,\mathrm{cm}\times10.4\,\mathrm{cm}$. This translates to a pixel size of approximately $0.35\,\mathrm{mm}\times0.35\,\mathrm{mm}$. On average the visual angle for observing the image is approximately $8.5°\times8.5°$. Next to the image, we displayed a red box for user decisions (see the procedure paragraph) of size $3.7\,\mathrm{cm}\times3.7\,\mathrm{cm}$. The average visual angle for the red box is approximately $3.0°\times3.0°$. The distance between the border of the image and the border of the red box is $5.1\,\mathrm{cm}$. The average visual angle for the seperation between the image and the box is of approximately $4.1°$. See Figure \ref{fig: screenshot exp 2} for a screenshot of the display.

\paragraph{Subjects} The experiments were performed on the first author and on thirteen other subjects as previously. All subjects were the same as in the previous experiment. All the other subjects were unfamiliar with psychophysical experiments. All subjects were international students of the university, coming from various countries around the globe including China, Vietnam, India, Germany, Greece, the United States of America and France. Gender parity was almost respected with eight males and six females. The age range of the subjects was between 20 and 31 years of age. All subjects had normal or corrected to normal vision. To the best of our knowledge, no subject had any mental condition. All other subjects were unaware of the aim of our study.

\paragraph{Procedure} Subjects were told that each image they would be presented with will consist in random points but that there is an alignment of points that should form a straight edge at some random position of length equal to two thirds of the horizontal dimension of the image. They were asked to try and detect it. Unlike in the first experiment, they were told that this time the width of the line could change. As such, they were told that they were looking for a rectangular line of maximal width less than half a centimetre wide (since we could not expect them to picture precisely what $0.35$ cm is and that would only create confusion). In the entire experiment, a Matlab window covers the entire screen. In the left we display the random image the subject must work on. On the right we have the fixed red square. If the subjects could not see the line, they were instructed to click once on the red box and the next image would appear. If they did see a line, they had to click twice on the line to roughly define its extremities. Subjects were encouraged to click on locations as far as possible while still on the edge of length two-thirds of the image width. They were told that if they could only see a sub-part of the total edge that was relevant then it was all right to click on what they see as on the edge. They were strongly encouraged to not click on very small alignments or to click at the same positions. Subjects were told to try and click on the line at the centre of the rectangle but that it would still be fine if they clicked on the rectangle line but not on the centre line of it. They were also encouraged to be as precise as possible. They were strongly encouraged to click on the red box in case of doubt and were told that it is all right to click on the red box if they saw no edge. The cursor consisted in an easily visible cross thin enough so as not to obstruct visibility. For the mouse used for clicking, subjects had the choice between using the built-in touchpad of the MacBook Pro or to use a Asus N6-Mini mouse. If the subject clicked on any area not in the image or the red box, that click was dismissed. If the subject clicked once on the image and then on the red square then the next image was shown. If the subject clicked twice on the image the next image is shown. Subjects had a 10 second limit to answer for each image (to click on the red box or click twice on the image), after which the next image would be automatically shown. They were encouraged to click on the image if the detection was obvious for them. However we did not explain or give a definition of \say{obvious} to subjects. Subjects were not shown a progress bar. Subjects could not take a break once having started the experiment, but were allowed to abandon the experiment at any time, should they desire it. The time for each click made, along with the pixel in the image corresponding to a click on the random image were saved.

\paragraph{Discussion} Performance of individuals was measured as previously as a $0-1$ score where $0$ would be a decision to not see, to run out of time or to make an incorrect detection. On the other hand, $1$ is given for a correct decision. We had to allow a high tolerance for the clicking as subjects tended to not be particularly accurate in the position of the click compared to what they saw. A correct decision had to satisfy the three following tolerances. The angle of the line between both clicks had to be within a range of $0.1\,\mathrm{rad}$ of the angle of the line (near parallelism test). The maximum distance between a click and its orthogonal projection onto the mid line of the true line had to be smaller or equal to half the width of the line plus $15$ pixels (distance to line test). Recall that here the width of the line is the number of merges we are considering since we are considering merges of frames of a dynamic edge. Each click had to be no further away than the mid point of the true edge than by half the diagonal of the edge plus a tolerance of $20$ pixels. The experimental results are plotted in Figures \ref{fig: dynamic perf 0 and all} and \ref{fig: dynamic perf 1 to 4} to \ref{fig: dynamic perf 13}. First, the predicted (and empirical) threshold curves of single widths a contrario seem relevant to the performance of humans. However, the results seem more noisy than in the static case. Subjects seem to unpredictably fail too much for high $p_b$ and low $p_f$ around the predicted decision level. This may be due to the fact that this is quite difficult and in such a long experiment subjects tend to slack off and give up a bit too easily on hard yet possible examples. It could also be a bias of our model, being based on some idealised approximations. Furthermore, it could be due to the inherent multiscale  image interpretation of humans. Thus, for $p_b \approx 0.4$ all widths $w\in\{2,4,16\}$ predict a failure when $p_f\approx 0.035$, whereas $w=8$ predicts a success, hence if we too would do some multiscaling then having $3$ decisions predicting that this is just random whereas one saying that this is not should lower the importance of the $w=8$ decision and cause us not to detect. Finally, it could also simply be due to the fact that we have only 5 examples per probability configuration, therefore the standard deviation of the decisions is quite high for each probability configuration. As previously, the performance differs between individuals. Not all decision thresholds are exactly at the same position. Most seem to fit well around the threshold line of $w=8$ for low and moderate $p_b$ but some tend to become closer to the lines $w=4$ and $w=16$ for high $p_b$. While this can be due to intrinsic differences between each individual, it is also linked to the fact that not all subjects made the same mental effort to detect the lines. In particularly, those who decided faster to detect or reject seemed to fit better to a higher width a contrario than those who took their time. This is logical as those who take their time can focus better and use the multiscale of the brain rather than solely rely on their \say{default vision} when looking at such data. Nevertheless, the results seem to nicely confirm that the choice of $w=8$ pixels (or $\theta=0.23°$ in terms of visual angle) as a single width a contrario process on static data is reasonable and consistent between the static edge and dynamic case. As such, it validates our choice of width for the single width a contrario algorithm.

\begin{figure}
    \centering
    \includegraphics[width=\textwidth]{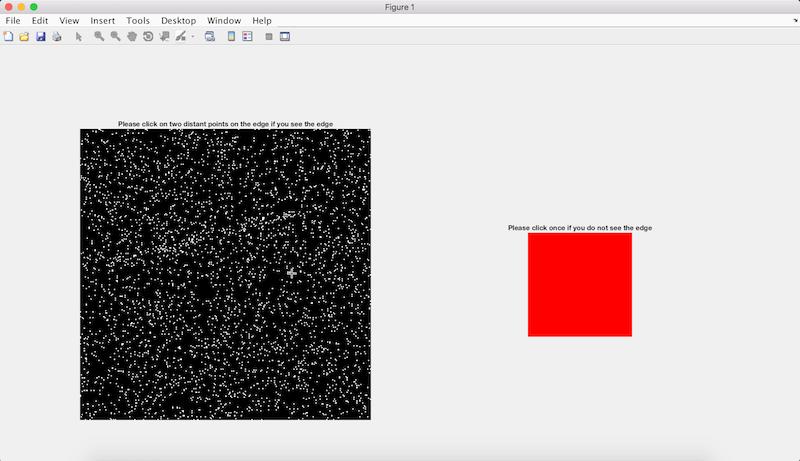}
    \hfill
    \caption[Short version]{Screenshot of the display.}
    \label{fig: screenshot exp 2}
\end{figure}

\begin{figure}
    \centering
    \begin{subfigure}[t]{0.49\textwidth}
      \centering
         \includegraphics[width=\textwidth]{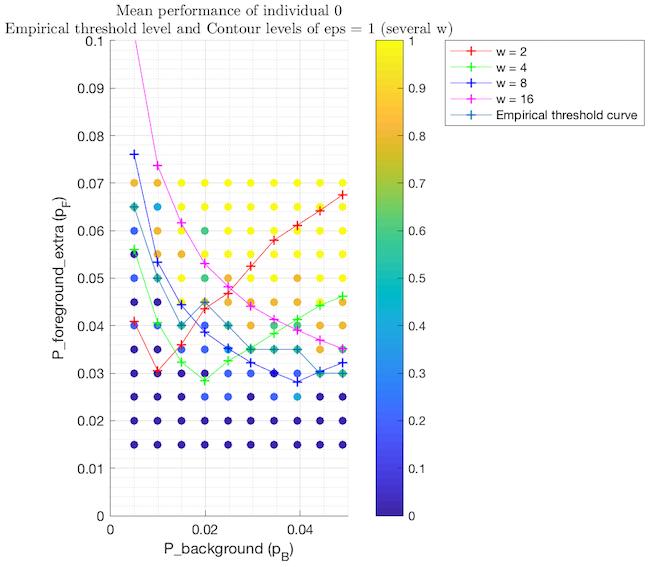}
    \end{subfigure}%
    \hfill
    \begin{subfigure}[t]{0.49\textwidth}
      \centering
         \includegraphics[width=\textwidth]{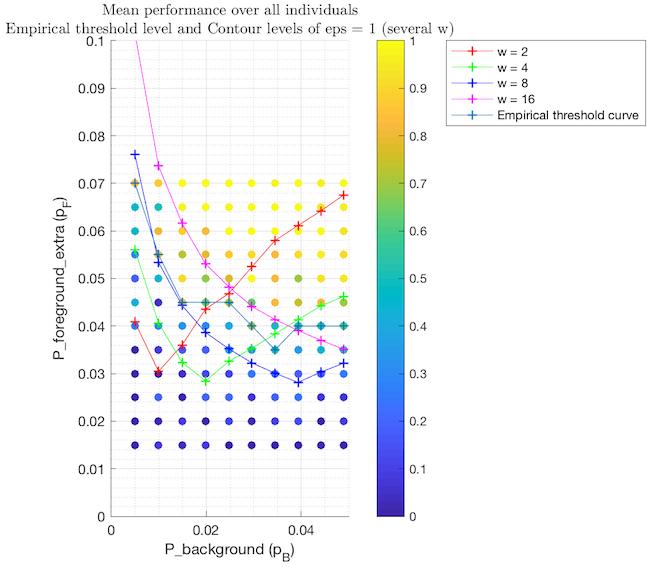}
    \end{subfigure}%
    \hfill
\caption{Empirical performance of humans on static images of a dynamic edge. Left: performance of a single person. Right: average performance over all subjects. Each dot corresponds to a pair of degradation parameters, each corresponding to $5$ samples. The colour corresponds to the averaged score on those samples per degradation parameter. A yellow dot corresponds to perfect success in recovering the line whereas as a dark blue dot corresponds to systematic failure to recover the line. The transition zone seems to fit the transition zones for a contrario. The static experiment gave us a choice of $w=8$ as a good candidate for width. This choice is consistent with the results on the dynamic edge as the empirical decision level seems here to also fit well the predicted decision level of the a contrario algorithm working on width $w=8$.}
\label{fig: dynamic perf 0 and all}
\end{figure}

\begin{figure}
    \centering
    \begin{subfigure}[t]{0.49\textwidth}
      \centering
         \includegraphics[width=\textwidth]{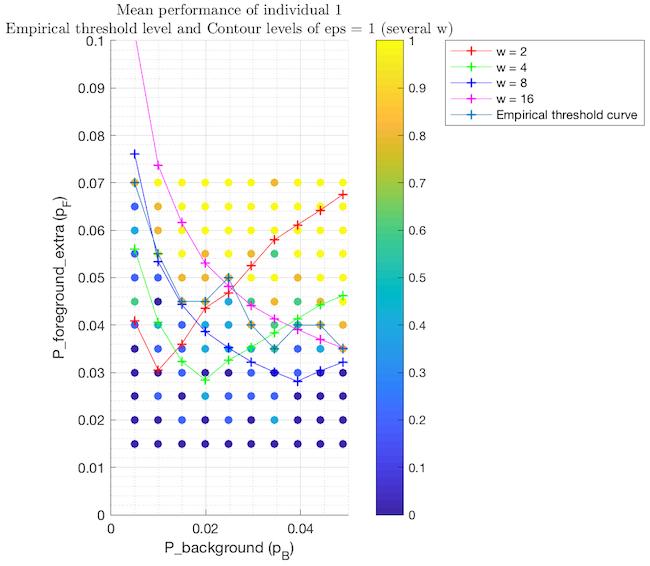}
    \end{subfigure}%
    \hfill
    \begin{subfigure}[t]{0.49\textwidth}
      \centering
         \includegraphics[width=\textwidth]{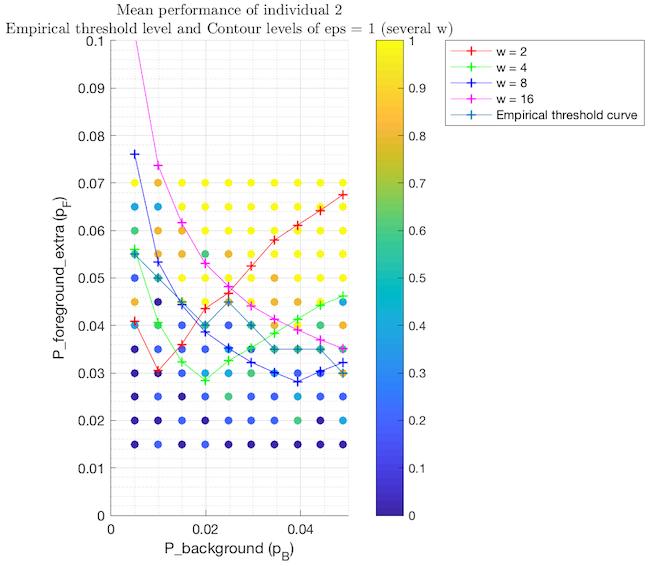}
    \end{subfigure}%
    \hfill
    \begin{subfigure}[t]{0.49\textwidth}
      \centering
         \includegraphics[width=\textwidth]{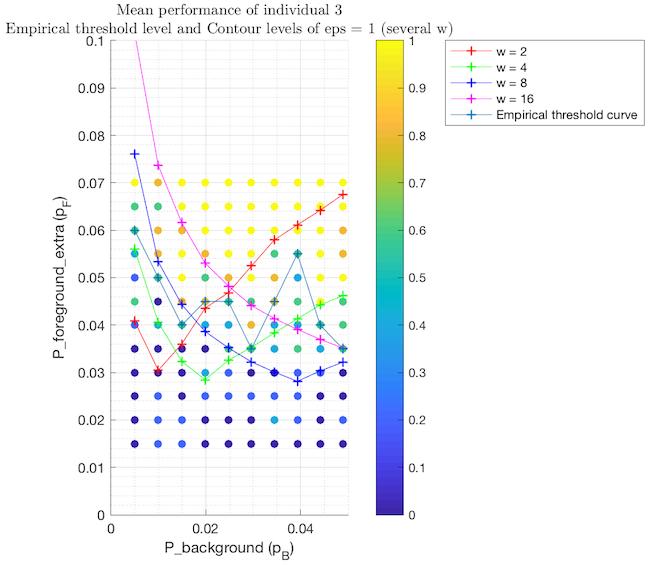}
    \end{subfigure}%
    \hfill
    \begin{subfigure}[t]{0.49\textwidth}
      \centering
         \includegraphics[width=\textwidth]{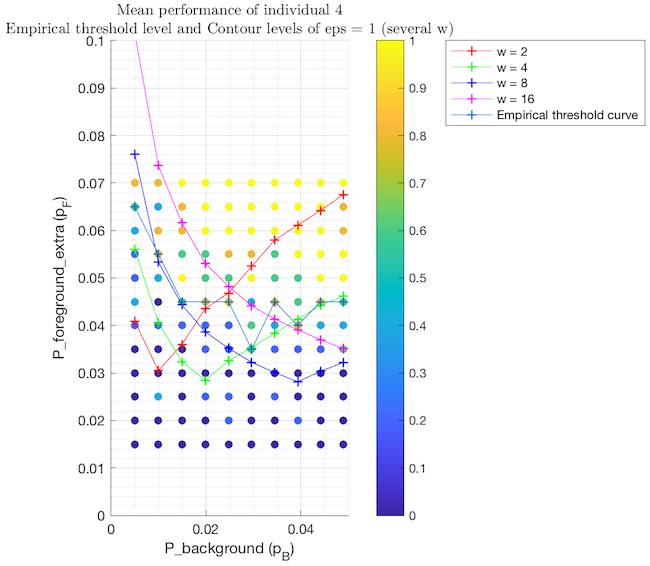}
    \end{subfigure}%
\caption{Empirical performance of humans on static images of a dynamic edge for subjects 1 to 4. The empirical performance of most subjects lies close to the predicted performance of a contrario with $w=8$. However, for higher $p_b$, the decision threshold seems to lie a bit higher than the prediction. This is due to several facts. First humans are not exactly a contrario machines. Second, they are inevitably multiscale, and thus for high values of $p_b$ the performance will inevitably lie somewhat higher than the absolute prediction. Third, the experiments were quite long and tiring and all subjects necessarily made the effort to see in the difficult cases thus leading to a higher decision level. Nevertheless, the choice of $w=8$ seems relevant and confirmed in the dynamic case as well.}
\label{fig: dynamic perf 1 to 4}
\end{figure}
\begin{figure}
    \centering
    \begin{subfigure}[t]{0.49\textwidth}
      \centering
         \includegraphics[width=\textwidth]{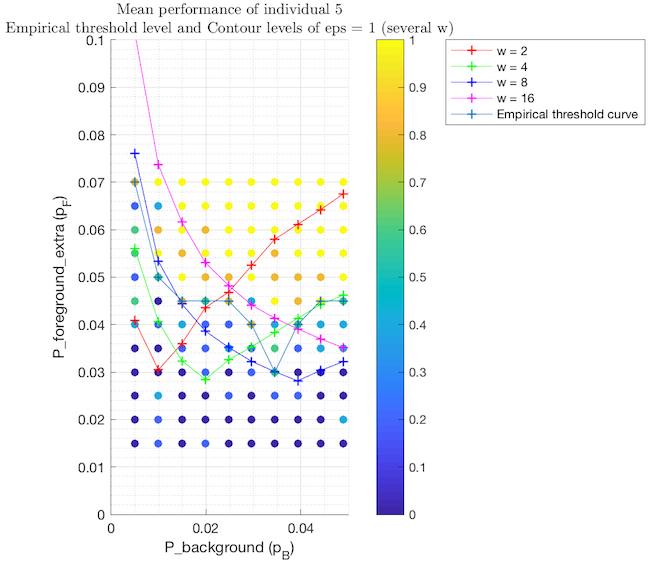}
    \end{subfigure}%
    \hfill
    \begin{subfigure}[t]{0.49\textwidth}
      \centering
         \includegraphics[width=\textwidth]{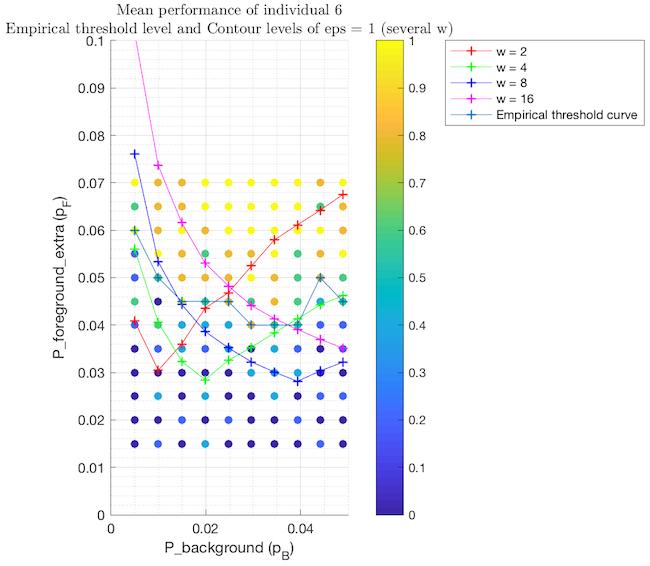}
    \end{subfigure}%
    \hfill
    \begin{subfigure}[t]{0.49\textwidth}
      \centering
         \includegraphics[width=\textwidth]{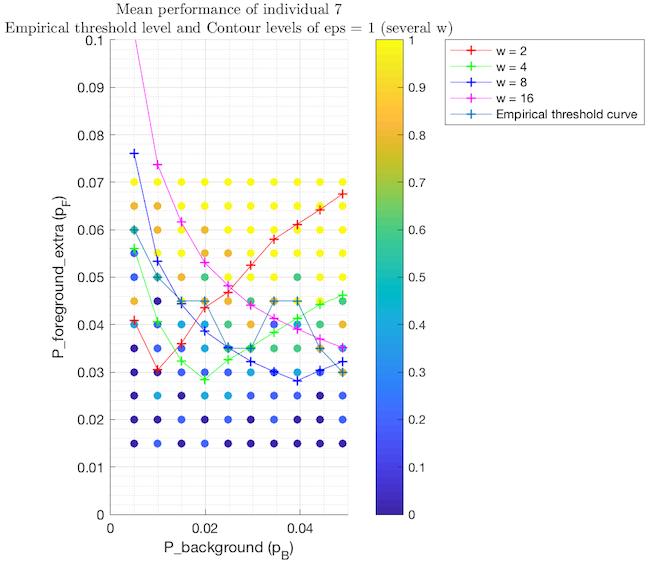}
    \end{subfigure}%
    \hfill
    \begin{subfigure}[t]{0.49\textwidth}
      \centering
         \includegraphics[width=\textwidth]{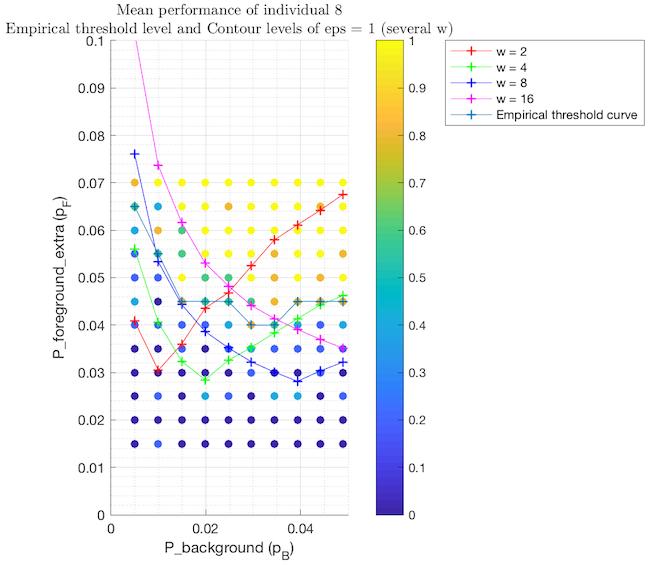}
    \end{subfigure}%
\caption{Empirical performance of humans on static images of a dynamic edge for subjects 5 to 8. The empirical performance of most subjects lies close to the predicted performance of a contrario with $w=8$. However, for higher $p_b$, the decision threshold seems to lie a bit higher than the prediction. This is due to several facts. First humans are not exactly a contrario machines. Second, they are inevitably multiscale, and thus for high values of $p_b$ the performance will inevitably lie somewhat higher than the absolute prediction. Third, the experiments were quite long and tiring and all subjects necessarily made the effort to see in the difficult cases thus leading to a higher decision level. Nevertheless, the choice of $w=8$ seems relevant and confirmed in the dynamic case as well.}
\label{fig: dynamic perf 5 to 8}
\end{figure}
\begin{figure}
    \centering
    \begin{subfigure}[t]{0.49\textwidth}
      \centering
         \includegraphics[width=\textwidth]{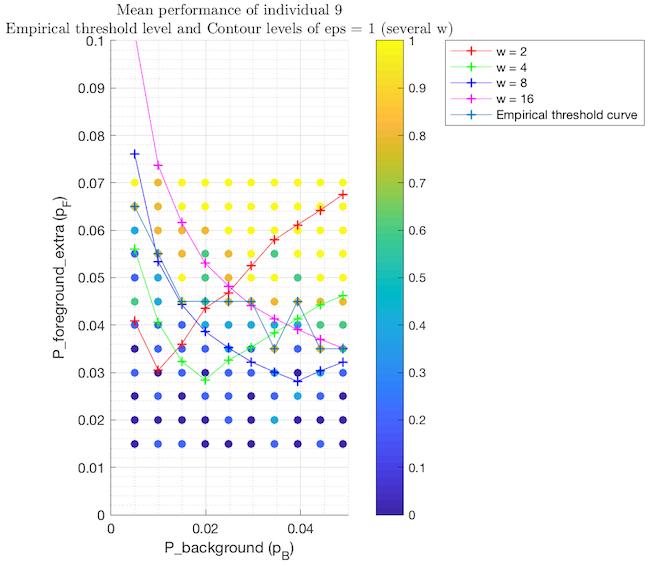}
    \end{subfigure}%
    \hfill
    \begin{subfigure}[t]{0.49\textwidth}
      \centering
         \includegraphics[width=\textwidth]{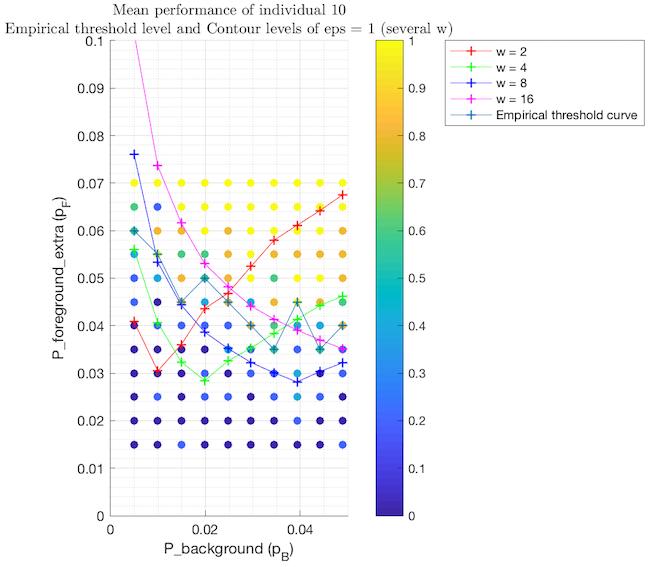}
    \end{subfigure}%
    \hfill
    \begin{subfigure}[t]{0.49\textwidth}
      \centering
         \includegraphics[width=\textwidth]{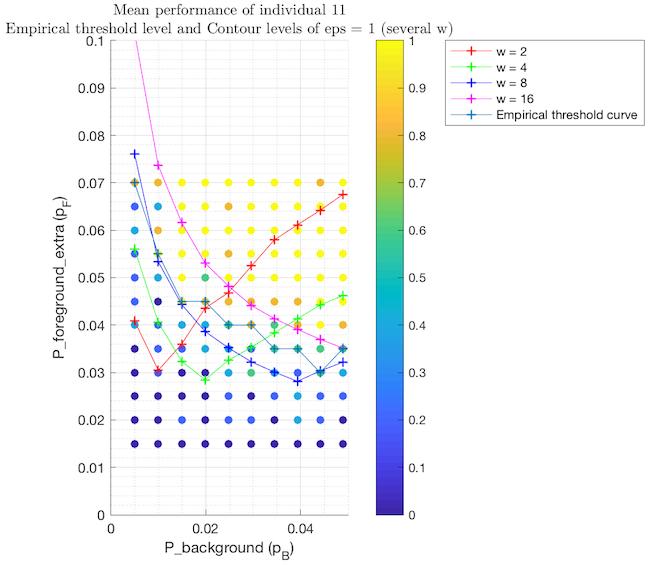}
    \end{subfigure}%
    \hfill
    \begin{subfigure}[t]{0.49\textwidth}
      \centering
         \includegraphics[width=\textwidth]{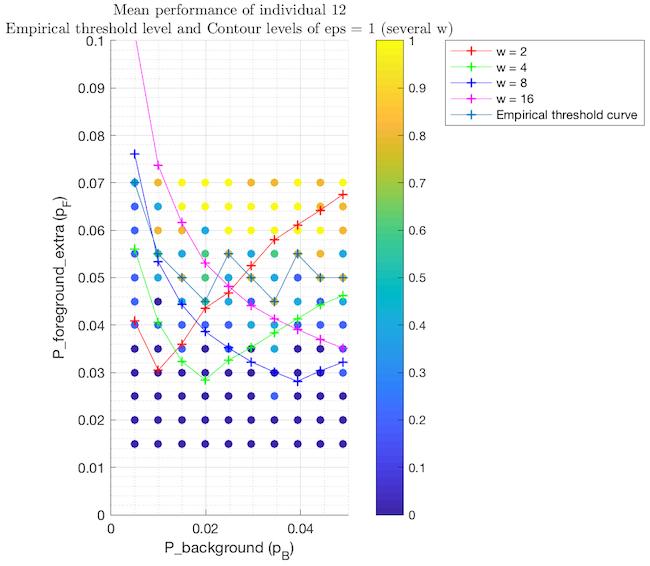}
    \end{subfigure}%
\caption{Empirical performance of humans on static images of a dynamic edge for subjects 9 to 12. The empirical performance of most subjects lies close to the predicted performance of a contrario with $w=8$. However, for higher $p_b$, the decision threshold seems to lie a bit higher than the prediction. This is due to several facts. First humans are not exactly a contrario machines. Second, they are inevitably multiscale, and thus for high values of $p_b$ the performance will inevitably lie somewhat higher than the absolute prediction. Third, the experiments were quite long and tiring and all subjects necessarily made the effort to see in the difficult cases thus leading to a higher decision level. Nevertheless, the choice of $w=8$ seems relevant and confirmed in the dynamic case as well.}
\label{fig: dynamic perf 9 to 12}
\end{figure}
\begin{figure}
    \centering
    \begin{subfigure}[t]{0.49\textwidth}
      \centering
         \includegraphics[width=\textwidth]{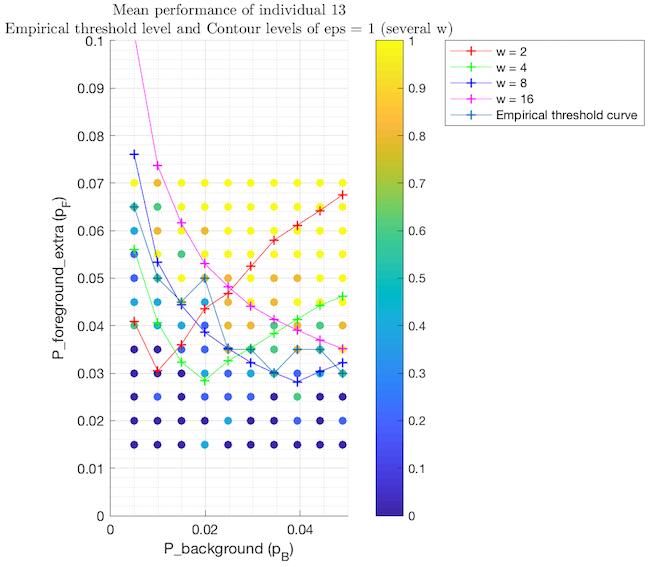}
    \end{subfigure}%
    \hfill
\caption{Empirical performance of humans on static images of a dynamic edge for subject 13. The empirical performance of most subjects lies close to the predicted performance of a contrario with $w=8$. However, for higher $p_b$, the decision threshold seems to lie a bit higher than the prediction. This is due to several facts. First humans are not exactly a contrario machines. Second, they are inevitably multiscale, and thus for high values of $p_b$ the performance will inevitably lie somewhat higher than the absolute prediction. Third, the experiments were quite long and tiring and all subjects necessarily made the effort to see in the difficult cases thus leading to a higher decision level. Nevertheless, the choice of $w=8$ seems relevant and confirmed in the dynamic case as well.}
\label{fig: dynamic perf 13}
\end{figure}

\clearpage
\newpage
\section{Video Experiments: Getting the Time Integration}
\label{sec: video overview and AC}

In the previous Sections we modelled the human perceptual system as a pipeline of a time integrator considered as a merge of frames over some fixed time followed by a spatial a contrario algorithm on images for detecting interesting structures that are unlikely under some random assumptions. We devised two experiments on static data (images) considered as the output of the time integrator and analysed the model for the a contrario black box. We found that humans seem to perform similarly on static data to a single-width a contrario algorithm of width $w=8$ which approximately corresponds to a visual angle of $0.23°$. In this subsection, we wish to compare humans with our model on dynamic data and in particular challenge our model of time integration.

In the previous experiments, we displayed static data (images) of lines. These images could be understood as an integration of a certain number of consecutive frames of a video of a line that was either static (no movement) or dynamic (movement). If an image of the previous experiments can be considered as merges of $t$ frames each sampled with a background probability of $p_b$ and a foreground probability parameter of $p_f$, then we can show the same subjects a video generated with parameters $(p_b,p_f)$. For a fixed value of $p_b$, we can vary $p_f$ and study the performance of humans on these videos. From that we can extrapolate a threshold level of $p_f$ for that background level $p_b$, below which humans cannot see the line in a video and over which they can. We can then compare with our previous experiment on static data to find with which number of merged frames this threshold corresponds to. For instance, if we are looking at configurations of dynamic edges (second experiment), then we look if there is a column of the performance in the $(p_b,p_f)$ space that fits well the performance level on video data.  This allows us to choose a time for integration that fits to human behaviour. Finally, we can do the same with the static edge by comparing the performance on the dynamic data and on the static data considered as merges of frames and look at which column performance in the static data experiment fits well the performance in the dynamic experiment and thus get a time integration. Both time integrations should be the same or within the same range and thus having both independent experiments getting the same result would validate the choice of $\Delta t$ as time integration in our human perception model.

\subsection{Defining Video Data}

We generate two video datasets for the following experiments. We chose to fix the background parameter to $p_b = 0.005$ for each frame of each video. This choice is motivated by our choice in the second experiment to work with a fixed background parameter for the single frame $p_b^{(1)} = 0.005$, so that we can use the results of this previous experiment and compare it with those of the new ones. Similarly, we sampled $p_f$ according to its sampling in the second experiment: uniform sampling in the range $[0.015,0.07]$ with a separation of $0.005$ between two configurations. The dimension of the edge is the same. We fix its width $w_e = 1$ and length $L=200$. Each frame of the videos are generated independently (conditional to the position of the edge). All frames have the same size $300\times300$. Since we have fewer degradation configurations than in the static experiments, we can afford to sample more videos per configuration. For this reason, for each degradation configuration we generated $20$ videos.

\paragraph{Static edge} In the first data set, we consider the case of the static line. We first consider data with a line. The edge's position is chosen at random. This means that its position and orientation is chosen randomly and uniformly. We enforce that the edge must lie entirely in the image. Therefore we sampled the midpoint of the edge according to a random uniform variable in $[100,200]\times[100,200]$ and the angle according to a uniform random variable in $[0,2\pi]$. The duration of the video is $10s$ and with a frame rate of $FPS = 30\,\mathrm{frames/second}$. However, we should not sample just once the position of the line and display such a long video. If we did do so, then correlations between decisions at the exact same position but separated by a longer time will influence the perception of the line: higher level memory of human subjects necessarily comes into play. Since we are interested in low level memory (maybe even at the level of neuron activations) we do not desire to have this additional factor influencing the data. In order to avoid this issue, we make the edges jump. Every $n_{\mathrm{jump}}$ frames, the position of the line is re-sampled randomly and independently. We estimate that the time for integration should be around 0.27 seconds (or $8$ frames) from simple tests on the first author of this report. If this is the case, then it takes 0.27 seconds to gather information in order to start seeing a line. However, if the line jumps every 0.27 seconds then in proportion, we would be able to see and decide of the presence of a line for only the last frame for every 8 frames. This would mean that for the large majority of the duration of the video, we do not see anything and only sporadically see a line for a very short time of one frame, i.e. $\frac{1}{FPS} \approx 0.03s$. This would bias us to decide that we do not see a line. In order to overcome this issue, we decided to sample the position of the line every $0.53s$ which corresponds to 16 frames, the double of our estimate on the first author of this report. This would mean that if the time integration is indeed around $0.27s$, then for half of the video we do not see the line (we are integrating the information and do not see yet) and for the other half we do see: hence we have overcome the bias. Furthermore, this duration is a good compromise between a short display to avoid higher order memory and showing the data long enough for us to decide. This generates for us a dataset of degraded videos of a static edge. We also generate the same number of videos of degraded videos without any edge: we generate just as many white noise videos with parameter $p_b$. The total size of the dataset is 480 videos.

\paragraph{Dynamic edge} In the second data set, we consider the case of the dynamic line with the same kind of movement as we assumed to have had in the second experiment. Namely that the movement is uniform, orthogonal to the length direction and at a speed of $1\,\mathrm{pixel/frame}$. We first consider data videos with a line. The edge's initial position is chosen at random. This means that its initial position and orientation is chosen randomly and uniformly. We also sample independently the direction of translation with probability $\frac{1}{2}$ for both opposite directions. The initial mid point of the edge is sampled according to a random uniform variable in $[100,200]\times[100,200]$ and the angle according to a uniform random variable in $[0,2\pi]$. The duration of the video is $10s$ and with a frame rate of $FPS = 30\,\mathrm{frames/second}$. Similarly to the static edge data set, we actually make the edge jump regularly: every $16$ frames (corresponding to approximately $0.53$ seconds) the initial position and direction of translation are re-sampled in order to avoid long term memory influence on the decision. This generates for us a dataset of degraded videos of a dynamic edge. As above, we also generate the same number of videos of degraded videos without any edge: we generate just as many white noise videos with parameter $p_b$. The total size of the dataset is here too 480 videos.

Once the data has been generated and saved, it is not regenerated: all experiments will be done on the exact same data (although shown in a different random order).

We will work on both data sets independently to get the time integration (or equivalently a number of frames of integration on a frame-rate of $FPS=30\,\mathrm{frames/second}$ video), and see that the two sets of experiments yield the same result and thereby validating the result. The dynamic edge video experiment was done in practice after the static edge experiment since we found it slightly easier on small tests to perceive the static edge than the dynamic edge. It was also done shortly after, on the same day, in order to minimise the number of times the subjects had to come and since both experiments feel the same for the human subjects, unlike the static image experiments from the previous sections.

\subsection{A Contrario}

We ran several single width a contrario algorithms with width $w=8$ but each working with a different time integration $n_{f} \in\{1,\hdots,10\}\,\mathrm{frames}$. This allows us to get a performance measure for a contrario algorithms of $(w=8,n_{f})$ and thus later compare with human performance on dynamic data and decide which time integration corresponds to humans. Note that proceeding in such a way might seem redundant with the empirical a contrario estimations on static data since we said we considered our static data as merged frames, and a contrario only works on merged frames. This is not entirely true since we did not necessarily sample our data in the static experiments exactly according to how we sampled them in the video context. For instance, in the static case, we needed not worry about how many frames we are considering since the edge does not move (which is not the case in the dynamic case), and thus sampling uniformly the space allowed us to better understand human and a contrario behaviour and thus tune the later experiments. For this reason, we reran the a contrario algorithm on these videos.

In order to save computation time, each a contrario will only test one merge of frames for each videos. This might seem very underwhelming since at each frame of the video the a contrario performs the time integration and tests it, thus on a $10s$ video of $30$ frames per second there are about $300$ tests on merged frames. Furthermore, the edge jumps between positions thus looking at only one merge only looks at one of these positions of the edge. Humans might not be able to see at each instant but over the range of the video there might be times where they see and this will lead them to deciding that this is edge data. For simplicity and in order to save computation time, we did not do this for a contrario and a contrario is only tested on the first $n_{f}$ merge of frames, where $n_{f}$ is the time integration assumed for the a contrario algorithm.

In order to estimate performance, we should normally look at both the true positive rate of the algorithms with respect to $p_f$ but also the false alarm rate, and design a score combining both rates. Indeed, if we only look at one of these values and discard the other, we could end up rating some performance as excellent when in fact it is terrible. For instance imagine an algorithm that always finds the edge but when there isn't any always return a false alarm. Then it detects all edge data, thus $100\%$ true positive rate, but also a $100\%$ false alarm rate, which is bad: the true decisions are not reliable. Thus if we omit the false alarm rate and only look at the true positive rate we get an incorrect evaluation of the performance.  

Empirically, we found that the false alarm rate is low, close to 0, as shown in the left column of Figures \ref{fig: video static AC} (static edge) and \ref{fig: video dynamic AC} (dynamic edge), which was expected by construction for the a contrario algorithm since we chose $\varepsilon$ to be low. Also, it is consistent between all a contrario algorithms and will be similar to the false alarm rates of humans (see Sections \ref{sec: exp3.1} and \ref{sec: exp3.2}). Therefore the false alarm rates do not provide any information for comparison between algorithms or with humans, in this case. We thus chose to use the plain true positive rate as our performance score. For each probability configuration $p_f$, we can look at the true positive rate on all edge samples of that configuration. Doing this over all sampled $p_f$ yields a performance score vector or curve. Each algorithm will provide its performance curve and similarly, each human subject will also yield its own performance curve. We can then compare human performance to algorithmic performance by comparing these vectors or curves. This is somewhat equivalent to fitting the true positive vector performance of humans to the columns of the previous performances on the static data (up to dilation in the static case since $p_f$ almost linearly increases with the number of frames of the merge). We first ran the a contrario decision algorithms on both datasets. The score curves (the true positive curves as a function of $p_f$) are displayed in the right column of Figures \ref{fig: video static AC} (static edge) and \ref{fig: video dynamic AC} (dynamic edge). 

\begin{figure}
    \centering
    \begin{subfigure}[t]{0.5\textwidth}
      \centering
         \includegraphics[width=\textwidth]{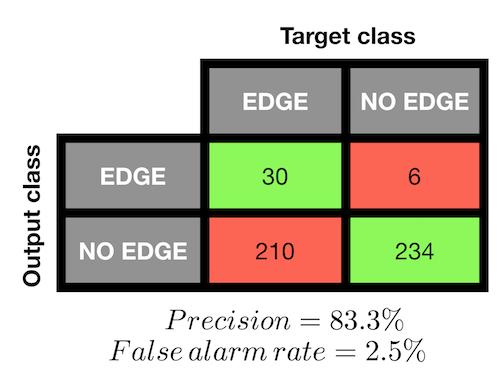}
         \caption{$n_{f} = 1$ frame}
    \end{subfigure}%
    \hfill
    \begin{subfigure}[t]{0.5\textwidth}
      \centering
         \includegraphics[width=\textwidth]{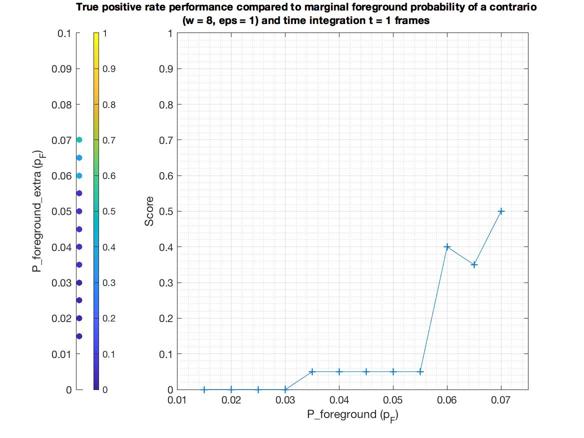}
         \caption{$n_{f} = 1$ frame}
    \end{subfigure}%
    \hfill
    \begin{subfigure}[t]{0.5\textwidth}
      \centering
         \includegraphics[width=\textwidth]{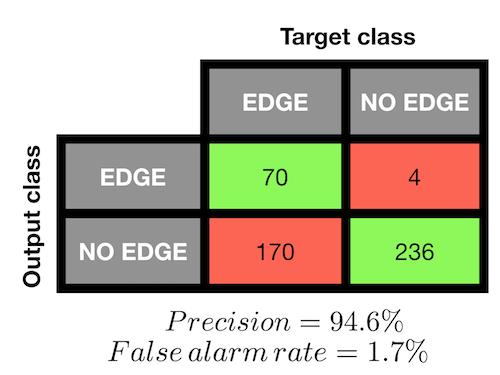}
         \caption{$n_{f} = 2$ frames}
    \end{subfigure}%
    \hfill
    \begin{subfigure}[t]{0.5\textwidth}
      \centering
         \includegraphics[width=\textwidth]{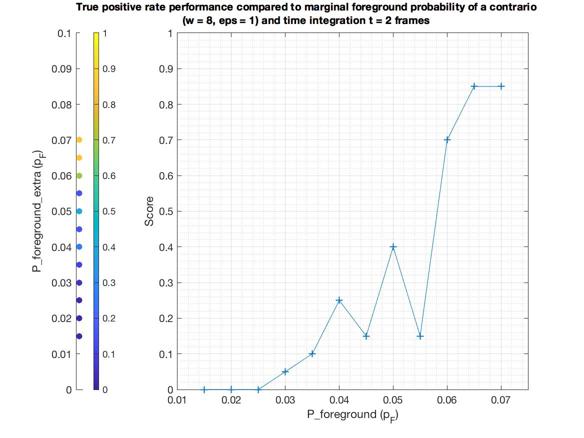}
         \caption{$n_{f} = 2$ frames}
    \end{subfigure}%
    \hfill
    \begin{subfigure}[t]{0.5\textwidth}
      \centering
         \includegraphics[width=\textwidth]{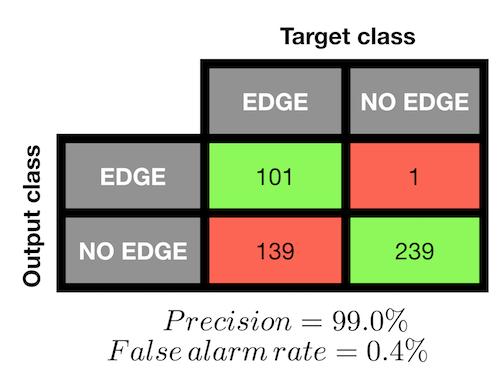}
         \caption{$n_{f} = 3$ frames}
    \end{subfigure}%
    \hfill
    \begin{subfigure}[t]{0.5\textwidth}
      \centering
         \includegraphics[width=\textwidth]{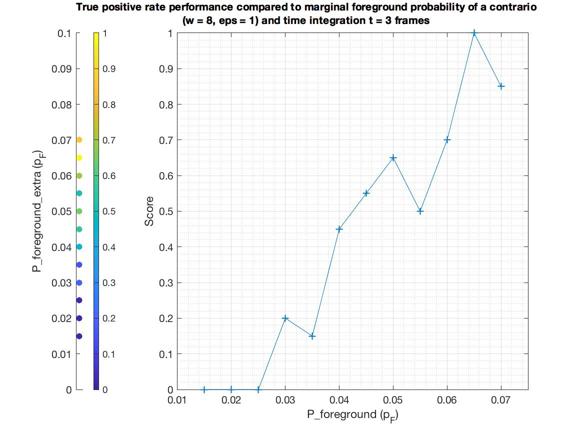}
         \caption{$n_{f} = 3$ frames}
    \end{subfigure}%
    \hfill
\end{figure}
\begin{figure}\ContinuedFloat
    \centering
    \begin{subfigure}[t]{0.5\textwidth}
      \centering
         \includegraphics[width=\textwidth]{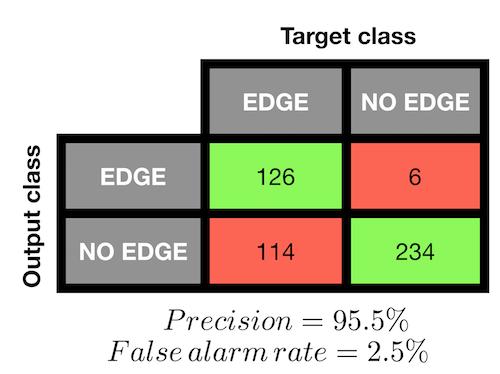}
         \caption{$n_{f} = 4$ frames}
    \end{subfigure}%
    \hfill
    \begin{subfigure}[t]{0.5\textwidth}
      \centering
         \includegraphics[width=\textwidth]{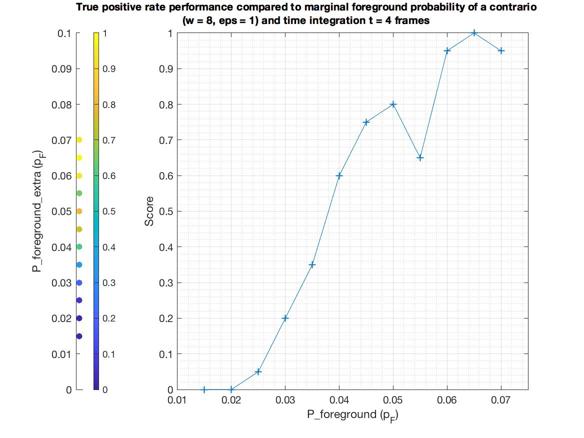}
         \caption{$n_{f} = 4$ frames}
    \end{subfigure}%
    \hfill
    \begin{subfigure}[t]{0.5\textwidth}
      \centering
         \includegraphics[width=\textwidth]{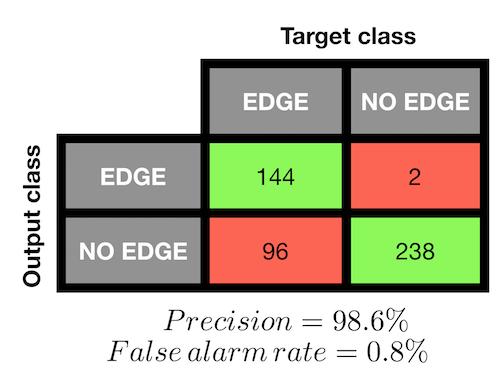}
         \caption{$n_{f} = 5$ frames}
    \end{subfigure}%
    \hfill
    \begin{subfigure}[t]{0.5\textwidth}
      \centering
         \includegraphics[width=\textwidth]{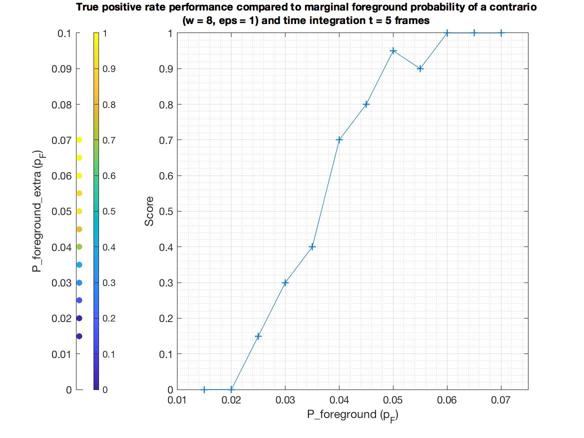}
         \caption{$n_{f} = 5$ frames}
    \end{subfigure}%
    \hfill
    \begin{subfigure}[t]{0.5\textwidth}
      \centering
         \includegraphics[width=\textwidth]{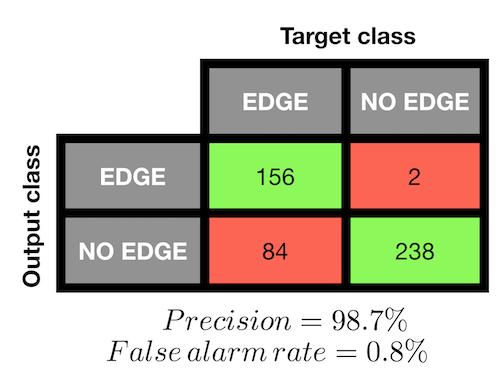}
         \caption{$n_{f} = 6$ frames}
    \end{subfigure}%
    \hfill
    \begin{subfigure}[t]{0.5\textwidth}
      \centering
         \includegraphics[width=\textwidth]{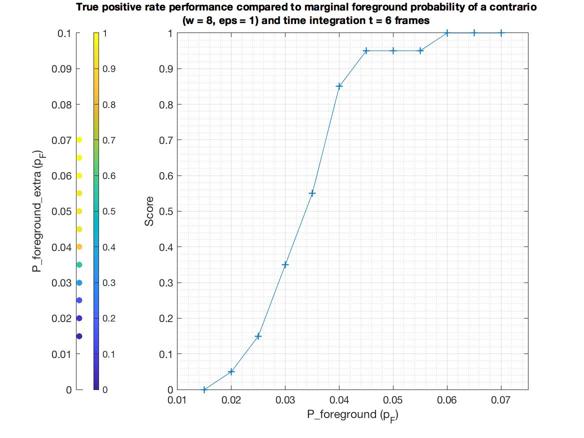}
         \caption{$n_{f} = 6$ frames}
    \end{subfigure}%
    \hfill
\end{figure}
\begin{figure}\ContinuedFloat
    \centering
    \begin{subfigure}[t]{0.5\textwidth}
      \centering
         \includegraphics[width=\textwidth]{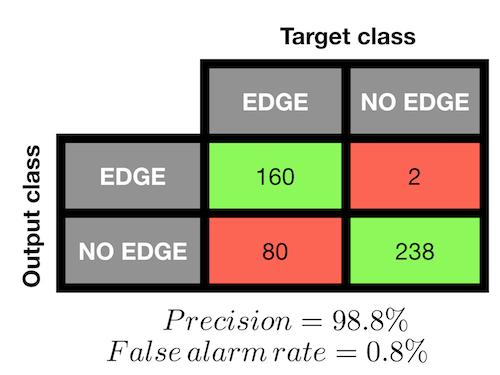}
         \caption{$n_{f} = 7$ frames}
    \end{subfigure}%
    \hfill
    \begin{subfigure}[t]{0.5\textwidth}
      \centering
         \includegraphics[width=\textwidth]{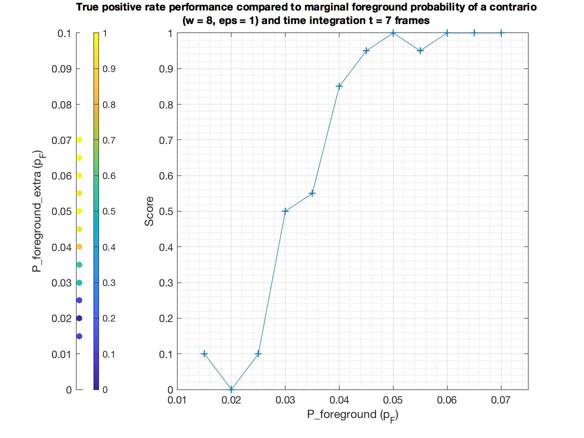}
         \caption{$n_{f} = 7$ frames}
    \end{subfigure}%
    \hfill
    \begin{subfigure}[t]{0.5\textwidth}
      \centering
         \includegraphics[width=\textwidth]{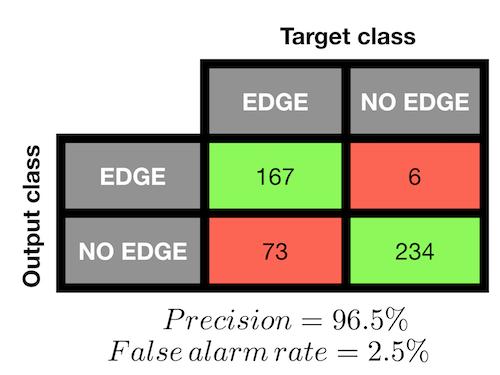}
         \caption{$n_{f} = 8$ frames}
    \end{subfigure}%
    \hfill
    \begin{subfigure}[t]{0.5\textwidth}
      \centering
         \includegraphics[width=\textwidth]{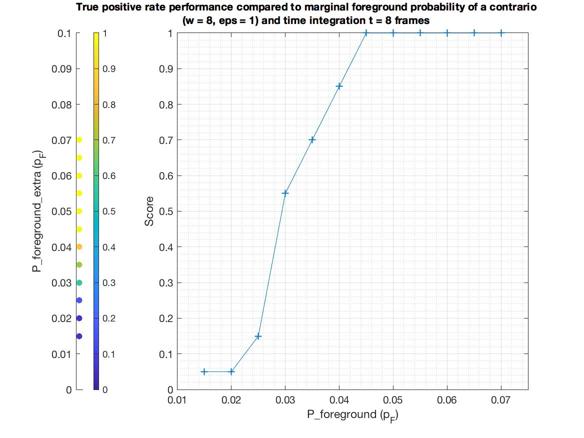}
         \caption{$n_{f} = 8$ frames}
    \end{subfigure}%
    \hfill
    \begin{subfigure}[t]{0.5\textwidth}
      \centering
         \includegraphics[width=\textwidth]{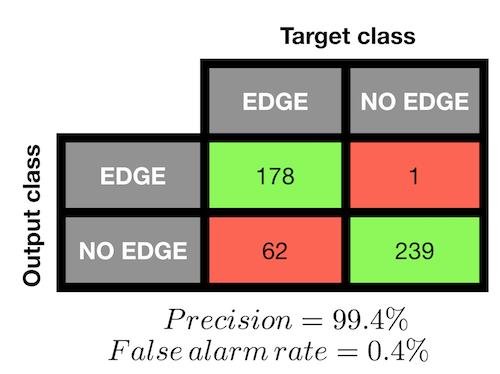}
         \caption{$n_{f} = 9$ frames}
    \end{subfigure}%
    \hfill
    \begin{subfigure}[t]{0.5\textwidth}
      \centering
         \includegraphics[width=\textwidth]{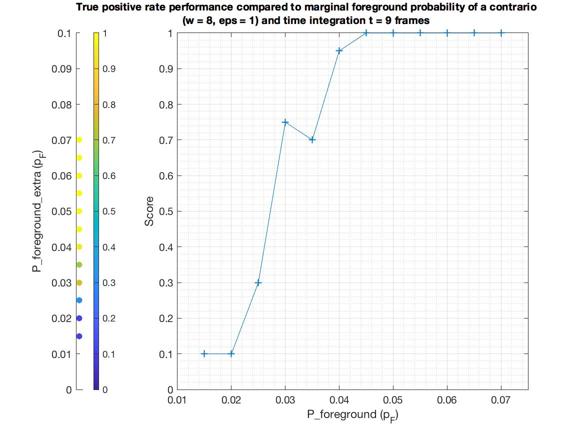}
         \caption{$n_{f} = 9$ frames}
    \end{subfigure}%
    \hfill
\end{figure}
\begin{figure}\ContinuedFloat
    \centering
    \begin{subfigure}[t]{0.5\textwidth}
      \centering
         \includegraphics[width=\textwidth]{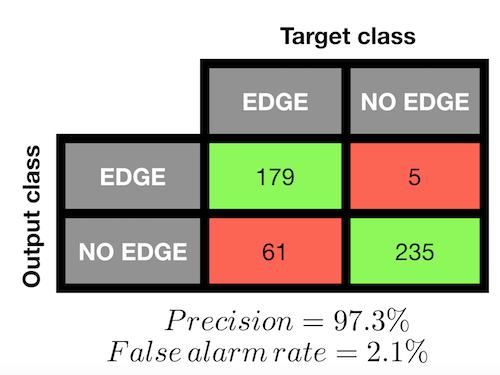}
         \caption{$n_{f} = 10$ frames}
    \end{subfigure}%
    \hfill
    \begin{subfigure}[t]{0.5\textwidth}
      \centering
         \includegraphics[width=\textwidth]{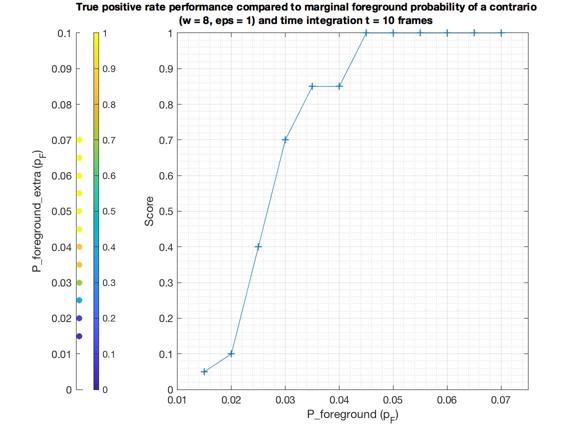}
         \caption{$n_{f} = 10$ frames}
    \end{subfigure}%
    \hfill
\caption{Performances of single width a contrario algorithms on video data of a static edge. Each algorithm corresponds to a different time integration $n_{f}$. Left: global confusion matrix of each algorithm on the entire dataset. Right: true positive rate of each algorithm with respect to $p_f$. In particular, note how low the false alarm rate is for each algorithm. It is not worrying to have many true negatives as the dataset is generated such that many of the true videos have low $p_f$ and as such should make the algorithms fail. Since the false alarm rate is particularly low, the precision of each algorithm is close to $100\%$ and as such we can trust a lot a decision to detect an edge. This leads us to only study the true positive rates (along with the fact that humans will have similar false alarm rates). Unsurprisingly, the true positive rates increase from close to $0$ for low $p_f$ to close to $1$ for high $p_f$ (except for merges of 1 or 2 frames, those need higher $p_f$ to saturate at $1$).}
\label{fig: video static AC}
\end{figure}

\begin{figure}
    \centering
    \begin{subfigure}[t]{0.5\textwidth}
      \centering
         \includegraphics[width=\textwidth]{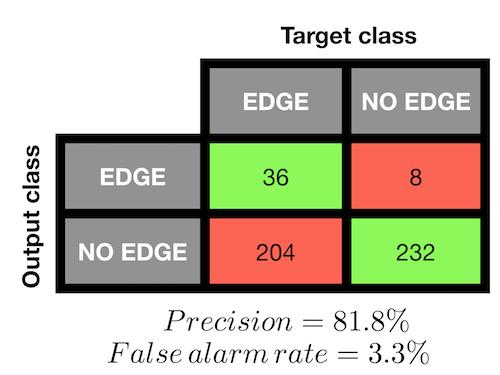}
         \caption{$n_{f} = 1$ frame}
    \end{subfigure}%
    \hfill
    \begin{subfigure}[t]{0.5\textwidth}
      \centering
         \includegraphics[width=\textwidth]{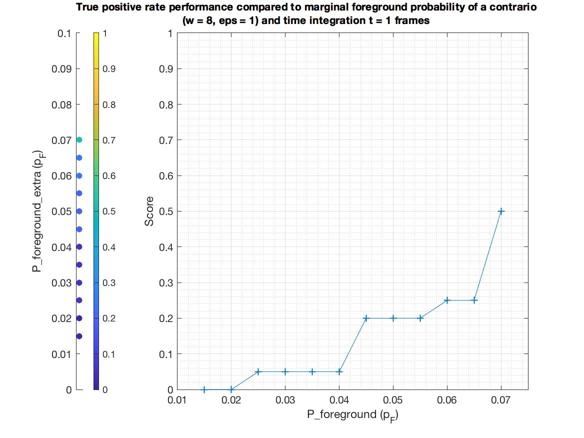}
         \caption{$n_{f} = 1$ frame}
    \end{subfigure}%
    \hfill
    \begin{subfigure}[t]{0.5\textwidth}
      \centering
         \includegraphics[width=\textwidth]{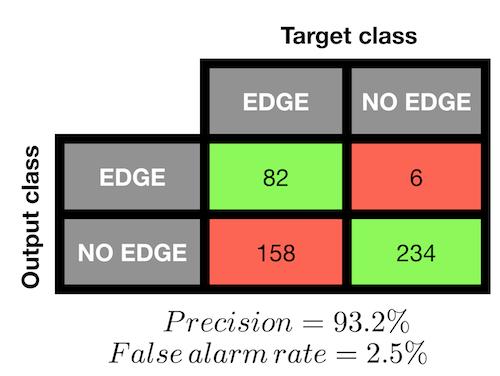}
         \caption{$n_{f} = 2$ frames}
    \end{subfigure}%
    \hfill
    \begin{subfigure}[t]{0.5\textwidth}
      \centering
         \includegraphics[width=\textwidth]{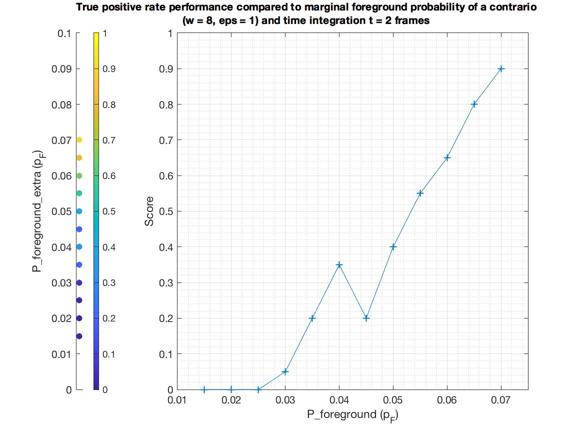}
         \caption{$n_{f} = 2$ frames}
    \end{subfigure}%
    \hfill
    \begin{subfigure}[t]{0.5\textwidth}
      \centering
         \includegraphics[width=\textwidth]{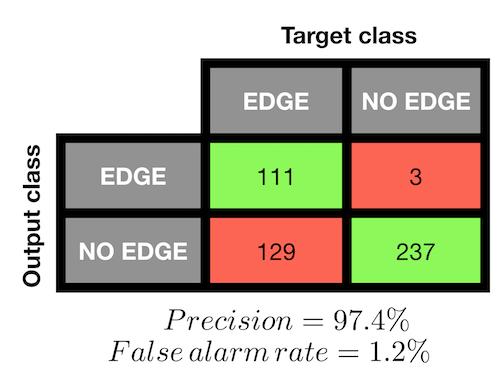}
         \caption{$n_{f} = 3$ frames}
    \end{subfigure}%
    \hfill
    \begin{subfigure}[t]{0.5\textwidth}
      \centering
         \includegraphics[width=\textwidth]{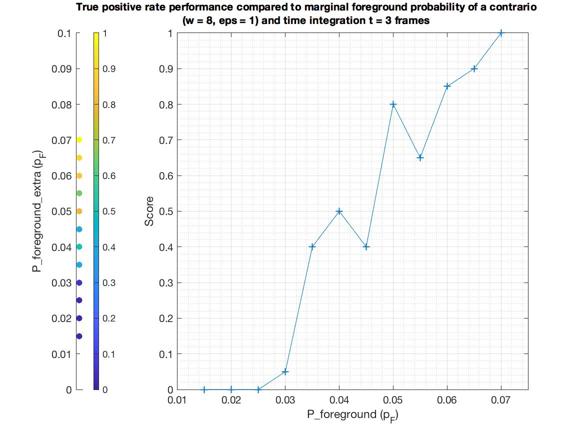}
         \caption{$n_{f} = 3$ frames}
    \end{subfigure}%
    \hfill
\end{figure}
\begin{figure}\ContinuedFloat
    \centering
    \begin{subfigure}[t]{0.5\textwidth}
      \centering
         \includegraphics[width=\textwidth]{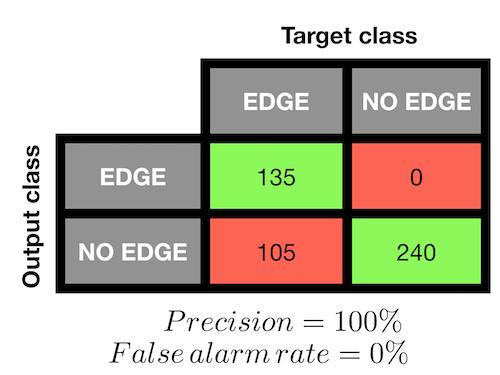}
         \caption{$n_{f} = 4$ frames}
    \end{subfigure}%
    \hfill
    \begin{subfigure}[t]{0.5\textwidth}
      \centering
         \includegraphics[width=\textwidth]{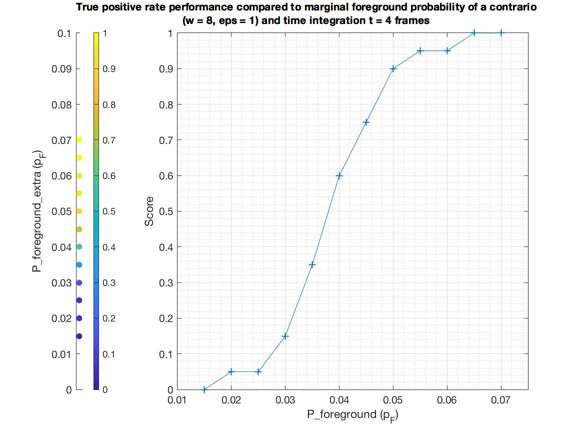}
         \caption{$n_{f} = 4$ frames}
    \end{subfigure}%
    \hfill
    \begin{subfigure}[t]{0.5\textwidth}
      \centering
         \includegraphics[width=\textwidth]{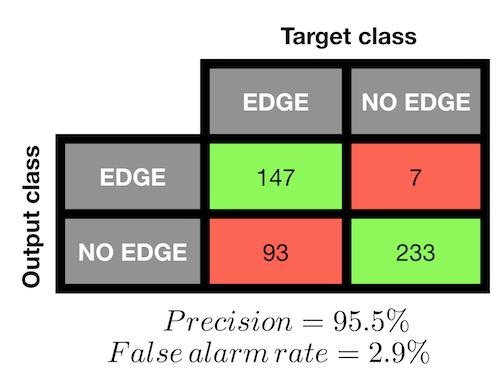}
         \caption{$n_{f} = 5$ frames}
    \end{subfigure}%
    \hfill
    \begin{subfigure}[t]{0.5\textwidth}
      \centering
         \includegraphics[width=\textwidth]{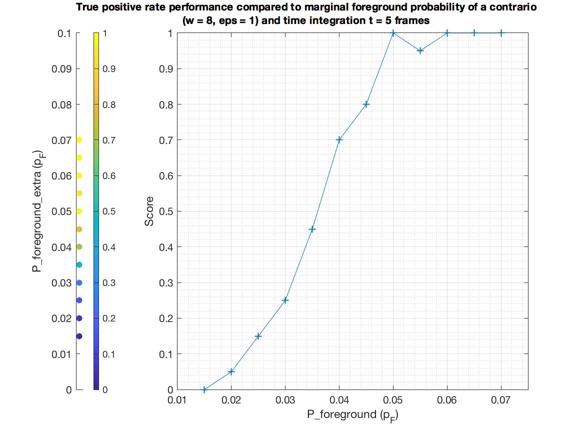}
         \caption{$n_{f} = 5$ frames}
    \end{subfigure}%
    \hfill
    \begin{subfigure}[t]{0.5\textwidth}
      \centering
         \includegraphics[width=\textwidth]{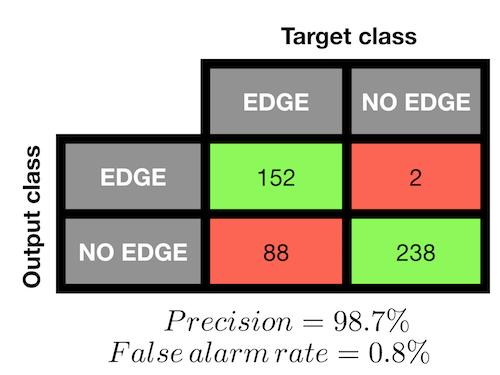}
         \caption{$n_{f} = 6$ frames}
    \end{subfigure}%
    \hfill
    \begin{subfigure}[t]{0.5\textwidth}
      \centering
         \includegraphics[width=\textwidth]{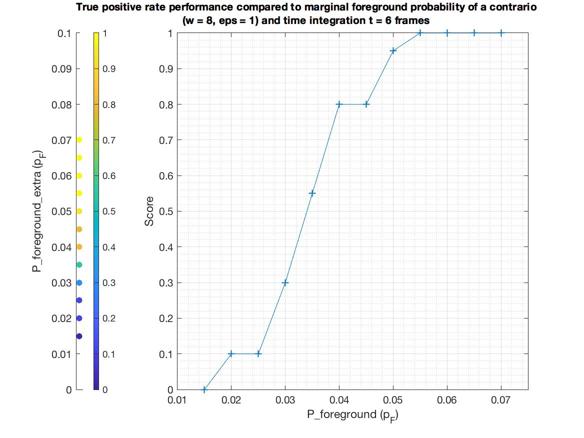}
         \caption{$n_{f} = 6$ frames}
    \end{subfigure}%
    \hfill
\end{figure}
\begin{figure}\ContinuedFloat
    \centering
    \begin{subfigure}[t]{0.5\textwidth}
      \centering
         \includegraphics[width=\textwidth]{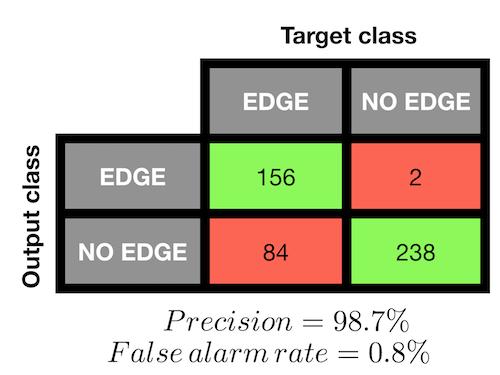}
         \caption{$n_{f} = 7$ frames}
    \end{subfigure}%
    \hfill
    \begin{subfigure}[t]{0.5\textwidth}
      \centering
         \includegraphics[width=\textwidth]{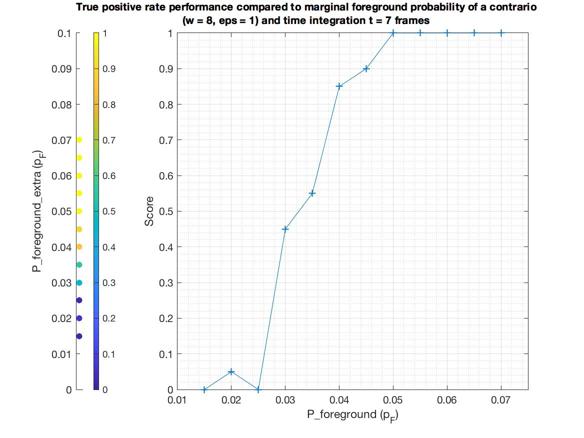}
         \caption{$n_{f} = 7$ frames}
    \end{subfigure}%
    \hfill
    \begin{subfigure}[t]{0.5\textwidth}
      \centering
         \includegraphics[width=\textwidth]{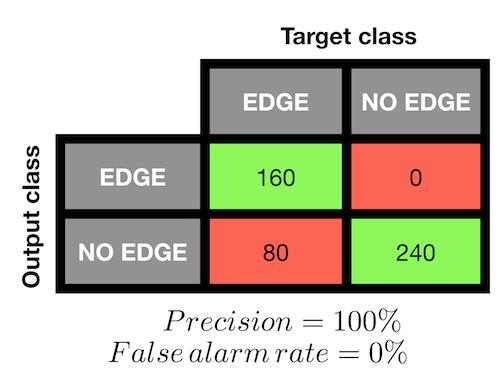}
         \caption{$n_{f} = 8$ frames}
    \end{subfigure}%
    \hfill
    \begin{subfigure}[t]{0.5\textwidth}
      \centering
         \includegraphics[width=\textwidth]{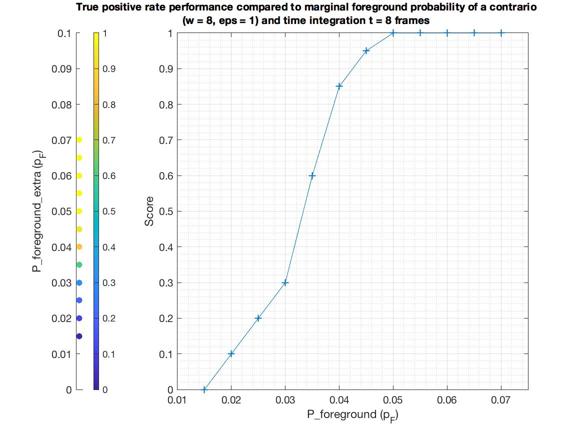}
         \caption{$n_{f} = 8$ frames}
    \end{subfigure}%
    \hfill
    \begin{subfigure}[t]{0.5\textwidth}
      \centering
         \includegraphics[width=\textwidth]{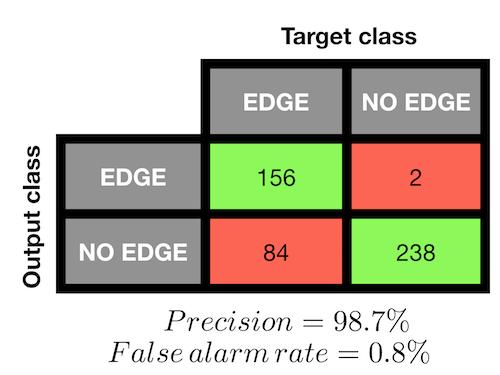}
         \caption{$n_{f} = 9$ frames}
    \end{subfigure}%
    \hfill
    \begin{subfigure}[t]{0.5\textwidth}
      \centering
         \includegraphics[width=\textwidth]{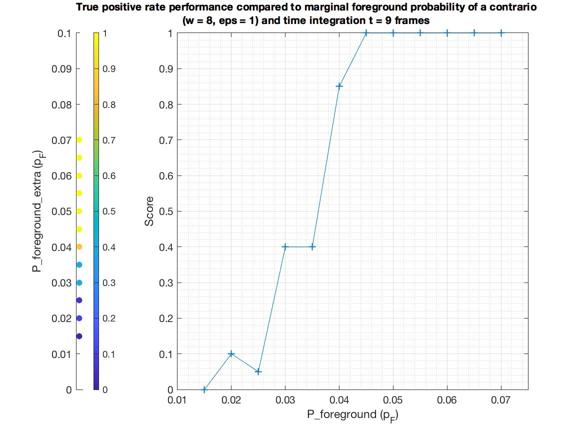}
         \caption{$n_{f} = 9$ frames}
    \end{subfigure}%
    \hfill
\end{figure}
\begin{figure}\ContinuedFloat
    \centering
    \begin{subfigure}[t]{0.5\textwidth}
      \centering
         \includegraphics[width=\textwidth]{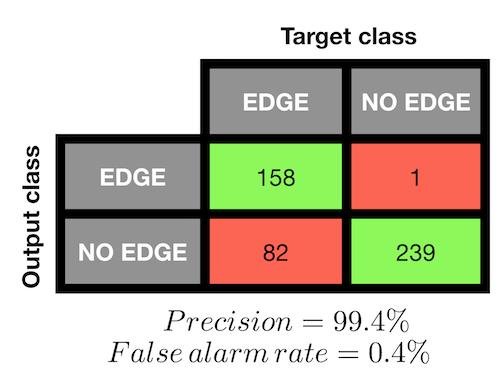}
         \caption{$n_{f} = 10$ frames}
    \end{subfigure}%
    \hfill
    \begin{subfigure}[t]{0.5\textwidth}
      \centering
         \includegraphics[width=\textwidth]{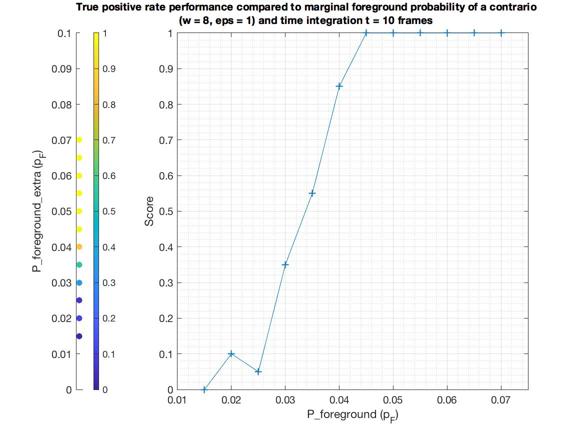}
         \caption{$n_{f} = 10$ frames}
    \end{subfigure}%
    \hfill
\caption{Performances of single width a contrario algorithms on video data of a dynamic edge. Each algorithm corresponds to a different time integration $n_{f}$. Left: global confusion matrix of each algorithm on the entire dataset. Right: true positive rate of each algorithm with respect to $p_f$. In particular, note how low the false alarm rate is for each algorithm. It is not worrying to have many true negatives as the dataset is generated such that many of the true videos have low $p_f$ and as such should make the algorithms fail. Since the false alarm rate is particularly low, the precision of each algorithm is close to $100\%$ and as such we can trust a lot a decision to detect an edge. This leads us to only study the true positive rates (along with the fact that humans will have similar false alarm rates). Unsurprisingly, the true positive rates increase from close to $0$ for low $p_f$ to close to $1$ for high $p_f$ (except for merges of 1 or 2 frames, those need higher $p_f$ to saturate at $1$).}
\label{fig: video dynamic AC}
\end{figure}

\clearpage
\newpage
\section{Static Edge Experiments}
\label{sec: exp3.1}

In this Section we detail the experiment performed by human observers on the videos of \say{jumping} static edges. This will allow us to extract an empirical estimate for the time integration for our model of human perception. The experiments detailed here were approved by the Ethics committee of the Technion as conforming to the standards for performing psychophysics experiments on volunteering human subjects.

\paragraph{Stimuli} The stimuli is the random-dot video static edge dataset we have generated and previously described. Recall that half of these videos are just noise without an underlying line. All subjects were presented with the entire dataset (480 videos). The order in which they are shown is chosen at random and therefore differs between subjects.

\paragraph{Apparatus} Each subject was tested with exactly same display. The display is the screen of a MacBook Pro retina 13-inch display from mid 2014. Each subject was seated directly in front of the screen. The images were displayed in a well-lit room. The average distance between the subjects' eyes and the screen was about $70\,\mathrm{cm}$. The $300\times300$ pixels video is displayed with a size of $10.4 \,\mathrm{cm}\times10.4\,\mathrm{cm}$. This translates to a pixel size of approximately $0.35\,\mathrm{mm}\times0.35\,\mathrm{mm}$. On average the visual angle for observing the image is approximately $8.5°\times8.5°$. Below the video we display two buttons for user decision: a YES button and a NO button (see the procedure paragraph). The YES button is green with YES written in a large font on it. The NO button is red with NO written in a large font on it, the same font as on the YES button. Both buttons are horizontally aligned. The YES button is located on the left whereas the NO button is located on the right. Each button is of size $5.6\,\mathrm{cm}\times 2.8\,\mathrm{cm}$. The average visual angle for the buttons are approximately $4.6°\times2.3°$. The distance between the image and the horizontal line of boxes is $0.5\,\mathrm{cm}$ which translates to a visual angle of seperation of approximately $0.4°$. The buttons are separated by a distance of $5.8\,\mathrm{cm}$, which corresponds to a visual angle of approximately $4.7°$. See Figure \ref{fig: screenshot exp 3.1} for a screenshot of the display.

\paragraph{Subjects} The experiments were performed on the first author and on the same further thirteen subjects as previously. All the other subjects were unfamiliar with psychophysical experiments. All subjects were international students of the university, coming from various countries around the globe including China, Vietnam, India, Germany, Greece, the United States of America and France. Gender parity was almost respected with eight males and six females. The age range of the subjects was between 20 and 31 years of age. All subjects had normal or corrected to normal vision. To the best of our knowledge, no subject had any mental condition. All other subjects were unaware of the aim of our study.

\paragraph{Procedure} The experiment is a yes-no task. Subjects were told that some of the videos they would be presented with will contain a line and some would not. The proportion of videos without a line was not given before-hand. However, subjects were told it was all right if they found that they had to click a lot on YES or a lot on NO and just focus on the current video at hand. Subjects were asked to click on the YES button if they believed that the video contained the line, and click NO if they thought it did not. They were encouraged to make a decision rather than wait for the timer to run out. They were told that they were looking for a line of length two thirds of the width of the video frame and width one pixel. They were told that should the video be a video with a line in it, then the line would be there during the entire video. They were also told that if a line is present, it would jump regularly from a random position to another random position. They were told no video would ever have were two underlying lines that would be present simultaneously. In the entire experiment, a Matlab window covered the entire screen. In the centre we displayed the video. Below it, on the left would be a big green button with YES written on it and symmetrically on the right a big red button with NO written on it. Above the video was written the question \say{Is the line present?}, to which they had to answer by clicking on the appropriate button. For responding, subjects had the choice between using the built-in touchpad of the MacBook Pro or to use a Asus N6-Mini mouse. If the subject clicked on any area that is not within one of the two buttons then that click was dismissed. If the subject clicked on any of the buttons then the videos stops and moves on to the next one. Subjects had a 10 second limit to answer for each video, after which the next video would be automatically shown. If subjects ran out of time the next video was automatically shown. Subjects were not shown a progress bar. Subjects could not take a break once having started the experiment, but were allowed to abandon the experiment at any time, should they desire it. The time for each click to be made, along with the corresponding button clicked was saved.

\paragraph{Discussion} In order to estimate detection performance, we should normally look at both the true positive rate of the subjects with respect to $p_f$ but also the false alarm rate. Indeed, if we only look at one of these values and discard the other, we could end up rating some performance as excellent when in fact they are terrible. For instance imagine a subject that always finds the edge but when there isn't any always return a false alarm: he always clicks on YES. Then he detects all edge data, thus $100\%$ true positive rate, but also a $100\%$ false alarm rate, which is bad: the true decisions are not reliable. Thus if we omit the false alarm rate and only look at the true positive rate we get an incorrect evaluation of the performance. Nevertheless, this is what we are going to do. The justification for such a dangerous procedure is the following. The false alarm rate of the subjects is particularly low, close to $0\%$. Also, it is similar to the false alarm rate of the a contrario algorithms. As such, we are justified in the decision to disregard the false alarm rate and only concentrate on the true positive rate. The true positive rate with respect to $p_f$ provides a performance vector or curve for each subject. We can then compare humans and a contrario by comparing these vectors or functions. This is somewhat equivalent to fitting the true positive vector performance of humans to the columns of the previous performances on the static data (up to dilation since $p_f$ almost linearly increases with the number of frames of the merge). The comparison measure is chosen to be the $L_2$ distance between the performance vectors. For better display, we plot the $L_2$ distance of the subjects' performance and the performance of each algorithm. The algorithms with lower distance thus perform the most similarly to the human and thus give us a candidate time integration. In practice, we see that for most subjects and on average that the algorithms with time integration in $n_f\in[6,8]$ frames perform similarly and yield the best $L_2$ fit and so this time integration range is a good candidate for the time integration of humans. This corresponds to a time integration of $\Delta t\in[0.2,0.27]$ seconds. We display the global confusion matrices and false alarm rates of the subjects in Figure \ref{fig: exp3 static confusion humans}. We display the true positive rates with respect to $p_f$ in the Figure \ref{fig: exp3 static TPR humans}. We display in Figure \ref{fig: exp3 static L2 humans} the $L_2$ distance between the true positive rate curves of the subjects with respect to those of the a contrario algorithms working on $n_{f}$ frames for time integration.

\begin{figure}
    \centering
    \includegraphics[width=\textwidth]{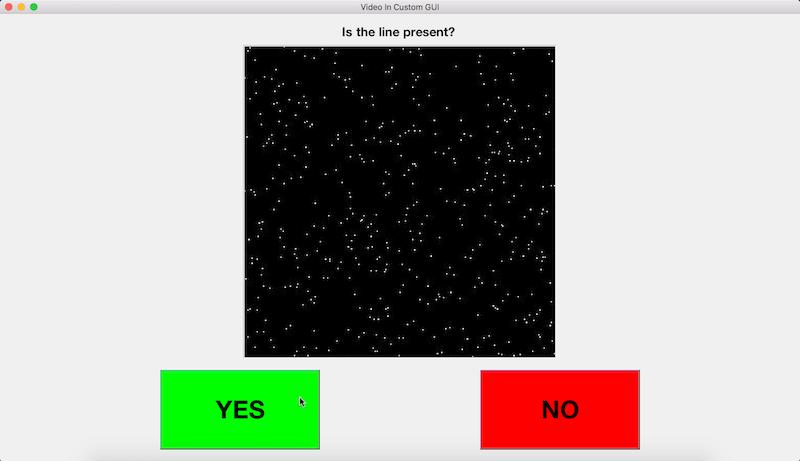}
    \hfill
    \caption[Short version]{Screenshot of the display.}
    \label{fig: screenshot exp 3.1}
\end{figure}

\begin{figure}
    \centering
    \begin{subfigure}[t]{0.3\textwidth}
      \centering
         \includegraphics[width=\textwidth]{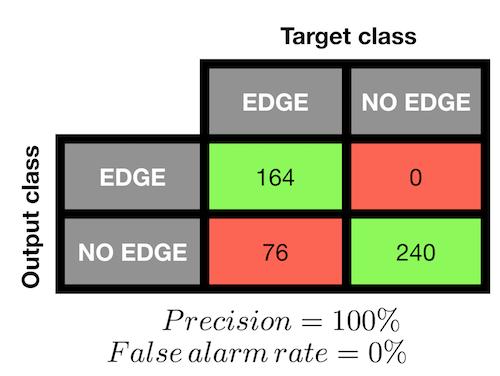}
         \caption{Subject 0}
    \end{subfigure}%
    \hfill
    \begin{subfigure}[t]{0.3\textwidth}
      \centering
         \includegraphics[width=\textwidth]{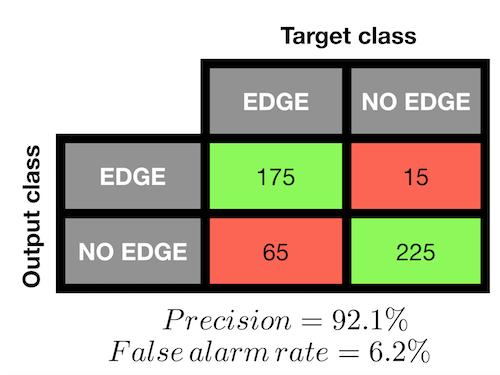}
         \caption{Subject 1}
    \end{subfigure}%
    \hfill
    \begin{subfigure}[t]{0.3\textwidth}
      \centering
         \includegraphics[width=\textwidth]{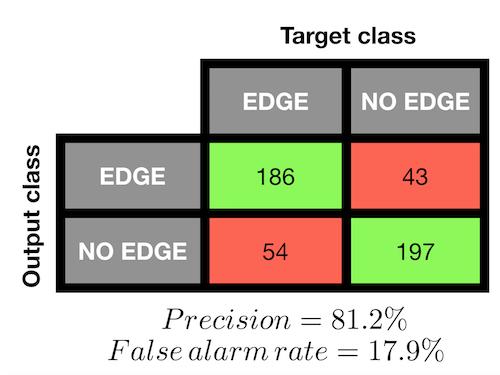}
         \caption{Subject 2}
    \end{subfigure}%
    \hfill
    \begin{subfigure}[t]{0.3\textwidth}
      \centering
         \includegraphics[width=\textwidth]{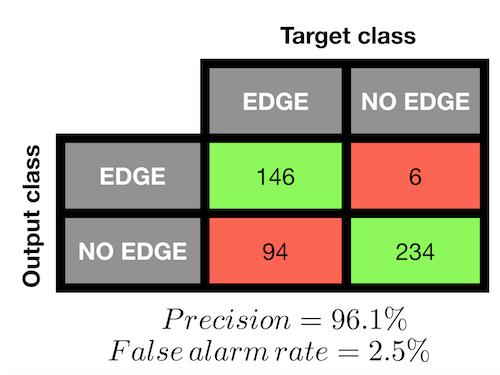}
         \caption{Subject 3}
    \end{subfigure}%
    \hfill
    \begin{subfigure}[t]{0.3\textwidth}
      \centering
         \includegraphics[width=\textwidth]{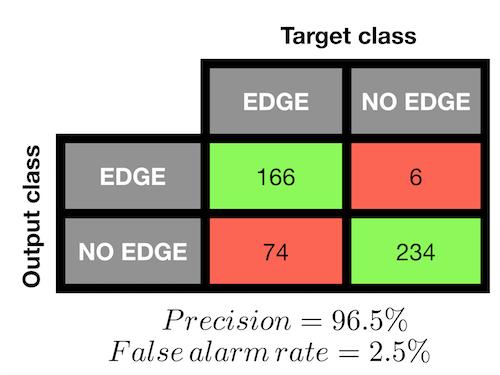}
         \caption{Subject 4}
    \end{subfigure}%
    \hfill
    \begin{subfigure}[t]{0.3\textwidth}
      \centering
         \includegraphics[width=\textwidth]{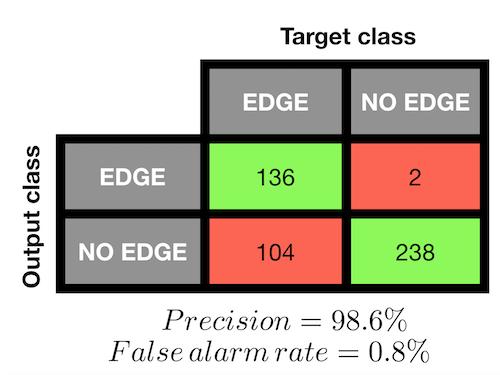}
         \caption{Subject 5}
    \end{subfigure}%
    \hfill
    \begin{subfigure}[t]{0.3\textwidth}
      \centering
         \includegraphics[width=\textwidth]{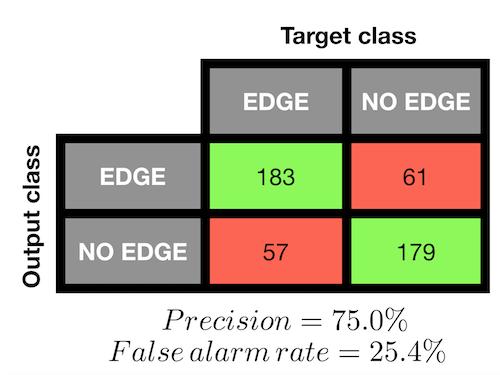}
         \caption{Subject 6}
    \end{subfigure}%
    \hfill
    \begin{subfigure}[t]{0.3\textwidth}
      \centering
         \includegraphics[width=\textwidth]{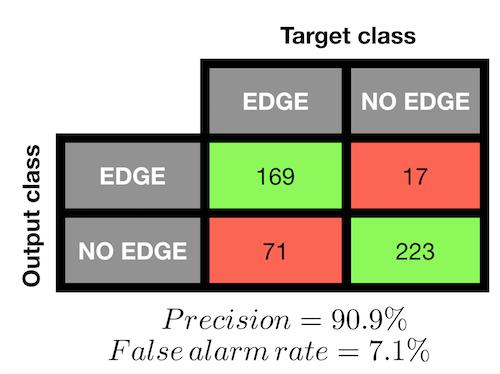}
         \caption{Subject 7}
    \end{subfigure}%
    \hfill
    \begin{subfigure}[t]{0.3\textwidth}
      \centering
         \includegraphics[width=\textwidth]{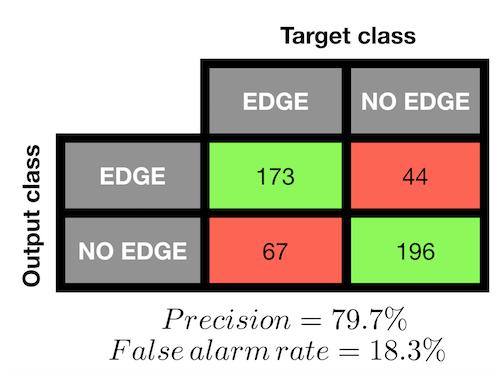}
         \caption{Subject 8}
    \end{subfigure}%
    \hfill
    \begin{subfigure}[t]{0.3\textwidth}
      \centering
         \includegraphics[width=\textwidth]{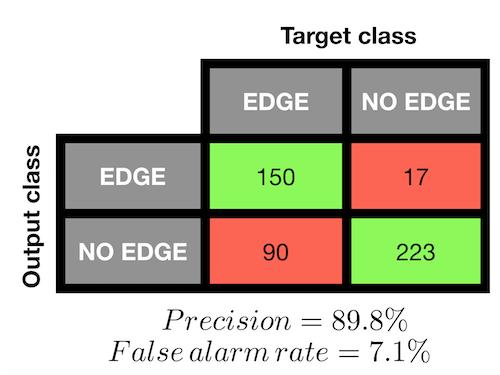}
         \caption{Subject 9}
    \end{subfigure}%
    \hfill
    \begin{subfigure}[t]{0.3\textwidth}
      \centering
         \includegraphics[width=\textwidth]{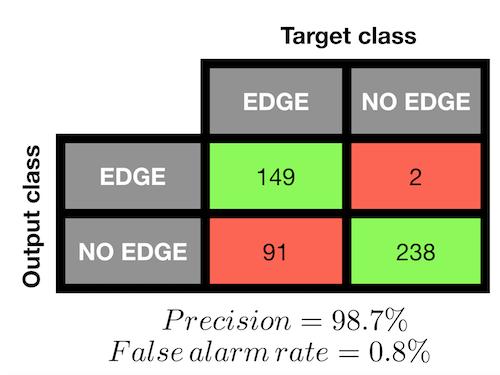}
         \caption{Subject 10}
    \end{subfigure}%
    \hfill
    \begin{subfigure}[t]{0.3\textwidth}
      \centering
         \includegraphics[width=\textwidth]{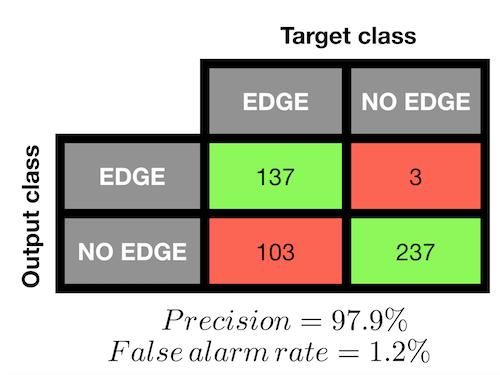}
         \caption{Subject 11}
    \end{subfigure}%
    \hfill
    \begin{subfigure}[t]{0.3\textwidth}
      \centering
         \includegraphics[width=\textwidth]{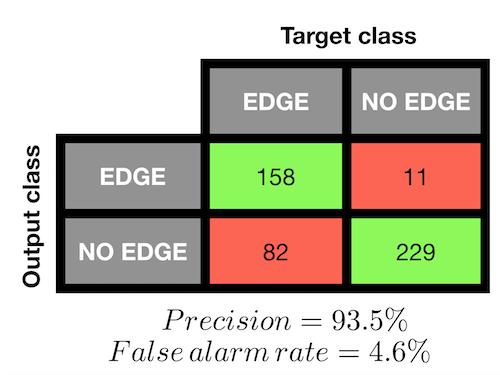}
         \caption{Subject 12}
    \end{subfigure}%
    \hfill
    \begin{subfigure}[t]{0.3\textwidth}
      \centering
         \includegraphics[width=\textwidth]{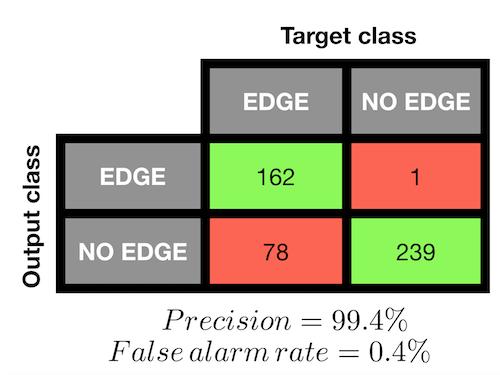}
         \caption{Subject 13}
    \end{subfigure}%
    \hfill
    \begin{subfigure}[t]{0.3\textwidth}
      \centering
         \includegraphics[width=\textwidth]{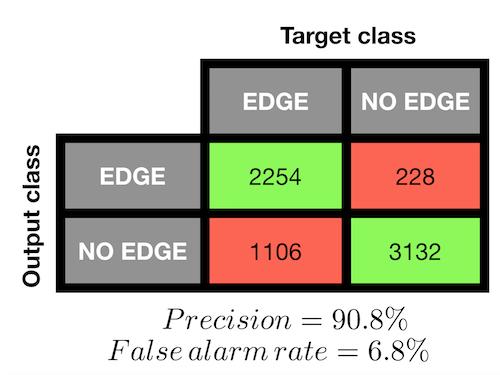}
         \caption{All subjects}
    \end{subfigure}%
    \hfill
    \caption{Global confusion matrices of each subject and of the average of all subjects. In particular, note how low the false alarm rate is for each subject, except subjects 2, 6 and 8. After this experiment, these three subjects admitted to have guessed a lot and not correctly understood that it was not desired and strongly discouraged to randomly guess. As such, their results are not trustworthy here. It is not worrying to have many true negatives as the dataset is generated such that many of the true videos have low $p_f$ and as such should make the subjects fail. Since the false alarm rate is particularly low, the precision of each subject is close to $100\%$ and as such we can trust a lot a decision to detect an edge. Also, the false alarm rates are similar to those of the a contrario algorithms. This leads us to only study the true positive rates. }
    \label{fig: exp3 static confusion humans}
 \end{figure}

\begin{figure}
    \centering
    \begin{subfigure}[t]{0.5\textwidth}
      \centering
         \includegraphics[width=\textwidth]{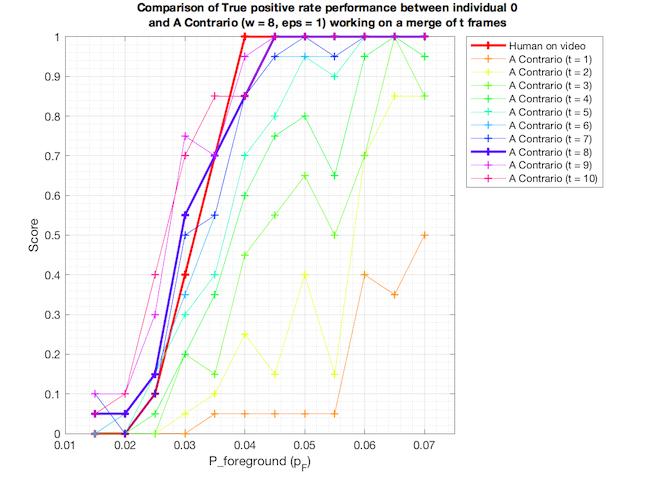}
         \caption{Subject 0}
    \end{subfigure}%
    \hfill
    \begin{subfigure}[t]{0.5\textwidth}
      \centering
         \includegraphics[width=\textwidth]{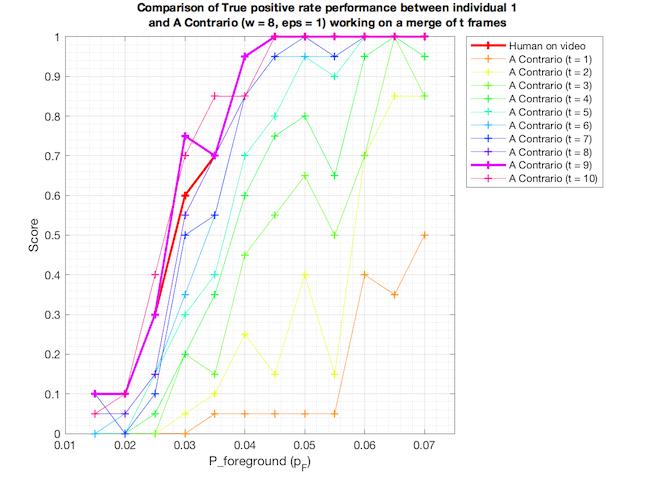}
         \caption{Subject 1}
    \end{subfigure}%
    \hfill
    \begin{subfigure}[t]{0.5\textwidth}
      \centering
         \includegraphics[width=\textwidth]{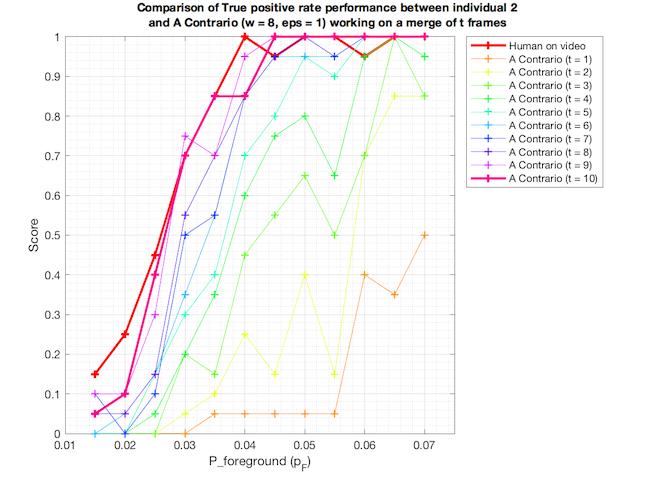}
         \caption{Subject 2}
    \end{subfigure}%
    \hfill
    \begin{subfigure}[t]{0.5\textwidth}
      \centering
         \includegraphics[width=\textwidth]{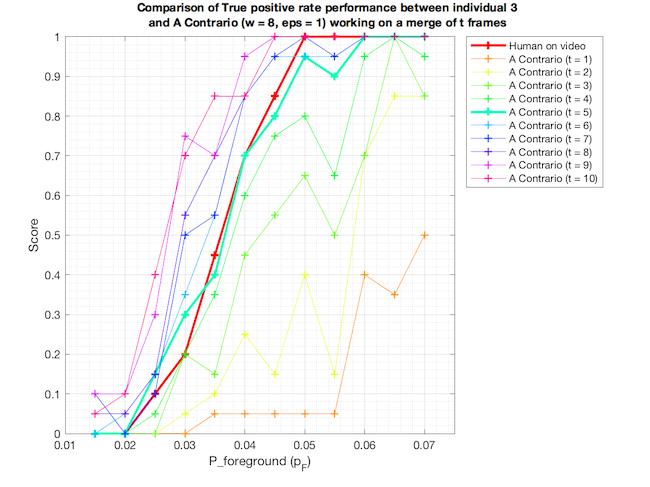}
         \caption{Subject 3}
    \end{subfigure}%
    \hfill
    \begin{subfigure}[t]{0.5\textwidth}
      \centering
         \includegraphics[width=\textwidth]{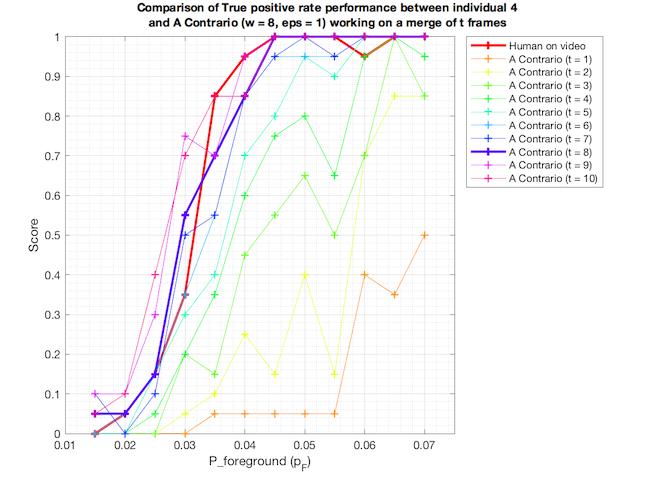}
         \caption{Subject 4}
    \end{subfigure}%
    \hfill
    \begin{subfigure}[t]{0.5\textwidth}
      \centering
         \includegraphics[width=\textwidth]{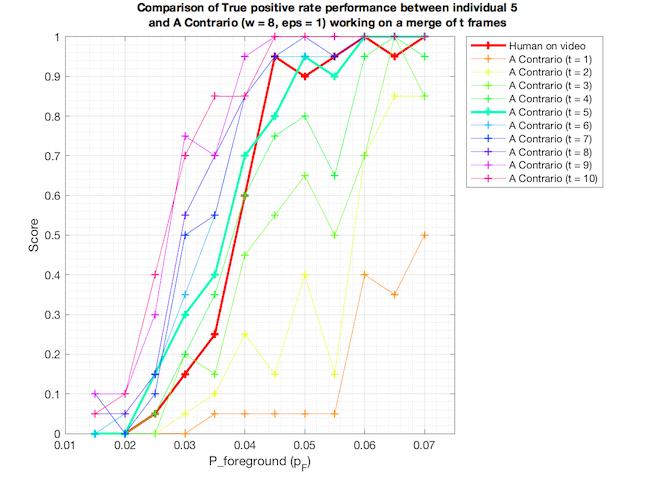}
         \caption{Subject 5}
    \end{subfigure}%
    \hfill
\end{figure}
\begin{figure} \ContinuedFloat
    \centering
    \begin{subfigure}[t]{0.5\textwidth}
      \centering
         \includegraphics[width=\textwidth]{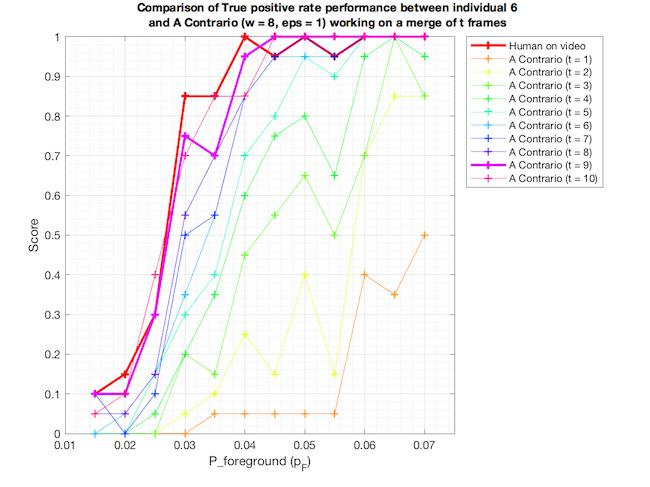}
         \caption{Subject 6}
    \end{subfigure}%
    \hfill
    \begin{subfigure}[t]{0.5\textwidth}
      \centering
         \includegraphics[width=\textwidth]{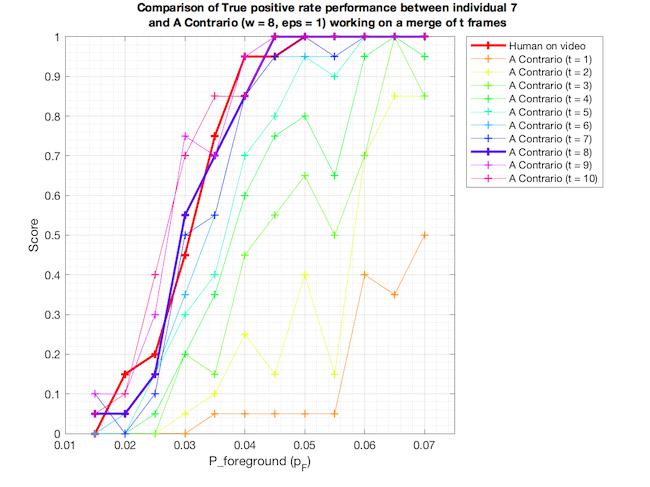}
         \caption{Subject 7}
    \end{subfigure}%
    \hfill
    \begin{subfigure}[t]{0.5\textwidth}
      \centering
        \includegraphics[width=\textwidth]{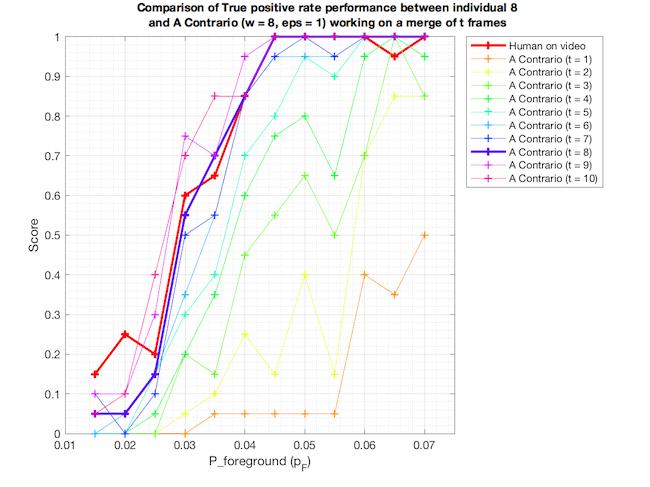}
         \caption{Subject 8}
    \end{subfigure}%
    \hfill
    \begin{subfigure}[t]{0.5\textwidth}
      \centering
         \includegraphics[width=\textwidth]{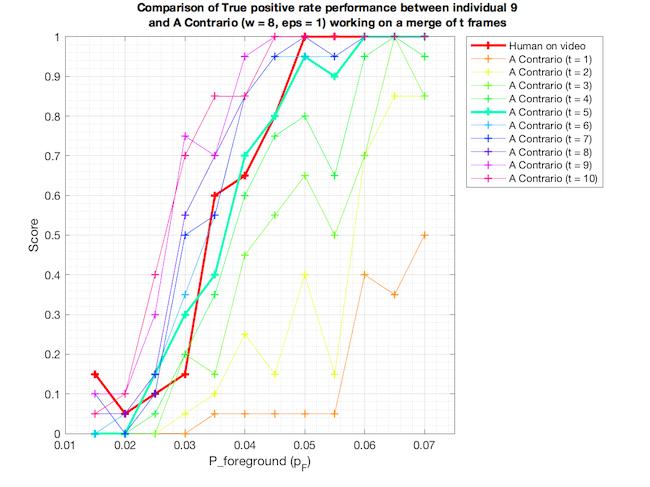}
         \caption{Subject 9}
    \end{subfigure}%
    \hfill
    \begin{subfigure}[t]{0.5\textwidth}
      \centering
         \includegraphics[width=\textwidth]{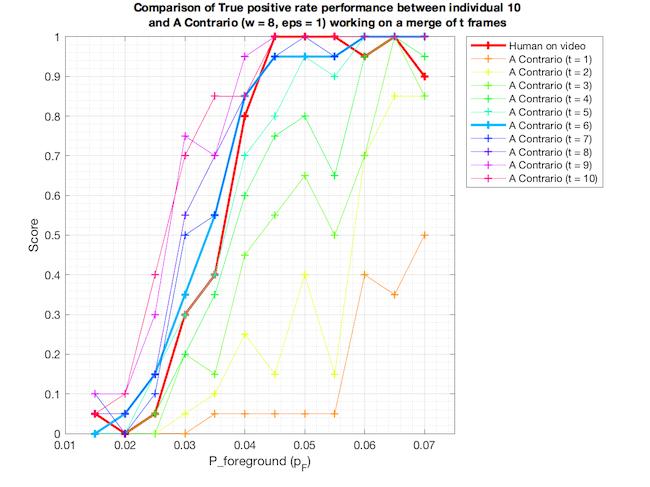}
         \caption{Subject 10}
    \end{subfigure}%
    \hfill
    \begin{subfigure}[t]{0.5\textwidth}
      \centering
        \includegraphics[width=\textwidth]{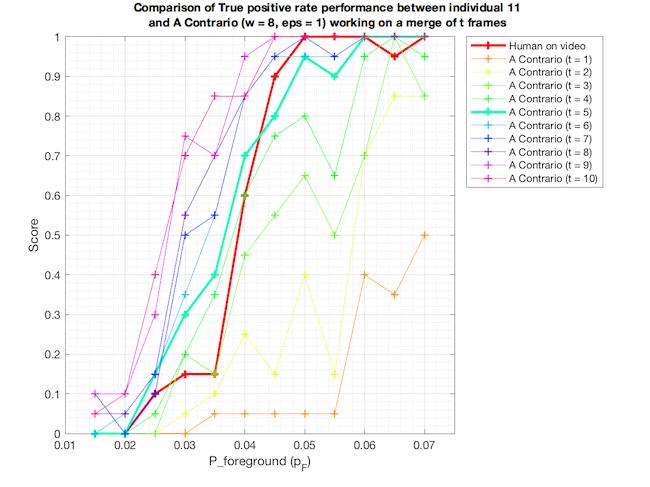}
         \caption{Subject 11}
    \end{subfigure}%
    \hfill
\end{figure}
\begin{figure} \ContinuedFloat
    \centering
    \begin{subfigure}[t]{0.5\textwidth}
      \centering
        \includegraphics[width=\textwidth]{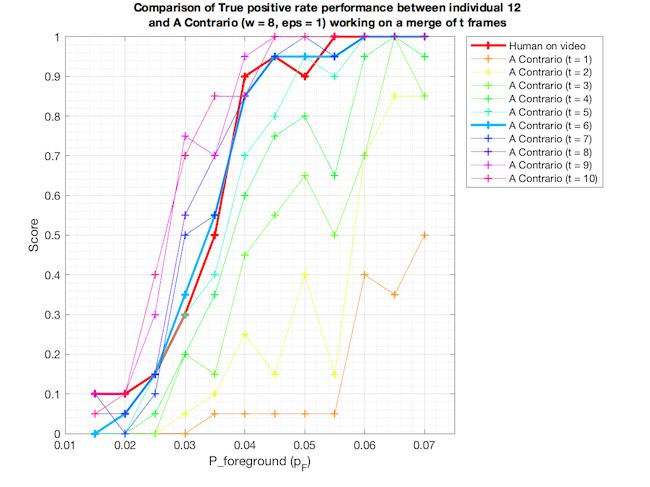}
         \caption{Subject 12}
    \end{subfigure}%
    \hfill
    \begin{subfigure}[t]{0.5\textwidth}
      \centering
         \includegraphics[width=\textwidth]{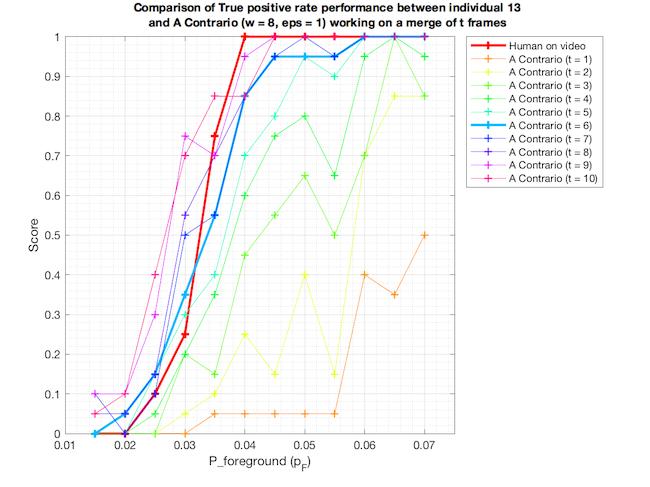}
         \caption{Subject 13}
    \end{subfigure}%
    \hfill
    \begin{subfigure}[t]{0.5\textwidth}
      \centering
         \includegraphics[width=\textwidth]{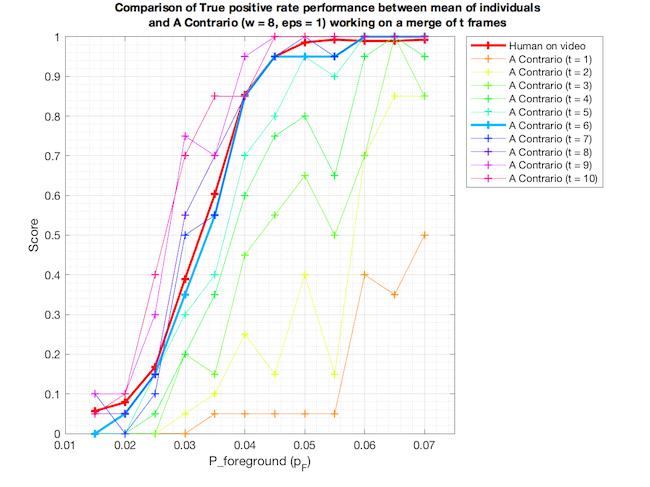}
         \caption{All subjects}
    \end{subfigure}%
    \hfill
    \caption{True positive rate of each subject with respect to $p_f$. The true positive rates increase from close to $0$ for low $p_f$ to close to $1$ for high $p_f$. We superimpose to the human performance curve the curves previously obtained by the single widths a contrarios working on $w=8$. The curves obtained for the subjects are somewhat similar to those obtained by the single width a contrario processes. In each plot, we have thickened the a contrario algorithm with single time integration $n_{f}\in\{1,\hdots,10\}$ frames such that its curve (or discrete vector) is closest in $L_2$ distance to the curve (or discrete vector) obtained by the subject. The best time integration seems to fluctuate between subjects and it is not clear from these plots how much better the best curve is compared to the others. See the next Figure for more clarity.}
    \label{fig: exp3 static TPR humans}
 \end{figure}

\begin{figure}
    \centering
    \begin{subfigure}[t]{0.5\textwidth}
      \centering
         \includegraphics[width=\textwidth]{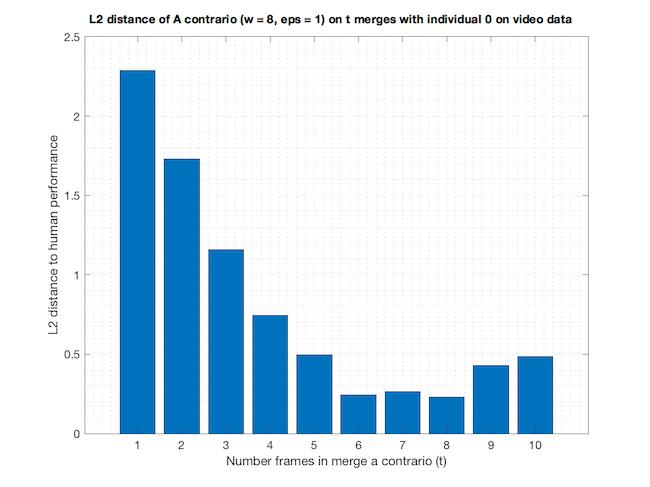}
         \caption{Subject 0}
    \end{subfigure}%
    \hfill
    \begin{subfigure}[t]{0.5\textwidth}
      \centering
         \includegraphics[width=\textwidth]{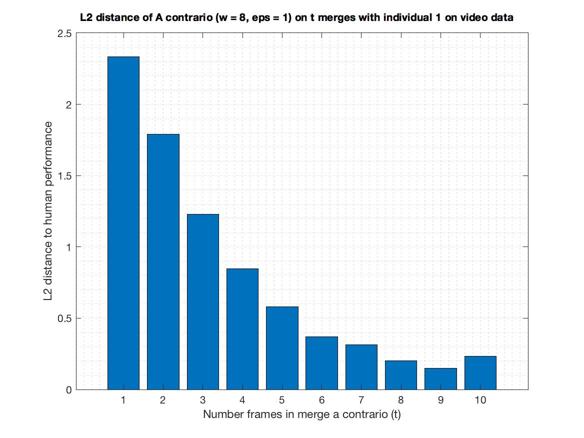}
         \caption{Subject 1}
    \end{subfigure}%
    \hfill
    \begin{subfigure}[t]{0.5\textwidth}
      \centering
         \includegraphics[width=\textwidth]{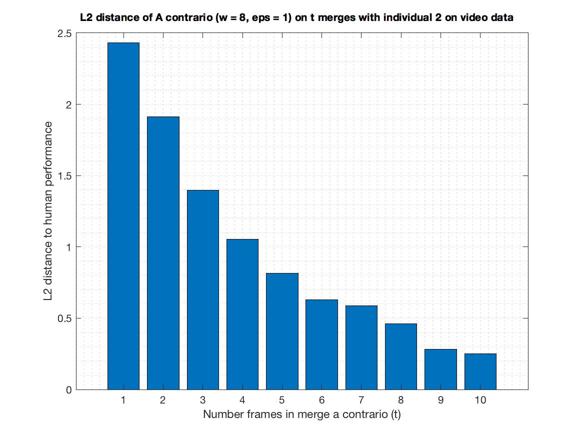}
         \caption{Subject 2}
    \end{subfigure}%
    \hfill
    \begin{subfigure}[t]{0.5\textwidth}
      \centering
         \includegraphics[width=\textwidth]{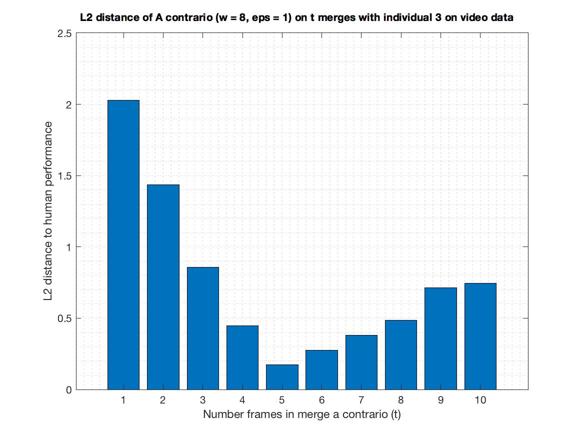}
         \caption{Subject 3}
    \end{subfigure}%
    \hfill
    \begin{subfigure}[t]{0.5\textwidth}
      \centering
         \includegraphics[width=\textwidth]{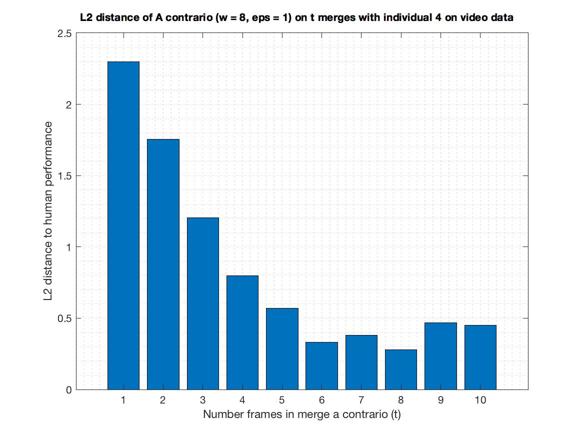}
         \caption{Subject 4}
    \end{subfigure}%
    \hfill
    \begin{subfigure}[t]{0.5\textwidth}
      \centering
         \includegraphics[width=\textwidth]{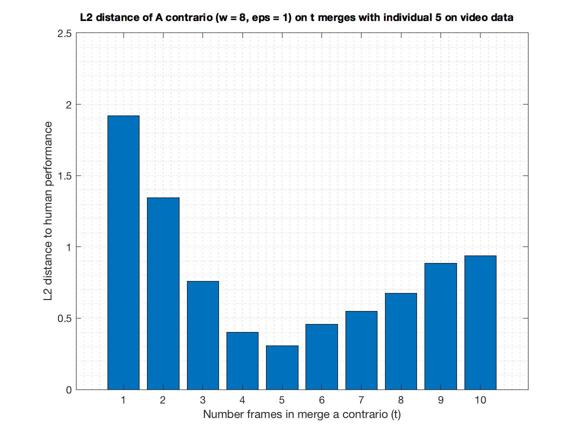}
         \caption{Subject 5}
    \end{subfigure}%
    \hfill
\end{figure}
\begin{figure} \ContinuedFloat
    \centering
    \begin{subfigure}[t]{0.5\textwidth}
      \centering
         \includegraphics[width=\textwidth]{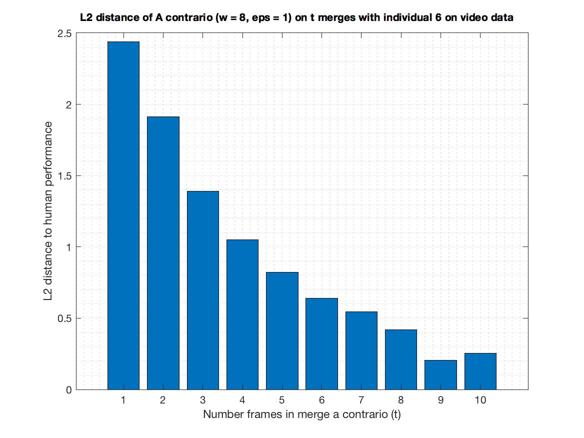}
         \caption{Subject 6}
    \end{subfigure}%
    \hfill
    \begin{subfigure}[t]{0.5\textwidth}
      \centering
         \includegraphics[width=\textwidth]{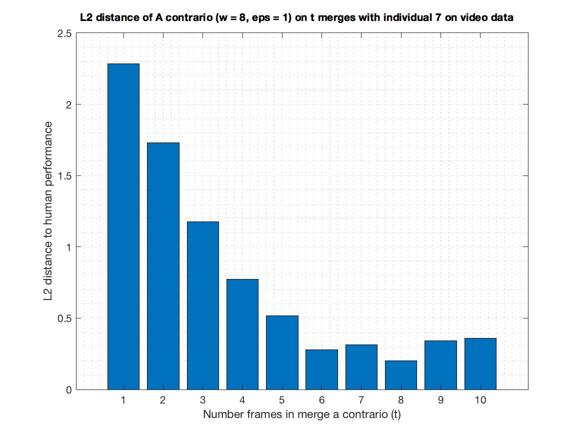}
         \caption{Subject 7}
    \end{subfigure}%
    \hfill
    \begin{subfigure}[t]{0.5\textwidth}
      \centering
         \includegraphics[width=\textwidth]{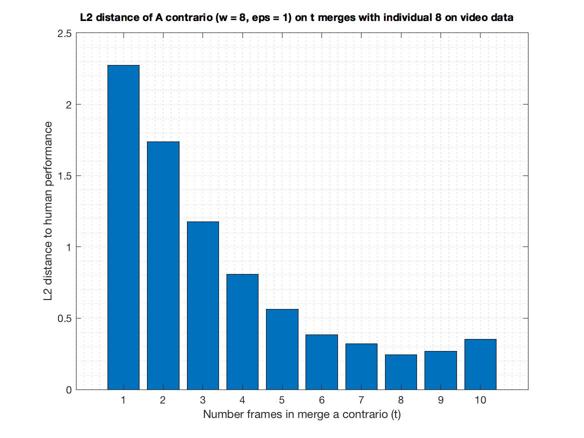}
         \caption{Subject 8}
    \end{subfigure}%
    \hfill
    \begin{subfigure}[t]{0.5\textwidth}
      \centering
         \includegraphics[width=\textwidth]{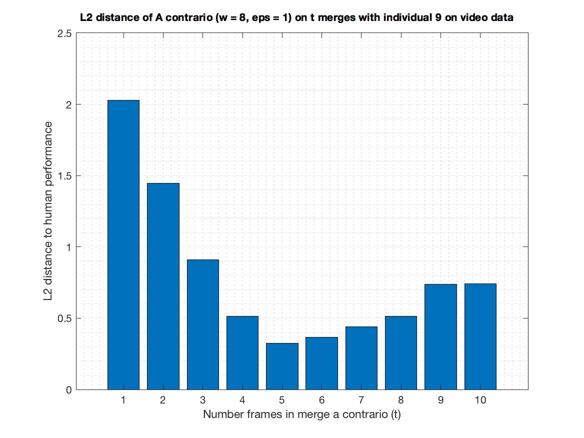}
         \caption{Subject 9}
    \end{subfigure}%
    \hfill
    \begin{subfigure}[t]{0.5\textwidth}
      \centering
         \includegraphics[width=\textwidth]{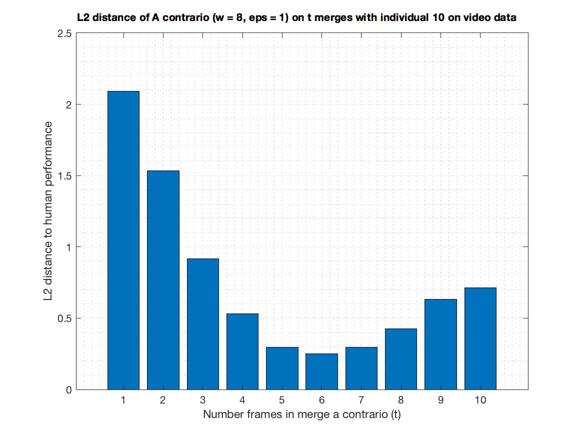}
         \caption{Subject 10}
    \end{subfigure}%
    \hfill
    \begin{subfigure}[t]{0.5\textwidth}
      \centering
         \includegraphics[width=\textwidth]{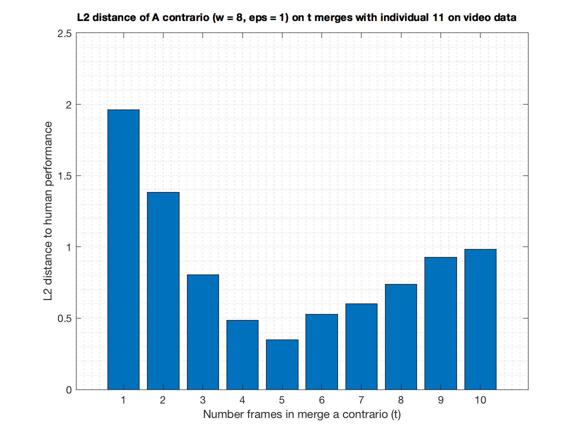}
         \caption{Subject 11}
    \end{subfigure}%
    \hfill
\end{figure}
\begin{figure} \ContinuedFloat
    \centering
    \begin{subfigure}[t]{0.5\textwidth}
      \centering
         \includegraphics[width=\textwidth]{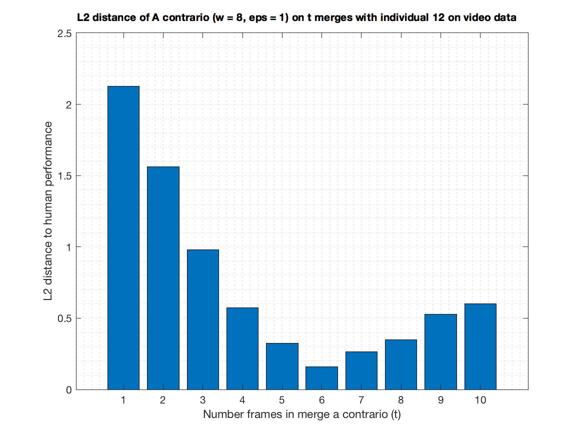}
         \caption{Subject 12}
    \end{subfigure}%
    \hfill
    \begin{subfigure}[t]{0.5\textwidth}
      \centering
         \includegraphics[width=\textwidth]{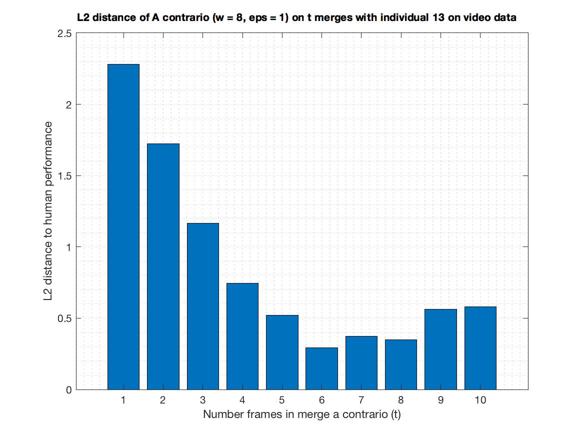}
         \caption{Subject 13}
    \end{subfigure}%
    \hfill
    \begin{subfigure}[t]{0.5\textwidth}
      \centering
         \includegraphics[width=\textwidth]{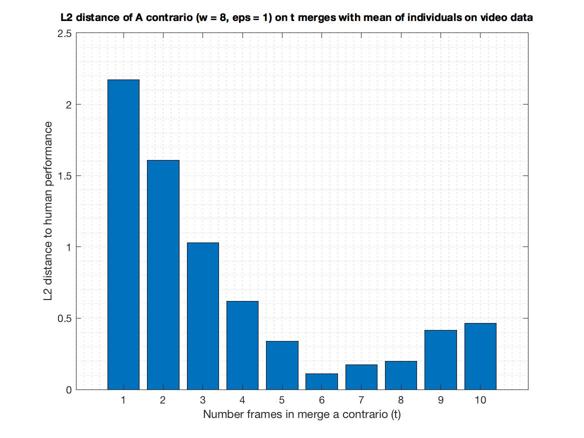}
         \caption{All subjects}
    \end{subfigure}%
    \hfill
    \caption{$L_2$ distances between the true positive rate curves of each subject and of those of each a contrario algorithm working on width $w=8$ and time integration $n_{f}\in\{1,\hdots,10\}$ frames. The time integrations that could suit our model are the time integrations giving the smallest $L_2$ error. There does not seem to be a single best value to take. Thus taking a range for the time integration seems wise. Qualitatively, for most individuals and on average, the range $n_{f}\in[6,8]$ frames seems like a good candidate range for optimal time integration. Recall that we cannot trust the results obtained by subjects 2, 6 and 8 since they guessed a lot thus giving a precision that is not close to $100\%$.}
    \label{fig: exp3 static L2 humans}
 \end{figure}

\clearpage
\newpage
\section{Dynamic Edge Experiment}
\label{sec: exp3.2}

In this Section we detail the experiment on the videos of jumping and moving dynamic edges. This will allow us to extract a further estimate of the time integration for our model of human perception. We will then compare this time integration with the previously obtained one on static edges and see whether they are compatible. From the comparison we extract a combined estimate for the time integration for our model. The experiments detailed here were approved by the Ethics committee of the Technion as conforming to the standards for performing psychophysics experiments on volunteering human subjects.

\paragraph{Stimuli} The stimuli is the random-dot video dynamic edge dataset we have generated and previously described. Recall that half of these videos are just noise without an underlying line. All subjects were presented with the entire dataset (480 videos). The order in which they are shown is chosen at random and therefore differs between subjects.

\paragraph{Apparatus} Each subject was tested with exactly the same display. The display is the screen of a MacBook Pro retina 13-inch display, from mid 2014. Each subject was seated directly in front of the screen. The images were displayed in a well-lit room. The average distance between the subjects' eyes and the screen was about $70\,\mathrm{cm}$. The $300\times 300$ pixels video is displayed having a size of $10.4 \,\mathrm{cm}\times10.4\,\mathrm{cm}$. This translates to a pixel size of approximately $0.35\,\mathrm{mm}\times0.35\,\mathrm{mm}$. On average the visual angle for observing the image is approximately $8.5°\times8.5°$. Below the video we display two buttons for user decision: a YES button and a NO button (see the procedure paragraph). The YES button is green with YES written in a large font on it. The NO button is red with NO written in a large font on it, the same font as on the YES button. Both buttons are horizontally aligned. The YES button is located on the left whereas the NO button is located on the right. Each button is of size $5.6\,\mathrm{cm}\times 2.8\,\mathrm{cm}$. The average visual angle for the buttons are approximately $4.6°\times2.3°$. The distance between the image and the horizontal line of boxes is $0.5\,\mathrm{cm}$ which translates to a visual angle of seperation of approximately $0.4°$. The buttons are separated by a distance of $5.8\,\mathrm{cm}$, which corresponds to a visual angle of approximately $4.7°$.  See Figure \ref{fig: screenshot exp 3.2} for a screenshot of the display.

\paragraph{Subjects} The experiments were performed on the first author and on the same other thirteen subjects as previously. All the other subjects were unfamiliar with psychophysical experiments. All subjects were international students of the university, coming from various countries around the globe including China, Vietnam, India, Germany, Greece, the United States of America and France. Gender parity was almost respected with eight males and six females. The age range of the subjects was between 20 and 31 years of age. All subjects had normal or corrected to normal vision. To the best of our knowledge, no subject had any mental condition. All other subjects were unaware of the aim of our study.

\paragraph{Procedure} The experiment is a yes-no task. Subjects were told that some of the videos they would be presented with will contain a line and some would not. The proportion of videos without a line was not given before-hand. However, subjects were told it was all right if they found that they had to click a lot on YES or a lot on NO and just focus on the current video at hand. Subjects were asked to click on the YES button if they believed that the video contained the line, and click NO if they thought it did not. They were encouraged to make a decision rather than waiting for the timer to run out. They were told that they were looking for a line of length two thirds of the width of the video frame and width one pixel. They were told that should the video be a video with the line in it, then the line would be there during the entire video. They were also told that if the line is present, it would jump regularly from a random position to another random position. They were told no video would contain any sequence were two underlying lines would be simultaneously present. In the entire experiment, a Matlab window covered the entire screen. In the centre we displayed the video. Below it, on the left would be a big green button with YES written on it and symmetrically on the right a big red button with NO written on it. Above the video was written the question \say{Is the line present?}, to which they had to answer by clicking on the appropriate button. For their response, subjects had the choice between using the built-in touchpad of the MacBook Pro or to use a Asus N6-Mini mouse. If the subject clicked on any area that is not within one of the two buttons then that click was dismissed. If the subject clicked on any of the buttons then the videos stops and moves on to the next one. Subjects had a 10 second limit to answer for each video, after which the next video would be automatically shown. If subjects ran out of time the next video was automatically shown. Subjects were not shown a progress bar. Subjects could not take a break once having started the experiment, but were allowed to abandon the experiment at any time, should they desire it. The time for each click to be made, along with the corresponding button clicked was saved.

\paragraph{Discussion} In order to estimate detection performance, we should normally look at both the true positive rate of the subjects with respect to $p_f$ but also the false alarm rate. Indeed, if we only look at one of these values and discard the other, we could end up rating some performance as excellent when in fact they are terrible. For instance imagine a subject that always finds the edge but when there isn't any always return a false alarm: he always clicks on YES. Then he detects all edge data, thus $100\%$ true positive rate, but also a $100\%$ false alarm rate, which is bad: the true decisions are not reliable. Thus if we omit the false alarm rate and only look at the true positive rate we get an incorrect evaluation of the performance. Nevertheless, this is what we are going to do. The justification for such a dangerous procedure is the following. In these experiments too, the false alarm rate of the subjects is particularly low, close to $0\%$. Also, it is similar to the false alarm rate of the a contrario algorithms. As such, we can disregard the false alarm rate and only concentrate on the true positive rate. The true positive rate with respect to $p_f$ provides a performance vector or curve for each subject. We can then compare humans and a contrario by comparing these vectors or curves. This is somewhat equivalent to fitting the true positive vector performance of humans to the columns of the previous performances on the static data (up to dilation since $p_f$ almost linearly increases with the number of frames merged). The comparison measure is chosen to be the $L_2$ distance between the performance vectors. For better display, we plot the $L_2$ distance of the subjects' performance and the performance of the corresponding a contrario algorithms based on various integration times $n_{f}$. The algorithms with lower distance thus perform the most similarly to the human and thus give us a candidate time integration. In practice, we see that for most subjects the algorithms with time integration in $n_f\in[4,10]$ frames perform similarly and the best and so this time integration range is a good candidate for the time integration of humans. This corresponds to a time integration of $\Delta t\in[0.13,0.33]$ seconds. We display the global confusion matrices and false alarm rates of the subjects in Figure \ref{fig: exp3 dynamic confusion humans}. We display the true positive rates with respect to $p_f$ in the Figure \ref{fig: exp3 dynamic TPR humans}. We display in Figure \ref{fig: exp3 dynamic L2 humans} the $L_2$ distance between the true positive rate curves of the subjects with respect to those of the a contrario algorithms working on $n_{f}$ frames for time integration.

\begin{figure}
    \centering
    \includegraphics[width=\textwidth]{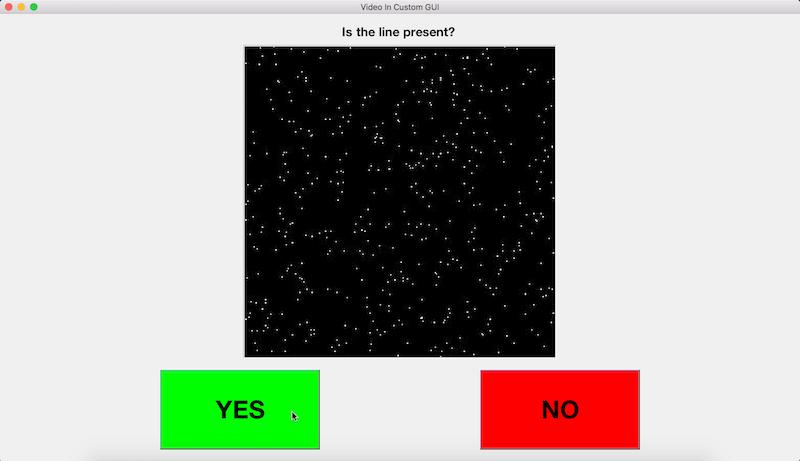}
    \hfill
    \caption[Short version]{Screenshot of the display.}
    \label{fig: screenshot exp 3.2}
\end{figure}

\begin{figure}
    \centering
    \begin{subfigure}[t]{0.3\textwidth}
      \centering
         \includegraphics[width=\textwidth]{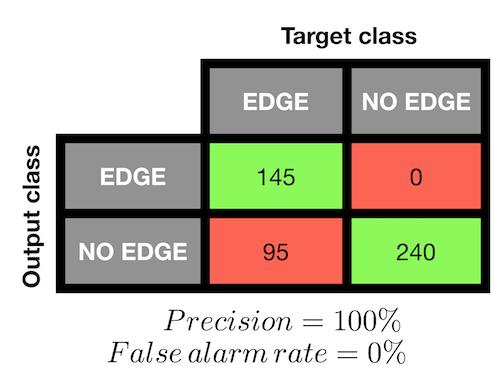}
         \caption{Subject 0}
    \end{subfigure}%
    \hfill
    \begin{subfigure}[t]{0.3\textwidth}
      \centering
         \includegraphics[width=\textwidth]{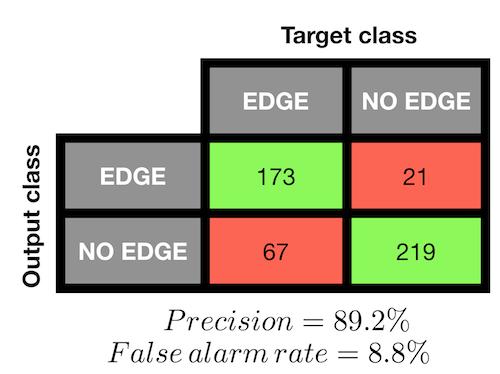}
         \caption{Subject 1}
    \end{subfigure}%
    \hfill
    \begin{subfigure}[t]{0.3\textwidth}
      \centering
         \includegraphics[width=\textwidth]{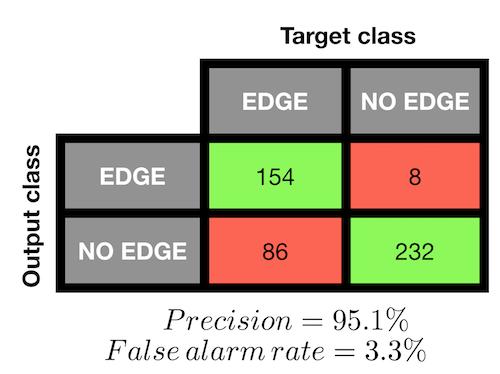}
         \caption{Subject 2}
    \end{subfigure}%
    \hfill
    \begin{subfigure}[t]{0.3\textwidth}
      \centering
         \includegraphics[width=\textwidth]{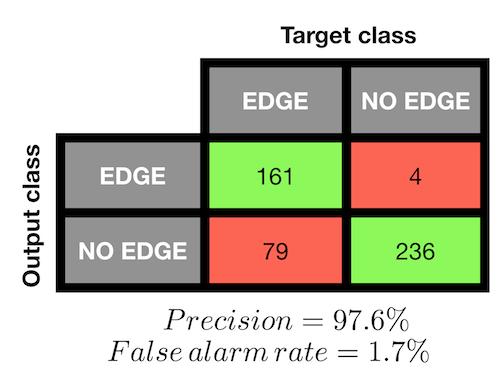}
         \caption{Subject 3}
    \end{subfigure}%
    \hfill
    \begin{subfigure}[t]{0.3\textwidth}
      \centering
         \includegraphics[width=\textwidth]{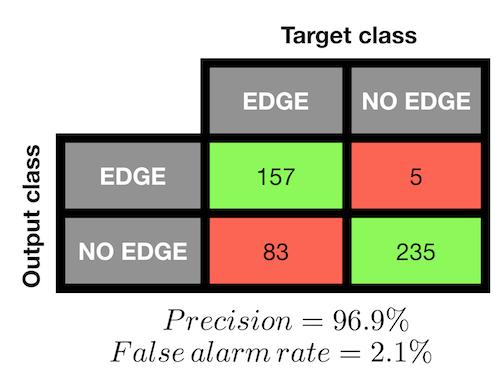}
         \caption{Subject 4}
    \end{subfigure}%
    \hfill
    \begin{subfigure}[t]{0.3\textwidth}
      \centering
         \includegraphics[width=\textwidth]{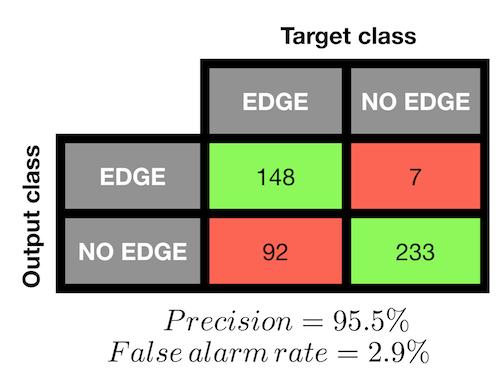}
         \caption{Subject 5}
    \end{subfigure}%
    \hfill
    \begin{subfigure}[t]{0.3\textwidth}
      \centering
         \includegraphics[width=\textwidth]{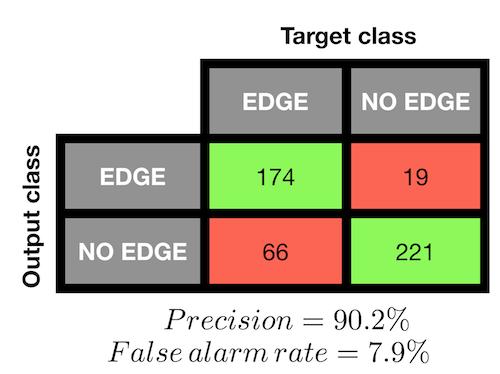}
         \caption{Subject 6}
    \end{subfigure}%
    \hfill
    \begin{subfigure}[t]{0.3\textwidth}
      \centering
         \includegraphics[width=\textwidth]{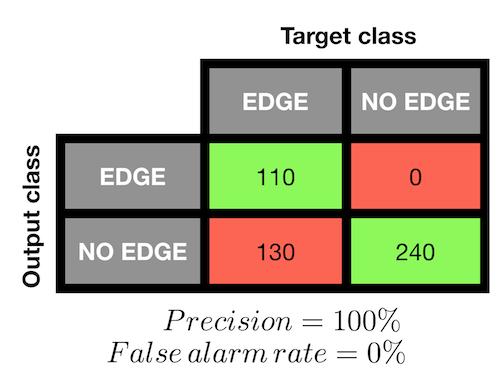}
         \caption{Subject 7}
    \end{subfigure}%
    \hfill
    \begin{subfigure}[t]{0.3\textwidth}
      \centering
         \includegraphics[width=\textwidth]{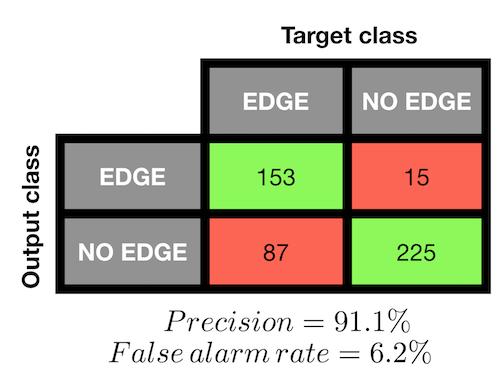}
         \caption{Subject 8}
    \end{subfigure}%
    \hfill
    \begin{subfigure}[t]{0.3\textwidth}
      \centering
         \includegraphics[width=\textwidth]{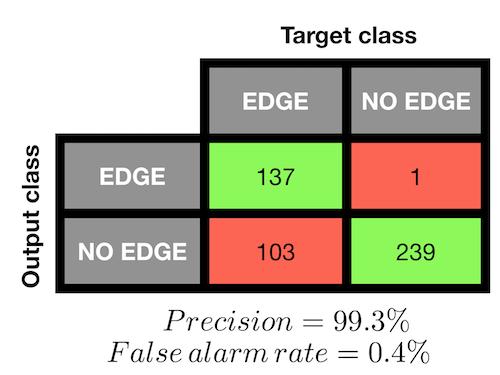}
         \caption{Subject 9}
    \end{subfigure}%
    \hfill
    \begin{subfigure}[t]{0.3\textwidth}
      \centering
         \includegraphics[width=\textwidth]{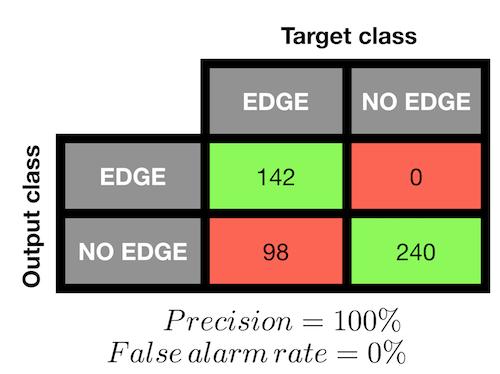}
         \caption{Subject 10}
    \end{subfigure}%
    \hfill
    \begin{subfigure}[t]{0.3\textwidth}
      \centering
         \includegraphics[width=\textwidth]{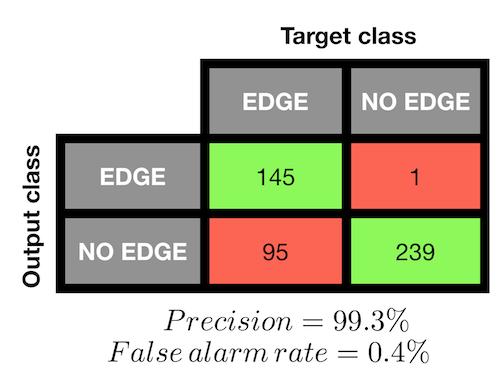}
         \caption{Subject 11}
    \end{subfigure}%
    \hfill
    \begin{subfigure}[t]{0.3\textwidth}
      \centering
         \includegraphics[width=\textwidth]{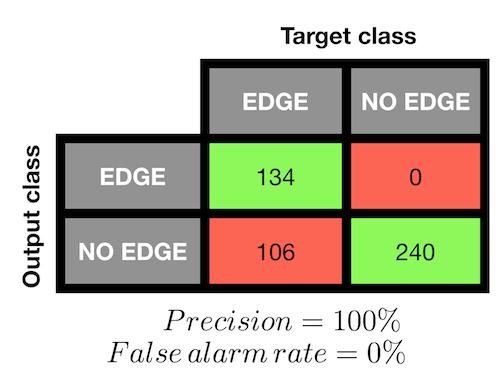}
         \caption{Subject 12}
    \end{subfigure}%
    \hfill
    \begin{subfigure}[t]{0.3\textwidth}
      \centering
         \includegraphics[width=\textwidth]{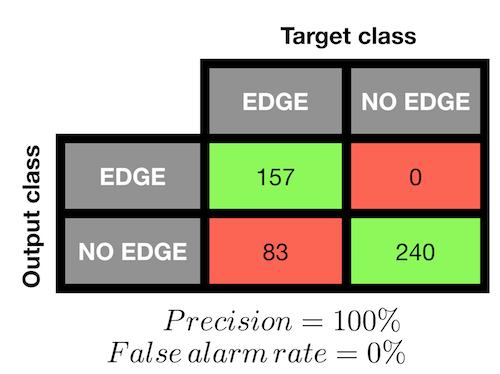}
         \caption{Subject 13}
    \end{subfigure}%
    \hfill
    \begin{subfigure}[t]{0.3\textwidth}
      \centering
         \includegraphics[width=\textwidth]{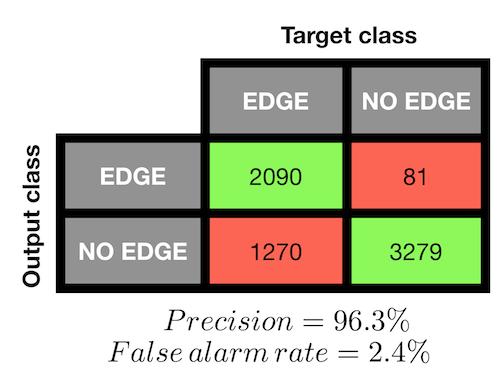}
         \caption{All subjects}
    \end{subfigure}%
    \hfill
    \caption{Global confusion matrices of each subject and of the average of all subjects. In particular, note how low the false alarm rate is for each subject. Note also how the subjects who had previously guessed a lot stopped doing so in this experiment since we stressed out that it was not all right to guess. It is not worrying to have many true negatives as the dataset is generated such that many of the true videos have low $p_f$ and as such should make the subjects fail. Since the false alarm rate is particularly low, the precision of each subject is close to $100\%$ and as such we can trust a lot a decision to detect an edge. Also, the false alarm rates are similar to those of the a contrario algorithms. This leads us to only study the true positive rates. }
    \label{fig: exp3 dynamic confusion humans}
 \end{figure}

\begin{figure}
    \centering
    \begin{subfigure}[t]{0.5\textwidth}
      \centering
         \includegraphics[width=\textwidth]{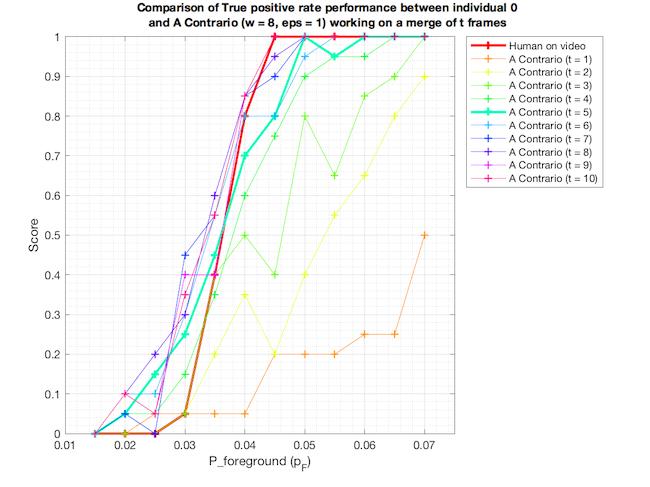}
         \caption{Subject 0}
    \end{subfigure}%
    \hfill
    \begin{subfigure}[t]{0.5\textwidth}
      \centering
         \includegraphics[width=\textwidth]{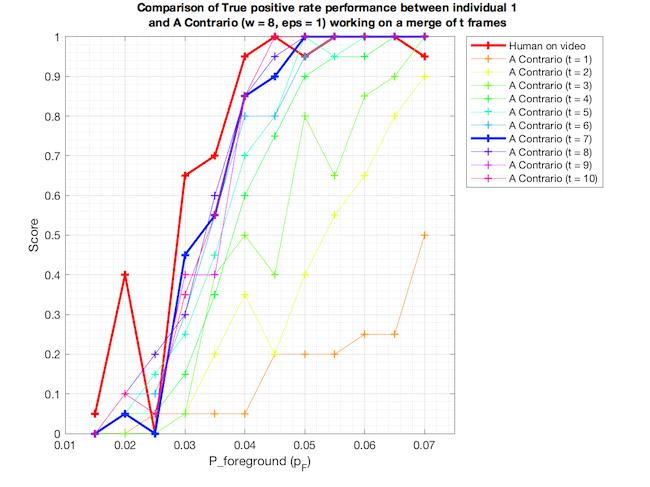}
         \caption{Subject 1}
    \end{subfigure}%
    \hfill
    \begin{subfigure}[t]{0.5\textwidth}
      \centering
         \includegraphics[width=\textwidth]{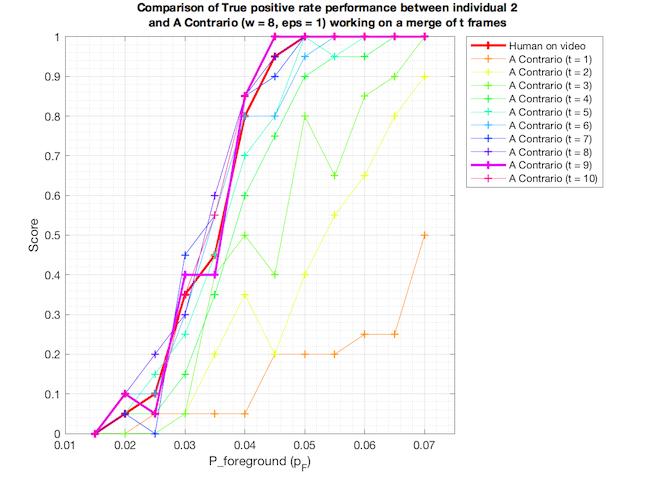}
         \caption{Subject 2}
    \end{subfigure}%
    \hfill
    \begin{subfigure}[t]{0.5\textwidth}
      \centering
         \includegraphics[width=\textwidth]{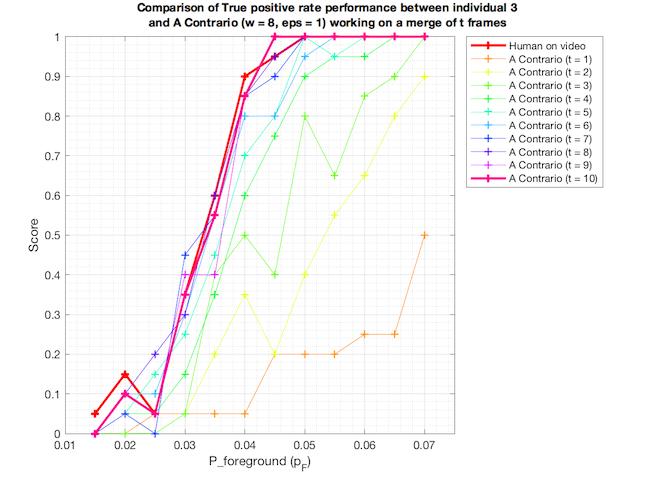}
         \caption{Subject 3}
    \end{subfigure}%
    \hfill
    \begin{subfigure}[t]{0.5\textwidth}
      \centering
         \includegraphics[width=\textwidth]{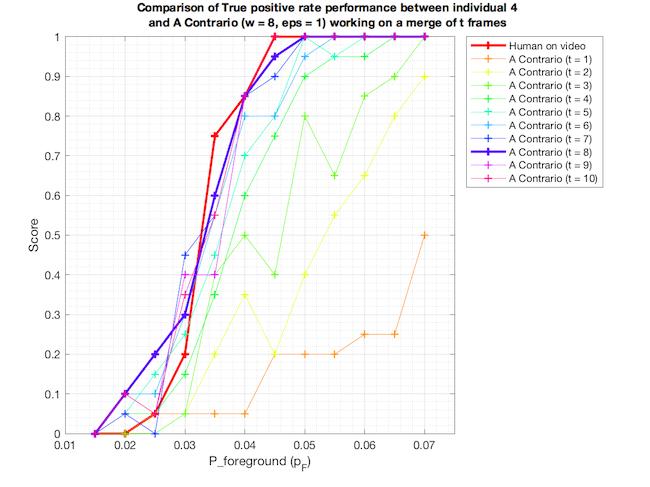}
         \caption{Subject 4}
    \end{subfigure}%
    \hfill
    \begin{subfigure}[t]{0.5\textwidth}
      \centering
         \includegraphics[width=\textwidth]{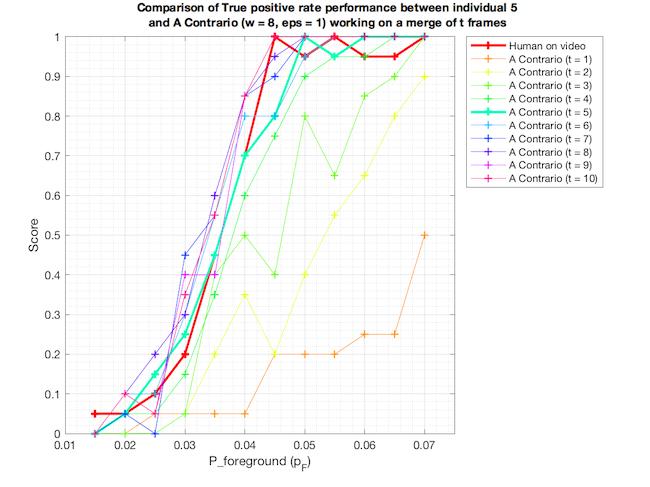}
         \caption{Subject 5}
    \end{subfigure}%
    \hfill
\end{figure}
\begin{figure} \ContinuedFloat
    \centering
    \begin{subfigure}[t]{0.5\textwidth}
      \centering
         \includegraphics[width=\textwidth]{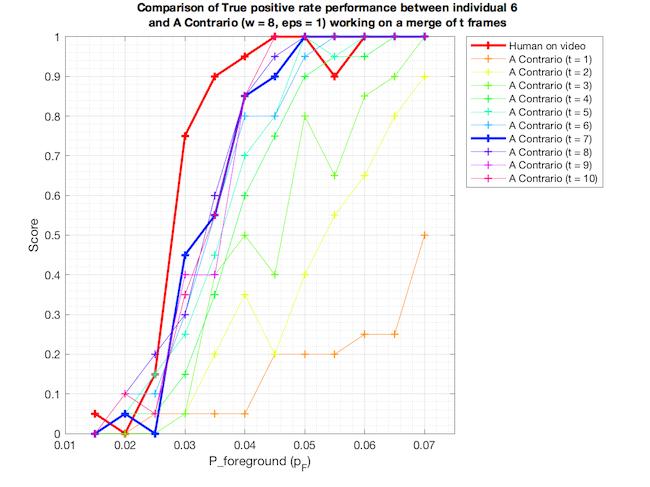}
         \caption{Subject 6}
    \end{subfigure}%
    \hfill
    \begin{subfigure}[t]{0.5\textwidth}
      \centering
         \includegraphics[width=\textwidth]{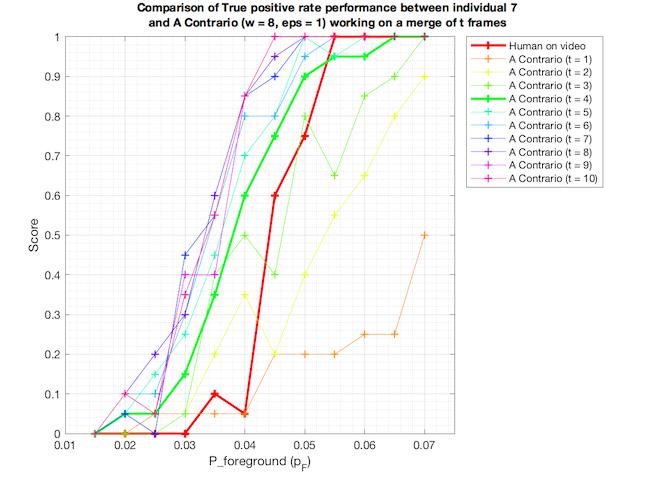}
         \caption{Subject 7}
    \end{subfigure}%
    \hfill
    \begin{subfigure}[t]{0.5\textwidth}
      \centering
        \includegraphics[width=\textwidth]{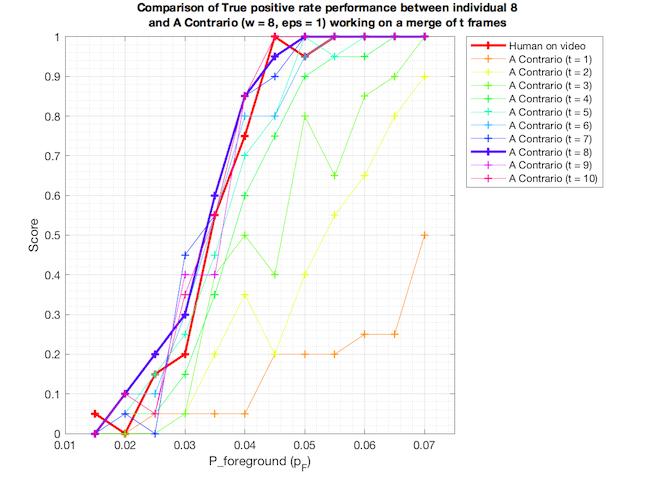}
         \caption{Subject 8}
    \end{subfigure}%
    \hfill
    \begin{subfigure}[t]{0.5\textwidth}
      \centering
         \includegraphics[width=\textwidth]{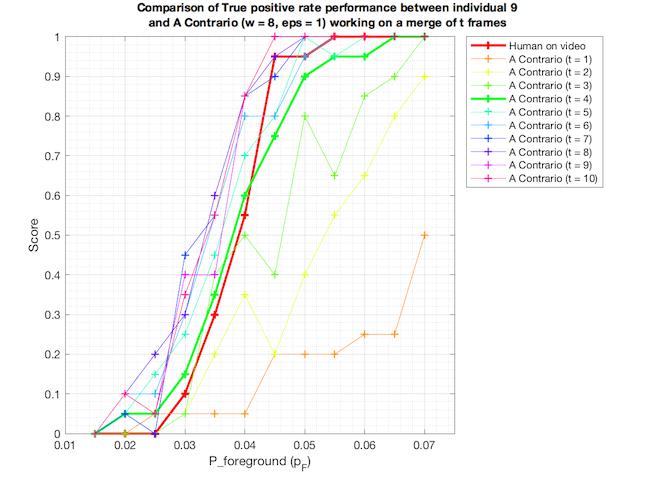}
         \caption{Subject 9}
    \end{subfigure}%
    \hfill
    \begin{subfigure}[t]{0.5\textwidth}
      \centering
         \includegraphics[width=\textwidth]{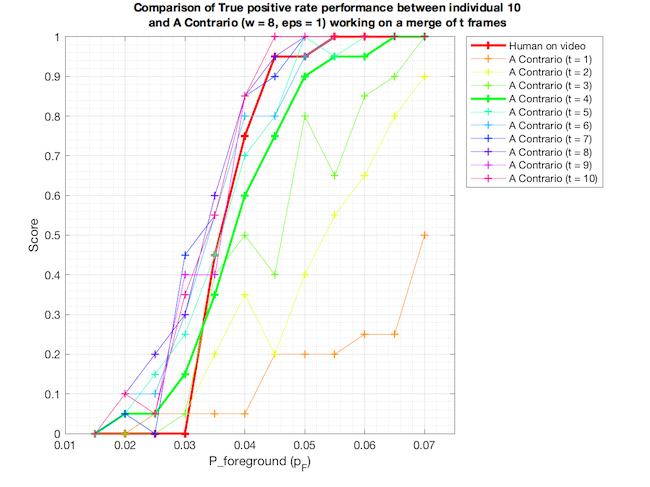}
         \caption{Subject 10}
    \end{subfigure}%
    \hfill
    \begin{subfigure}[t]{0.5\textwidth}
      \centering
        \includegraphics[width=\textwidth]{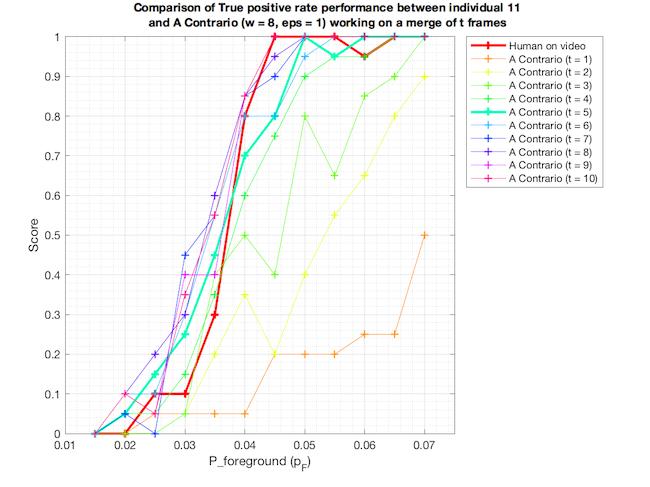}
         \caption{Subject 11}
    \end{subfigure}%
    \hfill
\end{figure}
\begin{figure} \ContinuedFloat
    \centering
    \begin{subfigure}[t]{0.5\textwidth}
      \centering
        \includegraphics[width=\textwidth]{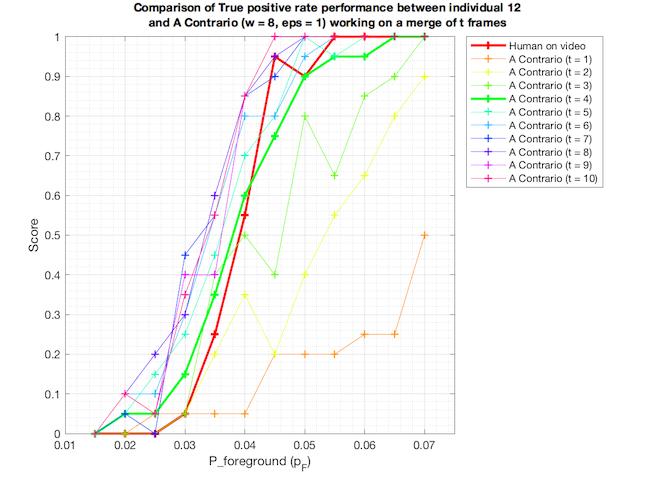}
         \caption{Subject 12}
    \end{subfigure}%
    \hfill
    \begin{subfigure}[t]{0.5\textwidth}
      \centering
         \includegraphics[width=\textwidth]{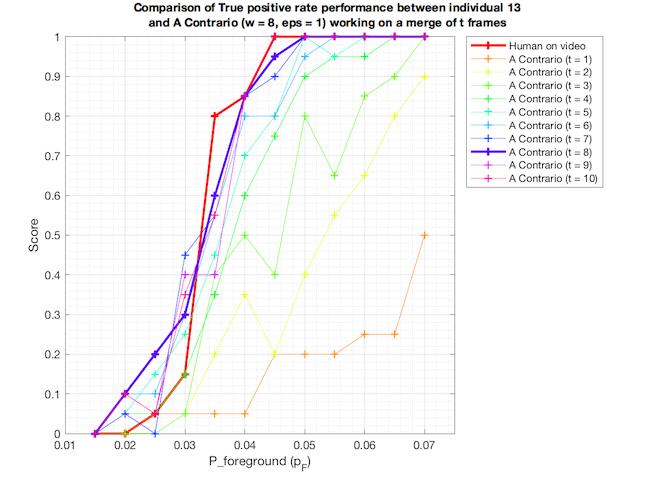}
         \caption{Subject 13}
    \end{subfigure}%
    \hfill
    \begin{subfigure}[t]{0.5\textwidth}
      \centering
         \includegraphics[width=\textwidth]{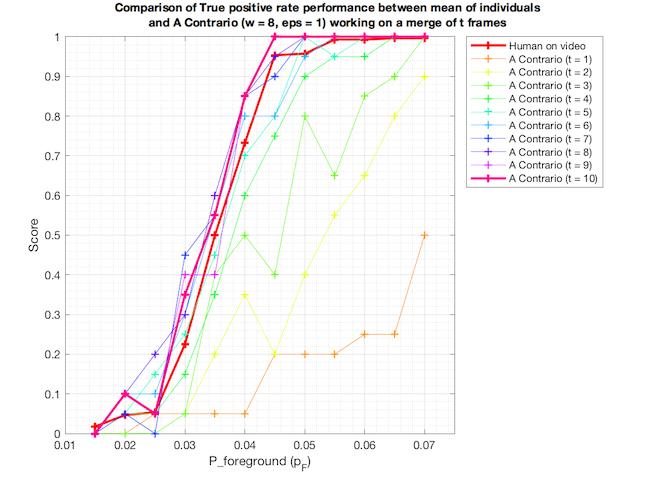}
         \caption{All subjects}
    \end{subfigure}%
    \hfill
    \caption{True positive rate of each subject with respect to $p_f$. The true positive rates increase from close to $0$ for low $p_f$ to close to $1$ for high $p_f$. We superimpose to the human performance curve the curves previously obtained by the single widths a contrarios working on $w=8$. The curves obtained for the subjects are somewhat similar to those obtained by the single width a contrario processes. In each plot, we have thickened the a contrario algorithm with single time integration $n_{f}\in\{1,\hdots,10\}$ frames such that its curve (or discrete vector) is closest in $L_2$ distance to the curve (or discrete vector) obtained by the subject. The best time integration seems to fluctuate between subjects and it is not clear from these plots how much better the best curve is compared to the others. See the next Figure for more clarity.}
    \label{fig: exp3 dynamic TPR humans}
 \end{figure}

\begin{figure}
    \centering
    \begin{subfigure}[t]{0.5\textwidth}
      \centering
         \includegraphics[width=\textwidth]{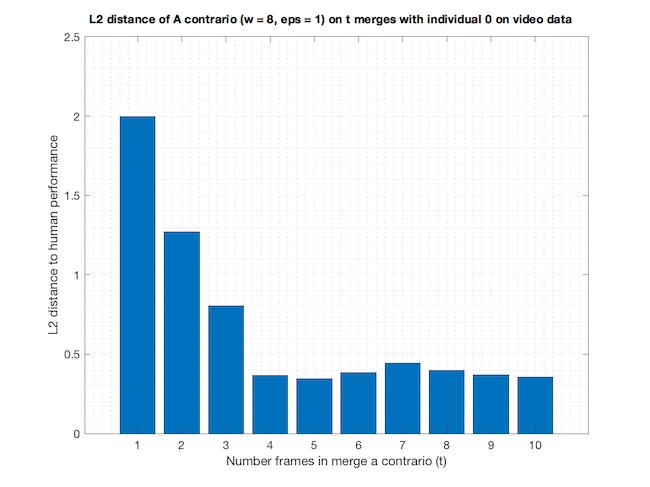}
         \caption{Subject 0}
    \end{subfigure}%
    \hfill
    \begin{subfigure}[t]{0.5\textwidth}
      \centering
         \includegraphics[width=\textwidth]{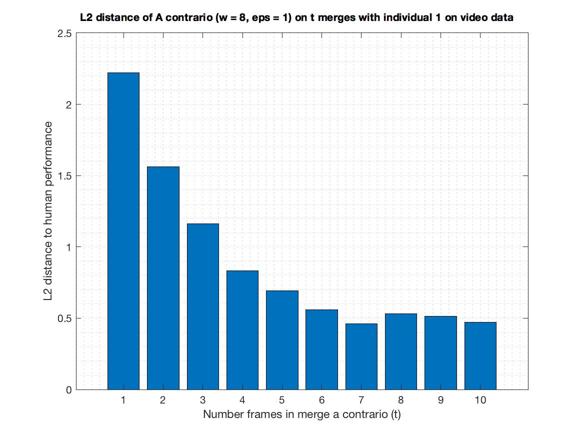}
         \caption{Subject 1}
    \end{subfigure}%
    \hfill
    \begin{subfigure}[t]{0.5\textwidth}
      \centering
         \includegraphics[width=\textwidth]{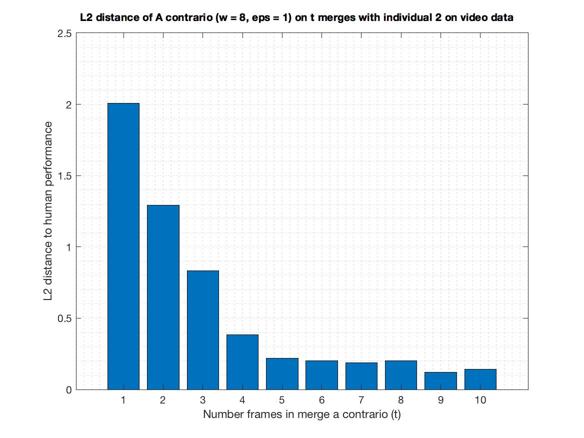}
         \caption{Subject 2}
    \end{subfigure}%
    \hfill
    \begin{subfigure}[t]{0.5\textwidth}
      \centering
         \includegraphics[width=\textwidth]{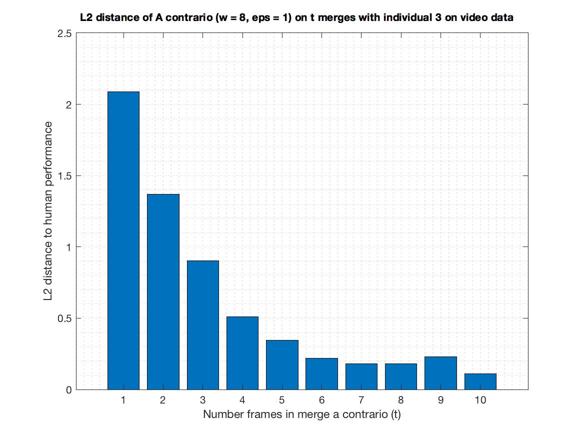}
         \caption{Subject 3}
    \end{subfigure}%
    \hfill
    \begin{subfigure}[t]{0.5\textwidth}
      \centering
         \includegraphics[width=\textwidth]{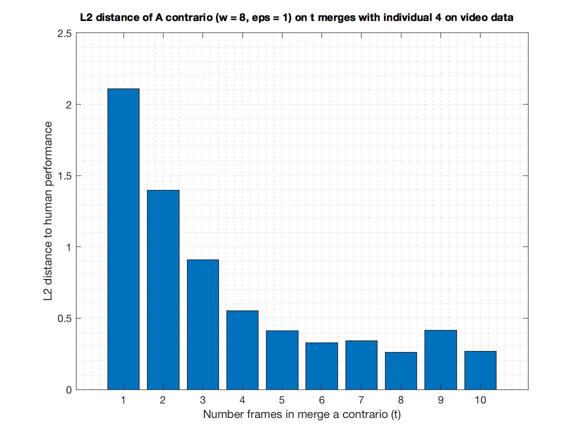}
         \caption{Subject 4}
    \end{subfigure}%
    \hfill
    \begin{subfigure}[t]{0.5\textwidth}
      \centering
         \includegraphics[width=\textwidth]{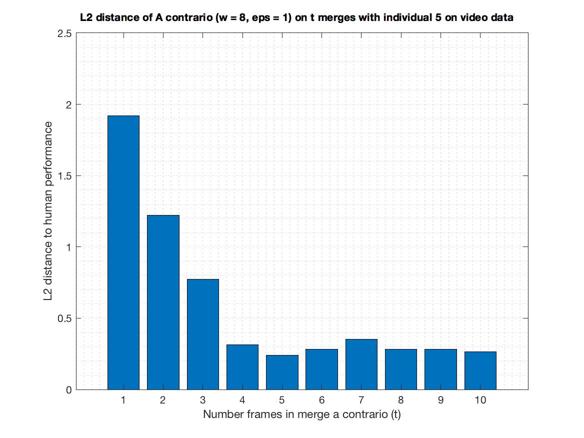}
         \caption{Subject 5}
    \end{subfigure}%
    \hfill
\end{figure}
\begin{figure} \ContinuedFloat
    \centering
    \begin{subfigure}[t]{0.5\textwidth}
      \centering
         \includegraphics[width=\textwidth]{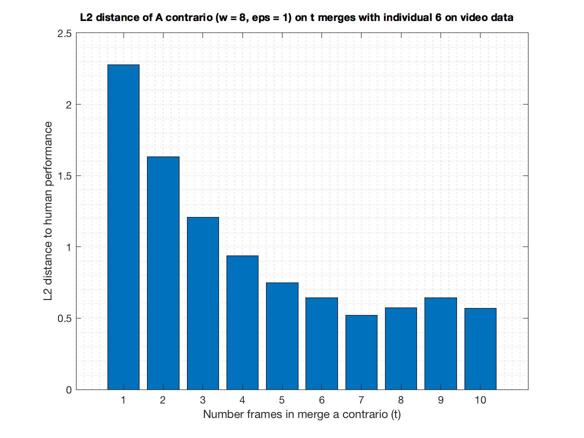}
         \caption{Subject 6}
    \end{subfigure}%
    \hfill
    \begin{subfigure}[t]{0.5\textwidth}
      \centering
         \includegraphics[width=\textwidth]{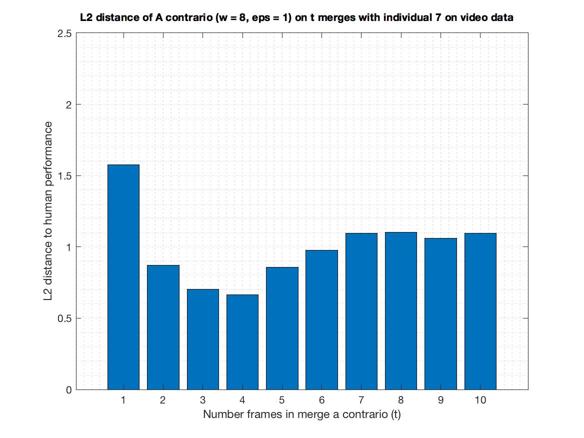}
         \caption{Subject 7}
    \end{subfigure}%
    \hfill
    \begin{subfigure}[t]{0.5\textwidth}
      \centering
         \includegraphics[width=\textwidth]{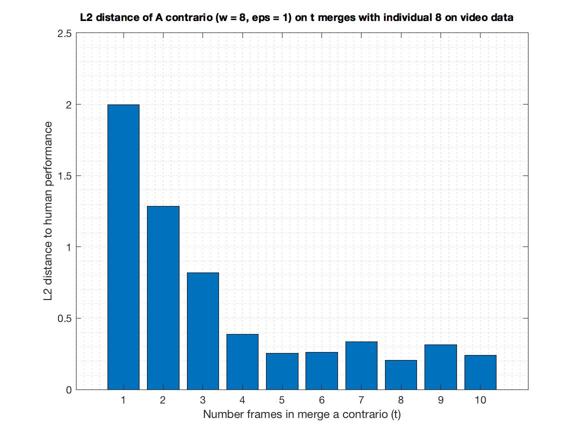}
         \caption{Subject 8}
    \end{subfigure}%
    \hfill
    \begin{subfigure}[t]{0.5\textwidth}
      \centering
         \includegraphics[width=\textwidth]{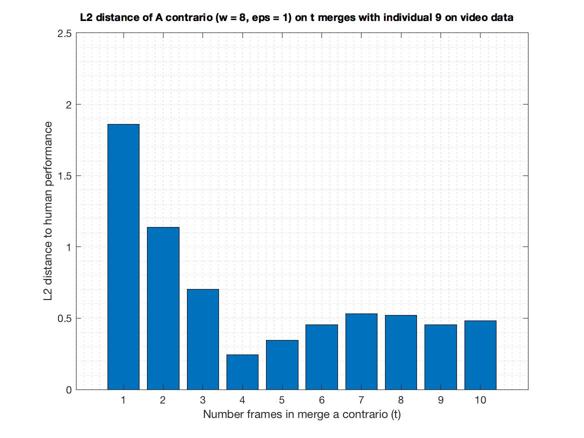}
         \caption{Subject 9}
    \end{subfigure}%
    \hfill
    \begin{subfigure}[t]{0.5\textwidth}
      \centering
         \includegraphics[width=\textwidth]{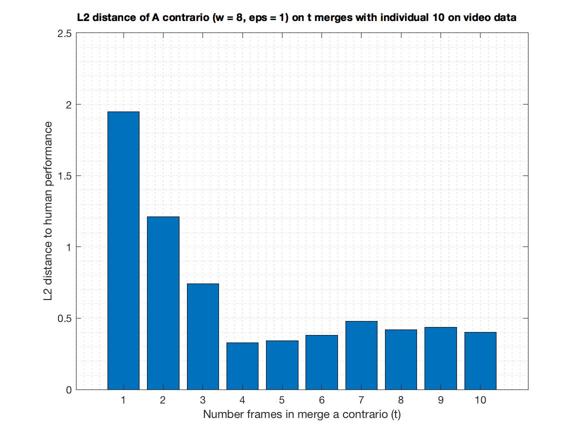}
         \caption{Subject 10}
    \end{subfigure}%
    \hfill
    \begin{subfigure}[t]{0.5\textwidth}
      \centering
         \includegraphics[width=\textwidth]{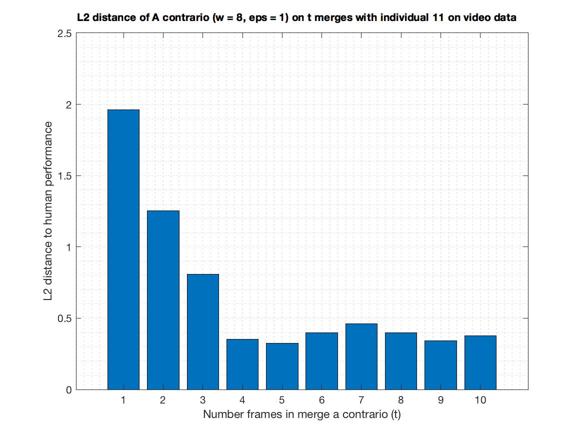}
         \caption{Subject 11}
    \end{subfigure}%
    \hfill
\end{figure}
\begin{figure} \ContinuedFloat
    \centering
    \begin{subfigure}[t]{0.5\textwidth}
      \centering
         \includegraphics[width=\textwidth]{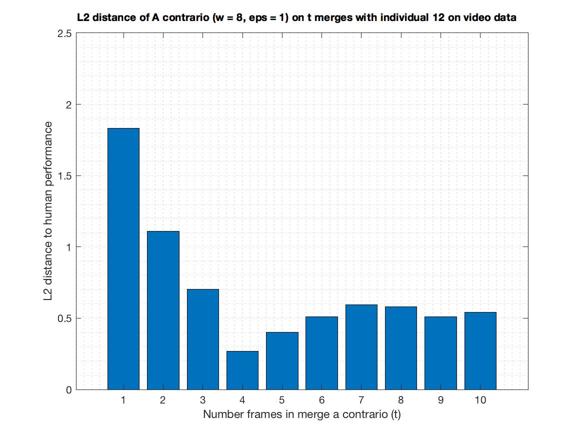}
         \caption{Subject 12}
    \end{subfigure}%
    \hfill
    \begin{subfigure}[t]{0.5\textwidth}
      \centering
         \includegraphics[width=\textwidth]{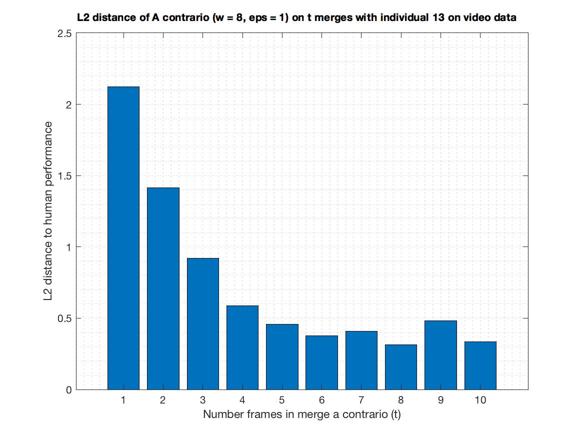}
         \caption{Subject 13}
    \end{subfigure}%
    \hfill
    \begin{subfigure}[t]{0.5\textwidth}
      \centering
         \includegraphics[width=\textwidth]{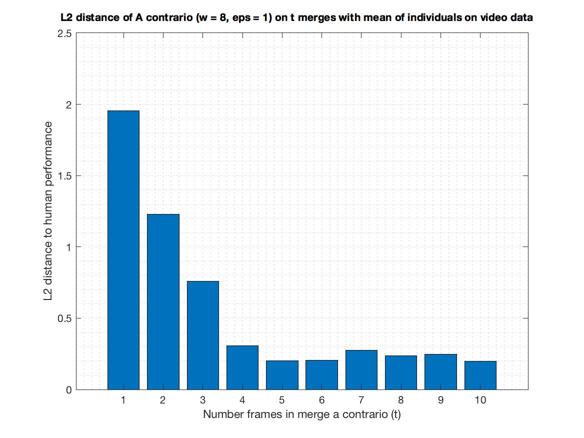}
         \caption{All subjects}
    \end{subfigure}%
    \hfill
    \caption{$L_2$ distances between the true positive rate curves of each subject and of those of each a contrario algorithm working on width $w=8$ and time integration $n_{f}\in\{1,\hdots,10\}$ frames. The time integrations that could suit our model are the time integration giving the smallest $L_2$ error. There does not seem to be a single best value to take. Thus taking a range for the time integration seems wise. Qualitatively, for most individuals and on average, the range $n_{f}\in[4,10]$ frames seems like a good candidate range for optimal time integration.}
    \label{fig: exp3 dynamic L2 humans}
 \end{figure}

\paragraph{Discussion of both video experiments} The dynamic experiments suggest that the sequential time integration by merging images and then performing a single width a contrario process with $w=8$ pixels (or $\theta = 0.23°$ in terms of visual angle) is reasonable if we select $n_f \in[6,8]$ frames for the static edge and $n_f\in[4,10]$ for the dynamic edge. However, in our model, the time integration should not depend on the dynamics of the edge. This choice could be controversial for a comparison with humans as in psychophysics, it is debatable whether the brain uses separate and very different channels for processing a static scene and moving visual data. Nevertheless, saccadic eye movements make every scene dynamic, including static displays, and it is possible that this allows us to see static objects that would otherwise perceptually fade \cite{martinez2004role}. As such, this suggests that humans process visual information as only motion information, and that intrinsic static information is converted to a dynamic input by the human eye. This gives us legitimacy for choosing a computational model for perception that does not depend on whether the input scene is static or not. We then choose a time integration that suits well the findings on the dynamic and static data, thus by taking the intersection of both regions of good candidates for the time integration we get that the time interval of $n_f \in[6,8]$ frames i.e. $\Delta t\in[0.2,0.27]$ seconds is a good candidate for the time integration. This value is particularly interesting. Johansson \cite{johansson1976spatio} found that within $0.2$ seconds all subjects were able to correctly group and understand his biological point light displays which suggests that this is the necessary time for grouping and understanding. Furthermore, it is known that the eye fixation duration, the time for which the eye focuses on an area when humans search through an image, corresponds to this range \cite{hooge1998adjustment,martinez2004role}. On the other hand, traditional integration times are considerably smaller. An explanation could be that in the classical experiments for measuring the time needed for visual integration, the visual data is entirely present at each instant during the duration of display. However, in our problem, the data is not seen in each frame of the video, thus it is logical that higher order mechanisms occur, thus increasing the time needed for integration. Further psychophysical experiments using data similar to ours should be done to better understand the relevance of the chosen time integration for our model with respect to the human brain. Moreover, the visual angle, $\theta = 0.23°$ or equivalently $\theta = 13.8$ arc minutes, is consistent with some psychophysical experiments \cite{unuma2010spatiotemporal}. Finally, we have shown that empirically humans perform similarly to a sequential time integrator followed by a single width a contrario algorithm. This gives us a simple model for human perception and further insight in the importance of Gestalt decisions of unlikeliness in human perception.

\clearpage
\newpage
\section{Future Work}
\label{sec: Future work}

In this report, we have achieved two objectives. The first was to design an algorithm to process very noisy random-dot video data. The second was to get a simple computational model for human perception on such data.

Future work would have to smooth the output of the algorithm. A \say{smoothness} assumption about the displacement of foreground objects was used in our algorithm, only implicitly in the localisation of the foreground in a small patch over merges of frames. A first possibility would be to filter the detected lines. This is a post-processing idea. A second more interesting approach would be to use such prior knowledge in a direct way in the a contrario algorithm. For this, we could stop splitting uniformly $\varepsilon$ into $N_T$ equal parts, of weight $\frac{\varepsilon}{N_T}$. Indeed, thanks to prior knowledge to where the edge was detected in previous frames, we could now use a weighted splitting of $\varepsilon$ that is not uniform, with lower weights in areas were we expect the line to be and high weights in areas where we do not expect it to be. This would imply that we require many more points for a detection in unexpected areas than in expected ones. Areas were we expect the line to be could be the area around the previous detection (continuous assumption) or an area displaced according to higher Taylor expansion estimations of the movement of the edge (smoothness assumption). The weights could be derived using boolean masks or more interestingly using continuous fields. For instance, we could use Gaussian weights, which somewhat closely resembles to Kalman filtering \cite{kalman1960new}. Splitting $\varepsilon$ not equally in a contrario has never been done to the best of our knowledge and seems like an ideal solution to smooth the results of the a contrario algorithm. A post-processing filtering could after also be done to further enforce smoothness as well.

Concerning the computational model of the human visual perception, a contrario is not the only possible candidate. Currently, great emphasis is made in the Statistics and Computer Science community on Deep Learning and its outstanding performance and inner workings are not fully understood. These systems manage to learn in almost any scenario how to do detection on image or video data, given sufficient data, to allow intelligent tweaking of network parameters. Therefore it would be interesting to see how Deep Learning would perform on our data, especially in conjunction with convolutional neural networks (CNN). Our intuition is that CNNs will fail to recover the foreground on a per frame basis as the number of points on the foreground is too low, thus convolving will not create much overlap between foreground points compared to background points and so estimating different densities will fail. However, by studying 3D patches of data and performing 3D convolutions, a well designed CNN might be able to recover the foreground. Nevertheless, we believe that it would require many training samples and a lot of tweaking of parameters, compared to the elegant, parameter-less, and train-less a contrario theory. Humans manage to process very well such data when having never perceived any data similar to it. This goes against, for comparing neural networks and the human brain, using a heavy training of neural networks using large datasets, and might be more in adequacy with learning theory using a small number of examples such as one-shot learning. Furthermore, Deep Learning is intrinsically a non Gestalt approach although some authors try to work holistically, but rather a bottom up approach.  Finally, we do not believe that Deep Learning will be able to perform comparably with humans. It will first perform much worse and then eventually brilliantly outperform humans. We would be very lucky if we found a dataset (described by its size and the type of data we are training on: edges, circles...) such that after training and convergence we reach similar performance to humans. For those reticent to the use of Deep Learning, other mathematical or computational approaches could also be studied. In practice, we found that the Hough transform, which is a weaker form than the a contrario framework except in the way the hypothesis are generated, was also a suitable candidate for recovering straight lines in our videos (see demo \cite{dagesHoughDemo}).

\begin{figure}
    \centering
    \begin{subfigure}[t]{0.7\textwidth}
      \centering
         \includegraphics[width=\textwidth]{Chapters/Pictures/model_data_acquisition.jpg}
         \caption{Data acquisition model}
    \end{subfigure}%
    \hfill
    \begin{subfigure}[t]{\textwidth}
      \centering
         \includegraphics[width=\textwidth]{Chapters/Pictures/model_a_contrario_merge.jpg}
         \caption{A contrario algorithm model on random-dot videos}
    \end{subfigure}%
    \hfill
    \begin{subfigure}[t]{\textwidth}
      \centering
         \includegraphics[width=\textwidth]{Chapters/Pictures/model_human.jpg}
         \caption{Black box model of the human perceptual system}
    \end{subfigure}%
    \hfill
    \begin{subfigure}[t]{\textwidth}
      \centering
         \includegraphics[width=\textwidth]{Chapters/Pictures/model_human_a_contrario.jpg}
         \caption{Algorithmic model for the human perceptual system}
    \end{subfigure}%
    \hfill
    \begin{subfigure}[t]{\textwidth}
      \centering
         \includegraphics[width=\textwidth]{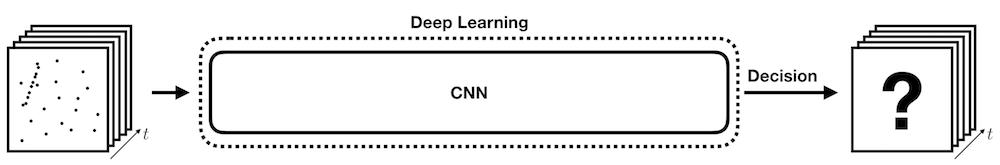}
         \caption{Deep Learning black box model}
    \end{subfigure}%
    \hfill
\caption{Summary of our models for perceiving our random-dot videos. Other interesting models exist. In particular, Deep Learning could provide a powerful model from an algorithmic perspective at the cost of tricky training and losing the full understanding of the underlying process and the meaningfulness of a detection. Also, it is not clear whether Deep Learning can provide an accurate model that performs similarly to humans, without overperforming or underperforming them.}
\label{fig: all models}
\end{figure}

Back to the a contrario process, we should also study its performance on more complex models, such as circles, parametric curves and non parametric curves. Preliminary tests on circles suggest that our model still holds, but proper experiments should be done. The a contrario framework must be better developed to handle non parametric curves as for now, up to the best of our knowledge, a contrario can handle such curves in very specific conditions, although we are optimistic that it will adapt to more complex possibilities. Also, when studying complex models, we should see whether we perform an a contrario testing all these complex models, and doing so, with high $N_T$, or whether we could model our perception by several independent a contrario algorithms working in parallel so that each works on a separate model, such as a line detector a contrario, a circle detector a contrario etc. This would allow $N_T$ to not explode. Indeed, as we have shown, our human perception is well adapted to the line-only a contrario model for edge structures. But we know that we effortlessly perceive circle, for example, even if we are asked to look for lines. However, if we model our perception by a single a contrario looking for all the complex shapes at once, $N_T$ might drastically increase, so that the algorithm might not detect lines anymore, even in cases when humans do. Further work should be done in this direction.

Another interesting phenomenon we have not studied in our previous research is the inversion domain. Find a demo at \cite{dagesInversionDemo}. It occurs when there are more white dots in the image than black dots, and when the foreground was supposed to be white and the background black. What now happens is that for us humans, the colour white is the background and the colour black the foreground: the black dots have the interesting information. If we want to keep the denomination of white as foreground and black as background, we have to go into the dual space and invert the image by taking its inverted colour image (thus the name inversion domain). In the dual space, we are now looking for an alignment of black pixels (instead of white) which means an alignment of background or equivalently a lack of foreground pixels. This translates mathematically, when looking at statistical unlikeliness in an a contrario test, to look at the left side of a tail of a binomial distribution: is it likely or not to have as many few foreground points than what we currently observe? To the best of our knowledge, the a contrario literature systematically looks at right sided tail distributions when performing the statistical test. However, going into the dual space and looking at the left side of the new distribution is not equivalent to looking at a right sided tail of the original distribution. To acknowledge the left sided tail of a distribution in a contrario is a novel approach in the literature. By going further, by realising that the primal space is the dual of the dual space, we then realise that a true a contrario test should consider both sides of the tail of the distribution. To the best of our knowledge, this is not done yet in the a contrario literature.

\clearpage
\newpage
\section{Conclusion}
\label{sec: Conclusion}

In this report we have studied the perception of information in very noisy video data. These videos are essentially white noise random dots with a few random dots on some foreground objects, that eventually moves in a non rigid fashion between frames. While we do not see anything on a per frame basis and algorithmic attempts on a per frame basis are doomed to fail due to the lack of information on the foreground and how it is drowned in the background noise, if we are shown this as a video, then we can easily see and understand what the foreground object is. The human visual apparatus is a magical system that manages to effortlessly process such nightmare data. We first designed an automated algorithm using the a contrario approach framework applied on merges of frame. Surprisingly, the algorithm performed somewhat similarly to humans for special parameters, namely the candidate width for the test of alignment and the time integration. This led us to design a model for human perception based on a sequential time integration followed by a statistical test of unlikeliness such as a contrario working on a specific width or visual angle. We challenged this model by performning psychophysical experiments and found that empirically the model held for a visual angle of $\theta=0.23°$ and a time integration of $\Delta t\in [0.2,0.27]$ seconds. Bot the visual angle and the time integration are consistent with some other psychophysics research. Thus, in this report, we have designed an algorithm that manages to process ultra noisy random-dot videos, with potential application to signals produced by very noisy sensors such as ultrasound imaging or Fluorescence Photoactivation Localization Microscopy (FPALM) images and a moving scene, and also yield useful statistical insight into the human brain black box. In a sense, we have in fact designed an artificial intelligence system and also studied some aspects of true intelligence in interpreting visual data.

%\begin{otherlanguage}{hebrew}
%זוהי דוגמה לשימוש בתוכנת עריכה בעברית
%\end{otherlanguage}

\newpage
\label{Bibliography}
%\lhead{\emph{Bibliography}}  % Change the left side page header to "Bibliography"
%{
%\fancypagestyle{plain}{
%    \rfoot{\confidentiel}\lhead{}\cfoot{}\rhead{\thepage}
%}
\bibliographystyle{alpha}  % Use the "unsrtnat" BibTeX style for formatting the Bibliography
\bibliography{Bibliography}  % The references (bibliography) information are stored in the file named "Bibliography.bib"

\newcommand{\etalchar}[1]{$^{#1}$}
\begin{thebibliography}{DMM{\etalchar{+}}03c}

\bibitem[ADV03]{almansa2003vanishing}
Andr{\'e}s Almansa, Agn{\`e}s Desolneux, and S{\'e}bastien Vamech.
\newblock Vanishing point detection without any a priori information.
\newblock {\em IEEE Transactions on Pattern Analysis and Machine Intelligence},
  25(4):502--507, 2003.

\bibitem[AN88]{aggarwal1988computation}
JK~Aggarwal and N~Nandhakumar.
\newblock On the computation of motion from sequences of images-a review.
\newblock {\em Proceedings of the IEEE}, 76(8):917--935, 1988.

\bibitem[AT89]{ahuja1989extraction}
Narendra Ahuja and Mihran Tuceryan.
\newblock Extraction of early perceptual structure in dot patterns: integrating
  region, boundary, and component gestalt.
\newblock {\em Computer Vision, Graphics, and Image Processing},
  48(3):304--356, 1989.

\bibitem[Att54]{attneave1954some}
Fred Attneave.
\newblock Some informational aspects of visual perception.
\newblock {\em Psychological review}, 61(3):183, 1954.

\bibitem[Bar78]{barlow1978efficiency}
HB~Barlow.
\newblock The efficiency of detecting changes of density in random dot
  patterns.
\newblock {\em Vision research}, 18(6):637--650, 1978.

\bibitem[BK18]{baker2018abstract}
Nicholas Baker and Philip~J Kellman.
\newblock Abstract shape representation in human visual perception.
\newblock {\em Journal of Experimental Psychology: General}, 147(9):1295, 2018.

\bibitem[BLEK18]{baker2018deep}
Nicholas Baker, Hongjing Lu, Gennady Erlikhman, and Philip~J Kellman.
\newblock Deep convolutional networks do not classify based on global object
  shape.
\newblock {\em PLoS computational biology}, 14(12):e1006613, 2018.

\bibitem[Blu15]{blusseau2015salience}
Samy Blusseau.
\newblock {\em On salience and non-accidentalness: comparing human vision to a
  contrario algorithms}.
\newblock PhD thesis, {\'E}cole normale sup{\'e}rieure de Cachan-ENS Cachan,
  2015.

\bibitem[BP94]{bertenthal1994global}
Bennett~I Bertenthal and Jeannine Pinto.
\newblock Global processing of biological motions.
\newblock {\em Psychological science}, 5(4):221--225, 1994.

\bibitem[Bra73]{braddick1973masking}
Oliver Braddick.
\newblock The masking of apparent motion in random-dot patterns.
\newblock {\em Vision Research}, 13(2):355--369, 1973.

\bibitem[Bra74]{braddick1974short}
Oliver Braddick.
\newblock A short-range process in apparent motion.
\newblock {\em Vision research}, 14(7):519--527, 1974.

\bibitem[BS07]{blake2007perception}
Randolph Blake and Maggie Shiffrar.
\newblock Perception of human motion.
\newblock {\em Annual review of psychology}, 58, 2007.

\bibitem[CMM88]{cutting1988masking}
James~E Cutting, Cassandra Moore, and Roger Morrison.
\newblock Masking the motions of human gait.
\newblock {\em Perception \& psychophysics}, 44(4):339--347, 1988.

\bibitem[Dav90]{dave1990use}
Rajesh~N Dave.
\newblock Use of the adaptive fuzzy clustering algorithm to detect lines in
  digital images.
\newblock In {\em Intelligent robots and computer vision VIII: algorithms and
  techniques}, volume 1192, pages 600--612. International Society for Optics
  and Photonics, 1990.

\bibitem[DGA00]{doucet2000sequential}
Arnaud Doucet, Simon Godsill, and Christophe Andrieu.
\newblock On sequential monte carlo sampling methods for bayesian filtering.
\newblock {\em Statistics and computing}, 10(3):197--208, 2000.

\bibitem[DH71]{duda1971use}
Richard~O Duda and Peter~E Hart.
\newblock Use of the hough transformation to detect lines and curves in
  pictures.
\newblock Technical report, SRI INTERNATIONAL MENLO PARK CA ARTIFICIAL
  INTELLIGENCE CENTER, 1971.

\bibitem[DMB19a]{dagesHorseDemo}
Thomas Dag\`es, Lindenbaum Michael, and Alfred~M. Bruckstein.
\newblock {Seeing Things in Random-Dot Videos - Report - Illustration 1 }.
\newblock
  \url{https://www.youtube.com/watch?v=hJBF7xwJFPk&list=PLH8oTINXlBM-LVRy8rNbkAkNWtSMyPuFV&index=1},
  2019.
\newblock [Online; accessed 29-July-2019].

\bibitem[DMB19b]{dagesBalletDemo}
Thomas Dag\`es, Lindenbaum Michael, and Alfred~M. Bruckstein.
\newblock {Seeing Things in Random-Dot Videos - Report - Illustration 2 }.
\newblock
  \url{https://www.youtube.com/watch?v=ASDOpKn5bqE&list=PLH8oTINXlBM-LVRy8rNbkAkNWtSMyPuFV&index=2},
  2019.
\newblock [Online; accessed 29-July-2019].

\bibitem[DMB19c]{dagesMeDemo}
Thomas Dag\`es, Lindenbaum Michael, and Alfred~M. Bruckstein.
\newblock {Seeing Things in Random-Dot Videos - Report - Illustration 3 }.
\newblock
  \url{https://www.youtube.com/watch?v=Q9U8EzZJEu0&list=PLH8oTINXlBM-LVRy8rNbkAkNWtSMyPuFV&index=3},
  2019.
\newblock [Online; accessed 29-July-2019].

\bibitem[DMB19d]{dagesEdgeCircleConchDemo}
Thomas Dag\`es, Lindenbaum Michael, and Alfred~M. Bruckstein.
\newblock {Seeing Things in Random-Dot Videos - Report - Illustration 4 }.
\newblock
  \url{https://www.youtube.com/watch?v=v0MwZ0p67UQ&list=PLH8oTINXlBM-LVRy8rNbkAkNWtSMyPuFV&index=4},
  2019.
\newblock [Online; accessed 29-July-2019].

\bibitem[DMB19e]{dagesHoughDemo}
Thomas Dag\`es, Lindenbaum Michael, and Alfred~M. Bruckstein.
\newblock {Seeing Things in Random-Dot Videos - Report - Illustration 5 }.
\newblock
  \url{https://www.youtube.com/watch?v=fqIMFzA20-w&list=PLH8oTINXlBM-LVRy8rNbkAkNWtSMyPuFV&index=5},
  2019.
\newblock [Online; accessed 29-July-2019].

\bibitem[DMB19f]{dagesInversionDemo}
Thomas Dag\`es, Lindenbaum Michael, and Alfred~M. Bruckstein.
\newblock {Seeing Things in Random-Dot Videos - Report - Illustration 6 }.
\newblock
  \url{https://www.youtube.com/watch?v=LLkKpUGu3es&list=PLH8oTINXlBM-LVRy8rNbkAkNWtSMyPuFV&index=6},
  2019.
\newblock [Online; accessed 29-July-2019].

\bibitem[DMB19g]{dagesACResDemo}
Thomas Dag\`es, Lindenbaum Michael, and Alfred~M. Bruckstein.
\newblock {Seeing Things in Random-Dot Videos - Report - Illustration 7 }.
\newblock
  \url{https://www.youtube.com/watch?v=i-BMLR9AxOs&list=PLH8oTINXlBM-LVRy8rNbkAkNWtSMyPuFV&index=7},
  2019.
\newblock [Online; accessed 29-July-2019].

\bibitem[DMM00]{desolneux2000meaningful}
Agn{\`e}s Desolneux, Lionel Moisan, and Jean-Michel Morel.
\newblock Meaningful alignments.
\newblock {\em International Journal of Computer Vision}, 40(1):7--23, 2000.

\bibitem[DMM01]{desolneux2001edge}
Agn{\`e}s Desolneux, Lionel Moisan, and Jean-Michel Morel.
\newblock Edge detection by helmholtz principle.
\newblock {\em Journal of mathematical imaging and vision}, 14(3):271--284,
  2001.

\bibitem[DMM03a]{desolneux2003grouping}
Agn{\`e}s Desolneux, Lionel Moisan, and J-M Morel.
\newblock A grouping principle and four applications.
\newblock {\em IEEE Transactions on Pattern Analysis and Machine Intelligence},
  25(4):508--513, 2003.

\bibitem[DMM03b]{desolneux2003computational}
Agnes Desolneux, Lionel Moisan, and Jean-Michel Morel.
\newblock Computational gestalts and perception thresholds.
\newblock {\em Journal of Physiology-Paris}, 97(2-3):311--324, 2003.

\bibitem[DMM{\etalchar{+}}03c]{desolneux2003maximal}
Agnes Desolneux, Lionel Moisan, Jean-Michel Morel, et~al.
\newblock Maximal meaningful events and applications to image analysis.
\newblock {\em The Annals of Statistics}, 31(6):1822--1851, 2003.

\bibitem[DMM07]{desolneux2007gestalt}
Agnes Desolneux, Lionel Moisan, and Jean-Michel Morel.
\newblock {\em From gestalt theory to image analysis: a probabilistic
  approach}, volume~34.
\newblock Springer Science \& Business Media, 2007.

\bibitem[DS15]{dube2015obligatory}
Chad Dub{\'e} and Robert Sekuler.
\newblock Obligatory and adaptive averaging in visual short-term memory.
\newblock {\em Journal of Vision}, 15(4):13--13, 2015.

\bibitem[EK16]{erlikhman2016modeling}
Gennady Erlikhman and Philip~J Kellman.
\newblock Modeling spatiotemporal boundary formation.
\newblock {\em Vision research}, 126:131--142, 2016.

\bibitem[FB81]{fischler1981ransac}
Martin~A Fischler and Robert~C Bolles.
\newblock Random sample consensus: a paradigm for model fitting with
  applications to image analysis and automated cartography.
\newblock {\em Communications of the ACM}, 24(6):381--395, 1981.

\bibitem[Fec60]{fechner1860elemente}
GT~Fechner.
\newblock Elemente der psychophysik (elements of psychophysics)(he adler,
  trans.). new york: Holt, rinehart and winston.
\newblock {\em Inc.(1860/1966)}, 1860.

\bibitem[FHH93]{field1993contour}
David~J Field, Anthony Hayes, and Robert~F Hess.
\newblock Contour integration by the human visual system: evidence for a local
  “association field”.
\newblock {\em Vision research}, 33(2):173--193, 1993.

\bibitem[FPB{\etalchar{+}}18]{flenner2018resampling}
Arjuna Flenner, Lawrence Peterson, Jason Bunk, Tajuddin~Manhar Mohammed,
  Lakshmanan Nataraj, and BS~Manjunath.
\newblock Resampling forgery detection using deep learning and a-contrario
  analysis.
\newblock {\em Electronic Imaging}, 2018(7):1--7, 2018.

\bibitem[GG86]{geman1986markov}
Stuart Geman and Christine Graffigne.
\newblock Markov random field image models and their applications to computer
  vision.
\newblock In {\em Proceedings of the international congress of mathematicians},
  volume~1, page~2. Berkeley, CA, 1986.

\bibitem[Gib50]{gibson1950perception}
James~J Gibson.
\newblock The perception of the visual world.
\newblock 1950.

\bibitem[GM92]{guy1992perceptual}
Gideon Guy and Gerard Medioni.
\newblock Perceptual grouping using global saliency-enhancing operators.
\newblock In {\em [1992] Proceedings. 11th IAPR International Conference on
  Pattern Recognition}, pages 99--103. IEEE, 1992.

\bibitem[GM93]{guy1993inferring}
Gideon Guy and G{\'e}rard Medioni.
\newblock Inferring global perceptual contours from local features.
\newblock In {\em Proceedings of IEEE Conference on Computer Vision and Pattern
  Recognition}, pages 786--787. IEEE, 1993.

\bibitem[GTJ16]{gershman2016discovering}
Samuel~J Gershman, Joshua~B Tenenbaum, and Frank J{\"a}kel.
\newblock Discovering hierarchical motion structure.
\newblock {\em Vision research}, 126:232--241, 2016.

\bibitem[HB93]{heitz1993multimodal}
Fabrice Heitz and Patrick Bouthemy.
\newblock Multimodal estimation of discontinuous optical flow using markov
  random fields.
\newblock {\em IEEE Transactions on Pattern Analysis and Machine Intelligence},
  15(12):1217--1232, 1993.

\bibitem[HE98]{hooge1998adjustment}
Ignace Th~C Hooge and Casper~J Erkelens.
\newblock Adjustment of fixation duration in visual search.
\newblock {\em Vision research}, 38(9):1295--IN4, 1998.

\bibitem[Hou62]{hough1962method}
Paul~VC Hough.
\newblock Method and means for recognizing complex patterns, December~18 1962.
\newblock US Patent 3,069,654.

\bibitem[HR56]{reichardt1956systemtheoretische}
Bernhard Hassenstein and Werner Reichardt.
\newblock Systemtheoretische analyse der zeit-, reihenfolgen-und
  vorzeichenauswertung bei der bewegungsperzeption des r{\"u}sselk{\"a}fers
  chlorophanus.
\newblock {\em Zeitschrift f{\"u}r Naturforschung B}, 11(9-10):513--524, 1956.

\bibitem[JHJ80]{johansson1980event}
Gunnar Johansson, Claes von~Hofsten Hofsten, and Gunnar Jansson.
\newblock Event perception.
\newblock {\em Annual review of psychology}, 31(1):27--63, 1980.

\bibitem[Joh73]{johansson1973visual}
Gunnar Johansson.
\newblock Visual perception of biological motion and a model for its analysis.
\newblock {\em Perception \& psychophysics}, 14(2):201--211, 1973.

\bibitem[Joh76]{johansson1976spatio}
Gunnar Johansson.
\newblock Spatio-temporal differentiation and integration in visual motion
  perception.
\newblock {\em Psychological research}, 38(4):379--393, 1976.

\bibitem[JSWH16]{jakel2016overview}
Frank J{\"a}kel, Manish Singh, Felix~A Wichmann, and Michael~H Herzog.
\newblock An overview of quantitative approaches in gestalt perception.
\newblock {\em Vision research}, 126:3--8, 2016.

\bibitem[Kal60]{kalman1960new}
Rudolph~Emil Kalman.
\newblock A new approach to linear filtering and prediction problems.
\newblock {\em Journal of basic Engineering}, 82(1):35--45, 1960.

\bibitem[KD88]{konrad1988estimation}
Janusz Konrad and Eric Dubois.
\newblock Estimation of image motion fields: Bayesian formulation and
  stochastic solution.
\newblock In {\em ICASSP-88., International Conference on Acoustics, Speech,
  and Signal Processing}, pages 1072--1075. IEEE, 1988.

\bibitem[KD92]{konrad1992bayesian}
J.~{Konrad} and E.~{Dubois}.
\newblock Bayesian estimation of motion vector fields.
\newblock {\em IEEE Transactions on Pattern Analysis and Machine Intelligence},
  14(9):910--927, Sep. 1992.

\bibitem[KEB91]{kiryati1991probabilistic}
Nahum Kiryati, Yuval Eldar, and Alfred~M Bruckstein.
\newblock A probabilistic hough transform.
\newblock {\em Pattern recognition}, 24(4):303--316, 1991.

\bibitem[KHR92]{kiryati1992perception}
Nahum Kiryati, Olof Henricsson, and Lucas Rosenthaler.
\newblock On the perception of dotted lines.
\newblock {\em Inst. Commun. Technol., Image Sci. Laboratory, Tech. Rep.
  BIWI-TR-137}, 1992.

\bibitem[KKS18]{kaushal2018soft}
Manisha Kaushal, Baljit~S Khehra, and Akashdeep Sharma.
\newblock Soft computing based object detection and tracking approaches:
  State-of-the-art survey.
\newblock {\em Applied Soft Computing}, 2018.

\bibitem[Kof35]{Koffka1935-KOFPOG-3}
Kurt Koffka.
\newblock {\em Principles of Gestalt Psychology}.
\newblock New York: Harcourt, Brace, 1935.

\bibitem[KP16]{kale2016study}
Geetanjali~Vinayak Kale and Varsha~Hemant Patil.
\newblock A study of vision based human motion recognition and analysis.
\newblock {\em International Journal of Ambient Computing and Intelligence
  (IJACI)}, 7(2):75--92, 2016.

\bibitem[Lez15]{lezama2015grouping}
Jos{\'e} Lezama.
\newblock {\em On grouping theory in dot patterns, with applications to
  perception theory and 3D inverse geometry}.
\newblock PhD thesis, {\'E}cole normale sup{\'e}rieure de Cachan-ENS Cachan,
  2015.

\bibitem[LMRvG15]{lezama2DptAlignDet}
J.~{Lezama}, J.~{Morel}, G.~{Randall}, and R.~G. v.~{Gioi}.
\newblock A contrario 2d point alignment detection.
\newblock {\em IEEE Transactions on Pattern Analysis and Machine Intelligence},
  37(3):499--512, March 2015.

\bibitem[Low85]{lowe1985perceptual}
DG~Lowe.
\newblock Perceptual organization and visual recognition", kluwer academic
  publishers, boston.
\newblock 1985.

\bibitem[LRMvG16]{lezama2016good}
Jos{\'e} Lezama, Gregory Randall, Jean-Michel Morel, and Rafael~Grompone von
  Gioi.
\newblock Good continuation in dot patterns: A quantitative approach based on
  local symmetry and non-accidentalness.
\newblock {\em Vision research}, 126:183--191, 2016.

\bibitem[LRP17]{lisani2017contrario}
Jose-Luis Lisani, Silvia Ramis, and Francisco~J Perales.
\newblock A contrario detection of faces: a case example.
\newblock {\em SIAM Journal on Imaging Sciences}, 10(4):2091--2118, 2017.

\bibitem[LS95]{lu1995functional}
Zhong-Lin Lu and George Sperling.
\newblock The functional architecture of human visual motion perception.
\newblock {\em Vision research}, 35(19):2697--2722, 1995.

\bibitem[LS01]{lu2001three}
Zhong-Lin Lu and George Sperling.
\newblock Three-systems theory of human visual motion perception: review and
  update.
\newblock {\em JOSA A}, 18(9):2331--2370, 2001.

\bibitem[Mai85]{maitre1985hough}
Henri Maitre.
\newblock Un panorama de la transformation de hough.
\newblock {\em Traitement du Signal}, 2, 01 1985.

\bibitem[MCMH04]{martinez2004role}
Susana Martinez-Conde, Stephen~L Macknik, and David~H Hubel.
\newblock The role of fixational eye movements in visual perception.
\newblock {\em Nature reviews neuroscience}, 5(3):229, 2004.

\bibitem[MS85]{mumford1985boundary}
David Mumford and Jayant Shah.
\newblock Boundary detection by minimizing functionals.
\newblock In {\em IEEE Conference on Computer Vision and Pattern Recognition},
  volume~17, pages 137--154. San Francisco, 1985.

\bibitem[NKSF18]{nishida2018motion}
Shin'ya Nishida, Takahiro Kawabe, Masataka Sawayama, and Taiki Fukiage.
\newblock Motion perception: From detection to interpretation.
\newblock {\em Annual review of vision science}, 4:501--523, 2018.

\bibitem[Pal92]{palmer1992common}
Stephen~E Palmer.
\newblock Common region: A new principle of perceptual grouping.
\newblock {\em Cognitive psychology}, 24(3):436--447, 1992.

\bibitem[Pal99]{palmer1999vision}
Stephen~E Palmer.
\newblock {\em Vision science: Photons to phenomenology}.
\newblock MIT press, 1999.

\bibitem[PR94]{palmer1994rethinking}
Stephen Palmer and Irvin Rock.
\newblock Rethinking perceptual organization: The role of uniform
  connectedness.
\newblock {\em Psychonomic bulletin \& review}, 1(1):29--55, 1994.

\bibitem[PRE95]{pizlo1995exponential}
Zygmunt Pizlo, Azriel Rosenfeld, and Julie Epelboim.
\newblock An exponential pyramid model of the time-course of size processing.
\newblock {\em Vision Research}, 35(8):1089--1107, 1995.

\bibitem[Pre06]{preiss2006theoretical}
Adrian~K Preiss.
\newblock {\em A theoretical and computational investigation into aspects of
  human visual perception: Proximity and transformations in pattern detection
  and discrimination}.
\newblock PhD thesis, 2006.

\bibitem[PS09]{pinto2009visual}
Jeannine Pinto and Maggie Shiffrar.
\newblock The visual perception of human and animal motion in point-light
  displays.
\newblock {\em Social Neuroscience}, 4(4):332--346, 2009.

\bibitem[PSGR97]{pizlo1997curve}
Zygmunt Pizlo, Monika Salach-Golyska, and Azriel Rosenfeld.
\newblock Curve detection in a noisy image.
\newblock {\em Vision research}, 37(9):1217--1241, 1997.

\bibitem[ROF92]{rudin1992nonlinear}
Leonid~I Rudin, Stanley Osher, and Emad Fatemi.
\newblock Nonlinear total variation based noise removal algorithms.
\newblock {\em Physica D: nonlinear phenomena}, 60(1-4):259--268, 1992.

\bibitem[SB93]{sarkar1993perceptual}
Sudeep Sarkar and Kim~L Boyer.
\newblock Perceptual organization in computer vision: A review and a proposal
  for a classificatory structure.
\newblock {\em IEEE Transactions on Systems, Man, and Cybernetics},
  23(2):382--399, 1993.

\bibitem[SB08]{smeets2008grasping}
Jeroen~BJ Smeets and Eli Brenner.
\newblock Grasping weber's law.
\newblock {\em Current Biology}, 18(23):R1089--R1090, 2008.

\bibitem[SS05]{shafique2005noniterative}
Khurram Shafique and Mubarak Shah.
\newblock A noniterative greedy algorithm for multiframe point correspondence.
\newblock {\em IEEE transactions on pattern analysis and machine intelligence},
  27(1):51--65, 2005.

\bibitem[Ste95]{stewart1995minpran}
Charles~V. Stewart.
\newblock Minpran: A new robust estimator for computer vision.
\newblock {\em IEEE Transactions on Pattern Analysis and Machine Intelligence},
  17(10):925--938, 1995.

\bibitem[Sto15]{stout1915manual}
George~Frederick Stout.
\newblock {\em A manual of psychology}.
\newblock Hinds, Noble \& Eldridge, 1915.

\bibitem[SV{\etalchar{+}}85]{smits1985perception}
JT~Smits, PG~Vos, et~al.
\newblock The perception of a dotted line in noise: a model of good
  continuation and some experimental results.
\newblock {\em Spatial Vision}, 1(2):163--177, 1985.

\bibitem[SV86]{smits1986model}
JTS Smits and PG~Vos.
\newblock A model for the perception of curves in dot figures: The role of
  local salience of “virtual lines”.
\newblock {\em Biological Cybernetics}, 54(6):407--416, 1986.

\bibitem[TMB99]{tripathy1999detecting}
Srimant~P Tripathy, Alexander~J Mussap, and Horace~B Barlow.
\newblock Detecting collinear dots in noise.
\newblock {\em Vision research}, 39(25):4161--4171, 1999.

\bibitem[UGVT88]{uras1988computational}
Sergio Uras, Federico Girosi, Alessandro Verri, and Vincent Torre.
\newblock A computational approach to motion perception.
\newblock {\em Biological Cybernetics}, 60(2):79--87, 1988.

\bibitem[UHK10]{unuma2010spatiotemporal}
Hideyuki Unuma, Hisa Hasegawa, and Philip~J Kellman.
\newblock Spatiotemporal integration and contour interpolation revealed by a
  dot localization task with serial presentation paradigm.
\newblock {\em Japanese Psychological Research}, 52(4):268--280, 2010.

\bibitem[US88]{ullman1988structural}
Shimon Ullman and Amnon Sha'ashua.
\newblock Structural saliency: The detection of globally salient structures
  using a locally connected network.
\newblock 1988.

\bibitem[Utt75]{uttal1975autocorrelation}
William~R Uttal.
\newblock {\em An autocorrelation theory of form detection}.
\newblock Halsted Press, 1975.

\bibitem[vG14]{von2014contrario}
Rafael~Grompone von Gioi.
\newblock {\em A Contrario Line Segment Detection}.
\newblock Springer, 2014.

\bibitem[VGJMR08]{von2008straight}
Rafael~Grompone Von~Gioi, J{\'e}r{\'e}mie Jakubowicz, Jean-Michel Morel, and
  Gregory Randall.
\newblock On straight line segment detection.
\newblock {\em Journal of Mathematical Imaging and Vision}, 32(3):313, 2008.

\bibitem[VGJMR12]{von2012LSD}
Rafael~Grompone Von~Gioi, J{\'e}r{\'e}mie Jakubowicz, Jean-Michel Morel, and
  Gregory Randall.
\newblock Lsd: a line segment detector.
\newblock {\em Image Processing On Line}, 2:35--55, 2012.

\bibitem[vOV83]{van1983algorithm}
Michiel~P van Oeffelen and Peter~G Vos.
\newblock An algorithm for pattern description on the level of relative
  proximity.
\newblock {\em Pattern Recognition}, 16(3):341--348, 1983.

\bibitem[VSS84]{vansanten1984temporal}
Jan~PH Van~Santen and George Sperling.
\newblock Temporal covariance model of human motion perception.
\newblock {\em JOSA A}, 1(5):451--473, 1984.

\bibitem[Wer38]{wertheimer1938laws}
Max Wertheimer.
\newblock Laws of organization in perceptual forms.
\newblock 1938.

\bibitem[WMG95]{watamaniuk1995detecting}
Scott~NJ Watamaniuk, Suzanne~P McKee, and Norberto~M Grzywacz.
\newblock Detecting a trajectory embedded in random-direction motion noise.
\newblock {\em Vision research}, 35(1):65--77, 1995.

\bibitem[WS92]{watamaniuk1992temporal}
Scott~NJ Watamaniuk and Robert Sekuler.
\newblock Temporal and spatial integration in dynamic random-dot stimuli.
\newblock {\em Vision research}, 32(12):2341--2347, 1992.

\bibitem[WSL19]{wu2019survey}
Jinjian Wu, Guangming Shi, and Weisi Lin.
\newblock Survey of visual just noticeable difference estimation.
\newblock {\em Frontiers of Computer Science}, 13(1):4--15, 2019.

\bibitem[WT83]{witkin1983role}
Andrew~P Witkin and Jay~M Tenenbaum.
\newblock On the role of structure in vision.
\newblock In {\em Human and machine vision}, pages 481--543. Elsevier, 1983.

\bibitem[ZTW12]{zhou2012coherent}
Bolei Zhou, Xiaoou Tang, and Xiaogang Wang.
\newblock Coherent filtering: detecting coherent motions from crowd clutters.
\newblock In {\em European Conference on Computer Vision}, pages 857--871.
  Springer, 2012.

\end{thebibliography}
%\lhead{\emph{Bibliography}}  % Change the left side page header to "Bibliography"

\end{document}